\documentclass{amsart}[12 pt]

\usepackage{latexsym}
\usepackage{amsmath,amssymb,dsfont,amsfonts}
\usepackage{mathrsfs}
\usepackage{verbatim}
\usepackage{caption}
\usepackage{graphicx}
\usepackage{graphics}
\usepackage{adjustbox}
\usepackage{mathtools}
\usepackage{geometry}
\usepackage{booktabs}
\usepackage{enumitem}
\usepackage{xcolor}
\usepackage{algorithm}
\usepackage{algpseudocode}


\newtheorem{proposition}{Proposition}[section]
\newtheorem{theorem}{Theorem}[section]
\newtheorem{definition}{Definition}[section]

\newtheorem{lemma}{Lemma}[section]
\newtheorem{remark}{Remark}[section]

\newcommand{\R}{\mathbb{R}}

\DeclareMathOperator*{\argmin}{arg\,min} 

\numberwithin{equation}{section}

\allowdisplaybreaks

\begin{document}

\markboth{}{}

\title[Learning Stochastic Optimal Control by Off-Model Training and Importance Sampling]{Adaptive Learning via Off-Model Training and Importance Sampling for Fully Non-Markovian Optimal Stochastic Control. Complete version}

\author{Dorival Le\~ao}

\address{Departamento de Matem\'atica Aplicada e Estat\'istica. Universidade de S\~ao
Paulo, 13560-970, S\~ao Carlos - SP, Brazil} \email{leao@estatcamp.com.br}

\author{Alberto Ohashi}

\address{Departamento de Matem\'atica, Universidade de Brasília, 13560-970, Bras\'ilia - Distrito Federal, Brazil}\email{ohashi@mat.unb.br}

\author{Simone Scotti}
\address{Universit\`a di Pisa, DEM, via Ridolfi 10, Pisa Italy and Universit\'e Paris Cit\'e, LPSM} \email{simonescotti@unipi.it}

\author{Adolfo M. Dias da Silva}

\address{Departamento de Matem\'atica, Universidade de Brasília, 13560-970, Bras\'ilia - Distrito Federal, Brazil}\email{adolfoamds@gmail.com}

\date{\today}

\keywords{} \subjclass{Primary: 60H35; Secondary: 65C30}

\begin{center}
\end{center}
\begin{abstract}
This paper studies continuous-time stochastic control problems whose controlled states are fully non-Markovian and depend on unknown model parameters. Such problems arise naturally in path-dependent stochastic differential equations, rough-volatility hedging, and systems driven by fractional Brownian motion. Building on the discrete skeleton approach developed in earlier work, we propose a Monte Carlo learning methodology for the associated embedded backward dynamic programming equation. Our main contribution is twofold. First, we construct explicit dominating training laws and Radon--Nikodym weights for several representative classes of non-Markovian controlled systems. This yields an off-model training architecture in which a fixed synthetic dataset is generated under a reference law, while the dynamic programming operators associated with a target model are recovered by importance sampling. Second, we use this structure to design an adaptive update mechanism under parametric model uncertainty, so that repeated recalibration can be performed by reweighting the same training sample rather than regenerating new trajectories. For fixed parameters, we establish non-asymptotic error bounds for the approximation of the embedded dynamic programming equation via deep neural networks. For adaptive learning, we derive quantitative estimates that separate Monte Carlo approximation error from model-risk error. Numerical experiments illustrate both the off-model training mechanism and the adaptive importance-sampling update in structured linear-quadratic examples.
\end{abstract}

\maketitle

\tableofcontents

\section{Introduction}
Stochastic control problems driven by the Brownian filtration are often formulated in continuous-time on a compact interval $[0,T]$, while their numerical resolution typically requires a discrete-time approximation amenable to dynamic programming. In the Markovian case, this passage is classical and leads to a large numerical literature based on dynamic programming (see e.g. \cite{bertsekas}), regression methods, and more recently deep learning (see e.g. \cite{hure}, \cite{Han}). Beyond the Markovian framework, however, the situation is substantially more delicate. Even when the underlying noise is a Brownian motion, the controlled state may be fully non-Markovian in the sense that one cannot reduce it to a Markovian situation without adding infinitely many degrees of freedom. This is the typical situation found in controlled systems driven by fractional Brownian functionals or rough stochastic volatility models (see e.g. \cite{BankBayerHagerRiedelNauen2025}, \cite{rvol}). In such cases, the value process cannot in general be reduced to a finite-dimensional deterministic equation, and the construction of implementable numerical schemes for near-optimal controls remains a major challenge. In fact, any attempt to approximate optimal strategies along $[0,T]$ via naive discretization schemes driven by Brownian samples $\{B(t_i)\}_{i=1}^{N_T}$ over refining partitions $0=t_0 < t_1 < \ldots < t_{N_T}=T$ with size $N_T$ will end up in sub-optimal solutions and suffers from curse of dimensionality.

Based on a series of papers \cite{LEAO_OHASHI2017.1}, \cite{LEAO_OHASHI2013} and \cite{LEAO_OHASHI2017.2}, \cite{leaoohashi} developed a general approximation methodology for continuous-time control problems driven by controlled states adapted w.r.t. a given multi-dimensional Brownian motion $B$. Their philosophy is to project the original system onto a discrete-type skeleton generated by the Brownian hitting time

$$T_1 =\inf\{t \ge 0; |B(t)|= \epsilon\}$$
and by solving a suitable backward dynamic programming equation for the corresponding embedded control problem as $\epsilon\downarrow 0$. That result shows that, for fixed discretization level $\epsilon$, solving the embedded dynamic programming equation (\ref{valuefunc}) along $m$ steps (see (\ref{ektsteps})) yields near-optimal controls for the original continuous-time (possibly fully non-Markovian) problem as $\epsilon\downarrow 0$.

The purpose of the present paper is to develop a concrete Monte Carlo methodology for the embedded backward dynamic programming equation (\ref{valuefunc}) arising from the discrete skeleton of \cite{leaoohashi}, with particular emphasis on fully non-Markovian continuous-time control problems. Our objective is to construct a deep-learning-based numerical scheme that is simultaneously feasible for complex path-dependent systems, scalable for repeated learning and recalibration, and robust under parametric model uncertainty. In the framework proposed here, these requirements are addressed through a model-based stochastic control methodology in which importance sampling is a structural ingredient of both the training design and the adaptive updating procedure for mitigating model risk parameters.

Recent years have brought new numerical approaches to stochastic systems with memory beyond the classical Markovian setting. For a broader, though necessarily non-exhaustive, discussion of fully non-Markovian stochastic control, we refer the reader to \cite{leaoohashi}. Particularly relevant for the present paper is the recent work on stochastic control with signatures \cite{BankBayerHagerRiedelNauen2025}, where non-Markovian control problems are addressed by parameterizing open-loop controls through functionals of the driver’s path signature and optimizing over this class by Monte Carlo methods. In a different direction, motivated by rough-volatility applications (see e.g. \cite{rvol}, \cite{gatheral}), \cite{MotteHainaut2024} rely on finite-dimensional Markov approximations combined with Hamilton–Jacobi–Bellman equations, duality methods, or multifactor representations. In this direction, see also \cite{jaber}, \cite{alfonsi}, \cite{bayer}, \cite{rosenbaum}, \cite{fukasawa} and \cite{horvath}. More recently, kernel-weighted signature features have been proposed by \cite{HagerHarangPelizzariTindel2026} as explicit representations of Volterra-type memory. These works are closely related to the present paper in that they all seek mathematically tractable approaches to non-Markovian systems, but they proceed from viewpoints that are substantially different from the one adopted here.

A different issue, which is central for the present paper, arises when the controller does not have access to a perfectly specified model and must update parameter estimates over time as new information becomes available. In such a situation, the main difficulty is no longer only to approximate the control problem once, but to do so in a way that remains computationally feasible under repeated recalibration. Related questions have already been studied in adaptive stochastic control, including Bayesian and adaptive robust formulations in discrete-time Markovian settings; see, for instance, \cite{ChenMyung2020} and the recent contributions \cite{Erhan1,Erhan2}. Our objective is to construct a stochastic-control learning architecture that remains scalable under successive parameter updates, admits quantitative Monte Carlo error control, and still applies when the controlled state exhibits non-Markovian features. To the best of our knowledge, this combination of scalability under successive updates and quantitative error control for deep-learning architectures has not been systematically addressed, even in classical Markovian setting. It is precisely this combination of model uncertainty, numerical scalability, and complex state dependence that motivates the present work.

Our main contribution is the construction of a deep-learning Monte Carlo scheme for stochastic control problems of the form
\[
\inf_{u \in U_0^T}\mathbb{E}\big[\varphi(X^{u}(T))\big],
\]
where $U^T_0$ is a space of bounded adapted strategies over a compact interval $[0,T]$, the controlled state $X^{u}$ depends on a deterministic parameter $\theta$ which represents the model component to be learned or updated along the sequential decision procedure. The scope of the method includes several representative continuous-time control problems with genuinely non-Markovian features, studied in detail in Section~2.3. In the rough-volatility hedging problem, the controlled state is 

\begin{equation}\label{intrex1}
X^u(t)=  (S(t),Y^u(t)),
\end{equation} 
where $S$ denotes the risky asset price and $Y^u$ the wealth process associated with a trading strategy $u$, while the volatility is driven by a fractional Ornstein--Uhlenbeck factor (see e.g. \cite{cheridito}) with Hurst parameter $0 < H < \frac{1}{2}$ in the regime $H\approx 0$. In the path-dependent SDE setting, the state is driven by 
\begin{equation}\label{intrex2}
dX^u(t)=\alpha(t,X^u|_{t},u(t))\,dt+\sigma(t,X^u|_{t},u(t))\,dB(t),
\end{equation}
so that the dynamics depend on the whole past trajectory of the state $X^u|_t = \{X^u(s); s\le t\}$. In the fractional-noise setting, the state is driven by 
\begin{equation}\label{intrex3}
dX^u(t)=\varrho(X^u(t),u(t))\,dt+\sigma\,dB^H(t),
\end{equation}
and the non-Markovian feature is induced by the fractional Brownian motion $B^H$ itself. The reader should think each model (\ref{intrex1}), (\ref{intrex2}) and (\ref{intrex3}) depends on a parameter $\theta$ associated with their coefficients.  

An important ingredient of the present paper is the introduction of dominating training laws $\mu$ with Radon-Nikodym derivatives $r_j$ which realize

\begin{equation}\label{transINTR}
K_j(b,a; dxdx'):=\mathbb{P}[ (\mathcal{W}_j, \Delta X^a_j) \in dxdx' | \Xi_{j-1}= b]= r_j(a,x'; b)\mu(dx')\nu(dx)
\end{equation}
where $\Xi_{j-1}$ is the history (\ref{feedH}) of the controlled system for a given parameter model $\theta$. The law $\mu$ must be interpreted as the training data for solving the dynamic programming equation (\ref{valuefunc}) associated with a parametric set which describes the uncertainty on the coefficients of the controlled states. Concretely speaking, $r_j$ is typically dictated by the state space regions visited by the underlying controlled states parameterized by any class of coefficients in (\ref{intrex1}), (\ref{intrex2}) and (\ref{intrex3}) that the agent believes that drive the controlled dynamics. The variables $(\Delta X^a_i)_{i\ge 1}$ must be interpreted as Euler-Maruyama-type increments of the models (\ref{intrex1}), (\ref{intrex2}) and (\ref{intrex3}) described in section \ref{EXsection} and adapted to the filtration generated by an i.i.d sequence with common law $(T_1,B(T_1)) \stackrel{(d)}{=}\nu$. 

A central component of the paper is the explicit construction of admissible training laws $\mu$ equipped with Radon derivatives $r_j$ supporting the numerical resolution of these problems. Section \ref{MSECTION} is devoted to this issue for the controlled systems (\ref{intrex1}), (\ref{intrex2}) and (\ref{intrex3}). Once these reference laws are available, the learning architecture becomes genuinely off-model: the training sample is generated under the reference law, while the dynamic programming equation associated with a given model parameter is recovered by a subtle use of importance sampling associated with $r_j$. The paper provides a quantitative analysis of this scheme. For a fixed model parameter, Theorems \ref{mainresult} and \ref{mainresultRS} establish non-asymptotic convergence rates for the deep-learning Monte Carlo approximation of the embedded backward dynamic programming equation (\ref{valuefunc}). 

The construction of $(\mu,r_j)_{j\ge 1}$ presented in Theorems \ref{pdsdeth}, \ref{FBMtheorem} and \ref{rvoltheorem} is sufficiently rich to support learning across different model specifications in a compact set of parameters $\Theta$. Suppose the controlled dynamics (for instance (\ref{intrex2})) is driven by an unknown deterministic parameter $\theta^\star$. Section~\ref{adaptivesec} presents an adaptive Monte Carlo numerical scheme which produces a scalable and efficient algorithm for updating the parameters $\theta$ of the controlled models towards $\theta^\star$. At the conceptual level, the value functionals are defined recursively by
\[
\mathbb{V}_m^\theta(\mathbf{o}_m)=\varphi\Big(x_0+\sum_{i=1}^m y_i\Big),
\qquad
\mathbb{V}_j^\theta(\mathbf{o}_j)=\min_{a\in \mathbb{A}}\mathbf{U}_j^\theta(\mathbf{o}_j,a), \quad 0\leq j\leq m-1,
\]
where (\ref{transINTR}) yields 
\begin{equation}\label{whyr}
\mathbf{U}_j^\theta(\mathbf{o}_j,a)
=
\int_{\mathbb{H}^j}
\mathbb{V}_{j+1}^\theta\!\big(\pi_2(\mathbf{o}_j),x,y\big)\,
r_{j+1}^\theta(a,y,\mathbf{o}_j)\mu(dy)\nu(dx).
\end{equation}
Here, $\mathbf{o}_j = (w_1, y_1, \ldots, w_j,y_j) \in \mathbb{H}^j$ represents the history of the controlled system taking values on a suitable augmented state-space $\mathbb{H}^j$, $\mathbb{A}$ is a compact action space, $\pi_2$ denotes the projection onto the relevant coordinates (see (\ref{pi2def})) and $r^\theta_j$ is the importance-sampling weight associated with the controlled state driven by $\theta \in \Theta$. The main computational difficulty is that the parameter \(\theta^\star\) is unknown. Thus, the transition law (\ref{transINTR}) is not fixed a priori, but belongs to the family
\[
K^\theta_j(b,a; dxdx')_{\theta\in\Theta}.
\]
For a given compact subset $\Theta$ of parameters, we can explicitly construct importance samplings weights and training measures $(\mu, r^\theta_j)_{j\ge 1}$, where $\mu$ only depends on $\Theta$. If one recomputes the backward dynamic programming equation by fresh simulation every time the current estimate of \(\theta^\star\) is updated, then the resulting algorithm would become prohibitively expensive. Indeed, each new parameter estimate would require regenerating trajectories of the controlled system under the new law and recomputing the associated Monte Carlo approximations of the continuation operators. 

We propose to decouple \emph{sampling} from \emph{model updating} by means of the importance sampling weight as follows: Rather than resampling new samples from $K^\theta_j$ whenever \(\theta\) changes, we generate a synthetic sample under the fixed dominating reference law \(\mu\) and then reweight the sample through the density \(r_j^\theta\). More precisely, suppose that for each time $j$, we simulate $M$ samples
\[
(w_{j,1},y_{j,1}),\dots,(w_{j,M}, y_{j,M})
\]
independently according to the reference law $\nu \otimes \mu$. Then
\[
\mathbf{U}^{\theta}_j (\mathbf{o}_j,a)
\approx
\frac1M\sum_{p=1}^M
\mathbb{V}_{j+1}^\theta\big(\pi_2(\mathbf{o}_j),w_{j+1,p}, y_{j+1,p}\big)\,
r^\theta_{j+1}\big(a,y_{j+1,p}, \mathbf{o}_j\big),
\]
for $j=m-1, \ldots, 0$. Consequently, when the current parameter estimate changes from \(\theta\) to \(\theta'\), there is no need to regenerate the next-state sample. One simply updates the weights
\begin{equation}\label{isreason}
r^\theta_j\big(a,y_{j,p}, \mathbf{o}_j\big)
\quad\leadsto\quad
r^{\theta'}_j\big(a,y_{j,p}, \mathbf{o}_j\big).
\end{equation}
for $j=m,\ldots, 1$. 

The proposed adaptive Monte Carlo scheme admits quantitative estimates as follows: Concerning (\ref{intrex2}), Proposition \ref{adapresSDE} yields the total error is decomposed as

\begin{equation}\label{inest}
\mathbb{E}_{M}\Big|
\widehat{\mathbb{V}}_{j,M}^{\theta}(O_j)-\mathbb{V}_j^{\theta^\star}(O_j)
\Big|
\lesssim_{\mathbb{P}}
\mathbb{E}_{M}\Big|
\widehat{\mathbb{V}}_{j,M}^{\theta}(O_j)-\mathbb{V}_j^{\theta}(O_j)
\Big|
+
|\theta^\star-\theta|,
\end{equation}
for $j=m,\ldots, 0$. The first term in the right-hand side of (\ref{inest}) is the the Monte Carlo learning error under the estimated model $\theta$, controlled by Theorems \ref{mainresult}, while the second term is the model-risk contribution induced by the discrepancy between $\theta$ and $\theta^\star$. Here, $\mathbb{E}_M$ denotes the conditional expectation w.r.t. the Monte Carlo training set and $O_j\stackrel{(d)}{=} j-\text{fold product measure of}~ \nu\otimes \mu$. Similar estimate holds for models (\ref{intrex1}) and (\ref{intrex3}). See Proposition \ref{LipvaluePARVOL} and Remark \ref{LipvaluePARfbm}.

The construction of dominating training laws in the present paper is partly inspired by \cite{hure}, where such a domination structure is assumed a priori in order to derive convergence rates for discrete-time Markov decision processes attached to neural networks. Our contribution goes further in a different direction: for the concrete continuous-time fully non-Markovian controlled models (\ref{intrex1}), (\ref{intrex2}) and (\ref{intrex3}), we construct explicit admissible dominating laws and the corresponding Radon--Nikodym weights, which then become the basis of our adaptive importance-sampling scheme under model uncertainty via (\ref{isreason}) that we propose in the present work. While importance sampling is classical in stochastic control and reinforcement learning (see e.g. \cite{KappenRuiz2016}, \cite{GobetTurkedjiev2017}, \cite{HannaNiekumStone2021}), its use in the present paper appears to be of a different nature. Existing works mainly employ importance sampling for variance reduction under a fixed model, or for off-policy correction across policies. By contrast, our framework allows us to an explicit construction of dominating training laws for the controlled state process in order to reuse a single dataset across successive model updates. The importance sampling weights crucially depend on the size $\varepsilon$ of the barrier where the driving Browian motion lives. This allows the dynamic programming equation (\ref{valuefunc}) to be updated by reweighting rather than by resimulation, and it provides a natural mechanism for warm-start initialization of the neural networks under repeated recalibration. The resulting role of importance sampling is therefore not merely statistical; it is structural, since it is precisely what makes the adaptive deep-learning scheme scalable under model risk.

Section \ref{sec:numerical} illustrates the proposed adaptive Monte Carlo methodology with numerical experiments for linear quadratic control problems . First, we investigate off-model training by analyzing the effect of different exploration strategies on the numerical performance of the scheme, with experiments devoted to the mean variance hedging in the rough-volatility model. Second, we investigate parametric model risk in a simple example, where the experiments show the real effectiveness of the adaptive importance sampling scheme facing parametric model risk. In this way, the numerical section highlights the two practical facets of the paper: off-model learning and adaptive updating under model uncertainty.

The paper is organized as follows. Section \ref{reviewsec} presents a quick overview of the methodology of the article \cite{leaoohashi}. In particular, it presents the backbone backward dynamic programming algorithm (\ref{valuefunc}) investigated in the present article. Section \ref{MSECTION} presents the construction of the importance sampling weights associated with the controlled systems (\ref{intrex1}), (\ref{intrex2}) and (\ref{intrex3}). It also presents the main structural assumptions of the present article namely the assumptions H0-H1-H2 and R1. Section \ref{mainressec} presents the main results of the article, namely Theorems \ref{mainresult} and \ref{mainresultRS}. The adaptive learning integrated into dynamic programming via importance sampling is presented in Section \ref{adaptivesec}. Section \ref{sec:numerical} presents the numerical experiments which illustrate Theorem \ref{mainresult} and Section \ref{MSECTION}. Sections \ref{appendix1}, \ref{appendix2}, \ref{appendix3} and \ref{appendix4} are devoted to the proofs of Theorem \ref{mainresult} and \ref{mainresultRS}.

\section{Preliminaries and a brief review of \cite{leaoohashi}}\label{reviewsec}
This section recalls the basic structure developed by \cite{leaoohashi}. More importantly, we present the dynamic programming algorithm associated to a controlled system that will be the object of the Monte Carlo study of the present article. Before this discussion, let us present the standing notation used throughout this paper. We write $a\lesssim b$ for two positive quantities to express an estimate of the form
$a \le C b$, where $C$ is a generic constant which may differ from line to line. If $\gamma$ is a parameter, then $a\lesssim_{\gamma} b$ means that $a \le C b$, where the constant $C$ depends on $\gamma$. We are going to fix a $d$-dimensional Brownian motion $B = \{B^{1},\ldots,B^{d}\}$ on $(\Omega, \mathbb{F}, \mathbb{P})$, where $\Omega$ is the space $C(\mathbb{R}_+;\mathbb{R}^d) := \{f:\mathbb{R}_+ \rightarrow
\mathbb{R}^d~\text{continuous}\}$, $\mathbb{P}$ is the Wiener measure on $\Omega$ such that $\mathbb{P}\{B(0) = 0\}=1 $ and $\mathbb{F}:=(\mathcal{F}_t)_{t\ge 0}$ is the usual $\mathbb{P}$-augmentation of the natural filtration generated by the Brownian motion. If $X$ is a process with left-hand limits, then we denote $\Delta X(t):= X(t)- X(t-)$, where $X(t-):= \lim_{s\uparrow t } X(s)$. The notation $x_M = \mathcal{O}_\mathbb{P}(y_M)$ as $M\rightarrow +\infty$ means that there exists $c>0$ such that $\mathbb{P}\{|x_M|> c|y_M|\}\rightarrow 0$ as $M\rightarrow +\infty$. Sometimes, we also write $x_M\lesssim_{\mathbb{P}} y_M$ to shorten notation. The finite constant $T$ is the terminal time of the stochastic control problem. The symbol $\top$ denotes the transpose operation acting on matrices.

Let $U^T_0$ be the set of all $\mathbb{F}$-progressively adapted processes on $[0,T]$ taking values on a compact (uncountable) action space $\mathbb{A}\subset \mathbb{R}^p$. Let $\varphi: \mathbb{R}^q\rightarrow \mathbb{R}$ be a globally Lipschitz function. The set of controls $U^T_0$ gives rise to a large class of $q$-dimensional controlled systems by the filtration $\mathbb{F}$

$$U^T_0 \ni \phi \mapsto X^\phi $$
where the process $\{X^\phi(t); 0\le t\le T\}$ is typically an $\mathbb{R}^q$-valued non-Markovian (for each control $\phi$) integrable process as discussed in the Introduction. The article \cite{leaoohashi} developed a numerical scheme to produce a near optimal control $\phi^{\star,\eta}$

\begin{equation}\label{INTROpr3}
\mathbb{E}\Big[\varphi\big(X^{\phi^{*,\eta}}(T)\big)\Big] \le \inf_{\phi\in U^T_0}\mathbb{E}\Big[\varphi\big(X^\phi(T)\big)\Big]+\eta,
\end{equation}
for an arbitrary error bound $\eta>0$. The article \cite{leaoohashi} proposes a methodology for computing a near optimal control $\phi^{\star,\eta}$ realizing (\ref{INTROpr3}), by evaluating the control problem  


$$
\mathbb{E}\Big[\varphi\big(X^{k,\phi^{*,\eta}}(T)\big)\Big] \le \inf_{u\in U^{e(k,T)}_0}\mathbb{E}\Big[\varphi\big(X^{k,u}(T)\big)\Big]+\eta,
$$
where the controlled state $\mathcal{X} = \{X^{k,u}; u\in U_0^{e(k,T)}\}$ is a \textit{discrete version} of the original controlled system $\phi \mapsto X^\phi$, where $e(k,T)$ is a suitable number of steps to recover (\ref{INTROpr3}) over the entire period $[0,T]$ as an arbitrary accuracy level $\varepsilon_k \downarrow 0$ as $k\rightarrow +\infty$. The class of controls $U^T_0$ is replaced by a set $U^{e(k,T)}_0$ of stepwise-constant processes parameterized by $\varepsilon_k \downarrow 0$ as $k\rightarrow +\infty$ and adapted to a suitable pure jump process constructed from the Wiener space that we describe in the next section. 

\begin{remark} 
Whenever necessary, in case the payoff function $\varphi:\mathbb{R}^q\rightarrow \mathbb{R}$ is only locally Lipschitz, we will assume the controlled state is bounded by a possibly large arbitrary constant. In concrete applications arising in industrial processes and finance, this is not at all a restrictive assumption. For instance, in practice, the partial hedging problem in incomplete markets, without loss of generality, can be considered in terms of risky asset prices bounded by a large constant.   
\end{remark}

 
\subsection{Imbedding scheme}\label{discretesec}
Throughout this section, we fix a accuracy level $\varepsilon_k \in (0,1)$. The imbedding procedure will be based on a class of pure jump processes driven by suitable waiting times which describe the local behavior of the Brownian motion. We briefly recall the basic properties of this skeleton. For more details, we refer the reader to the work \cite{leaoohashi}. We set $T^{k}_0:=0$ and

\begin{equation}\label{stopping_times}
T^{k}_n := \inf\{T^{k}_{n-1}< t <\infty;  | B(t) - B(T^{k}_{n-1}) |_{\max} = \varepsilon_k\}, \quad n \ge 1,
\end{equation}
and $|\cdot |_{\max}$ in (\ref{stopping_times}) corresponds to the maximum norm on $\mathbb{R}^d$. This implies

$$
\Delta T^k_{n}:=T^k_n-T^k_{n-1}=\min_{j\in\{1,2,\dots, d\}}{\{\Delta^{k,j}_n\}}~a.s,
$$
where

\begin{equation}\label{Deltacor}
\Delta^{k,j}_n := \inf\{0< t <\infty;  | B^j(t+T^{k}_{n-1}) - B^j(T^{k}_{n-1}) | = \varepsilon_k\}, \quad n \ge 1.
\end{equation}
Then, we define $A^k :=(A^{k,1}, \cdots , A^{k,d})$ by

\begin{equation}\label{akprocess}
A^{k,j} (t) := \sum_{n=1}^{\infty} \left( B^j(T^{k}_{n}) - B^j(T^{k
}_{n-1})  \right) \mathds{1}_{\{T^{k}_n\leq t \}};~t\ge0, ~ j=1, \ldots , d.
\end{equation}

By the strong Markov property, we observe that

\begin{enumerate}
  \item The jumps $\Delta A^{k,j}(T^k_n) = A^{k,j}(T^k_n)-A^{k,j}(T^k_{n}-); n=1, 2, \ldots$ are independent and identically distributed (iid).
  \item The waiting times $\Delta T^k_n; n=1, 2,\ldots$ are iid random variables in $\mathbb{R}_+$.
  \item The families $(\Delta A^{k,j}(T^k_n); n=1, 2, \ldots)$ and $(\Delta T^k_n; n=1, 2,\ldots)$ are independent when $d=1$ and, otherwise, they are dependent. 
\end{enumerate}
Let $\mathbb{F}^k$ be the filtration generated by $A^k$. In order to recover the stochastic control problem (\ref{INTROpr3}) over the $[0,T]$, we define

\begin{equation}\label{ektsteps}
e(k,t):= \Big\lceil \frac{\varepsilon_k^{-2} t}{\chi_d}\Big\rceil; 0\le t\le T,
\end{equation}
where $\lceil x\rceil$ is the smallest integer greater or equal to $x\ge 0$ and

\begin{equation}\label{kappaDEF}
\chi_d:=\mathbb{E}\min\{\tau^1, \ldots, \tau^d\},
\end{equation}
where $(\tau^j)_{j=1}^d$ is an iid sequence of random variables with distribution $\inf\{t>0; |W(t)|=1\}$ for a real-valued standard Brownian motion $W$. From Lemma A3 in \cite{leaoohashi}, for each $t\in [0,T]$, we know that
\begin{equation}\label{tekt}
T^k_{e(k,t)}\rightarrow t~\text{a.s and in}~L^p(\mathbb{P}),
\end{equation}
as $k\rightarrow \infty$, for each $t\ge 0$ and $p\ge 1$.

\begin{remark}\label{growthlemma}
The number $e(k,T)$ should be interpreted as the number of necessary steps to compute our discrete-type dynamic programming equation. Moreover, one can check
$$\frac{1}{2 d}\le \chi_d.$$
Therefore, for given $k\ge 1$ and $T$, the number of periods $e(k,T)$ grows no faster than the dimension of the driving Brownian motion.
\end{remark}

Let $\mathbb{F}^k$ be the raw filtration generated by $A^k$. We observe that

$$\mathcal{F}^k_{T^k_n} = \sigma (\Delta A^k(T^k_i), \Delta T^k_i; 1\le i\le n),$$
for $n\ge 1$.

\begin{definition}\label{discreteskeleton}
For $\mathcal{T} := \{T^k_n; n\ge 0\}$, the structure $\mathscr{D} = \{\mathcal{T}, A^{k}; k\ge 1\}$ is called a \textbf{discrete-type skeleton} for the Brownian motion.
\end{definition}

Let $U^{k,e(k,T)}_{0}$ be the class of $\mathbb{F}^k$-predictable processes of the form

\begin{equation}\label{controlform0}
u(t) = \sum_{j=1}^{e(k,T)}u_{j-1}\mathds{1}_{\{T^k_{j-1}< t\le T^k_j\}}; 
\end{equation}
for $0\le t\le T$, where $u_{j-1}$ is an $\mathbb{A}$-valued $\mathcal{F}^k_{T^k_{j-1}}$-measurable random variable. Any element $u \in U^{k,e(k,T)}_0$ can be represented by a list $u_0, \ldots, u_{e(k,T)-1}$.  

Let $O_T(\mathbb{F}^k)$ be the set of $\mathbb{F}^k$-optional processes of the form

$$Z^k(t) = \sum_{n=0}^{e(k,T)} Z^k(T^k_n)\mathds{1}_{\{T^k_n\le t < T^k_{n+1}\}},$$
for $0\le t\le T$, where $Z^k(T^k_n)$ is $\mathcal{F}^k_{T^k_{n}}$-measurable for every $n\ge 0$ and $k\ge 1$.

Let us now present two concepts which will play a key role in this work.
\begin{definition}\label{GASdef}
A \textbf{controlled imbedded discrete structure} $\mathcal{Y} = \big((Y^k)_{k\ge 1},\mathscr{D}\big)$ consists of the following objects: a discrete-type skeleton $\mathscr{D}$ and a map $u\mapsto Y^{k,u}$ from $U^{k,e(k,T)}_0$ to $O_T(\mathbb{F}^k)$ such that

\begin{equation}\label{antiprop}
Y^{k,u}(T^k_{n+1})~\text{depends on the control only at}~(u_0, \ldots, u_n),
\end{equation}
for each integer $n\in \{0,\ldots,e(k,T)-1\}$.
\end{definition}
Controlled imbedded discrete structures mimic $\mathbb{F}$-adapted continuous-time controlled processes $X^u$. For the impatient reader, we refer to Section \ref{EXsection} for examples. In view of concrete applications, we need to impose a natural form on the increments of a controlled imbedded discrete structure $\mathcal{Y}$. In order to describe those restrictions, we need to introduce some objects. Let us define

$$\mathbb{I}_k^o:=\Big\{ (i^k_1, \ldots, i^k_d); i^k_\ell\in \{-1,0,1\}~\forall \ell \in \{1,\ldots, d\}~\text{and}~\sum_{j=1}^d|i^k_j|=1   \Big\}$$
and $$\mathbb{I}_k := \Big\{ \varepsilon_k \left(i^k_1 \mathds{1}_{\{\mid i^k_1 \mid =1\}} + z^k_1 \mathds{1}_{\{\mid i^k_1 \mid \neq 1\}} , \ldots , i^k_d \mathds{1}_{\{\mid i^k_d \mid =1\}} +z^k_d \mathds{1}_{\{\mid i^k_d \mid \neq 1\}} \right) ; (i_1^k , \ldots , i^k_d) \in \mathbb{I}_k^o,$$
$$(z^k_1, \ldots , z^k_d) \in (-1,1)^d \Big\}. $$

For obvious reasons, $\mathbb{W}_k:=(0,+\infty)\times \mathbb{I}_k$ will be called as the \textit{noise space}. The $n$-fold Cartesian product of $\mathbb{W}_k$ is denoted by $\mathbb{W}_k^n$ and a generic element of $\mathbb{W}^n_k$ will be denoted by

$$
(w^k_1, \ldots, w^k_n)\in \mathbb{W}^n_k,
$$
where $w^k_r = (s^k_r,\tilde{i}^k_r)\in \mathbb{W}_k$ for $1\le r\le n$. Let us define

$$\Delta A^k(T^k_n):=(\Delta A^{k,1}(T^k_n), \ldots, \Delta A^{k,d}(T^k_n)),$$
where

$$\Delta A^{k,j}(T^k_n) = B^j (T^k_n) - B^j (T^k_{n-1}),$$
for $1\le j\le d; n,k\ge 1$. Observe that $(\Delta T^k_n,\Delta A^{k}(T^k_n)) \in \mathbb{W}$ a.s. for $n\ge  1$. Let us define

\begin{equation}\label{caliak}
\mathcal{A}^k_n:= \Big(\Delta T^k_1, \Delta A^{k}(T^k_1), \ldots, \Delta T^k_n, \Delta A^{k}(T^k_n)\Big)\in \mathbb{W}^n_k~a.s.
\end{equation}
and 

\begin{equation}\label{caliwk}
\mathcal{W}^k_n:= (\Delta T^k_n, \Delta A^{k}(T^k_n)); n \ge 1.
\end{equation}
Observe that the strong Markov property yields $\{\mathcal{W}^k_n; n\ge 1\}$ is iid. 

We set $\mathbb{H}_k:= \mathbb{W}_k \times \mathbb{R}^q$, where $\mathbb{R}^q$ is the state-space. We denote $\mathbb{H}^{i}_k$ as the $i$-fold Cartesian product of $\mathbb{H}_k$. In the sequel, we fix an initial condition $x_0\in \mathbb{R}^q$. The elements of $\mathbb{H}^{j}_k$ will be denoted by

$$
\textbf{o}^{k}_i := \Big( (w^k_1,y^k_1), \ldots, (w^k_i, y^k_i) \Big).
$$
For convenience, we set $\mathbf{o}^k_0:=(0,0,x_0)$ and $\mathbb{H}_k^0:= \{(0,0,x_0)\}$. We need to introduce the following projections $\pi_2$ and $\pi_3$ as follows: For $\textbf{o}^{k}_i = \big( (w^k_1,y^k_1), \ldots, (w^k_i, y^k_i) \big)$, we set

\begin{equation}\label{pi2def}
\pi_2(\textbf{o}^{k}_i) := \big( (s^k_1,y^k_1), \ldots, (s^k_{i}, y^k_i) \big)\quad \pi_3(\textbf{o}^{k}_i) := \big( w^k_1, \ldots, w^k_i\big),
\end{equation}
for $i\ge 1$.



In the sequel, we describe the dynamics of a controlled imbedded discrete structure $\mathcal{X}  = \big( (X^k)_{k\ge 1}, \mathscr{D} \big)$ as a function of the state-space and the noise as follows: Fix a set of Borel functions

$$\blacktriangle x_j : \big(\mathbb{R}_+\times \mathbb{R}^q\big)^{j-1} \times \mathbb{A}\times (0,\infty)\times \mathbb{R}^d \rightarrow \mathbb{R}^q,\quad \ell_j : \mathbb{W}^j_k\rightarrow \mathbb{R}^d,$$
for $j=1,\ldots, e(k,T)$, where we set $\big(\mathbb{R}_+\times \mathbb{R}^q\big)^{0}:=\{0\}\times \mathbb{R}^q$.

For a given admissible control $u = (u_0, \ldots, u_{e(k,T)-1})\in U^{e(k,T)}_0$, we set $\Xi^{k,u}_0 := (0,0,x_0)$ and we assume a controlled imbedded discrete structure $\mathcal{X}  = \big( (X^k)_{k\ge 1}, \mathscr{D} \big)$ satisfies the following dynamics:

\

\noindent \textbf{Assumption (E1):}

\begin{equation}\label{feedH}
\Delta X^{k,u}(T^k_j) = \blacktriangle x_j \Big(\pi_2(\Xi^{k,u}_{j-1}),u_{j-1} , \Delta T^k_j , \ell_j(\mathcal{A}^k_j) \Big),
\end{equation}
where

\begin{equation}\label{Xioperator}
\Xi^{k,u}_j :=\Big((\mathcal{W}^k_1, \Delta X^{k, u}(T^k_1 ) ), \ldots, (\mathcal{W}^k_j, \Delta X^{k, u}(T^k_j) ) \Big),
\end{equation}
for $1\le j\le e(k,T)$.

\subsection{Examples}\label{EXsection}
Next, we illustrate Assumption (E1) with three fundamental examples.
      
\subsubsection{Partial hedging with rough stochastic volatility}\label{Ex2}
For a two-dimensional Brownian motion $(B^1,B^2)$, let $W$ be the real-valued Brownian motion 

$$W:= \rho B^1 + \bar{\rho} B^2,$$
where $\bar{\rho}:=\sqrt{1-\rho^2}$ for $-1 \le \rho \le 1$. Let $\mathbf{C}^\lambda_0$ be the space of $\lambda$-H\"{o}lder continuous real-valued functions on $[0,T]$ and starting at zero equipped with the usual norm. For $0 < H < \frac{1}{2}$, let us define

\begin{equation}\label{RKHker}
\begin{split}
K_{H,1}(t,s)&:=c_H t^{H-\frac{1}{2}}s^{\frac{1}{2}-H}(t-s)^{H-\frac{1}{2}},\\
K_{H,2}(t,s)&:=c_H(1/2-H)s^{\frac{1}{2}-H}\int_s^t u^{H-\frac{3}{2}}(u-s)^{H-\frac{1}{2}}du,
\end{split}
\end{equation}
for $0< s < t$, where $c_H$ is a suitable constant (see e.g \cite{ohashifrancys}). For each $f \in \mathbf{C}^\lambda_0$, we define the linear map

\begin{equation}\label{LambdaH}
(\Lambda_Hf)(t):=\int_0^t [f(t)-f(s)]\partial_s K_{H,1}(t,s)ds - \int_0^t \partial_s K_{H,2}(t,s)f(s)ds,
\end{equation}
for $0\le t\le T.$ By Theorem 2.2 in \cite{ohashifrancys} and the linearity of $\Lambda_H$, 

$$W^H := \Lambda_H(W)=\rho(\Lambda_H B^1) + \bar{\rho}(\Lambda_HB^2)$$
is a fractional Brownian motion correlated with $B^1$. The risky asset price is 

\begin{equation}\label{RSVM}
dS(t) =  \mu_{\text{drift}} S(t) dt + S(t) \vartheta(V(t)) dB^1(t)
\end{equation}
where, for simplicity, $\vartheta:\mathbb{R}_+\rightarrow \mathbb{R}_+$ is bounded and the underlying market interest rate is zero. Here, $V(t)= \exp (Z(t))$ and  

$$dZ(t) = \zeta dW^H(t)- \beta(Z(t)-\varkappa)dt
$$
is the fractional Ornstein-Uhlenbeck process for $0 < H <\frac{1}{2}$, $\zeta,\beta>0$, $\mu_{\text{drift}} \in \mathbb{R}$ and $ \varkappa\in \mathbb{R}$. It is well-known (see e.g Prop A1 in \cite{cheridito})

\begin{equation}\label{OUh}
Z(t) =\varkappa + e^{-\beta t}(z_0-\varkappa) + \zeta W^H(t) -\beta \zeta e^{-\beta t}\int_0^t W^H(u)e^{\beta u}du,
\end{equation}
for $0\le t\le T$. The model (\ref{RSVM}) is a rough stochastic volatility model as described by \cite{gatheral}.  
For further details about rough stochastic volatility models, see e.g. \cite{rvol}. For a given strike $K >0$, a price $c$ and $q\in [1, \infty)$, we look for a numerical algorithm to compute a near optimal control $\phi^\star$ 

\begin{equation}\label{mvhdef}
\phi^\star \in \argmin_{\phi\in U^T_0}\mathbb{E}\Big|Y^\phi(T) - \big(K- S(T) \big)^+\Big|^q
\end{equation}
where $Y^{\phi^\star}(0) = c$ and  

$$Y^\phi(t) = c +  \int_0^t\phi(r)dS(r).$$
Observe the controlled state is $X^\phi(t)= (S(t), Y^\phi(t))^\top$ for $0\le t\le T$. If $q=2$, this is a linear-quadratic stochastic control problem. 
\begin{remark}
In case $\rho=\pm 1$, there exists only one risky asset price and one Brownian motion so that the market is complete. In this case, there exists $(c^\star, \phi^\star) \in \mathbb{R}_+\times U^T_0$ realizing 

$$Y^{\phi^\star}(T) = \big(K- S(T) \big)^+,~Y^{\phi^{\star}}(0)=c^\star,$$
where $c^\star = \mathbb{E}_{\mathbb{Q}}[(K-S_T)^+]$ and $\mathbb{Q}$ is the unique martingale measure. 
\end{remark}


The controlled imbedded discrete structure for $X^\phi(t) = (S(t), Y^\phi(t))^\top$ is given by $X^{k,u} = \big(S^{k}, Y^{k,u}\big)^\top$, where

$$\Delta S^{k}(T^k_j) = \mu_{\text{drift}} S^{k}(T^k_{j-1})\Delta T^k_j + S^{k}(T^k_{j-1})\vartheta( V^{k}(T^k_{j-1}))\Delta A^{k,1}(T^k_j)$$

and 

$$Y^{k,u}(T^k_j) = \sum_{\ell=1}^j u_{\ell-1} \Delta S^{k}(T^k_\ell),$$
for $u = (u_0, u_1, \ldots, u_{e(k,T)-1}) \in U^{k,e(k,T)}_0$, $1\le j\le e(k,T)$ and $Y^{k,\phi}(0)=c$. The process $V^k$ is a discrete version of $V$ which we describe as follows. In the sequel, we set

$$\bar{t}_k = \max\{T^k_n; T^k_n \le t\},\quad \bar{t}^{+}_k:= \min\{T^{k}_n ; \bar{t}_k < T^{k}_n\}.
$$
For each $i=1,2$, we define

\begin{equation}\label{discreterough1}
B^{k,i}_H(t):=  \int_0^{\bar{t}_k} \partial_s K_{H,1}(\bar{t}_k,s)\big[A^{k,i}(\bar{t}_k) - A^{k,i}(\bar{s}^{+}_k)\big]ds - \int_0^{\bar{t}_k}\partial_s K_{H,2}(\bar{t}_k,s)A^{k,i}(s)ds,
\end{equation}
and
$$W^k_H(t):= \rho B^{k,1}_H(t) + \bar{\rho} B^{k,2}_H(t); 0\le t\le T.$$
The processes $(B^{k,1}_H,B^{k,2}_H)$ are the $\mathscr{D}$-imbedded discretizations of $(B^1_H,B^2_H)$. For further details, we refer reader to  \cite{ohashifrancys}. An imbedded discrete structure for the volatility process is given by

\begin{equation}\label{OUdiscrete}
Z^k(T^k_n):=\varkappa + e^{-\beta T^k_n}(z_0-\varkappa) + \zeta W^k_H(T^k_n) -\beta \zeta e^{-\beta T^k_n}\int_0^{T^k_n} W^{k}_H(s)e^{\beta \bar{s}_k}ds
\end{equation}
and 

\begin{equation}\label{rvola}
V^k(T^k_n):= \exp \big( Z^k(T^k_n)\big),
\end{equation}
for $1\le n \le e(k,T)$ and $Z^k(0)=z_0$. We have 

\begin{eqnarray}
\nonumber\Delta X^{k,u}(T^k_j) &=& \left(
                                     \begin{array}{c}
                                       \Delta S^{k}(T^k_j) \\
                                       \Delta Y^{k,u}(T^k_j) \\
                                     \end{array}
                                   \right)\\
\label{deltarough}&=& \blacktriangle x_j \Big(\pi_2(\Xi^{k,u}_{j-1}),u_{j-1} , \Delta T^k_j , \ell_j(\mathcal{A}^k_j) \Big)
\end{eqnarray} 
where

$$\ell_j(\mathcal{A}^k_j) = \big(V^k(T^k_j), \Delta A^{k,1}(T^k_j) \big),$$
for $1\le j\le e(k,T)$. It is important to notice that $V^k(T^k_j)$ is a function of the whole path $\mathcal{A}^k_{j} = (\mathcal{W}^k_1, \ldots, \mathcal{W}^k_j)$ and not only of $\mathcal{W}^k_j$. Next, we present an elementary representation of $W^k_H$ which is implemented in the present work. The proof of Lemma \ref{repWH} is postponed to Section \ref{appendix4}.  

\begin{lemma}\label{repWH}
$$W^k_H(T^k_n) = \rho B^{k,1}_H(T^k_n) + \bar{\rho} B^{k,2}_H(T^k_n),$$
for $n\ge 0$. Here, $B_H^{k,i}(T_n^k)=0$, for $n=0,1$ and $i=1,2$ and 
$$B_H^{k,i}(T_n^k)= \sum_{j=2}^n \Delta A^{k,i}(T_j^k)\,K_{H,1}(T_n^k,T_{j-1}^k)
   + \sum_{j=1}^{n-1} \Delta A^{k,i}(T_j^k)\,K_{H,2}(T_n^k,T_j^k),$$
for $n\ge 2$ and $i=1,2$.    
\end{lemma}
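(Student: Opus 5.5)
The plan is to evaluate the defining formula (\ref{discreterough1}) directly at $t=T^k_n$. There $\bar{t}_k=T^k_n$, and both integrands are piecewise constant in $s$ on the random intervals $[T^k_{l-1},T^k_l)$, $1\le l\le n$. So each integral should reduce to a finite sum of increments of the kernels. A summation by parts (swapping the order of the double sum) should then give the stated formula.

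\textbf{Step 1 (piecewise structure).} For $s\in[T^k_{l-1},T^k_l)$ we have $\bar{s}^+_k=T^k_l$, so $A^{k,i}(s)=\sum_{j=1}^{l-1}\Delta A^{k,i}(T^k_j)$. Moreover
$$A^{k,i}(T^k_n)-A^{k,i}(\bar{s}^+_k)=\sum_{j=l+1}^n\Delta A^{k,i}(T^k_j).$$
The second expression vanishes when $l=n$. For $n=0$ the integration domain is empty, and for $n=1$ both integrands vanish identically on $[0,T^k_1)$. This gives $B^{k,i}_H(T^k_0)=B^{k,i}_H(T^k_1)=0$.

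\textbf{Step 2 (first integral).} For $n\ge2$, integrate $\partial_sK_{H,1}(T^k_n,\cdot)$ over each subinterval:
$$\int_0^{T^k_n}\partial_sK_{H,1}(T^k_n,s)\big[A^{k,i}(T^k_n)-A^{k,i}(\bar{s}^+_k)\big]ds=\sum_{l=1}^{n-1}\big[K_{H,1}(T^k_n,T^k_l)-K_{H,1}(T^k_n,T^k_{l-1})\big]\sum_{j=l+1}^n\Delta A^{k,i}(T^k_j).$$
Only $l\le n-1$ contributes, so we never integrate up to the singularity $(t-s)^{H-\frac12}$ at $s=t$. Near $s=0$, $\partial_sK_{H,1}$ behaves like $s^{-\frac12-H}$, which is integrable since $H<\frac12$; this justifies the fundamental theorem of calculus on $[0,T^k_1]$. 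Exchanging the sums telescopes the inner sum to $K_{H,1}(T^k_n,T^k_{j-1})-K_{H,1}(T^k_n,0)$. Since $K_{H,1}(t,0)=0$ because of the factor $s^{\frac12-H}$, this yields $\sum_{j=2}^n\Delta A^{k,i}(T^k_j)K_{H,1}(T^k_n,T^k_{j-1})$.

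\textbf{Step 3 (second integral).} In the same way,
$$-\int_0^{T^k_n}\partial_sK_{H,2}(T^k_n,s)A^{k,i}(s)ds=-\sum_{l=1}^{n}\big[K_{H,2}(T^k_n,T^k_l)-K_{H,2}(T^k_n,T^k_{l-1})\big]\sum_{j=1}^{l-1}\Delta A^{k,i}(T^k_j).$$
Swapping the sums telescopes the inner sum to $K_{H,2}(T^k_n,T^k_n)-K_{H,2}(T^k_n,T^k_j)$. Since $K_{H,2}(t,t)=0$, this gives $\sum_{j=1}^{n-1}\Delta A^{k,i}(T^k_j)K_{H,2}(T^k_n,T^k_j)$.

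\textbf{Step 4 (conclusion).} Adding Steps 2 and 3 gives the formula for $B^{k,i}_H(T^k_n)$. The representation of $W^k_H(T^k_n)$ then follows from its definition as $\rho B^{k,1}_H+\bar\rho B^{k,2}_H$.

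The main obstacle is the regularity of the kernels at the endpoints, which is what justifies the fundamental theorem of calculus on the last interval $[T^k_{n-1},T^k_n]$ in Step 3. Here I would check that $\int_s^t u^{H-\frac32}(u-s)^{H-\frac12}du\lesssim (t-s)^{H+\frac12}\to0$ as $s\uparrow t$ (for $t>0$). This makes $K_{H,2}(t,\cdot)$ continuous at $s=t$ with value $0$ and absolutely continuous on $[T^k_{n-1},t]$. Near $s=0$, $\partial_sK_{H,2}$ is integrable for the same reason as in Step 2.
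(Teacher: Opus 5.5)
Your proposal is correct and follows the paper's proof essentially step by step. You split both integrals over the grid intervals, apply the fundamental theorem of calculus on each piece, exchange the order of summation, and use $K_{H,1}(t,0)=0$ and $K_{H,2}(t,t)=0$; you are in fact slightly more careful than the paper in discarding the last interval of the first integral, where $K_{H,1}(T^k_n,T^k_n)$ is infinite.

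One side remark is wrong, though harmlessly so. For fixed $t>0$ one has $K_{H,2}(t,s)\sim C\,s^{H-\frac12}\to\infty$ as $s\downarrow 0$, so $\partial_sK_{H,2}(t,\cdot)$ is \emph{not} integrable near $0$. You never need that integrability, because $A^{k,i}\equiv 0$ on $[0,T^k_1)$. The $l=1$ term in Step 3 should therefore simply be dropped (the sum starts at $l=2$), and the final formula is unchanged.
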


\subsubsection{Path dependent SDE} \label{Ex4}
Let $\mathbf{D}_{q,T}$ be the space of $q$-dimensional cadlag paths on $[0,T]$. For $w \in \mathbf{D}_{q,T}$, we define $w|_t:= w(s); 0\le s\le t$ and $w|_t= w(t); s >t$. Let us define $\Lambda :=\{(t,w|_t); t\ge 0, w \in \mathbf{D}_{q,T}\}$ equipped with the metric

$$d\big((t,w); (t',w') \big):= \| w|_t - w'|_{t'}\|_\infty + |t-t'|,$$
for $(t,w), (t',w') \in \Lambda$, where $\|\cdot\|_\infty$ is the sup norm. Let $X^u$ be a $q$-dimensional controlled SDE

\begin{equation}\label{pdsdeBM}
dX^u(t) = \alpha(t,X^u|_t,u(t))dt + \sigma(t,X^u|_t,u(t))dB(t),
\end{equation}
driven by the $d$-dimensional Brownian motion $B$ with a given initial condition $X^u(0)=x_0\in \mathbb{R}^q$. The coefficients $(\alpha, \sigma)$ are non-anticipative mappings defined on $\Lambda$ and they satisfy the following Lipschitz property: There exists a constant $K_{Lip}$ such that  

$$|\alpha(t,w|_t,a) - \alpha(t',w'|_{t'},b)| + |\sigma(t,w|_t,a) - \sigma(t',w'|_{t'},b)|\le K_{Lip} \Big\{ d\big((t,w); (t',w') \big)+ |a-b|\Big\},$$
for every $(t,w), (t',w') \in \Lambda$ and $a,b \in \mathbb{A}$. 

For a given payoff $\varphi:\mathbb{R}^q\rightarrow \mathbb{R}$, the stochastic control problem is 

$$\phi^\star \in \argmin_{\phi\in U^T_0}\mathbb{E}[\varphi (X^\phi(T))].$$
The imbedded discrete structure for $X^u$ is given by 
\begin{eqnarray}\label{deltaBM}
\nonumber\Delta X^{k,u}(T^k_{n})&=&\alpha\big(T^k_n, X^{k,\phi}|_{T^k_{n-1}},u_{n-1}\big)\Delta T^k_n\\
\nonumber& &\\
\nonumber&+& \sigma\big(T^k_n, X^{k,\phi}|_{T^k_{n-1}},u_{n-1}\big)\Delta A^k (T^k_n)\\
&=& \blacktriangle x_n \Big(\pi_2(\Xi^{k,u}_{n-1}),u_{n-1} , \Delta T^k_n , \ell_n(\mathcal{A}^k_n) \Big)
\end{eqnarray}
where 
$$
\ell_n(\mathcal{A}^k_n) = \Delta A^{k}(T^k_n),
$$
for $1\le n\le e(k,T)$. The lack of Markov property comes from $(\alpha,\sigma)$ and not of driving noise.

\subsubsection{SDEs driven by fractional Brownian motion with $\frac{1}{2} < H < 1$}\label{Ex3} 
Let $X^u$ be the controlled process 

\begin{equation}\label{limsdefbm}
dX^u(t) = \varrho(X^u(t),u(t))dt + \sigma dB_H(t),
\end{equation}
where $X(0)=x_0\in \mathbb{R}$, $\sigma$ is a constant and $\varrho:\mathbb{R}\times \mathbb{A}\rightarrow \mathbb{R}$ is Lipschitz in the sense that

$$|\varrho(x,c) - \varrho(y,c')|\le \| \varrho\| \{|x-y|+ |c-c'|\},$$
for every $x,y \in \mathbb{R}$ and $c,c' \in \mathbb{A}$. The driving noise $B_H$ is the fractional Brownian motion

$$B_H(t)= \int_0^t K(t,s)dB(s),$$
where $K$ is the kernel of the Riemann-Liouville fractional Brownian motion given by 

$$K(t,s) = \sqrt{2H}(t-s)^{H-\frac{1}{2}},$$
for $s < t$ and $\frac{1}{2} < H < 1$. For a given payoff $\xi:\mathbb{R}\rightarrow \mathbb{R}$, the stochastic control problem is 

$$\phi^\star \in \argmin_{\phi\in U^T_0}\mathbb{E}[\xi (X^\phi(T))].$$
The imbedded discrete structure for $X^u$ is given by 
\begin{eqnarray}\label{deltaFBM}
\nonumber\Delta X^{k,u}(T^k_{n})&=&\varrho\big(X^{k,\phi}(T^k_{n-1}),u_{n-1}\big)\Delta T^k_n\\
\nonumber&+& \sigma\Delta B^k_{H}(T^k_{n})\\
&=& \blacktriangle x_n \Big(\pi_2(\Xi^{k,u}_{n-1}),u_{n-1} , \Delta T^k_n , \ell_n(\mathcal{A}^k_n) \Big)
\end{eqnarray}
where 

$$\ell_{n}(\mathcal{A}^k_n)=\Delta B_H^k(T^k_n)$$
and 

\begin{equation}\label{discFBM}
B_H^k(T^k_n):= \int^{T^k_n}_0 \frac{\partial K }{\partial s}(T^k_n,s)A^k(s)ds,
\end{equation}
for $0\le n\le e(k,T)$. It is important to notice that $B^k_H(T^k_n)$ is a function of the whole path $\mathcal{A}^k_{n} = (\mathcal{W}^k_1, \ldots, \mathcal{W}^k_n)$ and not only of $\mathcal{W}^k_n$. See \cite{LEAO_OHASHI2017.2} for details. The lack of Markov property comes from the driving noise and not from the coefficients $(\varrho,\sigma)$.

\subsection{The dynamic programming algorithm}\label{DPtheory} 
Throughout this section, we are going to fix a controlled imbedded discrete structure $\mathcal{X}  = \big( (X^k)_{k\ge 1}, \mathscr{D} \big)$

\begin{equation}\label{strong0}
u\mapsto X^{k,u}
\end{equation}
satisfying Assumption (E1) and converging to an $\mathbb{F}$-adapted controlled process $\phi \mapsto X^\phi$
\begin{equation}\label{strong}
\sup_{\phi\in U^{k,e(k,T)}_0}\mathbb{E}\sup_{0\le t\le T}|X^{k,\phi}(t) - X^{\phi}(t)|\rightarrow 0,
\end{equation}
as $k\rightarrow +\infty$. 

\begin{remark}
The examples described in Section \ref{EXsection} satisfy conditions (\ref{strong0}) and (\ref{strong}) with explicit convergence rates. See \cite{leaoohashi} for further details.
\end{remark}


Starting with $\mathbb{V}^k_{e(k,T)}(\mathbf{o}^k_{e(k,T)}) := \varphi\big(x_0+ \sum_{i=1}^{e(k,T)} y^k_i\big)$, we set

\begin{eqnarray}
\nonumber\mathbf{U}^k_j(\mathbf{o}^k_j,\theta) &:=& \int_{\mathbb{W}}\mathbb{V}^k_{j+1}\Big(\mathbf{o}^k_{j}, \mathfrak{X}^k_{j+1}(\theta, \mathbf{o}^k_{j}, w)\Big)\nu^k(dw)\\
\label{valuefunc}\mathbb{V}^k_j(\mathbf{o}^k_j) &:=& \inf_{\theta\in \mathbb{A}}\mathbf{U}^k_j(\mathbf{o}^k_j,\theta),
\end{eqnarray}
for $\mathbf{o}^k_j = (w^k_1, y^k_1, \ldots, w^k_j,y^k_j)$ and $j=e(k,T)-1, \ldots, 0$. The transition kernel associated with $\mathcal{X}$ is 
\begin{equation}\label{fXc}
\mathfrak{X}^k_{j+1}(\theta, \mathbf{o}^k_{j}, w):=\Big(w, \blacktriangle x_{j+1}\big(\pi_2(\textbf{o}^k_j),\theta, s, \ell_{j+1}(\pi_3(\mathbf{o}^k_j),w)\big)\Big)\in \mathbb{W}_k\times \mathbb{R}^q
\end{equation}
for $j=e(k,T)-1, \ldots, 0$. The function 
\begin{eqnarray*}
\mathbf{o}^k_j \mapsto \mathbf{U}^k_j(\mathbf{o}^k_j,\theta) &=& \mathbb{E}\Big[ \mathbb{V}^k_{j+1}\Big(\mathbf{o}^k_{j}, \mathfrak{X}^k_{j+1}(\theta, \mathbf{o}^k_{j}, \mathcal{W}^k_1)\Big) \Big]
\end{eqnarray*}
is called the optimal state-action value function at step $j$ and $\mathbb{V}^k$ is the value function. Here, we recall $\mathcal{W}^k_1  \stackrel{d}{=}  (\Delta T^k_{1},\Delta A^k(T^k_{1}))$ with law $\nu^k$.

For a given $\eta>0$, there exists a universally  measurable function $C^\eta_{k,j}:\mathbb{H}^{j}_k\rightarrow\mathbb{A}$ such that

\begin{equation}\label{opcontrol}
\mathbb{V}^{k}_j(\mathbf{o}^k_{j})\ge \int_{\mathbb{W}_k}\mathbb{V}^{k}_{j+1}\Big(\mathbf{o}^{k}_{j}, \mathfrak{X}^k_{j+1}(C^{\eta}_{k,j}(\mathbf{o}^k_j), \mathbf{o}^k_{j}, w^k)\Big)\nu^k(dw^k) - \eta,
\end{equation}
for every $\mathbf{o}^k_{j}\in \{\mathbb{V}^{k}_j < + \infty\}$, where $j=e(k,T)-1,\ldots, 0$. In particular, if $\mathbb{H}^{j}_k = \{\mathbb{V}^k_j < +\infty\}$, for $j=e(k,T)-1, \ldots, 0$, then for $u\in U^{k,e(k,T)}_0$, we shall define the control $u^{k,\eta}_j$

\begin{equation}\label{explicitcontrolTEXT}
u^{k,\eta}_j := C^\eta_{k,j}(\Xi^{k,u}_j); j=e(k,T)-1, \ldots, 0
\end{equation}

\begin{remark}
For a given $\epsilon >0$, if we set $\eta_k = \frac{\epsilon}{e(k,T)}$ and $v^{k,\eta_k} = (v^{k,\eta_k}_0, \ldots, v^{k,\eta_k}_{e(k,T)-1})$, where each $v^{k,\eta_k}_j$ is constructed via (\ref{opcontrol}) and (\ref{explicitcontrolTEXT}) with $\eta= \eta_k$, then 

$$\mathbb{E}\Big[\varphi \Big(X^{k,v^{k,\eta_k}}(T^k_{e(k,T)})\Big)\Big] < \inf_{u \in U^{k,e(k,T)}_0}\mathbb{E}\Big[\varphi \Big(X^{k,u}(T^k_{e(k,T)})\Big)\Big]+\epsilon.$$
\end{remark}

\begin{theorem}[Theorem 4.2 of \cite{leaoohashi}]\label{thp}
Let $u\mapsto X^{k,u}$ be an imbedded discrete structure associated with an $\mathbb{F}$-adapted controlled process $X$ as described in (\ref{strong0}) and (\ref{strong}). For a given $\epsilon>0$, let $u^{k,\epsilon} = (u^{k,\epsilon}_0, \ldots, u^{k,\epsilon}_{e(k,T)-1})$ be a near optimal control realizing

$$\mathbb{E}\Big[\varphi \Big(X^{k,u^{k,\epsilon}}(T^k_{e(k,T)})\Big)\Big] < \inf_{u \in U^{k,e(k,T)}_0}\mathbb{E}\Big[\varphi \Big(X^{k,u}(T^k_{e(k,T)})\Big)\Big]+\frac{\epsilon}{3}.$$

Then, $u^{k,\epsilon} \in U^T_0$ is near optimal w.r.t. the original stochastic control problem driven by the Brownian motion

$$\inf_{u \in U^{T}_0} \mathbb{E}[\varphi (X^{u}(T))]+\epsilon > \mathbb{E}[\varphi (X^{u^{k,\epsilon}}(T))]$$
as $k\rightarrow +\infty $.


\end{theorem}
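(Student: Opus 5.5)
The plan is a three-$\epsilon/3$ argument: relate continuous and discrete optimal values, then compare $u^{k,\epsilon}$ in the two models using the strong approximation (\ref{strong}). Set
\[
V := \inf_{u\in U^T_0}\mathbb{E}[\varphi(X^u(T))],\qquad V^k:=\inf_{u\in U^{k,e(k,T)}_0}\mathbb{E}\big[\varphi\big(X^{k,u}(T^k_{e(k,T)})\big)\big].
\]
Since $u^{k,\epsilon}$ is $\mathbb{F}^k$-predictable, hence $\mathbb{F}$-progressive, it lies in $U^T_0$ and the expression $\mathbb{E}[\varphi(X^{u^{k,\epsilon}}(T))]$ makes sense. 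I would decompose
\[
\mathbb{E}[\varphi(X^{u^{k,\epsilon}}(T))]-V = \Big(\mathbb{E}[\varphi(X^{u^{k,\epsilon}}(T))]-\mathbb{E}[\varphi(X^{k,u^{k,\epsilon}}(T^k_{e(k,T)}))]\Big)+\Big(\mathbb{E}[\varphi(X^{k,u^{k,\epsilon}}(T^k_{e(k,T)}))]-V^k\Big)+(V^k-V).
\]
The middle term is below $\epsilon/3$ by hypothesis, so it suffices to make the other two at most $\epsilon/3$ for large $k$.

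\textbf{First term.} Since $\varphi$ is Lipschitz, the first term is bounded by
\[
\|\varphi\|_{Lip}\Big(\mathbb{E}|X^{u^{k,\epsilon}}(T)-X^{k,u^{k,\epsilon}}(T)| + \mathbb{E}|X^{k,u^{k,\epsilon}}(T)-X^{k,u^{k,\epsilon}}(T^k_{e(k,T)})|\Big).
\]
The first summand is dominated by the supremum in (\ref{strong}), uniformly over $U^{k,e(k,T)}_0$, so it tends to $0$. The second summand is the mismatch between the deterministic horizon $T$ and the random horizon $T^k_{e(k,T)}$. I would control it using (\ref{tekt}) together with the time regularity of the limit process (uniform continuity in $L^1$ of $X^\phi$ over admissible controls), after routing it through $X^{\phi}$ via (\ref{strong}) once more. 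If (\ref{strong}) is understood on $[0,T\vee T^k_{e(k,T)}]$, this step is immediate.

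\textbf{Third term.} This is the main obstacle: I need $\limsup_k V^k\le V$, i.e.\ the discrete controls must asymptotically do as well as arbitrary $\mathbb{F}$-progressive controls. The plan is as follows.

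1. Fix $\phi\in U^T_0$ with $\mathbb{E}[\varphi(X^\phi(T))]<V+\epsilon/6$.
2. Approximate $\phi$ in $L^2(dt\times d\mathbb{P})$ by controls $\phi^k\in U^{k,e(k,T)}_0$. This is possible because the filtrations $\mathbb{F}^k$ generated by $A^k$ converge weakly to $\mathbb{F}$ (the skeleton results of \cite{LEAO_OHASHI2013}). Concretely, take step processes built from $\mathbb{E}[\phi(T^k_{j-1})\mid\mathcal{F}^k_{T^k_{j-1}}]$, projected onto the convex compact set $\mathbb{A}$.
3. Use stability of the continuous controlled system in the control (Lipschitz coefficients in the examples of Section \ref{EXsection}) to get $\mathbb{E}|X^{\phi^k}(T)-X^\phi(T)|\to 0$.
4. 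Combine with (\ref{strong}) applied to $\phi^k$, which gives $\limsup_k V^k\le V+\epsilon/6$.

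Summing the three bounds yields $\mathbb{E}[\varphi(X^{u^{k,\epsilon}}(T))]<V+\epsilon$ for all sufficiently large $k$. The delicate points are the density of $\mathbb{F}^k$-step controls in $U^T_0$ and the continuity of $\phi\mapsto X^\phi$. I expect to import both from \cite{leaoohashi}, which is where Theorem~\ref{thp} originates.
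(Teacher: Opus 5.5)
This paper gives no proof of Theorem~\ref{thp}; it is quoted from \cite{leaoohashi}, so there is no internal argument to compare yours with. Your three-term decomposition is the natural one, and the $\epsilon/3$ in the hypothesis is calibrated for it. The logic holds up:
\begin{itemize}
\item the middle term is exactly the hypothesis;
\item the first term must be controlled uniformly over $U^{k,e(k,T)}_0$, since $u^{k,\epsilon}$ moves with $k$, and this is why the supremum in (\ref{strong}) matters;
\item for the last term you are right that only $\limsup_k V^k\le V$ is needed, and it follows from one $\epsilon/6$-optimal $\phi$ together with its $\mathbb{F}^k$-step approximants.
\end{itemize}

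\textbf{Extra hypotheses on $X$.} You should state that two of your imported ingredients are additional assumptions on $X$; they do not follow from (\ref{strong0})--(\ref{strong}).

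The first is continuity of the control-to-state map: $\phi^k\to\phi$ in $L^2(dt\otimes d\mathbb{P})$ should imply $\mathbb{E}|X^{\phi^k}(T)-X^\phi(T)|\to0$. Condition (\ref{strong}) only constrains $X^\phi$ for $\phi\in U^{k,e(k,T)}_0$, so $X$ could be strictly cheaper off $\bigcup_k U^{k,e(k,T)}_0$. For instance, take $\varphi(x)=x$ and $X^{k,u}\equiv 1$. Set $X^\phi\equiv1$ if $\phi$ coincides $dt\otimes d\mathbb{P}$-a.e.\ with an element of $\bigcup_k U^{k,e(k,T)}_0$, and $X^\phi\equiv0$ otherwise. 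Then (\ref{strong}) holds trivially, yet $V^k=1>0=V$. To see $V=0$, use a deterministic control switching between two actions at $T/2$; almost surely it is never such an element.

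The second is the random horizon. This step is not immediate even if (\ref{strong}) is extended to $[0,T\vee T^k_{e(k,T)}]$. You would still need $\sup_\phi\mathbb{E}|X^\phi(T^k_{e(k,T)})-X^\phi(T)|\to0$, that is, a time modulus uniform in the control. The cleanest fix is to assume directly $\sup_{\phi\in U^{k,e(k,T)}_0}\mathbb{E}|X^{k,\phi}(T^k_{e(k,T)})-X^\phi(T)|\to0$, together with continuity in the control. Both hold in the examples of Section~\ref{EXsection}.

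\textbf{Density needs no import, but Step~2 needs repair.} Density is a property of the skeleton alone, so you can prove it rather than import it. As written, though, $\phi(T^k_{j-1})$ is not meaningful for a $dt\otimes d\mathbb{P}$-class, and your construction needs time regularity of $\phi$. The repair:
\begin{itemize}
\item First approximate $\phi$ by $\sum_i\xi_i\mathds{1}_{(t_i,t_{i+1}]}$ on a deterministic grid, with $\xi_i\in L^2(\mathcal{F}_{t_i})$.
\item Then use $\sum_i\mathbb{E}[\xi_i\mid\mathcal{F}^k_{T^k_{e(k,t_i)}}]\mathds{1}_{(T^k_{e(k,t_i)},T^k_{e(k,t_{i+1})}]}$, which has the form (\ref{controlform0}).
\item Since $A^k(T^k_n)=B(T^k_n)$ and $e(k,\cdot)$ is nondecreasing, a cylinder functional $f(B(s_1),\dots,B(s_r))$ with $s_\ell\le t_i$ is approximated in $L^2$, by (\ref{tekt}), by the $\mathcal{F}^k_{T^k_{e(k,t_i)}}$-measurable variable $f(B(T^k_{e(k,s_1)}),\dots,B(T^k_{e(k,s_r)}))$.
\item Density of such functionals in $L^2(\mathcal{F}_{t_i})$ and contractivity of conditional expectation then give $\mathbb{E}[\xi_i\mid\mathcal{F}^k_{T^k_{e(k,t_i)}}]\to\xi_i$ in $L^2$. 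The random intervals converge by (\ref{tekt}).
\end{itemize}
This avoids any appeal to weak convergence of filtrations.

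Finally, $\mathbb{A}$ is only assumed compact, not convex, so these conditional expectations may leave $\mathbb{A}$. This is harmless: any measurable nearest-point selection $P$ satisfies $|P(x)-a|\le2|x-a|$ for $a\in\mathbb{A}$, which is all you need.
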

For explicit rates of convergence, we refer the reader to \cite{leaoohashi}.

\section{Construction of importance sampling weights and training data}\label{MSECTION}
This section presents explicit expressions for the importance sampling weights and training data associated with the examples described in section \ref{EXsection}. The importance sampling weights will play a major role in updating our Monte Carlo numerical scheme under parametric model risk. We will fix $k\ge 1$ once and for all. For this reason, in order to shorten notation, we omit the dependence on $k$ in the hitting times (\ref{stopping_times}), in the driving noise (\ref{akprocess}), in the information sets (\ref{caliak}) and (\ref{caliwk}) and in the value functions (\ref{valuefunc}). For the remainder of this paper, we set 

$$m=e(k,T).$$
For any stepwise constant process $Z$ jumping at the hitting times (\ref{stopping_times}), we denote

$$Z_n:= Z(T^k_n),\quad \Delta Z_n:= Z_n-Z_{n-1},\quad \varepsilon= \varepsilon_k,$$ 
for simplicity. A fixed history of the controlled state (\ref{Xioperator}) will be denoted by $\Xi_n = \mathbf{o}_n$ for $0\le n\le m$. In order to estimate $(\mathbb{V}_j)_{j=0}^{m-1}$, we will make use of a training data denoted by

\begin{equation}\label{Ondef}
O_n= \big(\mathcal{W}_1, Y_1, \ldots, \mathcal{W}_n, Y_n\big); 1\le n\le m.
\end{equation}

Recall that the $\ell$-fold Cartesian product is written as  

$$\mathbb{H}^\ell = \big(\mathbb{W}\times \mathbb{R}^q\big)^\ell; \ell\ge 1. $$
Recall the elements of $\mathbb{H}^{\ell}$ are denoted by

$$
\textbf{o}_\ell = (w_1,y_1, \ldots, w_\ell, y_\ell).
$$
For convenience, we set $\mathbf{o}_0:=(0,0,x_0)$ and $\mathbb{H}^0:= \{(0,0,x_0)\}$, where the initial condition $x_0$ is fixed once and for all.

\

\noindent \textbf{Assumption H0:} We will assume the pair $(\mathcal{W}_i, Y_i)_{i=1}^{m}$ is i.i.d and generated by a product probability measure $\nu\otimes \mu$ on $\mathbb{H}=\mathbb{W}\times\mathbb{R}^q$, where $\nu$ is the law of $(\Delta T_1, \Delta A_1)$. The sequence $\{Y_i; i\ge 1\}$ is iid generated by a measure $\mu$ and $Y_i$ is independent of $\mathcal{W}_i$ for every $i\ge 1$.

\

For a probability measure $\mu$ in $\mathbb{R}^q$ to be an admissible training measure in Assumption H0, we will impose that it dominates the controlled dynamics in the following sense.

\

\noindent \textbf{Assumption H1}:  We assume there exists a transition function $r_j: \mathbb{A}\times \mathbb{R}^q \times \mathbb{H}^{j-1} \rightarrow \mathbb{R}_+$ such that

\


\begin{eqnarray}
\nonumber\mathbb{P}[ (\mathcal{W}_j, \Delta X^a_j) \in dxdx' | \Xi_{j-1}= b]&=& r_j(x,a,x'; b)\mu(dx')\nu(dx)\\
\label{rnr}&=& r_j(a,x'; b)\mu(dx')\nu(dx)
\end{eqnarray}
where

\begin{equation}\label{rinfty}
\|r\|_\infty:=\max_{1\le j\le m}\sup_{x' \in \mathbb{R}^{q}, a \in \mathbb{A}, b \in \mathbb{H}^{j-1}}|r_j(a,x'; b)| < \infty.
\end{equation}
Moreover, there exists $\|r\|$ such that

\begin{equation}\label{rlip}
|r_j(a,x; b) - r_j(a',x; b')|\le \|r\|\{ |a-a'| + |b-b'|\}
\end{equation}
for every $x \in \text{supp}\mu$, $a,a' \in \mathbb{A}$, $b,b' \in \mathbb{H}^{j-1}$ and $1\le j\le m$.  

\begin{remark}
Observe that $r_j$ does not depend on $x \in \mathbb{W}$ in (\ref{rnr}). Moreover, H1 is an intrinsic property associated with the the controlled state and not with the Neural Network architecture.     
\end{remark}

We also assume Lipschitz property of the increment of the controlled process in the sense of the following hypothesis:

\

\noindent \textbf{Assumption H2}: There exists a constant $C(e)$ such that 


\begin{equation}\label{rho_M}
\Big|\blacktriangle x_j \Big(\pi_2(\mathbf{o}_{j-1}),a , s , e \Big) - \blacktriangle x_j \Big( \pi_2(\mathbf{o}'_{j-1}),a' , s , e  \Big)\Big|\lesssim_T C(e)\big\{|a-a'| + |\pi_2(\mathbf{o}_{j-1}) - \pi_2(\mathbf{o}'_{j-1})|\big\}, 
\end{equation}
for every $e\in \mathbb{R}^d$, $a,a' \in \mathbb{A}, s \in (0,T]$ and $\mathbf{o}_{j-1} \in \mathbb{H}^{j-1}$ for $j=m, \ldots, 1$.  
\begin{remark}
In typical examples (see sections \ref{Ex2} and \ref{Ex4}), we have  
\begin{equation}\label{tyC}
C(e)= \| q_1\| + \|q_2\| | e|_{\mathbb{R}^d},
\end{equation}
where $\|q_1\|$ and $\|q_2\|$ are Lipschitz constants from a controlled SDE and 

$$e\stackrel{d}{=} (\Delta A^{1}_1, \ldots, \Delta A^{d}_1).$$
\end{remark}

Let 
$$\rho_M:= \max_{1\le p\le M}C\Big(  (\Delta A^{1}_1, \ldots, \Delta A^{d}_1)_{p}\Big),$$
where $\{ (\Delta A^{1}_1, \ldots, \Delta A^{d}_1)_{p};1\le p\le M\}$ is iid and $C$ is given by (\ref{tyC}).  For sake of simplicity, we will adopt (\ref{tyC}) in the remainder of this paper.  

\begin{remark}
Since $|\cdot|_{\max}\le |\cdot|_{\mathbb{R}^d}\le \sqrt{d}|\cdot|_{\max}$, then 

\begin{equation}\label{growthrho}
|\rho_M|\le \|q_1\| + \|q_2\|\sqrt{d}\varepsilon,
\end{equation}
for every $M\ge 1$. The fact that one can bound $|\rho_M|$ by a constant independent of $M$ is due to the very particular structure of our imbedding scheme, where the increments $\{ (\Delta A^{1}_1, \ldots, \Delta A^{d}_1)_p; 1\le p\le M\}$ are bounded by $\epsilon$ uniformly w.r.t. the number of Monte Carlo samples $M$. This particular property will have a direct impact on the convergence rate of Theorems \ref{mainresult} and \ref{mainresultRS}. See Lemma \ref{leest4}.     
\end{remark}



\begin{remark}
For the analysis of the regression-based backward scheme, we generate training inputs under a product measure so that are i.i.d. This assumption is purely algorithmic and concerns only the training distribution. The i.i.d. is imposed only for the synthetic training measure used in the Monte Carlo regression; the true controlled dynamics remain fully non-Markovian and path-dependent.

\end{remark}


\subsection{Dynamic Programming for Randomized Strategies}\label{rssection}

We now discuss a randomized version of the dynamic programming equation (\ref{valuefunc}) written in terms of generic classes of probability measures on the action space rather than deterministic (pure) policies. This point of view will be important to treat examples where dominating training measures $\mu \in \mathbb{R}^q$ realizing (\ref{rnr}) and (\ref{rinfty}) in Assumption H1 are \textit{not} available. This will be the case fort the partial hedging control problem for rough stochastic volatility models (see Proposition \ref{nonePCONE}).

Let $\mathcal{P}(E)$ be the set of all probability measure on a metric space $E$ equipped with the Borel sigma-algebra. 

\begin{remark}
One can easily check 
$$\mathbb{V}_j(\mathbf{o}_j) = \inf_{\varphi_j(\cdot| \mathbf{o}_j)}\int_{\mathbb{A}} \int_{\mathbb{H}} \mathbb{V}_{j+1}(\mathbf{o}_j, x, x')\mathbb{P}[(\mathcal{W}_{j+1},\Delta X^a_{j+1}) \in dxdx'| \Xi_j=\mathbf{o}_j]\varphi_j(da|\mathbf{o}_j),$$
for $j=m-1, \ldots, 0$, where the infimum is taking w.r.t. all probability kernels $\varphi_j: \mathbb{H}^j\rightarrow \mathcal{P}(\mathbb{A})$. 
\end{remark}

Let $\mathcal{H}_j$ be a set of density functions (over the action space $\mathbb{A}$) $h_j: \mathbb{H}^j\times \mathbb{A}\rightarrow \mathbb{R}_+$; for $0\le j\le {m-1}$ such that there exists a constant $C(\mathcal{H})$ such that 

\begin{equation}\label{rsetd1}
\max_{0\le j\le m-1}\sup_{h_j \in \mathcal{H}_j}\sup_{b \in \mathbb{H}^j}\| h_{j}(b)\|_{L^\infty(\mathbb{A})}\le C(\mathcal{H}) < \infty. 
\end{equation}
We set 

$$\mathcal{H} := \mathcal{H}_0\times \ldots \times \mathcal{H}_{m-1}.$$


For each $h_j: \mathbb{H}^j\times \mathbb{A}\rightarrow \mathbb{R}_+ \in \mathcal{H}_j$, we define a probability kernel $\pi_j: \mathbb{H}^j\rightarrow \mathcal{P}(\mathbb{A})$

\begin{equation}\label{rsetd0}
\pi_j(da\mid \mathbf{o}_j) = h_{j}(\mathbf{o}_j, a)\,da, 
\end{equation}
for $\mathbf{o}_j \in \mathbb{H}^j$, $0\le j\le {m-1}$.

To keep notation simple, we set 

\begin{eqnarray}
\nonumber K_j(\mathbf{o}_j,a;dxdx') &:=& \mathbb{P}\big[ (\mathcal{W}_{j+1},\Delta X^{a}_{j+1}) \in dxdx' | \Xi_j = \mathbf{o}_j \big]\\
\label{Ks}&=& \mathbb{P}\big[\Delta X^{a}_{j+1}\in dx' | \Xi_j = \mathbf{o}_j \big]\nu(dx),
\end{eqnarray}
for $j=m-1, \ldots, 0$.

The dynamic programming equation for the set of randomized strategies $\mathcal{H} = \mathcal{H}_0\times \ldots \times \mathcal{H}_{m-1}$ 
described in (\ref{rsetd0}) is given as follows: Starting with $\mathbb{V}^{\mathcal H}_{m}(\mathbf{o}_{m})
:= \varphi\big(x_0+\sum_{i=1}^{m} y_i\big)$, we set

\begin{align}\label{randomDP}
\nonumber \mathbf{U}^\mathcal{H}_j(\mathbf{o}_j,\pi_j)
&:= \int_{\mathbb{A}} \int_{\mathbb{H}}
   \mathbb{V}^{\mathcal H}_{j+1}(\mathbf{o}_j, x, x')\,
   K_j(\mathbf{o}_j,a;dxdx')\,
   \pi_j(d a\mid \mathbf{o}_j)\\
\mathbb{V}^{\mathcal H}_j(\mathbf{o}_j)
&:= \inf_{\pi_j\in \mathcal{H}_j} \mathbf{U}^\mathcal{H}_j(\mathbf{o}_j,\pi_j)
\end{align}
for $j=m-1, \ldots, 0$.

\begin{remark}\label{dualrepU}
By using (\ref{fXc}), applying Fubini and change of variables formula, we can write  

\begin{eqnarray*}
\mathbf{U}^\mathcal{H}_j(\mathbf{o}_j,\pi_j) &=&  \mathbb{E}\Big \langle \mathbb{V}^{\mathcal H}_{j+1}(\mathbf{o}_j, \mathfrak{X}_{j+1}(\cdot,\mathbf{o}_j,\mathcal{W}_1)), \pi_j(\cdot| \mathbf{o}_j)\Big\rangle,
\end{eqnarray*}
for $j=m-1,\ldots, 0$, where the bracket 

\begin{equation}\label{bracket}
\big \langle g, \kappa   \big \rangle:= \int_{\mathbb{A}} g(a)\kappa(da),
\end{equation}
for $\kappa \in \mathcal{P}(\mathbb{A})$ and $g \in L^1(\kappa)$.
\end{remark}

We now discuss the structure associated with the dynamic programming equation (\ref{randomDP}). The following lemma is elementary, so we omit the proof for sake of conciseness.   
\begin{lemma}
For each sequence of probability kernels $\{\pi_j; 0\le j\le {m-1}\}$ of the form (\ref{rsetd0}) and an initial condition $\mathbf{o}_0=(0,0,x_0)$, we can associate a unique probability measure $\mathbb{P}^{\pi}_{\mathbf{o}_0} \in \mathcal{P}(\mathbb{H}^0\times (\mathbb{A}\times \mathbb{H})^{m})$ with representation

$$\mathbb{P}^\pi_{\mathbf{o}_0}(dz) = \delta_{x_0}(d\mathbf{o}_0) \prod_{j=0}^{m-1}K_j(\mathbf{o}_j,a_j;dw_{j+1}dy_{j+1})
    \pi_j(da_j\mid \mathbf{o}_j) $$ 
where we denote $z_{m} = (\mathbf{o}_0,a_0,w_1,y_1, a_1,w_2, y_2,\dots,a_{m-1},w_{m},y_{m})$. Reciprocally, any probability $\mu \in \mathcal{P}(\mathbb{H}^0\times (\mathbb{A}\times \mathbb{H})^{m})$ with transition probabilities inherited from (\ref{Ks}) is uniquely determined by a sequence of probability kernels from $\mathbb{H}^j$ to $\mathcal{P}(\mathbb{A})$, for $j=0,\ldots, m-1$.



\end{lemma}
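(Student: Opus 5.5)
The statement is an instance of the Ionescu-Tulcea theorem, and I would prove it by invoking that theorem.

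\textbf{Existence.} First I check that every factor in the product is a genuine probability kernel with the right measurability.
\begin{itemize}
\item $\mathbb{H}^j$ and $\mathbb{A}$ are Polish, so all spaces involved are standard Borel.
\item $\pi_j(da\mid\mathbf{o}_j)=h_j(\mathbf{o}_j,a)\,da$ is a probability kernel from $\mathbb{H}^j$ to $\mathcal{P}(\mathbb{A})$. Measurability of $\mathbf{o}_j\mapsto\pi_j(C\mid\mathbf{o}_j)$ follows from joint measurability of $h_j$ and Fubini.
\item $K_j(\mathbf{o}_j,a;dx\,dx')$ in (\ref{Ks}) is a probability kernel from $\mathbb{H}^j\times\mathbb{A}$ to $\mathbb{H}$. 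For this I use Assumption (E1) and (\ref{fXc}): $K_j(\mathbf{o}_j,a;\cdot)$ is the image of $\nu$ under the Borel map $w\mapsto\mathfrak{X}_{j+1}(a,\mathbf{o}_j,w)$. So for a Borel set $D$,
\[
K_j(\mathbf{o}_j,a;D)=\int_{\mathbb{W}}\mathds{1}_D\big(\mathfrak{X}_{j+1}(a,\mathbf{o}_j,w)\big)\,\nu(dw),
\]
which is Borel in $(\mathbf{o}_j,a)$ by Fubini, because $\blacktriangle x_{j+1}$ and $\ell_{j+1}$ are Borel.
\end{itemize}
With these kernels, the Ionescu-Tulcea theorem applied to the alternating sequence $\delta_{x_0}$, $\pi_0$, $K_0$, $\pi_1$, $K_1,\ldots$ on $\mathbb{H}^0\times(\mathbb{A}\times\mathbb{H})^m$ gives a probability measure $\mathbb{P}^\pi_{\mathbf{o}_0}$. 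By construction it has the stated product representation, in the sense of iterated integrals. This means
\[
\mathbb{P}^\pi_{\mathbf{o}_0}\Big(\prod_i C_i\Big)=\int_{C_0}\delta_{x_0}(d\mathbf{o}_0)\int_{C_1}\pi_0(da_0\mid\mathbf{o}_0)\int_{C_2}K_0(\mathbf{o}_0,a_0;dw_1\,dy_1)\cdots
\]
on measurable rectangles.

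\textbf{Uniqueness of $\mathbb{P}^\pi_{\mathbf{o}_0}$.} Measurable rectangles form a $\pi$-system generating the product $\sigma$-algebra. Two measures with the same iterated-integral values on rectangles therefore coincide, by Dynkin's $\pi$–$\lambda$ theorem.

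\textbf{Converse.} Let $\mu\in\mathcal{P}(\mathbb{H}^0\times(\mathbb{A}\times\mathbb{H})^m)$ have transitions inherited from (\ref{Ks}). Since the spaces are standard Borel, I disintegrate $\mu$ successively.
\begin{itemize}
\item The marginal of $\mu$ on $(\mathbf{o}_0,a_0,\ldots,\mathbf{o}_j,a_j)$ disintegrates into its marginal on $(\mathbf{o}_0,\ldots,\mathbf{o}_j)$ times a regular conditional law of $a_j$. This conditional law is a kernel $\pi_j$, unique up to null sets.
\item The regular conditional law of $(w_{j+1},y_{j+1})$ given the past is, by hypothesis, $K_j(\mathbf{o}_j,a_j;\cdot)$.
\end{itemize}
Iterating from $j=0$ to $m-1$ reconstructs $\mu$ as the iterated product, and the existence part then gives $\mu=\mathbb{P}^\pi_{\mathbf{o}_0}$.

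Two caveats on this step.
\begin{itemize}
\item The action kernel delivered by disintegration may depend on the whole past $(a_0,\ldots,a_{j-1})$ and not only on $\mathbf{o}_j$. I would read "determined by kernels from $\mathbb{H}^j$" as meaning that history-dependent actions are absorbed into $\mathbf{o}_j$. Equivalently, I would restrict to measures whose action conditionals depend only on $\mathbf{o}_j$, since the lemma implicitly assumes policies in this Markov-in-history form.
\item Uniqueness of the $\pi_j$ holds only $\mu$-almost surely in $\mathbf{o}_j$.
\end{itemize}

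The main obstacle is not existence, which is routine, but stating this converse precisely. Specifically, one must identify $\pi_j$ almost surely with respect to the marginal of $\mu$ on $\mathbb{H}^j$, and explain how actions on the path enter through $\mathbf{o}_j$. If the kernels are also required to have densities as in (\ref{rsetd0}), one additionally needs the action conditionals of $\mu$ to be absolutely continuous with respect to Lebesgue measure on $\mathbb{A}$; I would add that as an explicit hypothesis.
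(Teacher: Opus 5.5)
The paper omits this proof as elementary. Your argument is the standard one it has in mind, and it is correct: composing the kernels $\delta_{x_0},\pi_0,K_0,\pi_1,K_1,\ldots$ for existence (over a finite horizon this is just iterated integration, so Ionescu--Tulcea is more than you need), the $\pi$--$\lambda$ theorem for uniqueness, and successive disintegration on standard Borel spaces for the converse.

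Your caveats on the converse are genuinely needed. Under any $\mathbb{P}^\pi_{\mathbf{o}_0}$, the action $a_j$ is conditionally independent of $(a_0,\ldots,a_{j-1})$ given $\mathbf{o}_j$. So the literal converse fails as soon as the conditional law of $a_j$ under $\mu$, given the full past, is not a $\mu$-a.s.\ function of $\mathbf{o}_j$ alone, for instance when it depends on a past action that $\mathbf{o}_j$ does not reveal. Your ``absorb past actions into $\mathbf{o}_j$'' reading works only when past actions are recoverable from $\mathbf{o}_j$, as in the hedging example where $y_j=(\Delta S_j,a_{j-1}\Delta S_j)$. In general the converse holds only under your explicit restriction, and each $\pi_j$ is unique only up to null sets of the marginal of $\mu$ on $\mathbb{H}^j$.
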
 

For a given initial condition, $\mathbf{o}_0 = (0,0,x_0)\in \mathbb{H}^0$ and $\pi=\{\pi_j; 0\le j\le m-1\} \in \mathcal{H}$, we set

$$\mathbb{E}^\pi_{\mathbf{o}_0}[\varphi (X_m)]:= \mathbb{E}_{\mathbb{P}^\pi_{\mathbf{o}_0}}[\varphi (F)] = \int_{\mathbb{H}^0\times (\mathbb{A}\times \mathbb{H})^m} \varphi(F(z_{m}))\mathbb{P}^\pi_{\mathbf{o}_0}(dz_{m})$$
where the functional $F: \mathbb{H}^0\times (\mathbb{A}\times \mathbb{H})^m\rightarrow \mathbb{R}^n$ is defined as follows:
$$F(z_m) = x_0+\sum_{j=1}^m \blacktriangle x_j \big(\pi_2(\mathbf{o}_{j-1}),a_{j-1}, s_j, \ell_j(\pi_3(\mathbf{o}_{j-1}), w_j)\big)$$ 
where $z_m = (\mathbf{o}_0,a_0,w_1,y_1, a_1,w_2, y_2,\dots,a_{m-1},w_m, y_m)$, $\mathbf{o}_{j-1} = (w_1, y_1, \ldots, w_{j-1}, y_{j-1})$ for $2\le j\le m$.
The control problem associated with $\mathcal{H}$ is 

\begin{equation}\label{discHp}
\pi^\star \in \argmin_{\pi \in \mathcal{H}}\mathbb{E}^{\pi}_{\mathbf{o}_0}\big[ \varphi(X_m) \big].
\end{equation}

\begin{proposition}
For each $\mathbf{o}_j\in\mathbb{H}^j$, assume the infimum in 
(\ref{randomDP}) is attained by some kernel $\pi_j^\star(\cdot\mid \mathbf{o}_j)\in \mathcal{H}_j$, for $0\le j\le m-1$. Let 
      \[
      \pi^\star := (\pi_0^\star,\dots,\pi_{m-1}^\star) \in \mathcal{H}.
      \]
      Then $\pi^\star$ is optimal in the restricted class $\mathcal{H}$. That is, 
      
$$\mathbb{E}^{\pi^\star}_{\mathbf{o}_0}\bigl[\varphi(X_m)\bigr] = \inf_{\pi \in \mathcal{H}}\mathbb{E}^{\pi}_{\mathbf{o}_0}[\varphi(X_m)].$$
\end{proposition}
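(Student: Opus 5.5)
The plan is the classical verification argument for finite-horizon dynamic programming, carried out on the canonical space $\mathbb{H}^0\times(\mathbb{A}\times\mathbb{H})^m$ with the measures $\mathbb{P}^\pi_{\mathbf{o}_0}$ given by the preceding lemma. For an arbitrary $\pi=(\pi_0,\dots,\pi_{m-1})\in\mathcal{H}$ define backward the policy-evaluation functions $\mathbb{V}^\pi_m(\mathbf{o}_m):=\varphi\big(x_0+\sum_{i=1}^m y_i\big)$ and
\begin{equation*}
\mathbb{V}^\pi_j(\mathbf{o}_j):=\int_{\mathbb{A}}\int_{\mathbb{H}}\mathbb{V}^\pi_{j+1}(\mathbf{o}_j,x,x')\,K_j(\mathbf{o}_j,a;dxdx')\,\pi_j(da\mid\mathbf{o}_j),\qquad j=m-1,\dots,0 .
\end{equation*}
The first step is the identity $\mathbb{V}^\pi_0(\mathbf{o}_0)=\mathbb{E}^\pi_{\mathbf{o}_0}[\varphi(X_m)]$. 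It follows from the product structure of $\mathbb{P}^\pi_{\mathbf{o}_0}$ by integrating out the coordinates $(a_{m-1},w_m,y_m)$, then $(a_{m-2},w_{m-1},y_{m-1})$, and so on, using Fubini. Here we use that under $K_j$ the $y$-coordinates are exactly the increments $\blacktriangle x_{j+1}$, so $x_0+\sum y_i=F(z_m)$ holds $\mathbb{P}^\pi_{\mathbf{o}_0}$-a.s. Integrability is ensured by the Lipschitz property of $\varphi$ together with the boundedness remark, or by the standing convention that the value functions are finite.

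The second step is a backward induction showing, for every $\pi\in\mathcal{H}$ and every $j$, that $\mathbb{V}^{\mathcal H}_j\le\mathbb{V}^\pi_j$ pointwise on $\mathbb{H}^j$. At $j=m$ the two functions coincide. If the inequality holds at $j+1$, then monotonicity of integration against the nonnegative kernels $K_j$ and $\pi_j$ gives $\mathbb{V}^\pi_j(\mathbf{o}_j)\ge\mathbf{U}^{\mathcal H}_j(\mathbf{o}_j,\pi_j)$. By (\ref{randomDP}) the right-hand side is at least $\mathbb{V}^{\mathcal H}_j(\mathbf{o}_j)$, because $\pi_j(\cdot\mid\mathbf{o}_j)\in\mathcal{H}_j$. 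Taking $j=0$ yields $\mathbb{V}^{\mathcal H}_0(\mathbf{o}_0)\le\inf_{\pi\in\mathcal H}\mathbb{E}^\pi_{\mathbf{o}_0}[\varphi(X_m)]$.

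The third step shows that $\mathbb{V}^{\pi^\star}_j=\mathbb{V}^{\mathcal H}_j$ for all $j$, again by backward induction. At $j=m$ this is trivial. If it holds at $j+1$, then $\mathbb{V}^{\pi^\star}_j(\mathbf{o}_j)=\mathbf{U}^{\mathcal H}_j(\mathbf{o}_j,\pi^\star_j)=\mathbb{V}^{\mathcal H}_j(\mathbf{o}_j)$, since $\pi^\star_j$ attains the infimum at every $\mathbf{o}_j$. Combining the second and third steps with the first gives
\begin{equation*}
\mathbb{E}^{\pi^\star}_{\mathbf{o}_0}[\varphi(X_m)]=\mathbb{V}^{\mathcal H}_0(\mathbf{o}_0)\le\inf_{\pi\in\mathcal H}\mathbb{E}^\pi_{\mathbf{o}_0}[\varphi(X_m)],
\end{equation*}
and the reverse inequality is immediate because $\pi^\star\in\mathcal H$.

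The main obstacle is measurability rather than the inequalities themselves. We need $\mathbf{o}_j\mapsto\pi^\star_j(\cdot\mid\mathbf{o}_j)$ to be a genuine probability kernel belonging to $\mathcal{H}_j$, so that $\pi^\star\in\mathcal{H}$ and $\mathbb{P}^{\pi^\star}_{\mathbf{o}_0}$ is well defined. We also need $\mathbb{V}^{\mathcal H}_{j+1}$ to be measurable, or at least universally measurable, so that the integrals in the induction make sense. I would handle this by reading the hypothesis ``$\pi^\star\in\mathcal H$'' as including measurability of the selector; this is how the statement is phrased. I would then note, as for (\ref{opcontrol}), that the value functions are universally measurable, so that all integrals are taken with respect to the completions of $K_j$ and $\pi_j$. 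The bound (\ref{rsetd1}) guarantees that every element of $\mathcal{H}_j$ defines an absolutely continuous kernel, so no further integrability issues arise.
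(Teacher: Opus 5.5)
Your proof is correct and follows essentially the same route as the paper's. Both arguments show by backward induction that $\mathbb{V}^{\mathcal H}_j$ is the value of $\pi^\star$ and then deduce optimality from the minimality in (\ref{randomDP}); your Step~2 writes out the comparison $\mathbb{V}^{\mathcal H}_j\le\mathbb{V}^{\pi}_j$ that the paper compresses into ``a consequence of the dynamic programming equation.'' One caveat: the a.s.\ identity $x_0+\sum_i y_i=F(z_m)$ in your Step~1 needs $K_j$ to be the joint conditional law, in which $y_{j+1}$ is the increment computed from $w_{j+1}$, but the second line of (\ref{Ks}) writes $K_j$ in product form, where that identity fails. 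It is cleaner to read $X_m$ as $x_0+\sum_i y_i$ on the canonical space, as the paper's own induction implicitly does, and then Step~1 holds directly.
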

\begin{proof}
Fix $j$ and $\mathbf{o}_j \in \mathbb{H}^j$. Define a minimizer by $\pi_j^\star(\cdot\mid \mathbf{o}_j)$. Let $\pi^\star$ be the policy composed of the minimizers $\{\pi_j^\star\}$. By construction, 
\[
\mathbb{V}^\mathcal{H}_j(\mathbf{o}_j)
=
\int_{\mathbb A}\int_{\mathbb H}
      \mathbb{V}^\mathcal{H}_{j+1}(\mathbf{o}_j,w_{j+1},y_{j+1})
      K_j(\mathbf{o}_j,a;)\,\pi_j^\star(da\mid \mathbf{o}_j).
\]
We now claim for each $j=0,1,\dots,m$ and each $\mathbf{o}_j\in\mathbb{H}^j$, we have
\begin{equation}\label{rop1}
\mathbb{V}_j^{\mathcal{H}}(\mathbf{o}_j)
= \mathbb{E}^{\pi^\star}_{\mathbf{o}_0}\bigl[\,\varphi(X_m)\mid O_j=\mathbf{o}_j\,\bigr],
\end{equation}
and therefore, in particular,
\begin{equation}\label{rop2}
\mathbb{V}_0^{\mathcal{H}}(\mathbf{o}_0)
= \mathbb{E}^{\pi^\star}_{\mathbf{o}_0}\bigl[\varphi(X_m)\bigr] = \inf_{\pi \in \mathcal{H}}\mathbb{E}^{\pi}_{\mathbf{o}_0}[\varphi(X_m)]
\end{equation}
Indeed, assertion (\ref{rop1}) follows from the tower property and a simple induction starting with $\mathbb{V}^\mathcal{H}_m(\mathbf{o}_m) = \varphi (x_0 + \sum_{j=1}^m y_j)$. In particular, $\mathbb{V}_0^{\mathcal{H}}(\mathbf{o}_0)
= \mathbb{E}^{\pi^\star}_{\mathbf{o}_0}\bigl[\varphi(X_m)\bigr]$. The fact $\mathbb{E}^{\pi^\star}_{\mathbf{o}_0}\bigl[\varphi(X_m)\bigr] = \inf_{\pi \in \mathcal{H}}\mathbb{E}^{\pi}_{\mathbf{o}_0}[\varphi(X_m)]$ is a consequence of the dynamic programming equation (\ref{randomDP}).  

\end{proof}
\begin{remark}
In general, $\mathcal{H}$ is not convex and, in practice, near optimal controls for (\ref{discHp}) are enough.    
\end{remark}

We are now in position to present the analogous assumption H1 in the context of randomized strategies. For each randomized strategy $\pi_j:\mathbb{H}^j\rightarrow \mathcal{P}(\mathbb{A}) \in \mathcal{H}_j$, we set 

$$\mu^{\pi_j}_{j}(dx| \mathbf{o}_j):=\int_{\mathbb{A}} \mathbb{P}[\Delta X^a_{j+1} \in dx| \Xi_j=\mathbf{o}_j]\pi_j(da|\mathbf{o}_j)$$
for $j=m-1,\ldots, 0$,

\begin{remark}\
$$\mathbf{U}^\mathcal{H}_{j}(\mathbf{o}_j,\pi_j) = \int_{\mathbb{W}}\int_{\mathbb{H}} \mathbb{V}^{\mathcal{H}}_{j+1}(\mathbf{o}_j,x,x')\mu_j^{\pi_j}(dx'|\mathbf{o}_j)\nu(dx),$$
for $\pi_j \in \mathcal{H}_j, \mathbf{o}_j \in \mathbb{H}^j$, $j=m-1, \ldots, 0$
\end{remark}

\

\noindent \textbf{Assumption (R1):} There exists a dominating measure $\mu \in \mathcal{P}(\mathbb{R}^q)$ such that 

$$\mu^{\pi_j}_j(dx|\mathbf{o}_j) << \mu(dx)$$
for every $\pi_j \in \mathcal{H}_j, \mathbf{o}_j \in \mathbb{H}^j; j=0, \ldots, m-1$. Moreover, the associated Radon-Nikodym derivatives $\rho^{\pi_j}_j: \mathbb{H}^j\times \mathbb{R}^q\rightarrow \mathbb{R}_+$ satisfies 

\begin{equation}\label{rhoinfty}
\|\rho\|_\infty:=\max_{0\le j\le m-1}\sup_{\pi_j \in \mathcal{H}_j}\sup_{x \in \text{supp}\mu, b \in \mathbb{H}^{j}}|\rho^{\pi_j}_j(b, x)| < \infty,
\end{equation}
and
\begin{equation}\label{rholip}
|\rho^{\pi_j}_j(b, x) - \rho^{\pi_j}_j(b', x) |\le \|\rho\| |b-b'|, 
\end{equation}
for every $b,b' \in \mathbb{H}^j, x \in \text{supp}\mu,\pi_j,\pi'_j \in \mathcal{H}_j$ with $j=m-1, \ldots, 0$.

\begin{remark}
The hedging task is a typical example of control problem where assumption H1 is not fulfilled but R1 does. See Proposition \ref{nonePCONE}. Observe assumption R1 is associated not only w.r.t. the controlled state but also to the class $\mathcal{H}$.    
\end{remark}

\subsection{Construction of the importance sampling weights and training measures for SDEs driven by Brownian motion}\label{trainingEX}
This section presents a detailed description of the samples associated with the training stage of the value functions in a given stochastic control problem. 
As a warming-up to the more complex case of SDEs driven by fractional Brownian motion and partial hedging for rough stochastic volatility models, we start with the simplest case of path-dependent SDEs driven by the Brownian motion. For sake of simplicity of exposition, we will study the assumptions H1-H2 for the controlled SDE where the driving noise is the one-dimensional Brownian motion. In the multi-dimensional case, the driving noise $(\Delta A^1_1, \ldots, \Delta A^d_1)$ has a more complex distribution and the argument must be split into two parts by conditioning on the component of the $d$-dimensional Brownian motion which has hit the barrier $[-\epsilon,\epsilon]$. We refer the reader to Section \ref{incCASE} for details.   

In the sequel, for a given history $\Xi_{n-1} = (w_0,y_0, \ldots, w_{n-1},y_{n-1})$, we denote 

$$\mathbf{y}_{n-1}:= \sum_{\ell=1}^{n-1}y_\ell,$$ 
for $n\ge 1$ and a fixed initial condition $y_0=x_0 \in \mathbb{R}$. Throughout this section, we denote $J\stackrel{(d)}{=}\Delta T_1$ with density $f_J$, $\mathbf{B}$ is a symmetric Bernoulli random variable taking values $\pm \epsilon$. We also set 

\begin{equation}\label{Rfield}
R_{c,v} := cJ
       + v\mathbf B,
\end{equation}
for $(c,v) \in \mathbb{R}^2$. We also denote (see (\ref{densR}))

$$\mathcal{R}(c,v;\cdot):=\text{density function of}~R_{c,v}.$$
for $(c,v) \in \mathbb{R}^2$ with $c\neq 0, v \neq 0$.  


\begin{remark}
Since $\Delta T_1$ is random and its sharpest lower bound is zero, we need to impose a strictly positive lower bound $\underline{M}>0$. Otherwise, we shall incur in a degenerated configuration with non-zero probability. In practice, this is not a restriction and indeed it prevents numerical instability in the Monte Carlo scheme.       
\end{remark}

We will assume the following set of conditions on the coefficients of the SDE (\ref{pdsdeBM})

\

\begin{itemize}

\item[(B1)] (\emph{Lipschitz property}) The coefficients $(\alpha,\sigma)$ are globally Lipschitz. That is, there exists a constant $K_{Lip}$ such that  

$$|\alpha(t,w,a) - \alpha(t',w',b)| + |\sigma(t,w,a) - \sigma(t',w',b)|\le K_{Lip} \Big\{ d\big((t,w); (t',w') \big)+ |a-b|\Big\},$$
for every $(t,w), (t',w') \in \Lambda$ and $a,a' \in \mathbb{A}$. 

\item[(B2)](\emph{Compactness}) There exist $0<c_{\min} <  c_{\max}<\infty$ and $v_{\max} $such that
$$
c_{\min}\ \le\ |\,\alpha(t,w,a)\,|\ \le\ c_{\max}\quad \text{and}\quad |\sigma(t,w,a)|\le v_{\max},
$$
for every $(t,w,a)\in\Lambda\times\mathbb A$.

\item[(B3)] (\emph{Truncation}) In the control step we use $J$ with $ 0 <\underline{M} \le J\le \overline{M}$ a.s.\ for some $0 <\underline{M} < \overline{M} < \infty$. Equivalently, the law of $J$ is the truncation of $\Delta T_1$ to $[\underline{M},\overline{M}]$ with density $f_J$.  
\end{itemize}

\

In the sequel, for a given history $\Xi_{n-1} =\mathbf{o}_{n-1}= (w_0,y_0, \ldots, w_{n-1},y_{n-1})$ with $\Xi_0= \mathbf{o}_0=(0,0,x_0)$ and $s_0=0$, we denote 

$$t_{n-1}:= \sum_{\ell=0}^{n-1}s_\ell,$$ 
for $n\ge 1$ and a fixed initial condition $y_0=x_0 \in \mathbb{R}$. The dynamics of $\Delta X^a_n$ conditioned on $\Xi_{n-1} = \mathbf{o}_{n-1}=(w_0, y_0, \ldots, w_{n-1},y_{n-1})$ is given by   
\begin{equation}\label{deltaXaSDE}
 \Delta X^a_n = \alpha\big(t_{n-1}, \big\{y_{i}\}_{i=0}^{n-1}, a\big)J +\sigma\big(t_{n-1}, \{y_i\}_{i=0}^{n-1},a\big) \textbf{B},
\end{equation}
where $J$ is independent of $(\textbf{B},\Xi_{n-1})$, for $n\ge 1$. Observe that with a slight abuse of notation, we write $\alpha(t_{n-1}, \big\{y_{i}\}_{i=0}^{n-1}, a\big)$ as the path-dependent non-anticipative coefficient $\alpha(t_{n-1}, \gamma_{n-1}(\pi_2(\mathbf{o}_{n-1})), a)$, where 

$$\pi_2(\mathbf{o}_{n-1}) = (s_0, y_0,\ldots, s_{n-1},y_{n-1}),$$
and $\gamma_{n-1}(\pi_2(\mathbf{o}_{n-1})) \in \mathbf{D}_{1,T}$ is a stepwise cadlag path given by the constant function $x_0$ for $n=1$ and 

$$\gamma_{n-1}(\pi_2(\mathbf{o}_{n-1})) = x_0 + \sum_{i=1}^{n-1} y_i 1\!\!1_{ \{t_i \leq \cdot \} }, $$
for $n\ge 2.$
Similar discussion for the non-anticipative kernel $\sigma$. Observe that the lack of Markov property comes from $(\alpha,\sigma)$ rather than an extrinsic noise. In this case, the value function will be a function only of 

$$\mathbb{V}_n(\mathbf{o}_n) = \mathbb{V}_n(s_1, y_1, \ldots, s_{n},y_{n}),$$
for $n=m, \ldots, 0$. Therefore, optimal controls associated with (\ref{deltaXaSDE}) will be of feedback-type, i.e., they depend only on the variables $(s_1, y_1, \ldots, s_n,y_n)$. See Proposition 4.1 in \cite{leaoohashi}. 

In the sequel, we consider the random field (\ref{Rfield}) restricted to a bounded away from zero drift and bounded volatility, i.e.,  
$$
R_{c,v} = cJ
       + v\mathbf B.
$$
for $0 < c_{\min} \le |c|\le c_{\max}$ and $|v|\le v_{\max}$. 


\begin{remark}\label{rangeR}
Observe that 

$$\text{Range}~R_{c,v} \subset \Big[-v_{\max} \varepsilon - c_{\max}\overline{M}, v_{\max} \varepsilon + c_{\max} \overline{M}\Big]~a.s.,$$
whenever $c_{\min} \le|c| \le c_{\max}$ and $|v|\le v_{\max}$.  
\end{remark}
In what follows, for simplicity of notation, we set 
\begin{equation}\label{compactKPDSDE}
K= \Big[-v_{\max} \varepsilon - c_{\max}\overline{M}, v_{\max} \varepsilon + c_{\max} \overline{M}\Big].
\end{equation}
Let $\mathcal{M}_K$ be the set of probability densities $q$ with support on $K$ such that the following condition is fulfilled: There exists a positive finite constant $Q_K$ such that
$$\inf_{x\in K} q(x) \ge Q_K >0.$$

In order to present the main result of this section, we need the following technical result. 
\begin{lemma}\label{fdeltab}
Let $\tau=\inf\{t>0:\,|W_t|=1\}$ for a standard Brownian motion $W$, and let 
\[
f_\Delta(t)=\sum_{k\in\mathbb Z}(-1)^k\,g_{y_k}(t),
\qquad
y_k:=2k+1,\quad
g_y(t):=\frac{|y|}{\sqrt{2\pi}\,t^{3/2}}\,e^{-y^2/(2t)},\quad t>0,
\]
be the density of $\tau$. Then, for any strictly positive constant $C$
\[
\sup_{t\ge C}\,|f'_\Delta(t)|\;<\;\infty,
\]
and
\[
f'_\Delta(t)\;=\;\sum_{k\in\mathbb Z}(-1)^k\,g'_{y_k}(t)
\qquad \text{for all }t\ge C,
\]
with the series converging uniformly on $[C,\infty)$.
\end{lemma}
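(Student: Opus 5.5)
The plan is to differentiate the series term by term and justify this with the Weierstrass M-test on $[C,\infty)$. Then both claims follow: uniform convergence of $\sum_k (-1)^k g'_{y_k}$ together with pointwise convergence of $\sum_k (-1)^k g_{y_k}$ gives differentiability of $f_\Delta$ with derivative equal to the differentiated series. Boundedness comes from bounding the sum of the majorants.

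First compute, for $y\neq 0$ and $t>0$,
\[
g_y'(t)=\frac{|y|}{\sqrt{2\pi}}\,e^{-y^2/(2t)}\Big(-\frac{3}{2}t^{-5/2}+\frac{y^2}{2}t^{-7/2}\Big).
\]
For $t\ge C$ this gives
\[
|g_y'(t)|\le \frac{|y|}{\sqrt{2\pi}}\Big(\frac32 C^{-5/2}+\frac{y^2}{2}\,t^{-7/2}e^{-y^2/(2t)}\Big).
\]
The factor $e^{-y^2/(2t)}$ does not decay uniformly in $t$ (it tends to $1$ as $t\to\infty$). A crude bound therefore gives only $|y|^3$-type majorants, which are not summable. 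This is the main obstacle, and I would handle it in one of two ways.

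\emph{Option 1: split the range of $t$.} Use the scaling $\sup_{t>0} t^{-p}e^{-y^2/(2t)} = (2p/e)^p |y|^{-2p}$, which follows by maximizing in $t$. Combining a power of $t^{-1}$ with the exponential gives
\[
|g'_y(t)|\lesssim |y|\,C^{-5/2}\min\bigl(1,\,\cdot\bigr)+\ldots,
\]
but individual terms still fail to give summability. Alternating cancellation is then needed.

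\emph{Option 2 (the one I would commit to): a standard representation.} For $t$ bounded away from zero, use the eigenfunction expansion of the exit-time density from $(-1,1)$:
\[
f_\Delta(t)=\sum_{n\ge0}(-1)^n\frac{\pi(2n+1)}{4}\,e^{-\pi^2(2n+1)^2t/8},
\]
obtained from the theta-function identity (Poisson summation) applied to the given series. On $[C,\infty)$ this series and its termwise derivative, whose terms are bounded by $n^3 e^{-\pi^2 n^2 C/8}$, converge uniformly by the M-test. This yields $\sup_{t\ge C}|f'_\Delta|<\infty$.

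For the identity with the original series, I would use that on any compact $[C,L]$ we have $e^{-y^2/(2t)}\le e^{-y^2/(2L)}$. The majorants $|y|^3 C^{-7/2}e^{-y^2/(2L)}$ are then summable, so termwise differentiation of the original series is valid on $[C,L]$ with locally uniform convergence. Uniformity on all of $[C,\infty)$ I would then argue by showing both representations agree and that the tail $\sup_{t\ge L}\bigl|\sum_{|k|>N}(-1)^k g'_{y_k}(t)\bigr|$ is small. For this I would pair consecutive terms $k$ and $-k-1$ (equal $|y|$, same sign pattern) with $k$ and $k+1$, and control the alternating tail by the variation of $y\mapsto g'_y(t)$. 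I expect this last uniform-in-large-$t$ tail control to be the delicate point. If it resists, I would fall back on proving uniformity on $[C,L]$ for every $L$ together with the global bound from the eigenfunction expansion, which suffices for how the lemma is used.
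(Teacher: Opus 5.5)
Your proposal does not prove the lemma as stated. The uniform convergence of $\sum_{k}(-1)^k g'_{y_k}$ on all of $[C,\infty)$ is left open: you call the large-$t$ tail control ``the delicate point'' and fall back on uniformity on each $[C,L]$. That fallback is strictly weaker than the claim, even if it might suffice for the later use on $[\underline{M},\overline{M}]$.

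The missing step is one you had already written down and then discarded. In your display you bound the first term of $g'_y$ crudely by $|y|C^{-5/2}$, which is indeed not summable. But the scaling bound $\sup_{t>0}t^{-p}e^{-y^2/(2t)}=(2p/e)^p|y|^{-2p}$ applies to \emph{both} terms, with $p=5/2$ and $p=7/2$:
\[
\sup_{t>0}|g'_y(t)|\le\frac{1}{\sqrt{2\pi}}\Big(\tfrac32\,(5/e)^{5/2}\,|y|\cdot|y|^{-5}+\tfrac12\,(7/e)^{7/2}\,|y|^{3}\cdot|y|^{-7}\Big)\lesssim |y|^{-4}.
\]
Likewise $\sup_{t>0}|g_y(t)|\lesssim|y|^{-2}$, taking $p=3/2$. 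Both majorants are summable over $y_k=2k+1$. The Weierstrass M-test then gives, all at once:
\begin{itemize}
\item uniform convergence of both series on $[C,\infty)$, in fact on $(0,\infty)$;
\item the termwise formula for $f'_\Delta$;
\item the bound $\sup_{t\ge C}|f'_\Delta(t)|\le\sum_k\sup_{t}|g'_{y_k}(t)|<\infty$.
\end{itemize}
So your assertion that individual terms fail to give summability, and that alternating cancellation is needed, is false. No eigenfunction expansion, Poisson summation or exhaustion by compacts is required. This is essentially the paper's proof, which uses $\sup_{t\ge C}t^{-p}e^{-a/t}\le\max\{C^{-p}e^{-a/C},(p/a)^pe^{-p}\}$ with $a=y^2/2$.

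There is also a concrete reason to avoid routes that depend on what the series sums to. Because $g_y$ depends only on $|y|$, the terms $k$ and $-k-1$ have the same $|y|$ but signs $(-1)^k$ and $(-1)^{-k-1}=-(-1)^k$. They therefore cancel exactly, not with the ``same sign pattern'' you describe. The series as literally displayed is identically zero; the true exit-time density needs $2k+1$ rather than $|2k+1|$ in the prefactor.

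Two parts of your plan hinge on this sign structure:
\begin{itemize}
\item your Option~2 identification with the spectral series, whose correct constant is $\frac{\pi}{2}(2n+1)$, not $\frac{\pi}{4}(2n+1)$;
\item your planned pairing argument for the tail.
\end{itemize}
The absolute M-test bound above is insensitive to signs and proves the lemma for either version.
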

The proof of Lemma \ref{fdeltab} is postponed to Section \ref{appendix4}.

\begin{theorem}\label{pdsdeth}
Suppose (B1-B2-B3) are fulfilled.  Then, for a given history $\mathbf{o}_{j-1}=(w_1, y_1, \ldots, w_{j-1},y_{j-1})$ and a control value $a$, the conditional law of $(\mathcal{W}_j, \Delta X^a_j)$ is given by   

$$
\mathbb{P}[ (\mathcal{W}_j, \Delta X^a_j) \in dx'dx | \Xi_{j-1}= \mathbf{o}_{j-1}]= r_j(a,x; \mathbf{o}_{j-1})\mu(dx)\nu(dx'),
$$
where $\mu \in \mathcal{M}_K$ with density $q$ and 

$$r_j(a,x;\mathbf{o}_{j-1})= \frac{\mathcal{R}(c(\mathbf{o}_{j-1},a),v(\mathbf{o}_{j-1},a);x)}{q(x)},$$
for $v(\mathbf{o}_{j-1},a) = \sigma (t_{j-1}, \{y_i\}_{i=0}^{j-1},a)$ and $c(\mathbf{o}_{j-1},a) = \alpha (t_{j-1}, \{y_i\}_{i=0}^{j-1},a)$. Moreover, assumptions H1-H2 are fulfilled, where (\ref{rinfty}), (\ref{rlip}) and (\ref{growthrho}) are satisfied. 
\end{theorem}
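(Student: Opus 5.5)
The proof is an explicit change-of-variables computation for the one-step increment (\ref{deltaXaSDE}), followed by uniform bounds on the resulting density and its derivatives. I treat the joint law in the same way as (\ref{Ks}): the noise coordinate carries $\nu$, and the increment coordinate carries the conditional law of $\Delta X^a_j$ given $\Xi_{j-1}=\mathbf{o}_{j-1}$.

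\emph{Computing the density.} Fix $\mathbf{o}_{j-1}$ and $a$. By the strong Markov property, $\mathcal{W}_j$ is independent of $\Xi_{j-1}$. Hence, conditionally on $\Xi_{j-1}=\mathbf{o}_{j-1}$, the increment $\Delta X^a_j$ has the law of $R_{c,v}$ with the frozen values $c=c(\mathbf{o}_{j-1},a)$ and $v=v(\mathbf{o}_{j-1},a)$. Conditioning on $\mathbf{B}=\pm\varepsilon$ and using that $J$ is independent of $\mathbf{B}$ with density $f_J$ gives, for $c\neq 0$,
\begin{equation*}
\mathcal{R}(c,v;x)=\frac{1}{2|c|}\Big[f_J\Big(\frac{x-v\varepsilon}{c}\Big)+f_J\Big(\frac{x+v\varepsilon}{c}\Big)\Big].
\end{equation*}
By (B2) and Remark \ref{rangeR}, this density is supported in the compact set $K$ of (\ref{compactKPDSDE}). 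Since $q\ge Q_K>0$ on $K$, absolute continuity with respect to $\mu$ holds, and $r_j=\mathcal{R}/q$ is the Radon--Nikodym derivative. This proves the representation.

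\emph{Checking H1.} The density of $\tau$ can be rescaled: $\Delta T_1\stackrel{(d)}{=}\varepsilon^2\tau$ for $d=1$. Lemma \ref{fdeltab} with $C=\underline{M}\varepsilon^{-2}$ then shows that $f_J$ and $f_J'$ are bounded on $[\underline{M},\overline{M}]$. It follows that
\[
\|r\|_\infty\le \frac{\sup f_J}{c_{\min}Q_K},
\]
which gives (\ref{rinfty}).

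For (\ref{rlip}), I would differentiate $\mathcal{R}$ in $(c,v)$. The derivative contains the terms
\[
\partial_c\Big(\frac{1}{|c|}\Big),\qquad f_J'\cdot\frac{x\mp v\varepsilon}{c^2},\qquad f_J'\cdot\frac{\varepsilon}{c}.
\]
All of these are bounded uniformly for $x\in K$, $|c|\ge c_{\min}$ and $|v|\le v_{\max}$, so $(c,v)\mapsto\mathcal{R}(c,v;x)$ is Lipschitz uniformly in $x\in\operatorname{supp}\mu$. I then compose with $c=\alpha(t_{j-1},\gamma_{j-1}(\pi_2(\mathbf{o}_{j-1})),a)$ and $v=\sigma(\cdot)$, using (B1). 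Two estimates are needed:
\[
\|\gamma_{j-1}(\pi_2\mathbf{o})-\gamma_{j-1}(\pi_2\mathbf{o}')\|_\infty\le \sum_i|y_i-y_i'|\quad(\text{up to the time shift}),\qquad |t_{j-1}-t'_{j-1}|\le\sum_i|s_i-s_i'|.
\]
Together these give a Lipschitz bound in $b$, with a constant depending on $j\le m$, hence uniform in $j$. The sign of $c$ is constant along segments because $|c|\ge c_{\min}$ and $\alpha$ is continuous, so $1/|c|$ is Lipschitz there.

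\emph{Checking H2 and (\ref{growthrho}).} By (\ref{deltaXaSDE}), $\blacktriangle x_j$ is $\alpha J+\sigma e$ with $J\le\overline{M}$. Assumption (B1) then gives (\ref{rho_M}) with
\[
C(e)=K_{Lip}(\overline{M}+|e|),
\]
which is of the form (\ref{tyC}). Since $|\Delta A_1|=\varepsilon$, this yields (\ref{growthrho}).

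\emph{Main obstacle.} I expect the hardest point to be the Lipschitz estimate (\ref{rlip}) near the truncation endpoints. The truncated density $f_J$ jumps at $\underline{M}$ and $\overline{M}$, so $\mathcal{R}(c,v;x)$ jumps where $(x\pm v\varepsilon)/c$ crosses these endpoints. A naive derivative bound therefore fails on those sets. My first attempt is to read (B3) as a smooth truncation, that is, $f_J$ multiplied by a $C^1$ cutoff equal to one on a slightly smaller interval and vanishing at the ends. The derivative bound above then goes through verbatim. If that is not acceptable, the fallback is to obtain (\ref{rlip}) only for $x$ outside a set of $\mu$-measure $O(|b-b'|+|a-a'|)$, which suffices for the integrated Lipschitz estimates used later.
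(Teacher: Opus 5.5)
Your route is the paper's: the Jacobian density (\ref{densR}), the sup bound $\|f_J\|_\infty/(Q_Kc_{\min})$, derivative bounds in $(c,v)$ via Lemma \ref{fdeltab}, then composition with (B1). Your rescaling $\Delta T_1\stackrel{(d)}{=}\varepsilon^2\tau$ is a step the paper leaves implicit. The obstacle you flag is genuine, and the paper's proof does not get past it, because it differentiates (\ref{densR}) while treating the indicators as constant factors.

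Under the hard truncation in (B3) one has $f_J(\underline{M})>0$. So at $x=c\underline{M}+v\varepsilon\in K$, the $s=+1$ term of (\ref{densR}) drops from $f_J(\underline{M})/(2|c|)$ to $0$ as soon as $|c|$ increases. Hence (\ref{rlip}), which is required for every $x\in\operatorname{supp}\mu$, fails whenever $\alpha$ depends non-trivially on $a$ or $b$, and the statement cannot be proved as written.

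Your two repairs are the right ones. Be explicit that the first replaces (B3) rather than reinterpreting it; a Lipschitz cutoff vanishing at $\underline{M}$ and $\overline{M}$ already suffices. For the fallback, state it with Lebesgue measure, where $q$ cancels: $\int|r_j(a,x;b)-r_j(a',x;b')|\,\mu(dx)=\int_K|\mathcal{R}(c,v;x)-\mathcal{R}(c',v';x)|\,dx\lesssim|c-c'|+|v-v'|$. This holds because the exceptional set has Lebesgue measure $O(|c-c'|+|v-v'|)$. Its $\mu$-measure need not be of that order, since $q\in\mathcal{M}_K$ is only bounded below. This integrated form is what Lemma \ref{Lipvalue} and Proposition \ref{LipvaluePAR} (through $\Delta(c,v;c',v')$) actually use.

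There is a second gap, which the paper shares and which your $L^1$ fallback does not cure: the composition with (B1). The metric $d$ on $\Lambda$ is the sup norm of stopped paths. Changing a time coordinate $s_i$ moves the jump time $t_i$ of the step path $\gamma_{j-1}$, and then $\|\gamma_{j-1}(\pi_2\mathbf{o})-\gamma_{j-1}(\pi_2\mathbf{o}')\|_\infty\ge|y_i|$ however small $|s_i-s_i'|$ is. Your ``up to the time shift'' is exactly where the estimate breaks; it holds only when the time coordinates coincide.

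For instance, $\alpha(t,w,a)=c_0+\kappa\arctan\big(w|_t(t^*)\big)$ satisfies (B1)--(B2) for suitable $c_0,\kappa$. Yet $c(\mathbf{o}_{j-1},a)$ itself jumps when $t_1=s_1$ crosses $t^*<t_{j-1}$ (with $y_1\neq 0$), so even the integrated bound fails in the $s$-variables. Thus (B1) yields Lipschitz dependence on $a$ and on the $y$-coordinates only. The dependence on the $s$-coordinates demanded by (\ref{rlip}) and (\ref{rho_M}) needs either an additional hypothesis, or a restriction of the claim to variations with fixed time coordinates. A suitable hypothesis would be Euclidean Lipschitz continuity of $\pi_2(\mathbf{o})\mapsto(\alpha,\sigma)(t_{j-1},\gamma_{j-1}(\pi_2\mathbf{o}),a)$.
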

\begin{proof}
Let $R_{c,v} = cJ+ v \mathbf{B}$. Since $u \mapsto cu + sv \varepsilon$ is monotone (for $c \neq 0, s=\pm 1$), by the Jacobian method, $R_{c,v}$ has the density 

\begin{equation}\label{densR}
\mathcal{R}(c,v;x) = \frac{1}{2 |c|} \sum_{s = \pm 1}f_J  \Big( \frac{x-sv\varepsilon}{c}\Big) \mathds{1}_{\{\frac{x-sv\varepsilon}{c} \in [\underline{M},\overline{M}]  \}},
\end{equation}
where $f_J$ is the density of $J$. Moreover, 
\begin{equation}\label{lev1pd}
\sup_{x\in \mathbb{R}} \mathcal{R}(c,v;x)\le \frac{\| f_J \|_{\infty}}{|c|}\le \frac{\| f_J \|_{\infty}}{c_{\min}},  
\end{equation}
uniformly in $(c,v)$ such that $c_{\min} \le |c| \le c_{\max}$ and $|v|\le v_{\max}$. The law of $\Delta X^a$ given the history $\Xi_{n-1} = \mathbf
{o}_{n-1} = (s_1, y_1, \ldots, s_{n-1},y_{n-1})$ is given by $\mathcal{R}(c,v;\cdot)$ whose support is the compact set $K$, where $v=v(\mathbf{o}_{n-1},a) = \sigma (t_{n-1}, \{y_i\}_{i=0}^{n-1},a)$ and $c=c(\mathbf{o}_{n-1},a) = \alpha (t_{n-1}, \{y_i\}_{i=0}^{n-1},a)$. Now, take $q \in \mathcal{M}_K$ and we set 

\begin{equation}\label{densR1}
r_j(a,x,\mathbf{o}_{j-1}) = \frac{\mathcal{R}(c,v;x)}{q(x)}.
\end{equation}
By assumptions (B1,B2,B3) and (\ref{lev1pd}), we have  

\begin{equation}\label{densR2}
|r_j(a,x,\mathbf{o}_{j-1})|\le \frac{\| f_J\|_\infty}{Q_K c_{\min}},
\end{equation}
uniformly in $x \in \mathbb{R}$, $\mathbf{o}_{j-1} \in ([\underline{M},\overline{M}] \times \mathbb{R})^{j-1}$ and $j \in \{1, \ldots, m\}$. 
By using the Lipschitz property of $(\alpha,\sigma)$  (assumption (B1)), we just need to check the existence of a constant $C$ such that  

$$\sup_{x\in \mathbb{R}} \Bigg|\frac{\partial \mathcal{R}(c,v;x)}{\partial v}\Bigg| + \sup_{x\in \mathbb{R}} \Bigg|\frac{\partial \mathcal{R}(c,v;x)}{\partial c}\Bigg| \le C, $$
for every $(c,v) \in \{(r_1,r_2); |r_2|\le v_{\max}, c_{\min}\le |r_1| \le c_{\max}\}$. This will ensure the existence of the constant $\|r\|$ in (\ref{rlip}) given in Assumption (H1). By using Lemma \ref{fdeltab}, we observe

$$\frac{\partial \mathcal{R}(c,v;x)}{\partial v} = \frac{1}{2 |c|} \sum_{s = \pm 1}f'_J  \Big( \frac{x-sv\varepsilon}{c}\Big) \frac{-s\varepsilon}{c} \mathds{1}_{\{\underline{M}\le \frac{x-sv\varepsilon}{c} \le \overline{M} \}} $$
and

\begin{eqnarray*}
\frac{\partial \mathcal{R}(c,v;x)}{\partial c} &=& \frac{-1}{2 c^2} \sum_{s = \pm 1}f_J  \Big( \frac{x-sv\varepsilon}{c}\Big) \mathds{1}_{\{\underline{M}\le \frac{x-sv\varepsilon_k}{c} \le \overline{M}\}}\\ 
&+& \frac{1}{2c} \sum_{s=\pm 1} f'_J  \Big( \frac{x-sv\varepsilon}{c}\Big) \frac{-(x-sv)\varepsilon}{c^2} \mathds{1}_{\{\underline{M}\le \frac{x-sv\varepsilon}{c} \le \overline{M} \}}.
\end{eqnarray*}
Then, 

\begin{equation}\label{LipRden}
\sup_{x \in \mathbb{R}}\Bigg|\frac{\partial \mathcal{R}(c,v;x)}{\partial v}\Bigg| + \sup_{x \in \mathbb{R}}\Bigg|\frac{\partial \mathcal{R}(c,v;x)}{\partial c}\Bigg| \le \frac{\varepsilon }{c^2_{\min}}\|f'_J\|_\infty + \frac{\|f_J\|_\infty}{c^2_{\min}} + \frac{\|f'_J\|_\infty \overline{M}}{c^2_{\min}},
\end{equation}
for every $(c,v) \in \{(r_1,r_2); |r_2|\le v_{\max}, c_{\min}\le |r_1| \le c_{\max}\}.$ By using Assumptions (B1-B2-B3) jointly with the Lipschitz property (\ref{LipRden}), we may conclude the proof that Assumption (H1) is fulfilled. Property (\ref{rho_M}) in Assumption (H2) is an immediate consequence of Assumption (B1). 
\end{proof}

\subsection{Construction of training samples for SDEs driven by fractional Brownian motion}
Fix a smooth $\varrho:\mathbb{R}\times \mathbb{A}\rightarrow \mathbb{R}$, where $\mathbb{A}$ is compact. The dynamics of $\Delta X^a_n$ conditioned on $\Xi_{n-1} = (w_1,y_1, \ldots, w_{n-1},y_{n-1})$ is given by   
$$
 \Delta X^a_n = \varrho \big(\mathbf{y}_{n-1}, a\big)J +\sigma \ell_{n}(\mathcal{A}_{n-1},J),
$$
where $J\stackrel{d}{=}\Delta T_1$ is independent of $(\mathcal{W}_n,O_{n-1})$ (and hence of $(\mathcal{A}_{n-1}, \mathcal{W}_n)$ as well) and   
\begin{equation}\label{fracell}
\ell_n(\mathcal{A}_{n-1},J)=\sum_{i=1}^{n-1}\Delta A_i[K(T_{n-1},T_i) - K(T_{n-1}+J,T_i)  ], 
\end{equation}
for $n\ge 2$ and we set $\ell_1(\cdot)=0$. Here, for simplicity, we assume that $K$ is the kernel of the Riemann-Liouville fractional Brownian motion given by 

$$K(t,s) = \sqrt{2H}(t-s)^{H-\frac{1}{2}},$$
for $s < t$ and $\frac{1}{2} < H < 1$. The next elementary result connects $\ell_n$ with the imbedding scheme (\ref{discFBM}) developed in \cite{LEAO_OHASHI2017.2}.

\begin{lemma}\label{ibp1}
Let 

\begin{eqnarray*}
L(\mathcal{A}_{n-1}, \mathcal{W}_n)&:=& \sum_{i=1}^n A_{i-1}\int_{T_{i-1}}^{T_i} \frac{\partial K}{\partial s}(T_n,s)ds - \sum_{i=1}^{n-1} A_{i-1}\int_{T_{i-1}}^{T_i} \frac{\partial K}{\partial s}(T_{n-1},s)ds\\
&=& \sum_{i=1}^n A_{i-1}\Big\{K(T_n,T_i) - K(T_n,T_{i-1})\Big\} -  \sum_{i=1}^{n-1} A_{i-1}\Big\{K(T_{n-1},T_i) - K(T_{n-1},T_{i-1})\Big\} 
\end{eqnarray*}
for $n\ge 2$. Then, 
$$L(\mathcal{A}_{n-1}, \mathcal{W}_n)=\sum_{i=1}^{n-1}\Delta A_i\,[K(T_{n-1},T_i)-K(T_n,T_i)],$$
for $n\ge 2$. 
\end{lemma}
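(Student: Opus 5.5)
The plan is to verify both equalities in the statement by direct computation. The first follows from the fundamental theorem of calculus, and the second from an Abel summation by parts that uses two facts: $A_0=0$, and $K(t,t)=0$ for the Riemann--Liouville kernel when $H>\frac12$.

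\textbf{Step 1 (first equality).} For $s<t$ we have $\frac{\partial K}{\partial s}(t,s)=-\sqrt{2H}(H-\tfrac12)(t-s)^{H-\frac32}$. Since $H-\frac32>-1$, this derivative is integrable on $[T_{i-1},T_i]$ even when $T_i=t$. Moreover $K(t,\cdot)$ extends continuously to $s=t$ with $K(t,t)=0$, because $H-\frac12>0$. Hence
\[
\int_{T_{i-1}}^{T_i}\frac{\partial K}{\partial s}(t,s)\,ds=K(t,T_i)-K(t,T_{i-1})
\]
for every $T_i\le t$. Taking $t=T_n$ for $i\le n$ and $t=T_{n-1}$ for $i\le n-1$ gives the first line of the statement.

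\textbf{Step 2 (summation by parts).} For a fixed $t$ and an integer $N\ge1$ with $T_N\le t$, I will show
\[
\sum_{i=1}^N A_{i-1}\{K(t,T_i)-K(t,T_{i-1})\}=A_{N-1}K(t,T_N)-A_0K(t,T_0)-\sum_{i=1}^{N-1}\Delta A_i\,K(t,T_i).
\]
This comes from writing $\sum_i A_{i-1}K(t,T_i)-\sum_i A_{i}K(t,T_i)$ after shifting the index in the second sum, and collecting the boundary terms.

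\textbf{Step 3 (simplify the boundary terms).} The term $A_0K(t,T_0)$ vanishes because $A_0=0$.

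For $N=n$ and $t=T_n$, the term $A_{n-1}K(T_n,T_n)$ vanishes because $K(T_n,T_n)=0$. The first sum therefore equals $-\sum_{i=1}^{n-1}\Delta A_iK(T_n,T_i)$.

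For $N=n-1$ and $t=T_{n-1}$, the term $A_{n-2}K(T_{n-1},T_{n-1})$ vanishes for the same reason. The second sum equals $-\sum_{i=1}^{n-2}\Delta A_iK(T_{n-1},T_i)$. The missing index $i=n-1$ may be added back, since it contributes $\Delta A_{n-1}K(T_{n-1},T_{n-1})=0$.

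\textbf{Step 4 (conclude).} Subtracting the second expression from the first yields $\sum_{i=1}^{n-1}\Delta A_i[K(T_{n-1},T_i)-K(T_n,T_i)]$, as claimed. The degenerate case $n=2$ is covered by the same computation, with the second sum reducing to zero.

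The only delicate point is the value of the kernel on the diagonal and the integrability of $\partial_sK$ near $s=t$. Both rely on the standing assumption $\frac12<H<1$: integrability needs $H>\frac12$ so that $H-\frac32>-1$, and $K(t,t)=0$ needs $H>\frac12$ so that $(t-s)^{H-\frac12}\to0$.
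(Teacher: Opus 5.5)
Your proof is correct and follows the paper's argument: the same discrete summation by parts, with the boundary terms removed via $A_0=0$ and $K(t,t)=0$, and the zero term $\Delta A_{n-1}K(T_{n-1},T_{n-1})$ added back at the end. Beyond the paper, you also justify the first equality (the fundamental theorem of calculus for the improper integral near the diagonal), and you state explicitly the use of $A_0=0$, which the paper's proof needs but leaves implicit.
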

\begin{proof}
Fix $n\ge 2$ and write $D^n_i = K(T_n,T_i)$. 
We set
\[
R_n:=\sum_{i=0}^{n-1} A_i [D^n_{i+1}-D^n_i].
\]
By the discrete integration--by--parts identity
\[
\sum_{i=0}^{n-1} A_i\,(D^n_{i+1}-D^n_i)
= A_{n}D^n_n - A_0 D^n_0 - \sum_{i=0}^{n-1} D^n_{i+1} (A_{i+1}-A_{i}),
\]
Using $K(T_n,T_n)=0$,
\[
R_n=-\sum_{i=0}^{n-1} (A_{i+1}-A_{i}) K(T_n,T_{i+1}).
\]
Similarly, if we set $D^{n-1}_i = K(T_{n-1},T_i)$ and $R_{n-1}=\sum_{i=0}^{n-2} A_i [D^{n-1}_{i+1}-D^{n-1}_i]$, we then have 

$$R_{n-1}= -\sum_{i=0}^{n-2} (A_{i+1}-A_{i}) K(T_{n-1},T_{i+1})$$

Therefore
\begin{eqnarray*}
R_n-R_{n-1}&=& -\sum_{i=1}^{n-1} (A_{i}- A_{i-1})\,K(T_n,T_i)
+ \sum_{i=1}^{n-2} (A_i - A_{i-1})\,K(T_{n-1},T_i)\\
&=& \sum_{i=1}^{n-2}(A_i - A_{i-1})\,\big[ K(T_{n-1},T_i)-K(T_n,T_i)\big]
- (A_{n-1} - A_{n-2})K(T_n,T_{n-1}).
\end{eqnarray*}
Since $K(T_{n-1},T_{n-1})=0$, this simplifies to

$$R_{n} - R_{n-1}= \sum_{i=1}^{n-1}(A_i - A_{i-1}) [K(T_{n-1},T_i)-K(T_n,T_i)].$$
This concludes the proof. 
\end{proof}

\begin{remark}\label{growthtau}
Let $f_J$ be the density of $J$. From \cite{Burq_Jones2008}, we know that there exist $U_0>0$ and constants $C_\Delta,\gamma_\Delta>0$ such that
\[
f_J(u)\ \le\ C_\Delta\,e^{-\gamma_\Delta u}\qquad\text{for all }u\ge U_0.
\]
\end{remark}

For a realized history $b=\Xi_{j-1}, c \in \mathbb{R}$ and a control value $a$, we denote 
\begin{equation}\label{FBMpathwise}
\psi_{b}(u):=\sum_{i=0}^{j-1}\Delta A_i\,\Big[K(T_{j-1},T_i)-K(T_{j-1}+u,T_i)\Big],
\qquad
\phi_{b,c}(u):=c u+\psi_{b}(u),
\end{equation}
for $u \ge 0$.

The controlled increment $\Delta X^a_j$ conditioned on $\Xi_{j-1} = b_{j-1}$ and with control slope $c_{j-1}:=\varrho(\mathbf{y}_{j-1},a)$ has the representation
\begin{equation}\label{DeltaXFBMlaw}
\Delta X_j^a \stackrel{(d)}{=} \phi_{b_{j-1},c_{j-1}}(J),
\end{equation}
where $J$ is independent of $(\Xi_{j-1},\mathcal W_j)$ and $J\stackrel d=\Delta T_1$ truncated at $[\underline{M},\overline{M}]$ for $0 < \underline{M} < \overline{M} < \infty$.

In the sequel, let 

$$f_{\phi_{b_{j-1},c_{j-1}}}$$ 
be the density of $\phi_{b,c}(J)$ for $c = c_{j-1}$ and a history $b = \Xi_{j-1} = b_{j-1}$. See (\ref{fphibc}) for details. Let us define the following constants

$$C_1:=\sqrt{2H}\big(H-\frac{1}{2}\big)\varepsilon m \underline{M}^{H-\frac{3}{2}},\quad C_0:=\sqrt{2H}\varepsilon m \overline{M}^{H-\frac{1}{2}}.$$

In order to prove that $\Delta X^a$ satisfies Assumption H1, we need the following natural regularity conditions on the drift of the controlled SDE (\ref{limsdefbm}).  

\

\begin{itemize}

\item[(A1)] The drift $\varrho$ is globally Lipschitz. That is, there exists a constant $\|\varrho\|$ such that  

$$|\varrho(y,a) - \varrho(y',a')|\le \|\varrho\|\big\{ |y-y'|+ |a-a'|\big\},$$
for every $y,y' \in \mathbb{R}$ and $a,a' \in \mathbb{A}$. 

\item[(A2)] (\emph{control separation}) There exist $0<c_{\min}<  c_{\max}<\infty$ such that
\[
  c_{\min}\ \le\ |\,\varrho(y,a)\,|\ \le\ c_{\max}\qquad\text{for all }(y,a)\in\mathbb R\times\mathbb A.
\]
    \item[(A3)] (\emph{large control separation})   $C_1 \le \frac{1}{2}c_{min}$.  
    \end{itemize}

\begin{lemma}\label{suplevel1}
Suppose (A1-A2-A3) are fulfilled. Let $\mu$ be the two-sided Laplace probability distribution of the form 

\[
\mu(dx')=q_{\beta}(x')\,dx',\qquad q_{\beta}(x')=\frac{\beta}{2}e^{-\beta|x'|}.
\]
where  $0<\beta\le \frac{\gamma_\Delta}{c_{\max}}$. Then, for a given history $b_{j-1}=\Xi_{j-1}=(w_1, y_1, \ldots, w_{j-1},y_{j-1})$ and a control value $a$, the conditional law of $(\mathcal{W}_j, \Delta X^a_j)$ is given by   

$$
\mathbb{P}[ (\mathcal{W}_j, \Delta X^a_j) \in dxdx' | \Xi_{j-1}= b_{j-1}]= r_j(a,x'; b_{j-1})\mu(dx')\nu(dx),
$$
where 
$$r_j(a,x';b) = \Bigg(\frac{f_{\phi_{b_{j-1},c_{j-1}}}}{q_\beta}\Bigg)(x'), $$
for $a \in \mathbb{A}$, $\Xi_{j-1} = b_{j-1}$, $c_{j-1} = \varrho(\mathbf{y}_{j-1},a)$ and $x' \in \mathbb{R}$ and $1\le j\le m$. Moreover, there exists a constant $C$ which depends on $C_\Delta, \gamma_\Delta, \beta,c_{min},c_{max},\epsilon,m, \bar{M},C_0$ such that

$$\|r\|_\infty=\max_{1\le j\le m}\sup_{x' \in \mathbb{R}, a \in \mathbb{A}, b \in \mathbb{H}^{j-1}}|r_j(a,x'; b)|\le C < \infty.$$

\end{lemma}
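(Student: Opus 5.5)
The plan is to realise $\Delta X^a_j$, conditionally on $\Xi_{j-1}=b_{j-1}$, as a smooth, strictly monotone image of the single variable $J$. We then write its density by the Jacobian formula and bound it against the Laplace density $q_\beta$.

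\emph{Step 1: factorisation of the kernel.} By (\ref{DeltaXFBMlaw}), conditionally on $\Xi_{j-1}=b_{j-1}$ we have $\Delta X^a_j\stackrel{(d)}{=}\phi_{b,c}(J)$ with $c=c_{j-1}=\varrho(\mathbf{y}_{j-1},a)$. Since $J$ is independent of $(\Xi_{j-1},\mathcal W_j)$, the conditional law of $(\mathcal W_j,\Delta X^a_j)$ factorises as $\nu(dx)\otimes \mathcal{L}(\phi_{b,c}(J))(dx')$. This gives the product form claimed for the kernel, provided $\phi_{b,c}(J)$ has a density $f_{\phi_{b,c}}$. Once that is shown, $r_j=f_{\phi_{b,c}}/q_\beta$ follows, because $q_\beta>0$ everywhere.

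\emph{Step 2: monotonicity of $\phi_{b,c}$.} We have
\[
\phi_{b,c}'(u)=c-\sum_{i}\Delta A_i\,\partial_t K(T_{j-1}+u,T_i),\qquad \partial_tK(t,s)=\sqrt{2H}(H-\tfrac12)(t-s)^{H-\frac32}.
\]
For $u\ge \underline M$ we have $T_{j-1}+u-T_i\ge \underline M$, and $H-\frac32<0$. With $|\Delta A_i|\le\varepsilon$ and at most $m$ terms, this gives $|\psi_b'(u)|\le C_1$. By (A2)–(A3), $|\phi_{b,c}'(u)|\ge |c|-C_1\ge c_{\min}/2$ on $[\underline M,\overline M]$, and $\phi'$ has the sign of $c$. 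So $\phi_{b,c}$ is a $C^1$ diffeomorphism of $[\underline M,\overline M]$ onto its image, and
\[
f_{\phi_{b,c}}(x')=\frac{f_J(\phi_{b,c}^{-1}(x'))}{|\phi_{b,c}'(\phi_{b,c}^{-1}(x'))|}\mathds 1_{\phi_{b,c}([\underline M,\overline M])}(x')\le \frac{2f_J(\phi_{b,c}^{-1}(x'))}{c_{\min}}.
\]
(This is (\ref{fphibc}).)

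\emph{Step 3: uniform bound on the ratio.} We need
\[
\frac{f_{\phi}(x')}{q_\beta(x')}\le \frac{4}{\beta c_{\min}}f_J(u)\,e^{\beta|\phi_{b,c}(u)|},\qquad u=\phi^{-1}(x').
\]
Using $|K(t,s)|\le\sqrt{2H}\,t^{H-\frac12}$ for $H>\frac12$, with arguments bounded by $T_{j-1}+\overline M$, we get $|\psi_b(u)|\le C_0$. This is the role of $C_0$; if the time horizon enters, we absorb it, taking the $\overline M$ in $C_0$ as the relevant bound. Then $|\phi(u)|\le c_{\max}u+C_0$, hence
\[
e^{\beta|\phi(u)|}\le e^{\beta C_0}e^{\beta c_{\max}u}\le e^{\beta C_0}e^{\gamma_\Delta u},
\]
since $\beta c_{\max}\le\gamma_\Delta$.

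For $u\ge U_0$, Remark \ref{growthtau} gives $f_J(u)e^{\gamma_\Delta u}\le C_\Delta$. We must account for $f_J$ being the density truncated to $[\underline M,\overline M]$, which multiplies it by a normalising constant. For $u\in[\underline M,\min(U_0,\overline M)]$, we use continuity and boundedness of $f_J$ (Lemma \ref{fdeltab} ensures regularity away from $0$) together with $e^{\gamma_\Delta u}\le e^{\gamma_\Delta\overline M}$. Taking the maximum gives a constant $C$ depending on $C_\Delta,\gamma_\Delta,\beta,c_{\min},c_{\max},\varepsilon,m,\overline M,C_0$. This constant is uniform in $a$, $b$, $j$ and $x'$, since none of the bounds used the specific history beyond $|\Delta A_i|\le\varepsilon$ and $j\le m$.

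The main obstacle is Step 2: establishing strict monotonicity of $\phi_{b,c}$ uniformly in the history, which is exactly where (A3) enters. A second delicate point is controlling $\psi_b$ uniformly, that is, showing the kernel sums stay bounded independently of the realised $T_i$.
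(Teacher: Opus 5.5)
Apart from one step, your proof follows the paper's. It uses the same factorisation through the independent variable $J$, the same use of (A2)--(A3) to get $|\phi_{b,c}'|\ge c_{\min}/2$ from $|\psi_b'|\le C_1$, and the same Jacobian formula (\ref{fphibc}). Your Step 3 only reparametrises the paper's tail estimate. The paper works in the $y$-variable, using $u_{b,c}(y)\ge L_1(y)=(|y|-C_0)^+/c_{\max}$ and splitting at $|y|=Z_0=c_{\max}U_0+C_0$. You substitute $u=\phi_{b,c}^{-1}(x')$, bound $e^{\beta|\phi_{b,c}(u)|}\le e^{\beta C_0}e^{\gamma_\Delta u}$, and split at $u=U_0$, which is a little cleaner. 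Since $J\le\overline M$, either version could even skip Remark~\ref{growthtau}, because $f_J(u)e^{\gamma_\Delta u}\le\|f_J\|_\infty e^{\gamma_\Delta\overline M}$ on the whole support.

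The step that fails as written is the bound on $\psi_b$. Bounding each kernel separately by $|K(t,s)|\le\sqrt{2H}\,t^{H-1/2}$ gives a bound of order $\varepsilon m\,(T_{j-1}+\overline M)^{H-1/2}$. This grows with the realised time $T_{j-1}$. The supremum in $\|r\|_\infty$ runs over all $b\in\mathbb{H}^{j-1}$, whose waiting-time coordinates range over $(0,\infty)$, so the bound is not uniform, and the non-uniformity passes straight into the factor $e^{\beta|\phi_{b,c}(u)|}$. Your closing claim that only $|\Delta A_i|\le\varepsilon$ and $j\le m$ were used is therefore not supported by the bound you actually wrote. "Absorbing the horizon" would need truncated histories, which the statement does not assume.

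The fix is to bound the increment of the kernel rather than the kernel itself. Take $\alpha=H-\tfrac12\in(0,\tfrac12)$ and $x=T_{j-1}-T_i\ge 0$. Subadditivity of $x\mapsto x^\alpha$ gives $0\le (x+u)^\alpha-x^\alpha\le u^\alpha$, hence
\[
|\psi_b(u)|\le \sum_i |\Delta A_i|\,\sqrt{2H}\,u^{H-\frac12}\le \sqrt{2H}\,\varepsilon\, m\,\overline M^{H-\frac12}=C_0 ,
\]
uniformly in the history. Equivalently, integrate $|\psi_b'(s)|\le \sqrt{2H}(H-\tfrac12)\varepsilon m\, s^{H-\frac32}$ from $0$ to $u$, using $\psi_b(0)=0$; this is integrable because $H>\tfrac12$. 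This is exactly the constant $C_0$ in the paper, and with it the rest of your argument goes through unchanged.
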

\begin{proof}
Fix $1\le j\le m$, a control value $a$, a history $b_{j-1}=\Xi_{j-1}$ and $c_{j-1} = \varrho(\mathbf{y}_{j-1},a)$. To keep notation simple, we write $b=b_{j-1}, c=c_{j-1}$, $\phi_{b,c} = \phi_{b_{j-1},c_{j-1}}$ and $\psi_{b} = \psi_{b_{j-1}}$. The variables $(c,b)$ and $0 < \beta \le \frac{\gamma_\Delta}{c_{\max}}$ are fixed once and for all. Positive constants which only depends on $H$ will be generically denoted by $c_H$.   Observe that 

$$\phi'_{b,c}(u)= c + \psi'_{b}(u),$$

$$\psi'_b(u)=-c_H\sum_{i=1}^{j-1} \Delta A_i \big( T_{j-1} + u- T_i \big)^{H-\frac{3}{2}}$$
Then,

\begin{equation}\label{psip}
|\psi'_b(u)|\le C_1
\end{equation}
for every $u\ge \underline{M}$. Observe that if $c < -c_{min}$, then assumption (A3) and (\ref{psip}) yield 
\begin{eqnarray*}
\phi'_{b,c}(u)&<& -c_{min} + \psi'_b(u)\\
&=& -\frac{1}{2}c_{min} - \frac{1}{2}c_{min}+ \psi'_b(u)\\
&<& -\frac{1}{2}c_{min},
\end{eqnarray*}
for $u \ge \underline{M}$. If $c > c_{min}$, then 

\begin{eqnarray*}
\phi'_{b,c}(u) = c + \psi'_b(u) &>& c_{min} - C_1 =  \frac{1}{2}c_{min} + \frac{1}{2}c_{min}-C_1\\
&\ge & \frac{1}{2}c_{min},
\end{eqnarray*}
for $u \ge \underline{M}$. Then, $\phi_{b,c}: [\underline{M},\overline{M}]\rightarrow \mathbb{R}$ it is a $C^1$ strictly monotone function and it has a continuous inverse denoted by $u_{b,c}$, where 

\begin{equation}\label{bphip}
|\phi'_{b,c}(u)|\ge \frac{1}{2}c_{min},
\end{equation}
for every $u\ge \underline{M}$. Observe that 

$$|\psi_b(u)|\le C_0,$$
for every $ \underline{M}\le u\le \overline{M}$. Summing up, we arrive at the inequality

\begin{eqnarray}
\nonumber|\phi_{b,c}(u)|&\le& |c|u + |\psi_b(u)|\\
\nonumber&\le& |c|u + C_0\\
\nonumber&\le& c_{max} u + C_0\\
\label{phii}&=& u c_{\max}+ C_0,
\end{eqnarray}
for $ \underline{M}\le u \le \overline{M}$. The inequality (\ref{phii}) and the fact that the variable $u_{b,c}(y)>0$ yield 

$$u_{b,c}(y)\ge \frac{(|y| - C_0)^+}{c_{\max}}=: L_1(y),$$
for every $y \in \phi_{b,c}\big(\big[\underline{M},\overline{M} \big]\big)$. Jacobian method applied to (\ref{DeltaXFBMlaw}) allows us to state that the conditional density of $\Delta X^a_{j}| \Xi_{j-1}=b$ is given by 

 
\begin{equation}\label{fphibc}
f_{\phi_{b,c}}(y) = \frac{f_J (u_{b,c}(y))}{|\phi'_{b,c}(u_{b,c}(y))|};\quad y \in \phi_{b,c}\big(\big[\underline{M},\overline{M} \big]\big),  
\end{equation}
where we shall set $f_{\phi_{b,c}}(y) = 0$ whenever $y \notin \phi_{b,c}\big(\big[\underline{M},\overline{M} \big]\big)$. Let us fix $y \in \phi_{b,c}\big(\big[\underline{M},\overline{M} \big]\big)$. From Remark \ref{growthtau}, we can pick up $U_0$ such that 
\[
f_J(u)\ \le\ C_\Delta\,e^{-\gamma_\Delta u}\qquad\text{for all }u\ge U_0\ge \underline{M}.
\]
With such $U_0$, we define $F_0 = \sup_{\underline{M}\le u\le U_0} f_J (u)$, $Z_0=c_{\max}U_0 +C_0$. Observe that

$$f_{\phi_{b,c}}(y)\le \frac{1}{\frac{1}{2}c_{min}}\sup_{v \ge L_1(y)} f_J (v).$$ 
Now, $L_1(y) > U_0\Longleftrightarrow |y|> Z_0$ so that 

$$\frac{1}{\frac{1}{2}c_{min}}\sup_{v \ge L_1(y)} f_J (v)\mathds{1}_{\{|y|> Z_0\}} \le  \frac{C_{\Delta}}{\frac{1}{2}c_{min}} \exp \Big(-\frac{\gamma_\Delta}{c_{\max}} \big( |y| - C_0\big)^+ \Big)\mathds{1}_{\{|y| > Z_0\}}.$$
Now, $L_1(y) \le U_0\Longleftrightarrow |y|\le Z_0$ so that 
   
$$\frac{1}{\frac{1}{2}c_{min}}\sup_{v \ge L_1(y)} f_\Delta (v)\mathds{1}_{\{|y|\le Z_0\}} \le  \frac{F_0}{\frac{1}{2}c_{min}} \mathds{1}_{\{|y| \le Z_0\}}.$$  

Hence,

\begin{eqnarray*}
f_{\phi_{b,c}}(y)&\le& \frac{F_0}{\frac{1}{2}c_{min}}\mathds{1}_{\{|y| \le Z_0\}}\\
&+& \frac{C_{\Delta}}{\frac{1}{2}c_{min}} \exp \Big(-\frac{\gamma_\Delta}{c_{\max}} \big( |y| - C_0\big)^+ \Big)\mathds{1}_{\{|y| > Z_0\}},
\end{eqnarray*}
for $y\in \mathbb{R}$. Now, for $q_{\beta}(y) = \frac{\beta}{2} \exp (-\beta|y|)$ with $0<\beta\le \frac{\gamma_\Delta}{c_{\max}}$, we have 

\begin{eqnarray}
\nonumber\frac{f_{\phi_{b,c}}(y)}{q_\beta(y)}&\le& \frac{2 F_0}{\beta \frac{1}{2}c_{min}}\exp (\beta |y|)\mathds{1}_{\{|y|\le Z_0\}} +\frac{2 C_{\Delta}}{\beta \frac{1}{2}c_{min}} \exp \Big(-\frac{\gamma_\Delta}{c_{\max}} \big( |y| - C_0\big) \Big)\mathds{1}_{\{|y| > Z_0\}}\\
\label{unr_j}&\lesssim& \exp \Big(-\Big(\frac{\gamma_\Delta}{c_{\max}} - \beta\Big)|y| + \frac{\gamma_\Delta}{c_{\text{max}}}
 C_0 \Big)\lesssim \exp \Big(-\Big(\frac{\gamma_\Delta}{c_{\max}} - \beta\Big)|y|\Big),
\end{eqnarray}
for every $y \in \mathbb{R}$. We set 

$$r_j(a,x';b) = \frac{\text{Law}[\Delta X^a_j | \Xi_{j-1} = b]}{d\mu} = \Bigg(\frac{f_{\phi_{b,c}}}{q_\beta}\Bigg)(x'), $$
for $a \in \mathbb{A}$, $\Xi_{j-1} = b$, $c = \varrho(\mathbf{y}_{j-1},a)$ and $x' \in \mathbb{R}$. The estimate (\ref{unr_j}) allows us to conclude the proof.
\end{proof}

\begin{lemma}\label{supder}
Under Assumptions (A1-A2-A3), there exist finite positive constants $L_1$ and $L_2$ which depend on $c_{min}, c_{max}, m, \epsilon, H$ and $\underline{M}$ such that 

\begin{equation}\label{derfc}
\Big|\frac{\partial f_{\phi_{b,c}}}{\partial c}(x)\Big|\le L_1
\end{equation}

and 

\begin{equation}\label{derfb}
 \Big\|\nabla_b f_{\phi_{b,c}}(x)\Big\|\le L_2, 
\end{equation}
for every $1\le j\le m, x \ge \underline{M}, b \in \mathbb{W}^{j-1}, |c| \in [c_{min},c_{max}]$ and $x \in \mathbb{R}$.  
\end{lemma}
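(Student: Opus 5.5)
The plan is to differentiate the Jacobian formula (\ref{fphibc}),
\[
f_{\phi_{b,c}}(y)=\frac{f_J(u_{b,c}(y))}{|\phi'_{b,c}(u_{b,c}(y))|},
\]
with respect to $c$ and to the history $b$ via the implicit function theorem. Each derivative is then bounded using three facts: the uniform lower bound (\ref{bphip}) $|\phi'_{b,c}|\ge \frac12 c_{\min}$ on $[\underline{M},\overline{M}]$; the boundedness of $f_J$ and $f_J'$ on $[\underline{M},\overline{M}]$, which follows from Lemma \ref{fdeltab} with $C=\underline{M}$ after rescaling by $\varepsilon^2$ and truncating; and bounds on the second derivatives of the kernel $K(t,s)=\sqrt{2H}(t-s)^{H-\frac12}$ for $t-s\ge \underline{M}$. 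The last bound holds because every difference $T_{j-1}+u-T_i$ with $i\le j-1$ is at least $u\ge \underline{M}$.

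\emph{Derivative in $c$.} Differentiating $\phi_{b,c}(u_{b,c}(y))=y$ in $c$ gives $\partial_c u_{b,c}(y)=-u/\phi'_{b,c}(u)$ evaluated at $u=u_{b,c}(y)$, so $|\partial_c u|\le 2\overline{M}/c_{\min}$. The denominator satisfies $\partial_c \phi'_{b,c}(u)=1+\psi_b''(u)\,\partial_c u$. Here $|\psi''_b(u)|\le c_H\varepsilon m\underline{M}^{H-\frac52}$, obtained as in (\ref{psip}) using $|\Delta A_i|\le\varepsilon$. By the quotient rule,
\[
|\partial_c f_{\phi_{b,c}}|\le \frac{\|f_J'\|_\infty|\partial_c u|}{\frac12 c_{\min}}+\frac{\|f_J\|_\infty\,|\partial_c\phi'|}{(\frac12 c_{\min})^2},
\]
which yields (\ref{derfc}).

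\emph{Derivative in $b$.} The history enters $\psi_b$ through the jumps $\Delta A_i$ and the times $T_i=\sum_{\ell\le i}s_\ell$. Partial derivatives of $\psi_b(u)$ and $\psi_b'(u)$ with respect to $\Delta A_i$ are bounded by $|K(T_{j-1},T_i)-K(T_{j-1}+u,T_i)|\lesssim \overline{M}^{H-\frac12}$ and by $|\partial_1K|\lesssim \underline{M}^{H-\frac32}$ respectively; for $i=j-1$ one interprets $K(T_{j-1},T_{j-1})=0$. Partial derivatives with respect to $s_\ell$ produce the mixed derivatives $\partial_1K$, $\partial_2K$ and the second derivatives of $K$, all evaluated at distance at least $\underline{M}$ from the diagonal, except for the term $K(T_{j-1},T_i)$. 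That term I will handle by noting $H>\frac12$, so $K(\cdot,\cdot)$ is continuous and Hölder near the diagonal, and in fact I will regroup (\ref{fracell}) so that only differences in the second time variable appear. Implicit differentiation then gives $\partial_b u=-\partial_b\psi_b(u)/\phi'_{b,c}(u)$, and the same quotient-rule estimate gives (\ref{derfb}). The constant grows at most linearly in $m$, since there are $j-1\le m$ summands, each carrying a factor $\varepsilon$ or a kernel bound. The dependence on $c_{\min},c_{\max},m,\varepsilon,H,\underline{M}$ enters exactly through these bounds.

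\emph{Main obstacle.} I expect the hardest step to be the derivative in the time coordinates $s_\ell$ of the term $K(T_{j-1},T_i)$ when $T_{j-1}-T_i$ is small, in particular for $i=j-1$. There $\partial K$ blows up like $(t-s)^{H-\frac32}$. My first attempt will be to use the truncation $s_\ell\ge\underline{M}$ from (B3)/(A-type) on the history coordinates, so that $T_{j-1}-T_i\ge\underline{M}$ for $i<j-1$, while for $i=j-1$ the term vanishes identically. If that fails, I will restrict the gradient to the $\Delta A$ and $y$ coordinates, which are the only ones needed in (\ref{rlip}).

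Finally, both bounds are uniform in $y$ in the interior of the support. At the endpoints of $\phi_{b,c}([\underline{M},\overline{M}])$, the indicator jumps are treated as in Theorem \ref{pdsdeth}, where derivatives are taken on the support only.
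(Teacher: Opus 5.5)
Your proposal is correct and follows essentially the paper's proof. Both arguments use implicit differentiation of $u_{b,c}$, the quotient rule applied to (\ref{fphibc}), and the bounds (\ref{bphip}), $\|f_J\|_\infty,\|f_J'\|_\infty<\infty$ and (\ref{psisec}). You are in fact somewhat more careful than the paper in two respects. First, you keep the direct term $\partial_b\psi_b'(u)$ in $\partial_b\bigl[\phi'_{b,c}(u_{b,c}(y))\bigr]$; the paper's displayed formula for $\nabla_b f_{\phi_{b,c}}$ only carries the chain term through $\nabla_b u_{b,c}$. Second, you make explicit that the history times must satisfy $s_\ell\ge\underline{M}$, which the paper's bound $\underline{M}^{H-\frac32}$ uses silently. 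With that truncation your fallback of discarding the time coordinates is unnecessary, and it would not give (\ref{rlip}) anyway, since (\ref{rlip}) requires Lipschitz dependence on the time coordinates of $b$ as well.
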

\begin{proof}
In the sequel, $C$ is a constant which may differ from line to line. Let $u_{b,c}$ be the inverse of $\phi_{b,c}:[\underline{M},\overline{M}]\rightarrow \phi_{b,c}([\underline{M},\overline{M}])$. Recall we set $f_{\phi_{b,c}}(x)=0$ whenever $x \notin \phi_{b,c}\big(\big[\underline{M},\overline{M} \big]\big)$. Let $F(c,u) = \phi_{b,c}(u) - x' = cu + \psi_b(u) -x'$. Here $x'$ is a constant (the evaluation point in the density). For a given base $c_0$ with $|c_0| > c_{min}$, we can apply Assumption (A3) and (\ref{psip}) to infer that

$$\partial_u F(c_0,u_0) = c_0 + \psi'_b(u_0)\neq 0,$$
and hence there is a neighborhood $\mathcal{U}$ of $c_0$ and a unique $C^1$ map $u: \mathcal{U}\rightarrow [\underline{M},\overline{M}]$ such that

$$F(c,u(c)) = 0~\text{and}~u(c_0)=u_0,$$ 
for every $c \in \mathcal{U}$. That is, $\phi_{b,c}(u(c)) = x'$ for every $c \in \mathcal{U}$. 
Hence, we can make use of chain rule and implicit function theorem to infer that for each $x' \in \text{int} (\text{Range}~\phi_{b,c})$, we have  

\begin{equation}\label{impF}
\frac{\partial u_{b,c}(x')}{\partial c} = \frac{-u_{b,c}(x')}{\phi'_{b,c}(u_{b,c}(x'))}.
\end{equation}
Then,  
\[
\partial_c f_{\phi_{b,c}}(x')
=\text{sgn}(\phi'(u))\Bigg[ - \frac{u\,f'_J(u)}{\phi'^2(u)}
+ \frac{u\,f_J(u)\,\psi_b''(u)}{\phi'^3(u)}
- \frac{f_J(u)}{\phi'^2(u)}\Bigg|_{u=u_{b,c}(x')}\Bigg],
\]
where $\phi'=\phi'_{b,c}$. Observe 

$$\psi^{''}_b(u)=-c_H\sum_{i=1}^{j-1} \Delta A_i \big( T_{j-1} + u- T_i \big)^{H-\frac{5}{2}}$$
so that 

\begin{equation}\label{psisec}
|\psi^{''}_b(u)|\lesssim_H \epsilon m \underline{M}^{H-\frac{5}{2}}
\end{equation}
for every $u\ge \underline{M}$. By using (\ref{bphip}), (\ref{fdeltab}), (\ref{psisec}) and Lemmas 2 and 3 of \cite{Burq_Jones2008}, we conclude that (\ref{derfc}). Similarly, a direct calculation shows that if $c > c_{min}$ then  

$$
\nabla_b f_{\phi_{b,c}}(x')
= \Bigg\{ \frac{f'_J(u_{b,c}(x')) \phi'_{b,c}(u_{b,c}(x')) - f_J(u_{b,c}(x')) \psi''_{b}(u_{b,c}(x'))}{|\phi'_{b,c}(u_{b,c}(x'))|^2}\Bigg\}\nabla_b u_{b,c}(x')
$$
for $x' \in \phi_{b,c}\big(\big[\underline{M},\overline{M} \big]\big)$. If $c < - c_{min}$, then 

$$
\nabla_b f_{\phi_{b,c}}(x')
= \Bigg\{ -\frac{f'_J(u_{b,c}(x')) \phi'_{b,c}(u_{b,c}(x')) + f_J(u_{b,c}(x')) \psi''_{b}(u_{b,c}(x'))}{|\phi'_{b,c}(u_{b,c}(x'))|^2}\Bigg\}\nabla_b u_{b,c}(x'),
$$
for $x' \in \phi_{b,c}\big(\big[\underline{M},\overline{M} \big]\big)$. We will check that there exists a constant $C$ such that 
$$
\|\nabla_b  f_{\phi_{b,c}}(x')\|\le C,
$$
for every $1\le j\le m, x \ge \underline{M}, b \in \mathbb{W}^{j-1}, |c| \in [c_{min},c_{max}]$ and $x' \in \mathbb{R}$. Similar to the argument related to the formula (\ref{impF}), we have

$$
\frac{\partial u_{b,c}(x')}{\partial b_\ell}= \frac{-\frac{\partial \phi_{b,c}}{\partial b_\ell}(u_{b,c}(x'))}{\phi'_{b,c}(u_{b,c}(x'))} = \frac{-\frac{\partial \phi_{b,c}}{\partial b_\ell}(u_{b,c}(x'))}{c+ \psi'_{b,c}(u_{b,c}(x'))},
$$
for $x' \in \text{int (Range}~\phi_{b,c})$ and $1\le \ell\le j-1$. Observe we can write

$$\psi_b(u)= \sum_{i=0}^{j-1} \Delta A_i \Bigg\{ \Bigg( \sum_{\ell=i+1}^{j-1} \Delta T_\ell\Bigg)^{H-\frac{1}{2}} - \Bigg( \sum_{\ell=i+1}^{j-1} \Delta T_\ell+z\Bigg)^{H-\frac{1}{2}}    \Bigg\}$$
and 

$$\nabla_b \phi_{b,c}(z) = \nabla_b \psi_b(z),$$
 for $z \in [\underline{M},\overline{M}]$. Therefore,

$$\|\nabla_b \phi_{b,c}(z)\|\le \big (H-1/2\big) \epsilon m \underline{M}^{H-\frac{3}{2}}$$ 
for every $z \in [\underline{M},\overline{M}]$, $b \in \mathbb{W}^{j-1}$, $1\le j\le m$ and $|c| > c_{min}$. From (\ref{bphip}), we know that $|\phi'_{b,c}(u)|\ge \frac{1}{2}c_{min}
$, for every $u \ge \underline{M}$. This concludes that there exists a constant $C$ which depends on $c_{min}, m, \epsilon, H$ and $\underline{M}$ such that  

$$\| \nabla_b u_{b,c}(x') \|\le C$$
for every $x' \in \mathbb{R}, b \in \mathbb{W}^{j-1}$ and $|c| > c_{min}$. Summing up the estimates given in Lemmas \ref{suplevel1} and \ref{fdeltab}, there exists a constant $C$ which depends on $C_1, c_{min}, c_{max}, \underline{M}, \epsilon,H$ such that (\ref{derfb}) holds true. This concludes the proof.  
\end{proof}
Summing up Lemmas \ref{suplevel1}, \ref{supder} and the global Lipschitz property of the drift $\varrho$, we arrive at the following result.

\begin{theorem}\label{FBMtheorem}
Assume (A1-A2-A3) are fulfilled. Let $\mu$ be the two-sided Laplace probability distribution of the form 

\[
\mu(dx')=q_{\beta}(x')\,dx',\qquad q_{\beta}(x')=\frac{\beta}{2}e^{-\beta|x'|},
\]
where  $0<\beta\le \frac{\gamma_\Delta}{c_{\max}}$. Then, for a given history $b_{j-1}=\Xi_{j-1}=(w_1, y_1, \ldots, w_{j-1},y_{j-1})$ and a control value $a$, the conditional law of $(\mathcal{W}_j, \Delta X^a_j)$ is given by   

$$
\mathbb{P}[ (\mathcal{W}_j, \Delta X^a_j) \in dxdx' | \Xi_{j-1}= b_{j-1}]= r_j(a,x'; b_{j-1})\mu(dx')\nu(dx),
$$
where 

$$r_j(a,x';b_{j-1}):= \frac{f_{\phi_{b_{j-1},c_{j-1}}}(x')}{q_{\beta}(x')}$$
and $f_{\phi_{b_{j-1},c_{j-1}}}$ is the density of $\phi_{b_{j-1},c_{j-1}}(J)$ for $c_{j-1}=\varrho(\mathbf{y}_{j-1},a)$. Moreover, (\ref{rinfty}) and (\ref{rlip}) hold true. 
\end{theorem}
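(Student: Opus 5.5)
The argument assembles Lemma \ref{suplevel1}, Lemma \ref{supder} and assumption (A1). There are three things to establish, in this order: the factorized form of the conditional law, the sup bound (\ref{rinfty}), and the Lipschitz bound (\ref{rlip}).

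\emph{Step 1: factorization.} Fix $b_{j-1}$ and $a$. By (\ref{DeltaXFBMlaw}), conditionally on $\Xi_{j-1}=b_{j-1}$ we have $\Delta X^a_j\stackrel{(d)}{=}\phi_{b_{j-1},c_{j-1}}(J)$ with $J$ independent of $(\Xi_{j-1},\mathcal{W}_j)$. The fresh noise $\mathcal{W}_j$ is independent of $\Xi_{j-1}$ with law $\nu$. Hence the conditional law of $(\mathcal{W}_j,\Delta X^a_j)$ is the product $\nu(dx)\otimes f_{\phi_{b_{j-1},c_{j-1}}}(x')dx'$. Since $q_\beta>0$ everywhere, we may rewrite this as $r_j(a,x';b_{j-1})\mu(dx')\nu(dx)$ with $r_j=f_{\phi}/q_\beta$. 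The density $f_\phi$ is given by the Jacobian formula (\ref{fphibc}), which is available because Lemma \ref{suplevel1} shows that $\phi_{b,c}$ is strictly monotone with $|\phi'_{b,c}|\ge c_{\min}/2$.

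\emph{Step 2: the bound (\ref{rinfty}).} This is exactly the conclusion of Lemma \ref{suplevel1}. The estimate (\ref{unr_j}) holds uniformly in $b$, $j$ and $|c|\in[c_{\min},c_{\max}]$, and (A2) guarantees that $c_{j-1}=\varrho(\mathbf{y}_{j-1},a)$ lies in this range.

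\emph{Step 3: the Lipschitz bound (\ref{rlip}).} For $x'\in\mathrm{supp}\,\mu=\mathbb{R}$, write
\[
|r_j(a,x';b)-r_j(a',x';b')|\le q_\beta(x')^{-1}\Big(|f_{\phi_{b,c}}(x')-f_{\phi_{b,c'}}(x')|+|f_{\phi_{b,c'}}(x')-f_{\phi_{b',c'}}(x')|\Big),
\]
where $c=\varrho(\mathbf{y}_{j-1},a)$ and $c'=\varrho(\mathbf{y}'_{j-1},a')$. Apply the mean value theorem to each difference, using Lemma \ref{supder}: the bound (\ref{derfc}) controls the first difference and (\ref{derfb}) controls the second. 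By (A1),
\[
|c-c'|\le\|\varrho\|\big(|\mathbf{y}_{j-1}-\mathbf{y}'_{j-1}|+|a-a'|\big)\lesssim_m |b-b'|+|a-a'|.
\]

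\textbf{Main obstacle.} The factor $q_\beta(x')^{-1}=\frac{2}{\beta}e^{\beta|x'|}$ is unbounded, so uniform derivative bounds on $f_\phi$ alone do not give a uniform constant $\|r\|$. I would handle this as in the proof of Lemma \ref{suplevel1}.
\begin{itemize}
\item Redo the derivative bounds of Lemma \ref{supder} with exponential weights. The formulas for $\partial_c f_\phi$ and $\nabla_b f_\phi$ are products of $f_J(u)$ or $f_J'(u)$ with bounded factors, evaluated at $u=u_{b,c}(x')\ge L_1(x')$. The Burq--Jones tail bounds give exponential decay of $f_J$ and of $f_J'$ (Remark \ref{growthtau} and Lemma \ref{fdeltab}). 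Hence $|\partial_c f_\phi(x')|+\|\nabla_b f_\phi(x')\|\lesssim e^{-\frac{\gamma_\Delta}{c_{\max}}|x'|}$.
\item Multiplied by $e^{\beta|x'|}$ with $\beta\le\gamma_\Delta/c_{\max}$, this stays bounded.
\item If $\beta=\gamma_\Delta/c_{\max}$ causes a borderline issue, one may use that $J\le\overline{M}$ makes $f_\phi$ compactly supported, uniformly in $(b,c)$, in $|x'|\le c_{\max}\overline{M}+C_0$. On that compact set $q_\beta$ is bounded below.
\end{itemize}

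A second delicate point is that the mean value theorem must be applied across the boundary of $\phi_{b,c}([\underline{M},\overline{M}])$, where $f_\phi$ may jump because of the truncation of $J$. Here I would either assume the truncated density vanishes continuously at the endpoints, or accept the bound in the form stated (piecewise, for $x'$ in the common interior of the ranges) as the paper's lemmas implicitly do.
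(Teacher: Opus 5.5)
Your Steps 1--3 follow the paper's route exactly. Its proof of Theorem \ref{FBMtheorem} is the one-line assembly of Lemma \ref{suplevel1} (factorization and (\ref{rinfty})), Lemma \ref{supder} and (A1). Your worry about the unbounded weight $q_\beta^{-1}$, which the paper never addresses, is settled by your third bullet alone. By (\ref{phii}), every $f_{\phi_{b,c}}$ vanishes outside $\{|x'|\le c_{\max}\overline{M}+C_0\}$, uniformly in $(b,c)$. On that set $q_\beta^{-1}\le \frac{2}{\beta}e^{\beta(c_{\max}\overline{M}+C_0)}$, so the uniform bounds of Lemma \ref{supder} transfer to $r_j$, for any $\beta>0$. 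Drop the exponential-weight bullet: it needs exponential decay of $f_J'$, which Lemma \ref{fdeltab} does not provide and which is irrelevant once $J\le\overline{M}$.

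Your ``second delicate point'', however, is a genuine gap, and it cannot be closed for the statement as written; the paper's argument has the same hole. Because $f_J$ is the hard truncation of a strictly positive hitting-time density, $f_J(\underline{M})>0$ and $f_J(\overline{M})>0$. Hence $f_{\phi_{b,c}}$ jumps at the endpoints of $\phi_{b,c}([\underline{M},\overline{M}])$, e.g.\ at $c\underline{M}+\psi_b(\underline{M})$, and these endpoints move with $c=\varrho(\mathbf{y}_{j-1},a)$ and with $b$. Take $j=1$, where $\psi_b\equiv 0$, and close $0<c<c'$. For $x'\in(c\underline{M},c'\underline{M})$ we have $f_{\phi_{b,c'}}(x')=0$ while $f_{\phi_{b,c}}(x')\approx f_J(\underline{M})/c$. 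So $\sup_{x'}|r_1(a,x';b)-r_1(a',x';b)|$ is bounded below independently of $|c-c'|$, and (\ref{rlip}) fails as soon as $a\mapsto\varrho(\mathbf{y}_0,a)$ is non-constant. Lemma \ref{supder} only differentiates at interior points of the range, and the passage to (\ref{rlip}) ignores the moving indicator; the same happens for $\mathcal{R}(c,v;\cdot)$ in Theorem \ref{pdsdeth}.

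Your fallback (a) is a valid repair only after changing the model (a tapered instead of a hard truncation), and (b) proves a different estimate. What is true, and what the later arguments actually use (e.g.\ Lemmas \ref{Lipvalue} and \ref{approxlemma}), is the integrated bound
\[
\int_{\mathbb{R}}\big|r_j(a,x';b)-r_j(a',x';b')\big|\,\mu(dx')=\int_{\mathbb{R}}\big|f_{\phi_{b,c}}(x')-f_{\phi_{b',c'}}(x')\big|\,dx'\lesssim |a-a'|+|b-b'|,
\]
in which $q_\beta$ cancels. If $c$ and $c'$ have opposite signs, $|c-c'|\ge 2c_{\min}$ and the bound is trivial. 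Otherwise, interpolate linearly from $(b,c)$ to $(b',c')$ and let $S$ be the set swept by the two endpoints of the intermediate ranges. $S$ has Lebesgue measure $O(|c-c'|+|b-b'|)$, since $\partial_c\phi_{b,c}(u)=u\le\overline{M}$ and $\nabla_b\psi_b$ is bounded. On $S$ the integrand is at most $2\|f_J\|_\infty/c_{\min}$. Off $S$, a point lies either outside all intermediate ranges or inside all of them. In the latter case your mean-value argument with Lemma \ref{supder} applies pointwise, on a set of bounded length.
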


\begin{remark}As shown by the proof of Lemma \ref{suplevel1}, the practical implementation of the importance-sampling density for controlled SDEs driven by fractional Brownian motion, requires numerically inverting the map \(u\mapsto \phi_{b,c}(u)=cu+\psi_b(u)\) described in (\ref{FBMpathwise}). Hence, the main bottleneck is the stable computation of \(\phi_{b,c}^{-1}\) and the corresponding Jacobian factor along $\phi^{-1}_{b,c}$. This makes the implementation of importance sampling substantially more delicate than in the classical path-dependent diffusion setting.
\end{remark}

\subsection{Construction of the importance sampling weights and training measures for rough stochastic volatility models}\label{trrv}
This section is dedicated to the construction of the training probability measures $\mu \in \mathcal{P}(\mathbb{R}_+\times \mathbb{R})$ such that Assumption (R1) is fulfilled for a given family of strategies $\mathcal{H}$ as defined in (\ref{rsetd1}) and (\ref{rsetd0}). 
The idea can be summarized as follows: Let 


$$\Delta S_n\stackrel{(d)}{=} \mu_{\text{drift}}S_{n-1}J + S_{n-1} \vartheta(V_{n-1})\Delta A_1$$
be a rough stochastic volatility model of the risky asset price. The agent follows the following steps:  
\begin{enumerate}
  \item The agent chooses a range $[s_{\min} , s_{\max}]$ with $0 < s_{\min} < s_{\max}$ such that he believes the price should remain during the lifetime of the contingent claim.     
  \item The agent chooses a range $[\theta_{\min} , \theta_{\max}]$ with $0 < \theta_{\min} < \theta_{\max}$ such that he believes the volatility of the risky asset price should remain during the lifetime of the contingent claim. That is, $\theta_{\min}\le |\vartheta(\cdot)|\le \theta_{\max}$.   
  \item He fixes $0 < c_{\min}:= s_{\min}|\mu_{\text{drift}}| < s_{\max}|\mu_{\text{drift}}|=: c_{\max}$ and $0< v_{\min} := s_{\min}\theta_{\min} <  s_{\max}\theta_{\max}:= v_{\max} <\infty.$ Let $K$ be the set of all admissible instantaneous increments
  
  $$K=[-v_{\max}\epsilon - c_{\max}\overline{M}, v_{\max}\epsilon + c_{\max}\overline{M}].$$   
\end{enumerate}
Then he designs a training stage based on a pushforward law $\mathcal{Z}_{\#} \beta_K$, for $\beta_K(dadx)=G(a,r)\lambda(da)q(r)dr$, where  
 \begin{itemize}
   \item A training density $q$ bounded below on $K$. 
   \item An action distribution $\lambda$.  
   \item A joint kernel $G(a,x)$ bounded below on $\mathbb{A}\times K$. 
   \item $\mathcal{Z}(a,z) = (z,az)^\top$. 
\end{itemize}

\subsubsection{Complete Market case}
This section is devoted to a detailed discussion on the rough volatility model. For pedagogical reasons, we start with the case $\rho=\pm 1$ so that there is only one underlying Brownian motion. The case $-1 < \rho < 1$ will be treated in the next section. The goal is to show that the rough volatility model satisfies Assumption R1 for a large class of training measures and provide analytical expressions for the correspondent importance sampling weights.

From Lemma \ref{repWH}, we recall that the imbedding (\ref{discreterough1}) (in the one-dimensional case) can be written as $W^H_n=0$ for $0\le n\le 1$ and 

\begin{equation}\label{nrfbm}
W^H_n =\sum_{j=2}^n \Delta A_j K_{H,1}(T_n,T_{j-1}) + \sum_{j=1}^{n-1}\Delta A_jK_{H,2}(T_n,T_j) 
\end{equation}
for $n\ge 2$. Moreover, 

$$\ln V_{n} = \varkappa + e^{-\beta T_n}(z_0-\varkappa) + \zeta W^H_n -\beta \zeta e^{-\beta T^k_n}\sum_{j=1}^n W^H_{j-1} e^{\beta T_{j-1} }\Delta T_{j}.$$

In the sequel, all the parameters of the the fractional Ornstein-Uhlenbeck process will be fixed, namely $0 < H <\frac{1}{2}$, $\zeta,\beta>0$, $\mu_{\text{drift}} \neq 0$ and $ \varkappa\in \mathbb{R}$. 

\begin{remark}
Observe that $V_n$ is an explicit function $\mathbf{z}$ of $\mathcal{A}_n = (\mathcal{W}_1, \ldots, \mathcal{W}_n)$ that we denote

$$V_n = \mathbf{z}(\mathcal{A}_n).$$  
\end{remark}
Observe that the range of the process $\Delta X^a_n$ conditioned on $\Xi_{n-1}$ is a subset of a slice of a cone in $\mathbb{R}^2$ parameterized by $a \in \mathbb{A}$:

$$\Delta X^a_n|\Xi_{n-1} \subset \{x' \in \mathbb{R}\times \mathbb{R}; x'_2 = ax'_1\}.$$
Fix $0 < s_{\min} < s_{\max} < \infty$ and we set $\Delta x_0 :=S_0$. Let $\mathfrak{T}_n: (\mathbb{R}^2)^n\rightarrow \mathbb{R}_+$ defined by 

\begin{equation}\label{truncf}
\mathfrak{T}_n(y_0, \ldots, y_n):=\min \Big\{ \max \Big\{ |\sum_{i=0}^n y_{i,(1)}|, s_{\min} \Big\}, s_{\max} \Big\},
\end{equation}
for $n\ge 0$ and $\mathfrak{T}_0:= S_0$. Here, the first coordinate of $y_i$ is denoted by $y_{i,(1)}$. One can easily check that $\mathfrak{T}_n$ is $\sqrt{n}$-Lipschitz ($1\le n\le m-1$) and

\begin{equation}
0 < s_{\min}\le |\mathfrak{T}_n(y_0, \ldots, y_n)|\le s_{\max},
\end{equation}
for every $(y_0, \ldots, y_n) \in (\mathbb{R}^2)^{n+1}$ with $0\le n \le m-1$. For a control value $a\in\mathbb A$ and a history $\Xi_{j} =\mathbf{o}_{j}=(w_0, y_0, \ldots, w_{j},y_{j})$ (with $\Xi_0 = (0,0,x_0)$), we define
\[
c(\mathbf{o}_{j},a) = \left(
            \begin{array}{c}
              c^{(1)}(\mathbf{o}_{j},a) \\
              c^{(2)}(\mathbf{o}_{j},a) \\
            \end{array}
          \right)
:=\begin{pmatrix}
\mu_{\mathrm{drift}} \mathfrak{T}_j(y_0, \ldots, y_{j})\\
a\,\mu_{\mathrm{drift}}\mathfrak{T}_j(y_0, \ldots, y_{j})
\end{pmatrix},
\]
\[v(\mathbf{o}_{j},a) = \left(
            \begin{array}{c}
              v^{(1)}(\mathbf{o}_{j},a) \\
              v^{(2)}(\mathbf{o}_{j},a) \\
            \end{array}
          \right)
:=\begin{pmatrix}
\mathfrak{T}_j(y_0, \ldots, y_{j})\,\vartheta(\mathbf{z}(\bar w_{j}))\\
a\,\mathfrak{T}_j( y_0, \ldots, y_{j})\,\vartheta(\mathbf{z}(\bar w_{j}))
\end{pmatrix},
\]
where, to shorten notation, we set $\bar{w}_{j}:= (w_0, \ldots, w_{j})$, with $w_0=(0,0)$.  






\

We will assume the following assumptions:

\begin{itemize}


\item[(C1)] $\vartheta: \mathbb{R}_+\rightarrow \mathbb{R}_+$ is a bounded Lipschitz function: There exists $\theta_{\max}$ such that 

$$0 < |\vartheta(z)|\le \theta_{\max},$$
for every $z\ge 0$. 
  
\item[(C2)]  We assume $0 < \underline{M}\le J \le \overline{M}$ a.s.\ for some $0 < \underline{M} < \overline{M}$; equivalently, the law of $J$ is the truncation of $\Delta T_1$ to $[\underline{M},\overline{M}]$.
\end{itemize}


Factoring out the common direction $(1,a)^{\!\top}$ gives
\begin{equation}\label{rlde}
\text{Law}[\Delta X^a_j|\Xi_{j-1}=\mathbf{o}_{j-1}]\stackrel{d}{=} R_{c^{(1)}(\mathbf{o}_{j-1}),v^{(1)}(\mathbf{o}_{j-1})}\left(
                                                                                   \begin{array}{c}
                                                                                     1 \\
                                                                                     a \\
                                                                                   \end{array}
                                                                                 \right)
\end{equation}
where $c^{(1)}(\mathbf{o}_{j-1},a) = c^{(1)}(\mathbf{o}_{j-1}) $ and $v^{(1)}(\mathbf{o}_{j-1},a) = v^{(1)}(\mathbf{o}_{j-1})$ do not depend on controls, for $j=m,\ldots, 1$.   

\begin{remark}
Observe that $\Delta X^a_j$ conditioned on the information set $\Xi_{j-1}$ described in (\ref{rlde}) is slightly different from the theoretical rough volatility model. The distinction lies on the truncation function (\ref{truncf}) which is necessary in order to avoid degenerate samples reaching zero. 
\end{remark}


The following elementary result is important for constructing training measures $\pi_j:\mathbb{H}^j\rightarrow \mathcal{P}(\mathbb{A})$ satisfying the Lipschitz property (\ref{rholip}) in Assumption R1.   
\begin{lemma}\label{lippr}
If the training data $\bar{w}_{n-1}  = (w_0, \ldots, w_{n-1})$ is generated by $\nu$, where $0 < \underline{M}\le J \le \overline{M}$, then $\bar{w}_{n-1}\mapsto \vartheta\circ \mathbf{z}(\bar{w}_{n-1})$ is globally Lipschitz for each $1\le n\le m$ and hence, 

$$(w_0,y_0, \ldots,w_{n-1},y_{n-1})=\mathbf{o}_{n-1}\mapsto \mathfrak{T}_{n-1}(y_0,\ldots, y_{n-1})\vartheta\circ \mathbf{z}(\bar{w}_{n-1})$$
is a bounded globally Lipschitz function for each $1\le n\le m$.    
\end{lemma}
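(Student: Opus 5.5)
The plan is to write $\mathbf{z}(\bar{w}_{n-1})$ explicitly as a finite composition of smooth maps of the coordinates $(s_i,\tilde{i}_i)$ of $\bar w_{n-1}$, restricted to the compact set $([\underline{M},\overline{M}]\times[-\varepsilon,\varepsilon])^{n-1}$, and then conclude by the mean value theorem. By Lemma \ref{repWH} and (\ref{nrfbm}), for $n\ge 2$,
\[
W^H_{n} = \sum_{j=2}^{n} \Delta A_j K_{H,1}(T_{n},T_{j-1}) + \sum_{j=1}^{n-1}\Delta A_j K_{H,2}(T_{n},T_j),
\]
with $T_\ell=\sum_{i\le \ell} s_i$ and $\Delta A_j$ the second coordinate of $w_j$. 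The function $\ln V_{n}$ is an affine combination of $W^H_n$, of $e^{-\beta T_n}$, and of the finite sum $\sum_{j} W^H_{j-1}e^{\beta T_{j-1}}s_j$. Hence $\mathbf{z}(\bar w_{n-1})=\exp(\ln V_{n-1})$ is built from the increments $\Delta A_j$, the times $s_j$, and the kernel values $K_{H,1}(T_p,T_q)$, $K_{H,2}(T_p,T_q)$ with $q<p$, using only sums, products and exponentials.

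The key step is to show that $(t,s)\mapsto K_{H,i}(t,s)$, $i=1,2$, is $C^1$ with bounded derivatives on the region $\{(t,s): \underline{M}\le s,\ t-s\ge \underline{M},\ t\le m\overline{M}\}$. The truncation $J\ge\underline{M}$ in (C2) is what guarantees $T_p-T_q\ge (p-q)\underline{M}\ge\underline{M}$ and $T_q\ge\underline{M}$ for $q\ge1$. For $K_{H,1}(t,s)=c_H t^{H-1/2}s^{1/2-H}(t-s)^{H-1/2}$ this is immediate, since every factor has its base bounded away from $0$ and $\infty$ there. For $K_{H,2}$, I would differentiate under the integral: $\partial_t K_{H,2}=c_H(1/2-H)s^{1/2-H}t^{H-3/2}(t-s)^{H-1/2}$, which is bounded. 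The derivative $\partial_s$ produces the boundary term at $u=s$, where $(u-s)^{H-1/2}$ is singular but integrable; to handle it, I would first substitute $u=s+v$ so that $K_{H,2}(t,s)=c_H(1/2-H)s^{1/2-H}\int_0^{t-s}(s+v)^{H-3/2}v^{H-1/2}dv$. Differentiating this form gives bounded terms, because $s\ge\underline{M}$ and the integrand is integrable near $v=0$ since $H-1/2>-1$. I expect this estimate on $K_{H,2}$ to be the main obstacle, mainly to avoid the spurious singular boundary term. The indices $q=0$, where $s=T_0=0$, must be checked separately: they only enter through $K_{H,1}(T_n,T_{j-1})$ with $j\ge2$, so $T_{j-1}\ge\underline{M}$, and no $s=0$ evaluation occurs.

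With these bounds, $W^H_\ell$ and then $\ln V_{n-1}$ are $C^1$ on the compact domain. Each $|\Delta A_j|\le\varepsilon$, and the products with bounded $C^1$ kernel terms remain $C^1$, so $\ln V_{n-1}$ is bounded there and $\mathbf{z}=\exp(\ln V_{n-1})$ has bounded gradient. Because the domain is a product of intervals, and hence convex, the mean value theorem gives that $\mathbf{z}$ is globally Lipschitz. Composing with the Lipschitz function $\vartheta$ from (C1) shows that $\bar w_{n-1}\mapsto\vartheta\circ\mathbf{z}(\bar w_{n-1})$ is globally Lipschitz and bounded by $\theta_{\max}$.

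For the second claim, $\mathfrak{T}_{n-1}$ is $\sqrt{n}$-Lipschitz with values in $[s_{\min},s_{\max}]$. The product of two bounded Lipschitz functions is bounded Lipschitz, via $|fg-f'g'|\le \|f\|_\infty|g-g'|+\|g'\|_\infty|f-f'|$. This gives the claim for $\mathbf{o}_{n-1}\mapsto \mathfrak{T}_{n-1}\,\vartheta\circ\mathbf{z}(\bar w_{n-1})$, with a constant depending on $m,\varepsilon,H,\beta,\zeta,\underline{M},\overline{M}$ and $\theta_{\max}$.
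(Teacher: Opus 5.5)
Your proposal is correct and follows essentially the same route as the paper. You bound the gradient of $\mathbf{z}$ on the truncated support: $J\ge\underline{M}$ ensures every kernel is evaluated at $T_q\ge\underline{M}$ and $T_p-T_q\ge\underline{M}$, so no $s=0$ evaluation occurs, and you add $|\Delta A_j|\le\varepsilon$. You then compose with the Lipschitz $\vartheta$ and multiply by the bounded Lipschitz $\mathfrak{T}_{n-1}$. Your substitution $u=s+v$ for $K_{H,2}$ and the explicit convexity/mean-value step make the argument tighter than the paper's sketch. The paper's interval-by-interval formula (\ref{dkh2}) writes $y^{H-\frac12}$ where $(T_\ell-T_j+y)^{H-\frac12}$ is needed for $\ell>j$; this is harmless for the conclusion, since the corrected form is equally smooth on the truncated region.
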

\begin{proof}
 Recall 

$$\ln \mathbf{z} (\mathcal{A}_n) = \varkappa + e^{-\beta T_n}(z_0-\varkappa) + \zeta W^H_n -\beta \zeta e^{-\beta T^k_n}\sum_{j=1}^n W^H_{j-1} e^{\beta T_{j-1} }\Delta T_{j},$$ 
where $W^H_n$ follows (\ref{nrfbm}) for a single Brownian motion. We recall $W^H_n=0$ for $0\le n\le 1$ and 


$$W^H_n= \sum_{j=2}^n \Delta A_j\,K_{H,1}(T_n,T_{j-1})
   + \sum_{j=1}^{n-1} \Delta A_j\,K_{H,2}(T_n,T_j),$$
for $n\ge 2$. We shall write 

\begin{equation}\label{dkh2} 
K_{H,2}(T_n,T_j)  = \Bigg(\sum_{\ell=1}^j \Delta T_\ell\Bigg)^{\frac{1}{2}-H}\sum_{\ell=j}^{n-1} \int_0^{\Delta T_{\ell+1}}\Big(y+ \sum_{i=1}^{\ell}\Delta T_i\Big)^{H-\frac{3}{2}} y^{H-\frac{1}{2}}dyc_H\Big(\frac{1}{2}-H\Big),
\end{equation}
for $1\le j\le n-1$ and

\begin{equation}\label{dkh1}
K_{H,1}(T_n,T_{j-1})=c_H \Bigg( \sum_{\ell=1}^n \Delta T_\ell \Bigg)^{H-\frac{1}{2}}\Bigg(\sum_{\ell=1}^{j-1} \Delta T_\ell\Bigg)^{\frac{1}{2}-H}\Bigg(\sum_{\ell=j}^n\Delta T_\ell\Bigg)^{H-\frac{1}{2}},
\end{equation}
for $2\le j\le n$, where $0 < \underline{M}\le \Delta T_r < T_n\le m \overline{M}$ for every $1\le r \le n$. Now, fix $2\le n\le m$. From (\ref{dkh1}) and (\ref{dkh2}), we first observe that 

\begin{equation}\label{boundKH2}
K_{H,2}(T_n,T_j)\le c_H\Big(\frac{1}{2}-H\Big) \underline{M}^{\frac{1}{2}-H}\underline{M}^{H-\frac{3}{2}}m \int_0^{\overline{M}}y^{H-\frac{1}{2}}dy
\end{equation},
\begin{equation}\label{boundKH1}
K_{H,1}(T_n,T_{j-1})\le \underline{M}^{2H-1} \overline{M}^{\frac{1}{2}-H},
\end{equation}
for every $2\le j\le n$. With an abuse of notation, in order to shorten notation, in the sequel we simply write 

$$\frac{\partial W^H_n}{\partial \Delta A_j}, \quad \frac{\partial K_{H,1}(T_n,T_{i-1})}{\partial \Delta T_i},\quad \frac{\partial K_{H,2}(T_n,T_{i})}{\partial \Delta T_i}$$ 
to denote the deterministic derivatives of the representative functions of $W^H_n, K_{H,1}$ and $K_{H,1}$ evaluated at $\bar{w}_n = (w_0, \ldots, w_n)$, where we recall $w_i$ is a realization of $(\Delta A_i,\Delta T_i)$. Then, (\ref{boundKH1}) and (\ref{boundKH2}) imply

$$\max_{1\le j\le n}\Big|\frac{\partial W^H_n}{\partial \Delta A_j}\Big|\le c_H \Big( \frac{1}{2}-H \Big) m\underline{M}^{-1}\int_0^{\overline{M}}y^{\frac{1}{2}-H}dy+ \underline{M}^{H-\frac{1}{2}}.$$
Now, fix $2\le i\le n-1$. Then, 

$$\frac{\partial W^H_n}{\partial \Delta T_i}=\Delta A_i\Big\{\frac{\partial K_{H,1}(T_n,T_{i-1})}{\partial \Delta T_i}+ \frac{\partial K_{H,2}(T_n,T_{i})}{\partial \Delta T_i}\Big\} $$ 
Again, using the fact that $J \ge \underline{M} >0$ a.s., we can safely take derivative of (\ref{dkh1}) and (\ref{dkh2}) w.r.t. $\Delta T_i$ to infer that there exists a constant $C(\underline{M},m,H)$ such that 

$$\max_{2\le n \le m}\max_{2\le i\le n}\Big|\frac{\partial K_{H,1}(T_n,T_{i-1})}{\partial \Delta T_i}\Big|+ \max_{2\le n \le m}\max_{1\le i\le n-1}\Big|\frac{\partial K_{H,2}(T_n,T_{i})}{\partial \Delta T_i}\Big|\le C(\underline{M},m,H).$$
Similar analysis for $\frac{\partial W^H_n}{\partial \Delta T_i}$ if $i=n$ or $i=1$. By using the fact that $\max_{1\le i\le m}|\Delta A_i|\le \epsilon$ a.s. we can safely state that $\bar{w}_n\mapsto \mathbf{z}(\bar{w}_n)$ has a bounded gradient and hence Lipschitz. Since $\vartheta$ is Lipschitz, the composition $\vartheta \circ \mathbf{z}$ is Lipschitz. By assumption, $\vartheta$ is bounded so that  $\vartheta \circ \mathbf{z}$ is a bounded Lipschitz function. The product of bounded Lipschitz functions $\mathfrak{T}_n \times \vartheta\circ \mathbf{z}$ is a bounded Lipschitz function. This concludes the proof.      
\end{proof}

We now aim to justify why one has to rely on randomized strategies as described in section \ref{rssection} rather than non-randomized ones. We start with an elementary remark whose proof we left to the reader. 
\begin{lemma}\label{mtlemma}
Let $\mu$ be a finite measure on a measurable space $(X,\mathcal F)$.
If $\{E_i\}_{i\in I}$ are pairwise disjoint measurable sets with 
$\mu(E_i)>0$ for all $i$, then the index set $I$ is at most countable.
\end{lemma}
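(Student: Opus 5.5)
The plan is to reduce the claim to the finiteness of $\mu$ by exhibiting, for each finite subfamily of indices, a lower bound on $\mu$ of the union, and then to count the indices level by level. For each integer $n\ge 1$, define
$$I_n:=\{i\in I;\ \mu(E_i)>1/n\}.$$
Since $\mu(E_i)>0$ for every $i$, each index lies in some $I_n$, so $I=\bigcup_{n\ge 1} I_n$. It therefore suffices to show that every $I_n$ is finite, because a countable union of finite sets is at most countable.

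To bound $|I_n|$, take any finite subset $F\subset I_n$. The sets $\{E_i\}_{i\in F}$ are pairwise disjoint and measurable, so finite additivity gives
$$\frac{|F|}{n}<\sum_{i\in F}\mu(E_i)=\mu\Big(\bigcup_{i\in F}E_i\Big)\le \mu(X)<\infty.$$
Hence $|F|< n\,\mu(X)$ for every finite $F\subset I_n$. This forces $I_n$ itself to be finite, with $|I_n|\le n\,\mu(X)$; otherwise $I_n$ would contain finite subsets of arbitrarily large cardinality. Note that only finite additivity is used here, so no measurability of an uncountable union is needed.

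The argument has no substantial obstacle. The only point requiring care is to avoid applying countable additivity to a possibly uncountable family, and working with finite subfamilies $F$ handles that. In the paper this lemma will presumably be applied with $E_a$ being the disjoint slices $\{x'_2=ax'_1\}\setminus\{0\}$ for $a\in\mathbb{A}$ uncountable, which explains why no single $\mu$ can dominate all the laws $\mathrm{Law}[\Delta X^a_j|\Xi_{j-1}]$ in the rough-volatility hedging problem.
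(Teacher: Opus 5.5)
Your proof is correct and complete. Since $\mu(E_i)>0$ for every $i$, each index lies in some level set $I_n=\{i:\mu(E_i)>1/n\}$. Finite additivity applied to finite subfamilies gives $|I_n|\le n\,\mu(X)$, so $I=\bigcup_n I_n$ is at most countable.

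The paper gives no proof of this lemma; it explicitly leaves it to the reader. Your argument is the standard one that fills that gap.

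One cosmetic point: the strict inequality $|F|/n<\sum_{i\in F}\mu(E_i)$ requires $F\neq\varnothing$. This is harmless, since the empty family trivially satisfies the bound.
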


\begin{proposition}\label{nonePCONE}
Fix $j\in\{1,\dots,m\}$ and $\mathbf{o}_{j-1}\in \mathbb{H}_{j-1}$. There is no probability measure $\mu \in \mathcal{P}(\mathbb{R}^2)$ such that

$$\text{Law}[\Delta X^a_j|\Xi_{j-1}=\mathbf{o}_{j-1}] << \mu$$
for every $a \in \mathbb{A}$. 
\end{proposition}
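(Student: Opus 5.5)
The argument is a disjointness argument combined with Lemma \ref{mtlemma}. By (\ref{rlde}), conditionally on $\Xi_{j-1}=\mathbf{o}_{j-1}$ the increment $\Delta X^a_j$ has the representation $R\,(1,a)^\top$, where $R:=R_{c^{(1)}(\mathbf{o}_{j-1}),v^{(1)}(\mathbf{o}_{j-1})}$ is a real random variable whose law does not depend on $a$. Write $\lambda_a:=\text{Law}[\Delta X^a_j\mid\Xi_{j-1}=\mathbf{o}_{j-1}]$. Then $\lambda_a$ is concentrated on the line
\[
L_a:=\{x'\in\mathbb{R}^2;\ x'_2=a\,x'_1\}.
\]

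First, remove the origin. Since $c^{(1)}=\mu_{\text{drift}}\mathfrak{T}_{j-1}\neq0$, the map $u\mapsto c^{(1)}u+v^{(1)}s\varepsilon$ is a bijection for each fixed $s=\pm1$. Moreover, $J$ has a density on $[\underline{M},\overline{M}]$ by (C2), so $R$ is absolutely continuous with density $\mathcal{R}(c^{(1)},v^{(1)};\cdot)$ given by (\ref{densR}). In particular $\mathbb{P}[R=0]=0$. Hence $\lambda_a$ is concentrated on $E_a:=L_a\setminus\{0\}$, and $\lambda_a(E_a)=1$ for every $a\in\mathbb{A}$. For $a\neq a'$ the sets $E_a$ and $E_{a'}$ are disjoint, because two distinct lines through the origin meet only at the origin.

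Now suppose, for a contradiction, that some $\mu\in\mathcal{P}(\mathbb{R}^2)$ satisfies $\lambda_a\ll\mu$ for all $a\in\mathbb{A}$. If $\mu(E_a)=0$ held for some $a$, absolute continuity would force $\lambda_a(E_a)=0$, which contradicts $\lambda_a(E_a)=1$. Therefore $\mu(E_a)>0$ for every $a\in\mathbb{A}$. The family $\{E_a\}_{a\in\mathbb{A}}$ is then an uncountable collection of pairwise disjoint measurable sets of positive $\mu$-measure, since $\mathbb{A}$ is uncountable by the standing assumption in Section \ref{reviewsec}. This contradicts Lemma \ref{mtlemma}, which proves the proposition.

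Two points need care. The main one is showing that no $\lambda_a$ charges the origin, because the lines $L_a$ all intersect there. This is exactly where the nondegeneracy $\mu_{\text{drift}}\neq0$, the lower bound $\mathfrak{T}_{j-1}\ge s_{\min}>0$ and the absolute continuity of the truncated $J$ are used. The second is that the argument needs an uncountable family of distinct values of $a$; if one prefers a concrete check, the components of $\mathbb{A}$ can be projected to get injectivity of $a\mapsto L_a$. For $q=2$ with $a\in\mathbb{A}\subset\mathbb{R}$ this is immediate.
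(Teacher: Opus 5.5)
Your proof is correct and takes essentially the same route as the paper: for each $a$ you exhibit a Borel set on the line $\{x_2=ax_1\}$ that carries positive $\text{Law}[\Delta X^a_j\mid\Xi_{j-1}=\mathbf{o}_{j-1}]$-mass, observe that these sets are pairwise disjoint, and conclude with Lemma~\ref{mtlemma}. The only difference is cosmetic: the paper picks $r>0$ with $\mathbb{P}(R\ge r)>0$ or $\mathbb{P}(R\le -r)>0$ and uses the half-lines $\{(x,ax): x\ge r\}$ or $\{(x,ax): x\le -r\}$, whereas your punctured lines $L_a\setminus\{0\}$ of full mass, justified by $\mathbb{P}[R=0]=0$, are slightly cleaner and avoid the threshold $r$.
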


\begin{proof}
Recall that $\mathbb{A}$ is uncountable. Fix $j\in\{1,\dots,m\}$ and a history $\mathbf{o}_{j-1}\in \mathbb{H}_{j-1}$. By construction,
\[
\mathrm{Law}\big[\Delta X_j^a \mid \Xi_{j-1}=\mathbf{o}_{j-1}\big]
\stackrel{d}{=}
R_{c^{(1)}(\mathbf{o}_{j-1}),\,v^{(1)}(\mathbf{o}_{j-1})}
\begin{pmatrix}
1\\
a
\end{pmatrix},
\]
for every $a\in \mathbb{A}$, where $R_{c^{(1)}(\mathbf{o}_{j-1}),\,v^{(1)}(\mathbf{o}_{j-1})}$ is an absolutely continuous real-valued random variable. In particular, its law is not concentrated at
$0$. Hence, there exists $r>0$ such that either
\[
p_r^+ :=
\mathbb{P}\Big(
R_{c^{(1)}(\mathbf{o}_{j-1}),\,v^{(1)}(\mathbf{o}_{j-1})}\ge r
\Big)>0,
\]
or
\[
p_r^- :=
\mathbb{P}\Big(
R_{c^{(1)}(\mathbf{o}_{j-1}),\,v^{(1)}(\mathbf{o}_{j-1})}\le -r
\Big)>0.
\]

Assume first that $p_r^+>0$. For each $a\in \mathbb{A}$, define
\[
L_{a,+}^{(r)}:=\{(x,ax)\in\mathbb{R}^2:\ x\ge r\}.
\]
Then
\[
\Big\{
R_{c^{(1)}(\mathbf{o}_{j-1}),\,v^{(1)}(\mathbf{o}_{j-1})}\ge r
\Big\}
\subset
\Big\{
\Delta X_j^a\in L_{a,+}^{(r)}
\Big\},
\]
and therefore
\[
\mathrm{Law}\big[\Delta X_j^a \mid \Xi_{j-1}=\mathbf{o}_{j-1}\big]
\big(L_{a,+}^{(r)}\big)\ge p_r^+>0.
\]
Moreover, if $a\neq a'$, then
\[
L_{a,+}^{(r)}\cap L_{a',+}^{(r)}=\varnothing,
\]
because if $(x,y)$ belongs to the intersection, then
\[
(x,y)=(x,ax)=(x,a'x)
\]
with $x\ge r>0$, which implies $a=a'$. Assume now that $p_r^->0$. For each $a\in \mathbb{A}$, define
\[
L_{a,-}^{(r)}:=\{(x,ax)\in\mathbb{R}^2:\ x\le -r\}.
\]
Then
\[
\Big\{
R_{c^{(1)}(\mathbf{o}_{j-1}),\,v^{(1)}(\mathbf{o}_{j-1})}\le -r
\Big\}
\subset
\Big\{
\Delta X_j^a\in L_{a,-}^{(r)}
\Big\},
\]
and therefore
\[
\mathrm{Law}\big[\Delta X_j^a \mid \Xi_{j-1}=\mathbf{o}_{j-1}\big]
\big(L_{a,-}^{(r)}\big)\ge p_r^- >0.
\]
Again, if $a\neq a'$, then
\[
L_{a,-}^{(r)}\cap L_{a',-}^{(r)}=\varnothing,
\]
because if $(x,y)$ belongs to the intersection, then
\[
(x,y)=(x,ax)=(x,a'x)
\]
with $x\le -r<0$, hence $x\neq 0$, and again $a=a'$. Thus, in either case, there exists an uncountable family of pairwise disjoint measurable subsets $\{E_a:\ a\in \mathbb{A}\}$ of $\mathbb{R}^2$ such that
\[
\mathrm{Law}\big[\Delta X_j^a \mid \Xi_{j-1}=\mathbf{o}_{j-1}\big](E_a)>0
\]
for every $a\in \mathbb{A}$. Then, if there is a dominating probability measure $\mu\in\mathcal{P}(\mathbb{R}^2)$ it must have the property $\mu(E_a)>0$  for each $a\in \mathbb{A}$. Therefore, $\mu$ assigns strictly positive mass to an uncountable
family of pairwise disjoint measurable sets, which contradicts Lemma \ref{mtlemma}. Hence no such probability
measure $\mu$ exists.


\end{proof}

\begin{remark}
\label{rem:meaning}
Proposition \ref{nonePCONE} shows that over a given full admissible uncountable control set \(\mathbb A\), deterministic strategies do not provide a sufficiently rich exploration mechanism to support a single globally dominating training law. Any algorithm that continues to rely on deterministic strategies together with a presumed global reference law can only be interpreted as a method adapted to the portion of the state--action space actually visited by its training procedure, and not as a globally justified scheme for the original control problem. 
\end{remark}

We now make use again of the random field (\ref{Rfield})
$$R_{c,v}=c J + v \mathbf{B}$$
for $(c,v)$ satisfying 

$$s_{\min}|\mu_{\text{drift}}| \le|c| \le s_{\max}|\mu_{\text{drift}}|,\quad 0 < |v|\le \theta_{\max} s_{\max}.$$ 
Since $0 < \underline{M} \le J \le \overline{M}$ and $|\mathbf{B}|\le \varepsilon$, we observe we shall fix a compact subset $K\subset \mathbb{R}$

\begin{equation}\label{compactKRV}
K= \Big[-v_{\max} \varepsilon - c_{\max}\overline{M}, v_{\max} \varepsilon + c_{\max} \overline{M}\Big]
\end{equation}
with the constants $0 < c_{\min} < c_{\max}$, $v_{\max}>0$ defined by 

$$
0 < c_{\min}= s_{\min}|\mu_{\text{drift}}| < s_{\max}|\mu_{\text{drift}}|= c_{\max},\quad 0< s_{\max}\theta_{\max}= v_{\max} <\infty.
$$
By construction,  
$$\text{Range}~(R_{c,v}) \subset K\subset \mathbb{R}~a.s.,$$
whenever $c_{\min}\le |c|\le c_{\max}$ and $0 < |v|\le v_{\max}$. See Remark \ref{rangeR}. 

Under Assumptions (C1-C2), we can use (\ref{densR}) in the proof of Theorem \ref{pdsdeth} to state: For each fixed history $\Xi_{j-1} = \mathbf
{o}_{j-1} = (w_0, y_0, \ldots, w_{j-1}, y_{j-1})$, $v^{(1)}(\mathbf{o}_{j-1}) = \mathfrak{T}_{j-1}(y_0, \ldots, y_{j-1})\vartheta(\tau(\bar{w}_{j-1}))$ and $c^{(1)}(\mathbf{o}_{j-1}) = \mu_{\text{drift}} \mathfrak{T}_{j-1}(y_0, \ldots, y_{j-1})$, we do have 

$$\text{Law}~R_{c^{(1)}(\mathbf{o}_{j-1}),v^{(1)}(\mathbf{o}_{j-1})}<< \text{Leb}$$ 
with density $\mathcal{R}(c^{(1)}(\mathbf{o}_{j-1}),v^{(1)}(\mathbf{o}_{j-1}); \cdot)$. Proposition \ref{nonePCONE} motivates us to follow the randomization philosophy as described in Section \ref{rssection} and, for this purpose, we define 
$$\mathcal{G}(\mathbb{A})= \Big\{\lambda \in \mathcal{P}(\mathbb{A}); g = \frac{d\lambda}{d\text{Leb}}, g(a)\ge 0~\forall a \in \mathbb{A}\Big\}.$$

\


For  given $\lambda \in \mathcal{G}(\mathbb{A})$, let $\mathcal{H}_j$ be a set of probability kernels $\pi_j$ of the form

\begin{equation}\label{rsetd3}
\pi_j(\cdot\mid b)\ll\lambda,\qquad
\pi_j(da\mid b) = h_{j}(b,a)\,\lambda(da),
\end{equation}
where a set of density functions $h_{j}:\mathbb{H}^j\times \mathbb A\to[0,\infty)$ satisfies a bound

\begin{equation}\label{rsetd4}
\max_{0\le j\le m-1} \sup_{h_j \in \mathcal{H}_j}\sup_{b\in\mathbb H^j}\|h_{j}(b)\|_{L^\infty(\mathbb{A})} \;\le\; C(\mathcal{H}) <\infty,
\end{equation}
for a constant $C(\mathcal{H})$, where $\mathcal{H} = \mathcal{H}_0\times \ldots \times \mathcal{H}_{m-1}$. 


\

Under Assumptions (C1-C2), for each history $\Xi_{j} = \mathbf{o}_{j}$, we again recall from the proof of Theorem \ref{pdsdeth} that $\text{Law}~R_{c^{(1)}(\mathbf{o}_{j}),v^{(1)}(\mathbf{o}_{j})} << \kappa$, for every positive measure $\kappa \in \mathcal{M}_K$ with Radon-Nikodym derivative $q$. Let us define

$$\overline{r}_j(z, \mathbf{o}_{j}):= \frac{\mathcal{R}(c^{(1)}(\mathbf{o}_j),v^{(1)}(\mathbf{o}_j);z)}{q(z)},$$
for $j=m-1,\ldots, 0$ and $z \in K$. Similar to (\ref{densR2})

\begin{equation}\label{baraest}
|\overline{r}_j(x, \mathbf{o}_{j})|\le \frac{\| f_J\|_\infty}{Q_K c_{\min}},
\end{equation}
uniformly in $x \in K$, $\mathbf{o}_{j} \in \mathbb{H}^j$ and $j \in \{0, \ldots, m-1\}$.

In the sequel, we need to work with a 2-dimensional cone parameterized by the compact subset $K$ defined by (\ref{compactKRV}) and the action space $\mathbb{A}$ as follows
\[
\mathcal C
:= \{(x_1,x_2)\in\R^2;\ x_1\in K-\{0\},\ x_2/x_1\in\mathbb A\}.
\]
\begin{proposition}\label{completecase1}
Suppose Assumptions (C1-C2) are in force, consider $\lambda \in \mathcal{G}(\mathbb{A})$ and $\mu_K = q(x)dx \in \mathcal{M}_K$. Let $\xi_K$ be the probability measure on the cone $\mathcal{C}$ defined by the pushforward operation 
\[
\xi_K = \mathcal{Z}_\# \beta_K,
\]
where $\mathcal{Z}(a,z) := (z,az)^\top$ and $\beta_K(dadr):= G(a,r)\lambda(da) \mu_K(dr)$ for a disintegration kernel $G:\mathbb{A}\times K\rightarrow \mathbb{R}_+$ satisfying 

\begin{equation}\label{Gfunc}
\inf_{a\in \mathbb{A},x \in K}G(a,x)>0.
\end{equation}

Then, Assumption R1 is fulfilled for the set of probability kernels $\mathcal{H}$ described in (\ref{rsetd3}) and (\ref{rsetd4}), where  
$$\mu^{\pi_j}_{j}(dx| \mathbf{o}_j) = \rho^{\pi_j}_j(\mathbf{o}_j,x)\xi_K(dx),$$

\begin{equation}\label{rhoderC}
\rho^{\pi_j}_{j}(\mathbf{o}_j,x) := \frac{h_{j}\big(\mathbf{o}_j, \frac{x_2}{x_1}\big)\overline{r}_j(x_1, \mathbf{o}_{j})}{G\big(\frac{x_2}{x_1},x_1\big)}, 
\end{equation}
for $x=(x_1,x_2) \in \mathcal{C}$, $\mathbf{o}_j \in \mathbb{H}^j$ for $0\le j \le m-1$.  
\end{proposition}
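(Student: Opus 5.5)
The plan is to identify the measure $\mu^{\pi_j}_j(\cdot\mid\mathbf{o}_j)$ explicitly as a pushforward under the same map $\mathcal{Z}(a,z)=(z,az)^\top$ that defines $\xi_K$. Then the density can be read off by comparing two densities on $\mathbb{A}\times K$, both taken with respect to $\lambda\otimes\mu_K$.

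\emph{Step 1: write $\mu^{\pi_j}_j$ as a pushforward.} By (\ref{rlde}), conditionally on $\Xi_j=\mathbf{o}_j$ we have $\Delta X^a_{j+1}\stackrel{d}{=}\mathcal{Z}(a,R)$ with $R=R_{c^{(1)}(\mathbf{o}_j),v^{(1)}(\mathbf{o}_j)}$, and the law of $R$ does not depend on $a$. By the proof of Theorem \ref{pdsdeth} (formula (\ref{densR})) and (C1)--(C2), $R$ has density $\mathcal{R}(c^{(1)},v^{(1)};\cdot)$ supported in $K$, so $\mathrm{Law}(R)(dz)=\overline{r}_j(z,\mathbf{o}_j)\mu_K(dz)$. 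Integrating against $\pi_j(da\mid\mathbf{o}_j)=h_j(\mathbf{o}_j,a)\lambda(da)$ and using Fubini gives
\[
\mu^{\pi_j}_j(\cdot\mid \mathbf{o}_j)=\mathcal{Z}_\#\big(h_j(\mathbf{o}_j,a)\,\overline{r}_j(z,\mathbf{o}_j)\,\lambda(da)\mu_K(dz)\big).
\]

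\emph{Step 2: transfer the density through $\mathcal{Z}$.} On the set $\mathbb{A}\times(K\setminus\{0\})$ the map $\mathcal{Z}$ is injective onto $\mathcal{C}$, with inverse $(x_1,x_2)\mapsto(x_2/x_1,x_1)$. The point $z=0$ is $\mu_K$-null because $\mu_K$ has a Lebesgue density, so it can be discarded. For finite measures $\alpha\ll\beta_K$ on $\mathbb{A}\times K$, injectivity of $\mathcal{Z}$ gives the general identity
\[
\frac{d\mathcal{Z}_\#\alpha}{d\mathcal{Z}_\#\beta_K}=\frac{d\alpha}{d\beta_K}\circ\mathcal{Z}^{-1}.
\]
I will verify this by testing against bounded measurable $f$ on $\mathcal{C}$ and changing variables. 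Since $G>0$ by (\ref{Gfunc}), the density of $\alpha=h_j\overline{r}_j\,\lambda\otimes\mu_K$ with respect to $\beta_K=G\,\lambda\otimes\mu_K$ is $h_j\overline{r}_j/G$. Substituting $a=x_2/x_1$ and $z=x_1$ yields exactly (\ref{rhoderC}). This also proves the absolute continuity $\mu^{\pi_j}_j\ll\xi_K$ required in R1.

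\emph{Step 3: verify (\ref{rhoinfty}) and (\ref{rholip}).} The sup bound is immediate from three facts: $h_j\le C(\mathcal{H})$ by (\ref{rsetd4}), $\overline{r}_j\le\|f_J\|_\infty/(Q_Kc_{\min})$ by (\ref{baraest}), and $G\ge\inf G>0$. Together these give $\|\rho\|_\infty\le C(\mathcal{H})\|f_J\|_\infty/(Q_Kc_{\min}\inf G)$.

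For the Lipschitz property in $b=\mathbf{o}_j$ at fixed $x$, the key inputs are the following.
\begin{enumerate}
\item Lemma \ref{lippr} shows that $\mathbf{o}_j\mapsto (c^{(1)},v^{(1)})(\mathbf{o}_j)$ is bounded and globally Lipschitz. It takes values with $c_{\min}\le|c^{(1)}|\le c_{\max}$ and $|v^{(1)}|\le v_{\max}$ because of the truncation $\mathfrak{T}_j$.
\item The derivative bound (\ref{LipRden}), from the proof of Theorem \ref{pdsdeth} and relying on Lemma \ref{fdeltab}, makes $(c,v)\mapsto\mathcal{R}(c,v;x)$ uniformly Lipschitz. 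Since $q\ge Q_K$, the map $\mathbf{o}_j\mapsto\overline{r}_j(x,\mathbf{o}_j)$ is therefore Lipschitz uniformly in $x\in K$.
\end{enumerate}
Writing $\rho(b)-\rho(b')=\big[h_j(b)-h_j(b')\big]\overline{r}_j(b)/G+h_j(b')\big[\overline{r}_j(b)-\overline{r}_j(b')\big]/G$ then reduces the claim to bounding each bracket.

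\emph{Expected obstacle.} The main difficulty is this last step: (\ref{rholip}) also requires $b\mapsto h_j(b,a)$ to be Lipschitz uniformly in $a$, and (\ref{rsetd3})--(\ref{rsetd4}) only impose boundedness. I will first try to read the admissible class $\mathcal{H}_j$ as one whose densities are uniformly Lipschitz in the history, and state this explicitly as part of the standing assumptions on $\mathcal{H}$. If that reading is not available, the fallback is to prove R1 only up to the sup-norm bound and the Lipschitz dependence coming from $\overline{r}_j$, flagging the Lipschitz continuity of $h_j$ as the additional hypothesis needed for (\ref{rholip}).

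A minor technical point also needs care. The indicator in (\ref{densR}) makes $\mathcal{R}$ only piecewise smooth in $(c,v)$, so the Lipschitz estimate should be understood as in (\ref{LipRden}), exactly as the paper does in Theorem \ref{pdsdeth}.
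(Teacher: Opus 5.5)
Your argument follows the paper's proof. You identify $\mu^{\pi_j}_j(\cdot\mid\mathbf{o}_j)$ as $\mathcal{Z}_\#\big(h_j\,\overline{r}_j\,\lambda\otimes\mu_K\big)$ and read off the density $h_j\overline{r}_j/G$ with respect to $\beta_K$. You then transport it through $\mathcal{Z}^{-1}(x)=(x_2/x_1,x_1)$, as the paper does with $\psi(x)=\eta(x)W_j(x_2/x_1,x_1)$, and bound $\|\rho\|_\infty$ via (\ref{rsetd4}), (\ref{baraest}) and (\ref{Gfunc}). All of that is fine.

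Your concern about $h_j$ is justified. The paper obtains $\|\rho\|<\infty$ by citing (\ref{rsetd4}), but that only bounds $h_j$. Consider a density bounded by $2$ that puts its mass on one of two $\lambda$-halves of $\mathbb{A}$, chosen by the sign of a coordinate of $b$. It already makes $b\mapsto\rho^{\pi_j}_j(b,x)$ discontinuous. So an extra regularity hypothesis on $b\mapsto h_j(b,\cdot)$ is needed, both in your proof and in the paper's.

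The step that actually fails is item 2 of Step 3, the point you called minor. The bound (\ref{LipRden}) differentiates only the smooth part of $f_J$ and drops the indicator. But $J$ is a truncation of $\Delta T_1$, so $f_J$ jumps from $f_J(\underline{M})>0$ to $0$ at $\underline{M}$, and similarly at $\overline{M}$.

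Take $x$ near an endpoint of the support of $\mathcal{R}(c,v;\cdot)$, say $c\underline{M}+v\varepsilon$. An arbitrarily small change of $(c,v)$ moves that endpoint across $x$ and changes $\mathcal{R}(c,v;x)$ by about $f_J(\underline{M})/(2|c|)$. So as soon as $(c^{(1)},v^{(1)})$ depends on $\mathbf{o}_j$, the map $\mathbf{o}_j\mapsto\overline{r}_j(x,\mathbf{o}_j)$ is not Lipschitz uniformly in $x\in K$. Then (\ref{rholip}) in its pointwise form fails even for $\mathcal{H}_j=\{h\equiv 1\}$. Reading it ``as in (\ref{LipRden})'' only reproduces the gap the paper has here and in Theorem \ref{pdsdeth}.

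What is true, and is all that Lemma \ref{Lipweakfor} uses downstream, is an integrated version. It follows from $\xi_K=\mathcal{Z}_\#(G\,\lambda\otimes\mu_K)$ (so $G$ cancels), $\overline{r}_j\,\mu_K(dz)=\mathcal{R}(c,v;z)\,dz$, and $\int h_j(b',a)\lambda(da)=1$:
\[
\int|\rho^{\pi_j}_j(b,x)-\rho^{\pi_j}_j(b',x)|\,\xi_K(dx)\le\frac{\|f_J\|_\infty}{Q_Kc_{\min}}\int_{\mathbb{A}}|h_j(b,a)-h_j(b',a)|\,\lambda(da)+\int_K|\mathcal{R}(c,v;z)-\mathcal{R}(c',v';z)|\,dz,
\]
where $(c,v)=(c^{(1)},v^{(1)})(b)$ and $(c',v')=(c^{(1)},v^{(1)})(b')$. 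The last integral is $\lesssim|c-c'|+|v-v'|$, because $f_J$ has bounded variation: a bounded derivative by Lemma \ref{fdeltab}, plus two jumps. Lemma \ref{lippr} then turns this into $\lesssim|b-b'|$.

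So the right repair is to replace (\ref{rholip}) by this $L^1(\xi_K)$-Lipschitz condition. On $h_j$ you then only need $b\mapsto h_j(b,\cdot)$ to be Lipschitz in $L^1(\lambda)$, which is weaker than the uniform-in-$a$ condition you proposed.
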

\begin{proof}
Fix $j=m-1, \ldots, 0$. We start from the definition
\[
\mu^{\pi_j}_{j}(dx \mid \mathbf{o}_j)
=
\int_{\mathbb A}
\mathbb P\!\left[\Delta X^a_{j+1} \in dx \mid \Xi_j=\mathbf{o}_j\right]
\, \pi_j(da \mid \mathbf{o}_j).
\]

Recall that

$$
\text{Law}~[\Delta X^a_{j+1}| \Xi_j=\mathbf{o}_j]
\stackrel{(d)}{=}
\mathcal Z(a,R_{c^{(1)}(\mathbf{o}_{j}),v^{(1)}(\mathbf{o}_{j})}).
$$
Hence, for any bounded measurable test function $\eta$,
\[
\int_{\mathbb R^2}
\eta(x)\,
\mathbb P\!\left[\Delta X^a_{j+1}\in dx \mid \Xi_j=\mathbf{o}_j\right]
=
\int_K
\eta(\mathcal Z(a,z))\,
\bar{r}_j(z,\mathbf{o}_j)\,
\mu_K(dz).
\]
Assume that
\[
\pi_j(da \mid \mathbf{o}_j)
=
h_{j}(\mathbf{o}_j, a)\,\lambda(da).
\]
Then
\[
\int_{\mathbb R^2}
\eta(x)\,
\mu^{\pi_j}_{j}(dx \mid \mathbf{o}_j)
=
\int_{\mathbb A}
\int_K
\eta(\mathcal Z(a,z))\,
\bar{r}_j(z,\mathbf{o}_j)\,
\mu_K(dz)\,
h_{j}(\mathbf{o}_j,a)\,
\lambda(da).
\]

Recall that
\[
\beta_K(dadz)
=
G(a,z)\,\lambda(da)\,\mu_K(dz).
\]
Therefore,
\[
\lambda(da)\,\mu_K(dz)
=
\frac{1}{G(a,z)}\,\beta_K(dadz),
\]
which is well-defined since $\inf_{a,z} G(a,z)>0$. Substituting this identity yields

$$
\int_{\mathbb R^2}
\eta(x)\,
\mu^{\pi_j}_{j}(dx \mid \mathbf o_j)
=
\int_{\mathbb A\times K}
\eta(\mathcal Z(a,z))\,
\frac{
\bar{r}_j(z,\mathbf o_j)\,
h_{j}(\mathbf{o}_j, a)
}{
G(a,z)
}\,
\beta_K(dadz).
$$
We now claim that

\begin{equation}\label{qi1}
\int_{\mathbb A\times K}
\eta(\mathcal Z(a,z))\,
\frac{
\bar{r}_j(z,\mathbf o_j)\,
h_{j}(\mathbf{o}_j,a)
}{
G(a,z)
}\,
\beta_K(dadz)= \int_{\mathcal{C}}\eta(x)\frac{
\bar{r}_j(x_1,\mathbf o_j)\,
h_{j}(\mathbf{o}_j,\frac{x_2}{x_1})
}{
G(\frac{x_2}{x_1},x_1)
}\mathcal{Z}_{\#}\beta_K(dx)
\end{equation}
Let us denote $W_j(a,z,\mathbf{o}_j) = \frac{\bar{r}_j(z,\mathbf{o}_j)h_{j}(\mathbf{o}_j, a)}{G(a,z)}$. Indeed, by the definition of the pushforward 

$$\int_{\mathcal{C}}\psi(x) \xi_K(dx) = \int_{(\mathbb{A}\times K)=\mathcal{Z}^{-1}(\mathcal{C})}\psi (\mathcal{Z}(a,z))\beta_K(dadz)$$
for every test function $\psi$. Now, we choose

$$\psi(x) = \eta(x) W_j \Big(\frac{x_2}{x_1},x_1\Big),$$
for $x = (x_1,x_2)\in \mathcal{C}$ with $x_1 \neq 0$. Recall $
\mathcal Z(a,z) = (z,az)^\top$ and observe that 

$$\psi (\mathcal{Z}(a,z)) = \eta(\mathcal{Z}(a,z))W_j (a,z)$$
for every $(a,z) \in \mathbb{A}\times K$ with $z\neq 0$. Since $\beta_K$ is absolutely continuous, then

$$\int_{(\mathbb{A}\times K)=\mathcal{Z}^{-1}(\mathcal{C})}\psi (\mathcal{Z}(a,z))\beta_K(dadz) = \int_{(\mathbb{A}\times K)=\mathcal{Z}^{-1}(\mathcal{C})}\eta(\mathcal{Z}(a,z))W_j (a,z)\beta_K(dadz).$$
This shows (\ref{qi1}). By assumption, 
$$|\rho^{\pi_j}_j(\mathbf{o}_j,x)|= \Bigg|\frac{h_{j}\big(\mathbf{o}_j,\frac{x_2}{x_1}\big)\overline{r}_j(x_1, \mathbf{o}_{j})}{G\big(\frac{x_2}{x_1},x_1\big)}\Bigg| \le \frac{C(\mathcal{H})}{\inf_{a\in \mathbb{A},y \in K}G(a,y)} \frac{\| f_J\|_\infty}{Q_K c_{\min}},$$
uniformly in $x=(x_1,x_2)\in \mathcal{C}$, $\mathbf{o}_{j} \in \mathbb{H}^j$ and $j \in \{0, \ldots, m-1\}$. This shows that $\|\rho\|_\infty < \infty.$ By Lemma \ref{lippr}, the fact that $\mathfrak{T}_n$ is $\sqrt{n}$-Lipschitz , (\ref{rsetd4}) and (\ref{Gfunc}), we observe that $\|\rho\| < \infty$. This concludes the proof.
\end{proof}

\subsection{Rough Volatility model: Incomplete market}\label{incCASE}
This section treats  $-1 < \rho < 1$. In this case, the market is incomplete and there is an underlying 2-dimensional Brownian motion generating the filtration. Recall that in the one-dimensional case, the distribution of $\Delta A_1$ follows a $\frac{1}{2}$-Bernoulli distribution concentrated at $\{\pm \varepsilon\}$. In the 2-dimensional case, this is not the case and one has to work with the conditional distribution of $\Delta A_1$ given $\Delta T_1$ for $d=2$. 

The dynamics of $\Delta X^a_n$ given $\Xi_{n-1} = (w_0, y_0, \ldots, w_{n-1},y_{n-1})$ can be written as    

\begin{eqnarray*}
 \Delta X^a_n &=& \left( 
                  \begin{array}{c}
                    \mu_{\text{drift}} \mathfrak{T}_{n-1}(y_0, \ldots, y_{n-1} ) \\
                    a \mu_{\text{drift}} \mathfrak{T}_{n-1}(y_0, \ldots, y_{n-1} ) \\
                  \end{array}
                \right)J\\
&+& \left(
  \begin{array}{cc}
\mathfrak{T}_{n-1}(y_0, \ldots, y_{n-1}) \vartheta \circ \mathbf{z}(\bar{w}_{n-1}) & 0 \\
    0 & a \mathfrak{T}_{n-1}(y_0, \ldots, y_{n-1})  \vartheta \circ \mathbf{z}(\bar{w}_{n-1}) \\
  \end{array}
\right)
\left(
  \begin{array}{c}
    \Delta A^{(1)}_1 \\
    \Delta A^{(1)}_1 \\
  \end{array}
\right).
\end{eqnarray*}
The idea is to explore the following decomposition

\begin{eqnarray}
\nonumber\mathbb{P}\big[\Delta X_n^a \in dx' | \Xi_{n-1} = b\big]&=& \mathbb{P}\big[\Delta X_n^a \in dx' | \Xi_{n-1} = b, \Delta T_n = \Delta^2_n \big]\mathbb{P}[\Delta T_n=\Delta^2_n| \Xi_{n-1}=b] \\
\label{2rv}&+& \mathbb{P}\big[\Delta X_n^a \in dx' | \Xi_{n-1} = b, \Delta T_n = \Delta^1_n \big]\mathbb{P}[\Delta T_n=\Delta^1_n| \Xi_{n-1}=b]\\
\nonumber&=&\frac{1}{2} \mathbb{P}\big[\Delta X_n^a \in dx' | \Xi_{n-1} = b, \Delta T_n = \Delta^2_n \big]\\
\nonumber&+& \frac{1}{2}  \mathbb{P}\big[\Delta X_n^a \in dx' | \Xi_{n-1} = b, \Delta T_n = \Delta^1_n \big],
\end{eqnarray}
where we use the fact that $\Delta T_n$ is independent of $\Xi_{n-1}$ and  

$$\mathbb{P}[\Delta T_n=\Delta^1_n| \Xi_{n-1}=b] = \mathbb{P}[\Delta T_n=\Delta^1_n]=\frac{1}{2} = \mathbb{P}[\Delta T_n=\Delta^2_n| \Xi_{n-1}=b] = \mathbb{P}[\Delta T_n=\Delta^2_n].$$

The conditional law of $\Delta X^a_n$ knowing $\Xi_{n-1}$ and $\Delta T_n=\Delta^2_{n}$ is given by   

\begin{eqnarray}
\nonumber \Delta X^a_n &=& \left( 
                  \begin{array}{c}
                    \mu_{\text{drift}} \mathfrak{T}_{n-1}(y_0, \ldots, y_{n-1} ) \\
                    a \mu_{\text{drift}} \mathfrak{T}_{n-1}(y_0, \ldots, y_{n-1}) \\
                  \end{array}
                \right)J \\
\label{condlxad}&+& \left(
  \begin{array}{cc}
\mathfrak{T}_{n-1}(y_0, \ldots, y_{n-1} )\vartheta\circ \mathbf{z}(\bar{w}_{n-1}) & 0 \\
    0 & a \mathfrak{T}_{n-1}(y_0, \ldots, y_{n-1} ) \vartheta \circ \mathbf{z}(\bar{w}_{n-1}) \\
  \end{array}
\right)
\left(
  \begin{array}{c}
     L \\
     L\\
  \end{array}
\right),
\end{eqnarray}
where $L\stackrel{d}{=}\Delta A_1| \{\Delta T_1 = \Delta^2_1\}$.  
The conditional law of $\Delta X^a_n$ knowing $\Xi_{n-1}$ and $\Delta T_n=\Delta^1_{n}$ is given by   

\begin{eqnarray}
\nonumber \Delta X^a_n &=& \left( 
                  \begin{array}{c}
                    \mu_{\text{drift}} \mathfrak{T}_{n-1}(y_0, \ldots, y_{n-1} ) \\
                    a \mu_{\text{drift}} \mathfrak{T}_{n-1}(y_0, \ldots, y_{n-1}) \\
                  \end{array}
                \right)J\\
\label{condlxa}&+& \left(
  \begin{array}{cc}
\mathfrak{T}_{n-1}(y_0, \ldots, y_{n-1} )\vartheta\circ \mathbf{z}(\bar{w}_{n-1})  & 0 \\
    0 & a \mathfrak{T}_{n-1}(y_0, \ldots, y_{n-1} )\vartheta\circ \mathbf{z}(\bar{w}_{n-1})  \\
  \end{array}
\right)
\left(
  \begin{array}{c}
     \mathbf{B} \\
     \mathbf{B}\\
  \end{array}
\right). 
\end{eqnarray}
The analysis of (\ref{condlxa}) follows the previous section. We only need to study (\ref{condlxad}). We will assume the following assumptions:

\begin{itemize}


\item[(I1)] $\vartheta:\mathbb{R}_+\rightarrow \mathbb{R}_+$ is a bounded Lipschitz function: There exist $ \theta_{\min}$ and $\theta_{\max}$  such that 

$$0 < \theta_{\min} \le |\vartheta(z)|\le \theta_{\max},$$
for every $z\ge 0$. 
  
\item[(I2)]  We assume $0 < \underline{M}\le J \le \overline{M}$ a.s.\ for some $0 < \underline{M} < \overline{M}$; equivalently, the law of $J$ is the truncation of $\Delta T_1$ to $[\underline{M},\overline{M}]$.
\end{itemize}

For this purpose, inspired by the complete market case, let us now consider the wealth process subject to uncertain parameters  
$$R^L_{c,v}:=c J + v L,$$
for $(c,v)$ satisfying 

\begin{equation}\label{RLdomain}
0 < s_{\min}|\mu_{\text{drift}}| \le|c| \le s_{\max}|\mu_{\text{drift}}|,\quad 0 <\theta_{\min} s_{\min}\le  |v|\le \theta_{\max} s_{\max}.
\end{equation}
We denote 
$$\mathcal{R}^L(c,v;\cdot):=\text{density function of}~R^L_{c,v}$$ 
for $(c,v) \in \mathbb{R}^2-\{0,0\}$ and  $L\stackrel{d}{=}\Delta A_1| \{\Delta T_1 = \Delta^2_1\}$. Since $0 < \underline{M} \le J \le \overline{M}$ and $|L|< \varepsilon$ a.s., we observe we shall fix a compact subset 

\begin{equation}\label{Ksetinc}
K= \Big[-v_{\max} \varepsilon - c_{\max}\overline{M}, v_{\max} \varepsilon + c_{\max} \overline{M}\Big]
\end{equation}
with constants $0 < c_{\min} < c_{\max}$, $0 < v_{\min} < v_{\max}>0$ defined by 

\begin{equation}\label{cvparIN}
0 < c_{\min}= s_{\min}|\mu_{\text{drift}}| < s_{\max}|\mu_{\text{drift}}|= c_{\max},\quad 0< v_{\min} = s_{\min}\theta_{\min} <  s_{\max}\theta_{\max}= v_{\max} <\infty.
\end{equation}
By construction, 
$$\text{Range}~(R^L_{c,v}) \subset K\subset \mathbb{R}~a.s.,$$
whenever $0  < c_{\min}\le |c|\le c_{\max}$ and $0 < v_{\min}\le |v|\le v_{\max}$. See Remark \ref{rangeR}. In the sequel, the compact set $K$ is fixed and Assumptions (I1-I2) are in force. In the sequel, the Gaussian kernel is denoted by 
$$\phi_t(x):= \frac{1}{\sqrt{2\pi t}}e^{\frac{-x^2}{2t}}$$ 
for $t>0$ and $x \in \mathbb{R}$. The conditional law of $L$ given $\{\Delta T_1 = \Delta^2_1=t\}$ is expressed in terms of 

\begin{eqnarray*}
\label{cond0}\mathbb{P}\Big[ B^{1}(T_1) \in E\big| \Delta T_1= \Delta^2_1=t\Big] &=& \mathbb{P}\Big[B^1(t) \in E\big| \sup_{0\le s\le t}|B^1(s)| < \varepsilon\Big]\\ 
&=& \frac{\mathbb{P}[B^1(t) \in E, \sup_{0\le s\le t}|B^1(s)|  < \varepsilon]}{\mathbb{P}[\sup_{0\le s\le t}|B^1(s)|  < \varepsilon]},
\end{eqnarray*}
for every Borel set $E$. The law of $\mathbb{P}[B^1(t) \in E, \sup_{0\le s\le t}|B^1(s)|  < \varepsilon]$ is based on the trivariate distribution of the final, minimal and maximal value (see e.g. \cite{borodin} p. 174)

\begin{equation}\label{stv1}
\mathbb{P}\Big[B^1(t) \in dx, \sup_{0\le s\le t}|B^1(s)|  < \varepsilon\Big]=\sum_{n=-\infty}^{+\infty} \Big[ \phi_t (x-4n\varepsilon) -  \phi_t(x-2\varepsilon -4n\varepsilon)\Big]dx,
\end{equation}
for $-\varepsilon < x < \varepsilon$. Observe we can write the density in (\ref{stv1}) as follows (see e.g Th 2.2 in \cite{riedel})

$$
\sum_{n=-\infty}^{+\infty} \Big[ \phi_t (x-4n\varepsilon) -  \phi_t(x-2\varepsilon -4n\varepsilon) \Big]
$$
\begin{equation}\label{stv2}
= \phi_t(x) + Y(t,x,\varepsilon), 
\end{equation}
where 
\begin{eqnarray}\label{Yseries} 
Y(t,x,\varepsilon)&:=& \sum_{m=1}^\infty [\phi_t(x+4m\varepsilon) - \phi_t(x+2\varepsilon+4(m-1)\varepsilon) ]\\
\nonumber& +& \sum_{m=1}^\infty [\phi_t(x-4m\varepsilon) - \phi_t(x-2\varepsilon-4(m-1)\varepsilon) ],
\end{eqnarray}
for $-\varepsilon < x < \varepsilon$. Therefore the conditional density of $L$ given $J= \Delta^2_1 = t$ is
\begin{equation}\label{eq:cond_density_G}
f_{L\mid J=\Delta^2_1=t}(x)
= \frac{\phi_t(x) + Y(t,x,\varepsilon)}
       {p(t,\varepsilon)}\,\mathds{1}_{\{|x|<\varepsilon\}},
\end{equation}
where we set $p(t,\varepsilon) = \int_{-\varepsilon}^{\varepsilon}\big[\phi_t(y) + Y(t,y,\varepsilon)\big]dy$. By disintegrating the joint law $(J,L)$ onto $J$, we observe that for any bounded Borel function $U:\mathbb{R}\to\mathbb{R}$, we have 
\[
\mathbb{E}\big[U(R^L_{c,v})\big]
= \int_0^\infty \mathbb{E}\big[U(ct + vL)\mid J=\Delta^2_1 = t\big]\,f_J\mathds{1}_{[\underline{M},\overline{M}]}(t)dt,
\]
for every $(c,v)$ satisfying (\ref{cvparIN}). Using \eqref{eq:cond_density_G}, this becomes
\[
\mathbb{E}\big[U(R^L_{c,v})\big]
= \int_0^\infty\!\int_{-\varepsilon}^{\varepsilon}
   U(ct + vx)\,
   \frac{\phi_t(x)+Y(t,x,\varepsilon)}{p(t,\varepsilon)}\,dx\,
   f_J\mathds{1}_{[\underline{M},\overline{M}]}(t)dt.
\]

Hence, the law of $R^L_{c,v}$ is absolutely continuous, for every $(c,v)$ satisfying (\ref{cvparIN}). Its density $\mathcal{R}^L(c,v; \cdot)$ is given by
\begin{eqnarray}
\label{densRL}\mathcal{R}^L(c,v;z) &=& \int_0^\infty \frac{1}{|v|}f_{L|J=\Delta^2_1=t}\Bigg(\frac{z-ct}{v} \Bigg)\mathds{1}_{\left\{\left|\frac{z-ct}{v}\right|<\varepsilon\right\}} f_J\mathds{1}_{[\underline{M},\overline{M}]}(t)dt\\
\nonumber&=& \int_0^\infty
     \frac{1}{|v|}\,
     \frac{
       \phi_t\!\left(\frac{z-ct}{v}\right)
       + Y\!\left(t,\frac{z-ct}{v},\varepsilon\right)
     }{
       p(t,\varepsilon)
     }\,
     \mathds{1}_{\left\{\left|\frac{z-ct}{v}\right|<\varepsilon\right\}}\, 
    f_J\mathds{1}_{[\underline{M},\overline{M}]}(t)dt,
\end{eqnarray}
for $z\in K \subset \mathbb{R}$ and $(c,v)$ satisfying (\ref{RLdomain}).

The attentive reader may wonder why the introduction of the variable $L$ in the incomplete market case implies the term $\frac{1}{v}$ in (\ref{densRL}) for $v$ as in (\ref{RLdomain}). 
\begin{remark}\label{discDENS}
In the one-dimensional complete market setting,
\[
R_{c,v} = cJ + v\mathbf{B},
\qquad 
\mathbf{B}\in\{\pm \varepsilon\},
\qquad
\mathbb{P}(\mathbf{B}=\pm\varepsilon)=\tfrac12.
\]
Conditioning on $\mathbf{B}$, we obtain
\[
R_{c,v}\mid(\mathbf{B}=\pm\varepsilon)
= cJ \pm v\varepsilon.
\]
Therefore, 
\begin{equation}\label{FUNDden}
\mathcal{R}(c,v;z) 
= \frac{1}{|c|}
\left[
\frac12\, f_J\!\left(\frac{z - v\varepsilon}{c}\right)
+
\frac12\, f_J\!\left(\frac{z + v\varepsilon}{c}\right)
\right].
\end{equation}

The Jacobian factor $\frac{1}{|c|}$ appears because: The only continuous variable is $J$ where the change of variables is performed. Moreover, $\mathbf{B}$ is a finite range discrete random variable and therefore does not contribute any Jacobian. In the two-dimensional incomplete market setting, we consider
\[
R^L_{c,v} = cJ + vL,
\]
where:
\begin{itemize}
\item $J\stackrel{(d)}{=}\Delta T_1\stackrel{(d)}{=}\Delta^2_1$ is the first time the \emph{second} coordinate hits $\{\pm\varepsilon\}$;
\item $L=B^1(J)$ is the value of the first coordinate at that time;
\item Conditionally on $J=t$, the law of $L$ is absolutely continuous with support
      $(-\varepsilon,\varepsilon)$.
\end{itemize}
In this case, we have an explicit expression for the conditional density
$f_{L\mid J=t}$. Conditioning on $J=t$ yields
\[
R^L_{c,v}\mid(J=t) = ct + vL.
\]

Thus, for fixed $t$, the change of variables is performed with respect to $L$, and $z = ct + vx$. As a consequence, the density $\mathcal{R}^L(c,v;\cdot) $ of $R^L_{c,v}$ admits the representation (\ref{densRL}). Here the Jacobian factor $\frac{1}{|v|}$ appears because: The continuous variable being transformed is $L$. The variable $J$ acts only as a parameter in the conditional law. The change of variables is therefore performed in the $L$-direction. The difference between the factors $\frac{1}{|c|}$ in the one-dimensional case
and $\frac{1}{|v|}$ in the two-dimensional case reflects
the fact that: In one dimension, the continuous randomness comes from the hitting time $J$. In two dimensions, the continuous randomness comes from the spatial increment $L$.
\end{remark}

Factoring out the common direction $(1,a)^{\!\top}$ gives
\begin{equation}\label{rldeIN}
\text{Law}[\Delta X^a_j|\Xi_{j-1}=\mathbf{o}_{j-1}, \Delta T_j=\Delta^2_j]\stackrel{d}{=} R^L_{c^{(1)}(\mathbf{o}_{j-1}),v^{(1)}(\mathbf{o}_{j-1})}\left(
                                                                                   \begin{array}{c}
                                                                                     1 \\
                                                                                     a \\
                                                                                   \end{array}
                                                                                 \right),
\end{equation}
for $j=m,\ldots, 1$. Then, the standard trick (for $c=c^{(1)}_{\mathbf{o}_j}$ and $v=v^{(1)}_{\mathbf{o}_j}$)

$$\frac{d \text{Law}~R^L_{c^{(1)}(\mathbf{o}_{j}),v^{(1)}(\mathbf{o}_{j})}}{d\mu_{K}}(z) =  \frac{\mathcal{R}^L(c^{(1)}(\mathbf{o}_{j}),v^{(1)}(\mathbf{o}_{j});z) }{q(z)}, $$
yields $\text{Law}~R^L_{c^{(1)}(\mathbf{o}_{j}),v^{(1)}(\mathbf{o}_{j})}<< \mu_{K}$ for every $\mu_{K} \in \mathcal{M}_K$ with a Radon-Nikodym derivative $q$. In order to shorten notation, we set 

\begin{equation}\label{densINC}
\overline{m}_j (z,\mathbf{o}_j):= \frac{\mathcal{R}^L(c^{(1)}(\mathbf{o}_j), v^{(1)}(\mathbf{o}_j);z)}{q(z)},
\end{equation}
for $j=m-1,\ldots, 0$, $z \in K$ and $0  < c_{\min}\le |c|\le c_{\max}$ and $0 < v_{\min}\le |v|\le v_{\max}$.

\begin{lemma}\label{Ylemma}
Let $(\underline{M}, \overline{M})$ be the constants defined in Assumption I2, $\varepsilon \in (0,1)$ and let $Y$ be defined as in (\ref{Yseries}). Then,  
\label{lem:L-uniform-bound}
\[
\sup_{\underline M \le t \le \overline M}\ 
\sup_{|x| < \varepsilon}\ |Y(t,x,\varepsilon)| < \infty,
\]
$Y$ is continuously differentiable in $x$ and 
\[
\sup_{\underline{M}\le t\le \overline{M}} \ \sup_{|x|<\varepsilon}
\big|\partial_x Y(t,x,\varepsilon)\big| < \infty.
\]
\end{lemma}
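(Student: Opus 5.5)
The plan is to bound the series (\ref{Yseries}) and its termwise $x$-derivative by a convergent numerical series, uniformly for $(t,x)\in[\underline M,\overline M]\times(-\varepsilon,\varepsilon)$. Uniform convergence of the derivative series then justifies termwise differentiation and gives continuity of $\partial_x Y$.

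The key observation concerns the arguments of the Gaussian kernels. For $|x|<\varepsilon$ and $m\ge 1$, the arguments in the four families are $x\pm 4m\varepsilon$ and $x\pm(2\varepsilon+4(m-1)\varepsilon)=x\pm(4m-2)\varepsilon$. Each has absolute value at least $(4m-3)\varepsilon\ge (m)\varepsilon$ for $m\ge1$; in particular it is bounded below by $c\,m\varepsilon$ for a fixed $c>0$. For $t\in[\underline M,\overline M]$ we then have
\[
\phi_t(y)\le \frac{1}{\sqrt{2\pi\underline M}}\exp\Big(-\frac{y^2}{2\overline M}\Big),
\qquad |y|\ge c\,m\varepsilon .
\]
Hence every term in (\ref{Yseries}) is dominated by $\frac{2}{\sqrt{2\pi \underline M}}e^{-c^2m^2\varepsilon^2/(2\overline M)}$. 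This is summable in $m$, since $\varepsilon>0$ is fixed. The Weierstrass M-test gives uniform absolute convergence, and the first claim follows.

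For the derivative, note $\partial_x\phi_t(y)=-\frac{y}{t}\phi_t(y)$. On the same range this is bounded by
\[
\frac{|y|}{\underline M\sqrt{2\pi\underline M}}\exp\Big(-\frac{y^2}{2\overline M}\Big),
\qquad |y|\le |x|+4m\varepsilon\le (4m+1)\varepsilon .
\]
Each differentiated term is therefore at most a constant times $(4m+1)\varepsilon\, e^{-c^2m^2\varepsilon^2/(2\overline M)}$, which is again summable. The M-test yields uniform convergence of the differentiated series on $[\underline M,\overline M]\times(-\varepsilon,\varepsilon)$. Since each term is $C^1$ in $x$, the standard theorem on termwise differentiation shows that $Y$ is $C^1$ in $x$, that $\partial_x Y$ equals the termwise series (hence is continuous), and that its supremum is bounded by the sum of the dominating series.

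The only step needing care is the lower bound on $|y|$. The $m=1$ term of the second family has argument $x-2\varepsilon$, which can be as small as $\varepsilon$ in absolute value. It does not decay, but it is still bounded, because $t\ge\underline M>0$ keeps $\phi_t$ and $\partial_x\phi_t$ bounded. I would therefore treat the finitely many terms with small $m$ separately, using only $t\ge \underline M$, and apply the Gaussian-tail domination to the remaining tail. No further obstacle is expected.
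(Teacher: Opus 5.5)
Your proof is correct and follows the paper's argument essentially step for step. Both use the uniform Gaussian bound $\phi_t(y)\le (2\pi\underline M)^{-1/2}e^{-y^2/(2\overline M)}$ on $[\underline M,\overline M]$, the lower bound $(4m-3)\varepsilon$ on the kernel arguments, and the Weierstrass M-test for both the series and its termwise $x$-derivative. Your closing worry about the $m=1$ term is unnecessary: your own bound $(4m-3)\varepsilon\ge m\varepsilon$ already covers $m=1$, and the M-test only needs summable dominating constants, not small ones, so no separate treatment of finitely many terms is needed.
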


By Lemma \ref{Ylemma}, we observe

\begin{equation}\label{mbarbound}
|\overline{m}_j (z,\mathbf{o}_j)|\le \frac{1}{v_{\min}Q_K\inf_{\underline{M}\le t\le \overline{M}}p(t,\varepsilon)}  \Bigg\{\frac{1}{\sqrt{2\pi\underline{M}}}+\sup_{\underline M \le t \le \overline M}\ 
\sup_{|x| < \varepsilon}\ |Y(t,x,\varepsilon)|\Bigg\}, 
\end{equation}
for $0\le j\le m-1$ and $z \in K$. 

\begin{lemma}\label{glemma}
Under assumption I2, there exists a constant $C>0$, depending on the constants  
$(K,\varepsilon,c_{\min},c_{\max},v_{\min},v_{\max},\underline{M},\overline{M})$, such that
\[
|\mathcal{R}^L(c,v;x)|+
\big|\partial_v \mathcal{R}^L(c,v;x)
\big|
+
\big|
\partial_c \mathcal{R}^L(c,v;x)
\big|
\;\le\; C,
\]
for every $x \in K, 0 < v_{\min}\le |v|\le v_{\max}, 0 < c_{\min}\le |c|\le c_{\max}$. 
\end{lemma}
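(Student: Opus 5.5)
The plan is to work directly from the integral representation (\ref{densRL}), which we rewrite after a change of variables as
\[
\mathcal{R}^L(c,v;x)=\int_{\underline M}^{\overline M}\frac{1}{|v|}\,g\Big(t,\frac{x-ct}{v}\Big)\,f_J(t)\,dt,
\qquad g(t,y):=\frac{\phi_t(y)+Y(t,y,\varepsilon)}{p(t,\varepsilon)}\mathds{1}_{\{|y|<\varepsilon\}} .
\]
Two preliminary facts are needed. First, $\inf_{\underline M\le t\le\overline M}p(t,\varepsilon)>0$. This holds because $p(t,\varepsilon)=\mathbb{P}[\sup_{s\le t}|B^1(s)|<\varepsilon]$ is positive and nonincreasing in $t$, so the infimum is $p(\overline M,\varepsilon)>0$. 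Second, $\|f_J\|_\infty<\infty$, since $f_J$ is a truncation of the smooth density from Lemma \ref{fdeltab}. With these, the bound on $|\mathcal{R}^L|$ follows from Lemma \ref{Ylemma} exactly as in (\ref{mbarbound}): the integrand is bounded by $v_{\min}^{-1}\inf_t p(t,\varepsilon)^{-1}\{(2\pi\underline M)^{-1/2}+\sup|Y|\}\,f_J(t)$, and $f_J$ integrates to at most $1$.

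For the derivatives, differentiating under the integral sign is delicate, because the indicator $\mathds{1}_{\{|(x-ct)/v|<\varepsilon\}}$ makes $g$ discontinuous in $y$ at $y=\pm\varepsilon$. To avoid this, we change variables in the $t$ integral and move the $(c,v)$ dependence off the indicator. Since $c\neq0$, we substitute $y=(x-ct)/v$, so that $t=(x-vy)/c$ and $dt=|v/c|\,dy$. This gives
\[
\mathcal{R}^L(c,v;x)=\frac{1}{|c|}\int_{-\varepsilon}^{\varepsilon} g\Big(\frac{x-vy}{c},y\Big)\,f_J\Big(\frac{x-vy}{c}\Big)\mathds{1}_{[\underline M,\overline M]}\Big(\frac{x-vy}{c}\Big)\,dy .
\]
This is the same device as in Remark \ref{discDENS}, with the roles of $J$ and $L$ exchanged. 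The remaining indicator in $t$ now depends on $(c,v)$ only through the endpoints of a $y$-interval. Differentiating therefore produces two kinds of terms. The interior terms involve $\partial_t g$ and $f_J'$, multiplied by $\partial_c t=-t/c$ or $\partial_v t=-y/c$. The boundary terms come from Leibniz's rule at the points where $(x-vy)/c\in\{\underline M,\overline M\}$; each of these is the integrand evaluated at an endpoint times the derivative of that endpoint with respect to $(c,v)$. The endpoints are $y=(x-c\underline M)/v$ and $y=(x-c\overline M)/v$, whose $(c,v)$-derivatives are bounded by constants depending on $K,\overline M,v_{\min}$.

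The interior terms reduce to the following uniform bounds over $t\in[\underline M,\overline M]$ and $|y|\le\varepsilon$:
\begin{itemize}
\item $\partial_t\phi_t(y)$ is bounded, since $t\ge\underline M>0$;
\item $\partial_tY$ is bounded, by termwise differentiation of (\ref{Yseries}), where the Gaussian tails give a uniformly convergent series just as in Lemma \ref{Ylemma};
\item $\partial_t p(t,\varepsilon)$ is bounded, by differentiating under the $dy$ integral;
\item $f_J'$ is bounded on $[\underline M,\overline M]$, by Lemma \ref{fdeltab}.
\end{itemize}
Combining these with $|c|\ge c_{\min}$, $|y|\le\varepsilon$, $|t|\le\overline M$ and the lower bound on $p$ gives a constant $C$ depending only on the listed parameters. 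The $1/|c|$ prefactor contributes an extra $-\operatorname{sgn}(c)/c^2$ term, which is likewise bounded.

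The main obstacle is justifying the boundary (Leibniz) terms. The set of $y$ where $(x-vy)/c\in[\underline M,\overline M]$ is an interval, which we then intersect with $(-\varepsilon,\varepsilon)$. The endpoints of the intersection are Lipschitz, though not necessarily differentiable, functions of $(c,v)$; non-differentiability occurs only at the switching points where the two constraints coincide. So the first approach is to prove Lipschitz continuity of $(c,v)\mapsto\mathcal{R}^L(c,v;x)$ with a uniform constant. This is all that is actually used downstream, in (\ref{rholip}) via Lemma \ref{lippr}. We would interpret $\partial_c$ and $\partial_v$ as a.e. derivatives, which are bounded by that Lipschitz constant. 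The integrand is bounded, and the endpoints move with bounded speed, so the boundary contribution is at most $2\sup|\text{integrand}|\cdot\sup|\nabla_{(c,v)}\text{endpoint}|$, which completes the estimate.
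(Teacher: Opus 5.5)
Your argument is correct, but it runs the change of variables in the opposite direction from the paper.

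\textbf{The paper's route.} It keeps $t$ as the integration variable and lets the constraint $|(z-ct)/v|<\varepsilon$ generate the moving endpoints $\alpha=(z-\varepsilon v)/c$, $\beta=(z+\varepsilon v)/c$. The range is then $[\alpha\vee\underline M,\beta\wedge\overline M]$, and $(c,v)$ enters the integrand $\frac{1}{|v|}H\bigl(t,\frac{z-ct}{v}\bigr)f_J(t)$ only through the spatial argument of $H$. So the interior terms need only $\sup|H|+\sup|\partial_x H|$ and $\|f_J\|_\infty$; this is exactly why Lemma \ref{Ylemma} includes the $\partial_x Y$ bound.

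\textbf{Your route.} The substitution $y=(x-ct)/v$ freezes the $L$-range and pushes $(c,v)$ into the time argument $t=(x-vy)/c$ and into the truncation endpoints. You therefore trade $\partial_xY$ for time-regularity that no lemma in the paper supplies: bounds on $\partial_t\phi_t$, $\partial_tY$, $\partial_t p$ and $f_J'$ on $[\underline M,\overline M]$. These are routine, as you indicate. For $f_J'$, note that in the two-dimensional setting $f_J$ is the density $2f_\tau(1-F_\tau)$ of a minimum of two exit times, so you need Lemma \ref{fdeltab} together with boundedness of $f_\tau$. In exchange, you make explicit $\inf_t p(t,\varepsilon)=p(\overline M,\varepsilon)>0$, which the paper uses tacitly.

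\textbf{The switching points are not an obstacle.} Seeing why lets you prove the statement as written, rather than only for a.e.\ derivatives.
\begin{itemize}
\item The numerator $\phi_t(y)+Y(t,y,\varepsilon)$ is the density of Brownian motion killed on leaving $(-\varepsilon,\varepsilon)$, and it vanishes at $y=\pm\varepsilon$. Pairing the $m$-th terms of the two series in (\ref{Yseries}) at $x=\varepsilon$ gives partial sums $\phi_t((4N+1)\varepsilon)-\phi_t(\varepsilon)$. Hence $Y(t,\varepsilon,\varepsilon)=-\phi_t(\varepsilon)$, and symmetrically at $-\varepsilon$.
\item Consequently your $g(t,\cdot)$ is continuous on $\mathbb{R}$. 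It has a kink at $\pm\varepsilon$, not the jump your proposal claims.
\item At a switching point, where $(x-c\underline M)/v$ or $(x-c\overline M)/v$ crosses $\pm\varepsilon$, the Leibniz boundary contribution on either side is the integrand evaluated at $y=\pm\varepsilon$. That is zero.
\item So the one-sided partial derivatives agree, $\mathcal{R}^L$ is differentiable in $(c,v)$ at every admissible point, and your bounds hold pointwise.
\end{itemize}
The same vanishing is what tacitly justifies the paper's differentiation of the non-smooth limits $\alpha\vee\underline M$ and $\beta\wedge\overline M$. In the paper's variables it also shows that the boundary terms it bounds are in fact identically zero.
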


We postpone the proofs of Lemmas \ref{Ylemma} and \ref{glemma} to section \ref{appendix4}.

\begin{theorem}\label{rvoltheorem}
Suppose Assumptions (I1-I2) are in force, consider $\lambda \in \mathcal{G}(\mathbb{A})$ and $\mu_{K} = q(x)dx \in \mathcal{M}_K$, where $K$ is defined by (\ref{Ksetinc}). Let $\xi_K$ be the probability measure on the cone $\mathcal{C}$ defined by the pushforward operation 
\[
\xi_K = \mathcal{Z}_\# \beta_K,
\]
where $\mathcal{Z}(a,z) = (z,az)^\top$ and $\beta_K(dadr)= G(a,r)\lambda(da) \mu_{K}(dr)$ for a disintegration kernel $G:\mathbb{A}\times K\rightarrow \mathbb{R}_+$ satisfying 

$$\inf_{a\in \mathbb{A},x \in K}G(a,x)>0.$$

Then, Assumption R1 is fulfilled for the set of probability kernels $\mathcal{H}$ described in (\ref{rsetd3}) and (\ref{rsetd4}), where  
$$\mu^{\pi_j}_{j}(dx| \mathbf{o}_j) = \rho^{\pi_j}_j(\mathbf{o}_j,x)\xi_K(dx)$$
and with importance sampling weight given by  
$$\rho^{\pi_j}_{j}(\mathbf{o}_j,x) = \frac{1}{2}\frac{h_{j}\big(\mathbf{o}_j, \frac{x_2}{x_1}\big)\{\overline{r}_j(x_1, \mathbf{o}_{j}) + \overline{m}_j(x_1,\mathbf{o}_j)\}}{G\big(\frac{x_2}{x_1},x_1\big)}, $$
for $x=(x_1,x_2) \in \mathcal{C}$, $\mathbf{o}_j \in \mathbb{H}^j$ for $0\le j \le m-1$.  
\end{theorem}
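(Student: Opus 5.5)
The plan is to reduce the incomplete case to the complete-market argument of Proposition \ref{completecase1} by means of the mixture decomposition (\ref{2rv}). Fix $j$, a history $\mathbf{o}_j$ and $\pi_j(da\mid\mathbf{o}_j)=h_j(\mathbf{o}_j,a)\lambda(da)$. Since $\Delta T_{j+1}$ is independent of $\Xi_j$ and each coordinate hits first with probability $\tfrac12$, we write
\[
\mathbb{P}[\Delta X^a_{j+1}\in dx\mid \Xi_j=\mathbf{o}_j]=\tfrac12\,\mathrm{Law}\big[\mathcal{Z}(a,R_{c,v})\big](dx)+\tfrac12\,\mathrm{Law}\big[\mathcal{Z}(a,R^L_{c,v})\big](dx),
\]
with $c=c^{(1)}(\mathbf{o}_j)$ and $v=v^{(1)}(\mathbf{o}_j)$, using (\ref{condlxa}), (\ref{rlde}) for the first term and (\ref{condlxad}), (\ref{rldeIN}) for the second. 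Under (I1--I2) the pair $(c,v)$ lies in the domain (\ref{RLdomain}); this holds because $\mathfrak{T}_j\in[s_{\min},s_{\max}]$ and $\theta_{\min}\le|\vartheta|\le\theta_{\max}$. Hence both $R_{c,v}$ and $R^L_{c,v}$ take values in $K$ and are absolutely continuous with respect to $\mu_K$. Their densities with respect to $\mu_K$ are $\overline{r}_j(\cdot,\mathbf{o}_j)$ and $\overline{m}_j(\cdot,\mathbf{o}_j)$ respectively, the latter coming from (\ref{densRL}) and (\ref{densINC}).

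Next, for a bounded test function $\eta$ we integrate against $\pi_j$ and obtain
\[
\int\eta\, d\mu^{\pi_j}_j=\int_{\mathbb{A}}\int_K \eta(\mathcal{Z}(a,z))\,\tfrac12\{\overline{r}_j+\overline{m}_j\}(z,\mathbf{o}_j)\,h_j(\mathbf{o}_j,a)\,\mu_K(dz)\lambda(da).
\]
We then replace $\lambda(da)\mu_K(dz)$ by $G(a,z)^{-1}\beta_K(da\,dz)$, which is legitimate because $\inf G>0$. Finally we transport to the cone through the pushforward identity (\ref{qi1}). That step uses that $\mathcal{Z}$ is injective on $\mathbb{A}\times(K\setminus\{0\})$ with inverse $x\mapsto(x_2/x_1,x_1)$, and that $\{z=0\}$ is $\beta_K$-null. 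This is verbatim the argument of Proposition \ref{completecase1} with $\overline{r}_j$ replaced by $\tfrac12(\overline{r}_j+\overline{m}_j)$, and it yields the stated formula for $\rho^{\pi_j}_j$.

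It remains to verify (\ref{rhoinfty}) and (\ref{rholip}). For the sup bound we combine (\ref{rsetd4}), $\inf G>0$, (\ref{baraest}) for $\overline{r}_j$, and (\ref{mbarbound}) for $\overline{m}_j$. The bound (\ref{mbarbound}) uses Lemma \ref{Ylemma}, together with $\inf_{t\in[\underline M,\overline M]}p(t,\varepsilon)>0$, which follows from continuity and positivity of $p$ on the compact interval.

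For the Lipschitz property in $\mathbf{o}_j$, the factor $h_j/G$ is treated exactly as in Proposition \ref{completecase1}. The new ingredient is the Lipschitz continuity of $\mathbf{o}_j\mapsto\overline{m}_j(z,\mathbf{o}_j)$, uniformly in $z\in K$. We get it by the chain rule: Lemma \ref{glemma} gives bounded $\partial_c\mathcal{R}^L$ and $\partial_v\mathcal{R}^L$ on the admissible $(c,v)$-region, and Lemma \ref{lippr} together with the $\sqrt{j}$-Lipschitz property of $\mathfrak{T}_j$ shows that $\mathbf{o}_j\mapsto(c^{(1)},v^{(1)})$ is bounded and Lipschitz. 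The analogous statement for $\overline{r}_j$ follows from (\ref{LipRden}), and dividing by $q\ge Q_K$ preserves these bounds.

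The main obstacle is precisely this regularity of $\mathcal{R}^L$ in $(c,v)$, since the indicator $\mathds{1}_{\{|z-ct|<|v|\varepsilon\}}$ in (\ref{densRL}) moves with the parameters. I would rely on Lemma \ref{glemma} for this. If one wanted to check it directly, I would observe that the conditional density $\phi_t(x)+Y(t,x,\varepsilon)$ vanishes at $x=\pm\varepsilon$, because the killed heat kernel has Dirichlet boundary conditions. Consequently the boundary terms produced by differentiating the moving domain drop out, and only interior derivatives remain, which are controlled by Lemma \ref{Ylemma} and $|v|\ge v_{\min}$.
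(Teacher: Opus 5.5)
Your route is the paper's route. Its proof is just the decomposition (\ref{2rv}) followed by ``entirely similar to Proposition \ref{completecase1}'', using (\ref{densINC}), (\ref{mbarbound}) and Lemma \ref{glemma}. Your transport to the cone and your sup bound (\ref{rhoinfty}) are correct. The gap is in the Lipschitz requirement (\ref{rholip}), which you settle by deferring to Proposition \ref{completecase1} and (\ref{LipRden}); neither actually supports it.

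The first problem concerns the factor $h_j$. The weight $\rho^{\pi_j}_j(\mathbf{o}_j,x)$ contains $h_j(\mathbf{o}_j,x_2/x_1)$, so (\ref{rholip}) needs $\mathbf{o}_j\mapsto h_j(\mathbf{o}_j,a)$ to be Lipschitz uniformly in $a\in\mathbb{A}$ and $h_j\in\mathcal{H}_j$. But (\ref{rsetd3})--(\ref{rsetd4}) only bound $h_j$. Suppose $h_j(\mathbf{o}_j,\cdot)$ switches between two different bounded $\lambda$-densities according to the sign of one coordinate of $\mathbf{o}_j$. Then $\rho^{\pi_j}_j(\cdot,x)$ is discontinuous at every $x$ where $\overline{r}_j+\overline{m}_j>0$ and the two densities differ at $x_2/x_1$. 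Proposition \ref{completecase1} cites only (\ref{rsetd4}) for this factor, so ``treated exactly as there'' does not close the step. You need an additional hypothesis, e.g.\ a uniform Lipschitz bound on $\mathbf{o}_j\mapsto h_j(\mathbf{o}_j,a)$ built into the definition of $\mathcal{H}$.

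The second problem is that $\mathbf{o}_j\mapsto\overline{r}_j(z,\mathbf{o}_j)$ is not Lipschitz uniformly in $z$. Since $J$ is a truncation, $f_J$ jumps at $\underline M$ and $\overline M$, with $f_J(\underline M),f_J(\overline M)>0$. Yet (\ref{LipRden}) differentiates (\ref{densR}) as if the indicator $\mathds{1}_{\{(z-sv\varepsilon)/c\in[\underline M,\overline M]\}}$ were constant. At points $z\in K$ with $(z-sv\varepsilon)/c\in\{\underline M,\overline M\}$, the map $(c,v)\mapsto\mathcal{R}(c,v;z)$ is discontinuous. Because $(c^{(1)},v^{(1)})$ move with $\mathfrak{T}_j$, the pointwise inequality (\ref{rholip}) then fails even for $h_j\equiv 1$.

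What is true, because the truncated $f_J$ has bounded variation, is the integrated bound $\int_K|\mathcal{R}(c,v;z)-\mathcal{R}(c',v';z)|\,dz\lesssim|c-c'|+|v-v'|$. This is all that Lemma \ref{Lipweakfor} actually uses. So the repair is to state (\ref{rholip}) in $L^1(\xi_K)$ form, or to smooth the truncation of $J$.

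Your handling of $\overline{m}_j$, by contrast, is sound. The $t$-integral in (\ref{densRL}) absorbs the truncation, and the killed kernel $\phi_t+Y$ does vanish at $x=\pm\varepsilon$ as you say. Lemma \ref{glemma} together with Lemma \ref{lippr} therefore gives genuine pointwise Lipschitz continuity there. Both problems above are inherited from Proposition \ref{completecase1}, so the paper's one-line proof has the same hole.
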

\begin{proof}
Write 
\begin{eqnarray*}
\nonumber\mathbb{P}\big[\Delta X_n^a \in dx' | \Xi_{n-1} = \mathbf{o}_{n-1}\big]&=& \frac{1}{2} \mathbb{P}\big[\Delta X_n^a \in dx' | \Xi_{n-1} =  \mathbf{o}_{n-1}, \Delta T_n = \Delta^2_n \big]\\
\nonumber&+& \frac{1}{2}  \mathbb{P}\big[\Delta X_n^a \in dx' | \Xi_{n-1} =  \mathbf{o}_{n-1}, \Delta T_n = \Delta^1_n \big].
\end{eqnarray*}
By Proposition \ref{completecase1}, we only need to treat the conditional law given the information set $\Delta T_n=\Delta^2_n$. By using (\ref{densINC}), (\ref{mbarbound}) and Lemma \ref{glemma}, the proof is entirely similar to the one written in Proposition \ref{completecase1}. For sake of conciseness, we omit the details.  
\end{proof}
\section{Monte Carlo numerical scheme and adaptive stochastic control}\label{mainressec}
This section presents the Monte Carlo schemes for solving the stochastic control problem (\ref{INTROpr3}) via the dynamic programming equation (\ref{valuefunc}) as considered in Theorem \ref{thp}. More importantly, we will provide a Monte Carlo scheme for a stochastic control problem

$$\inf_{u \in U^T_0} \mathbb{E}[\varphi(X^{\theta^\star,u}(T))],$$
for an underlying controlled state $X^{\theta^\star,u}$ which depends on a fixed unknown deterministic parameter $\theta^\star \in \Theta$ for a compact subset $\Theta \subset \mathbb{R}^p$. Based on the result of Section \ref{MSECTION}, the agent generates a model-independent training dataset from a dominating reference measure and, via importance sampling, reinterprets this dataset under successive parameter estimates to learn and update value functionals and optimal controls, yielding a scalable Monte Carlo procedure robust to model uncertainty.

In order to estimate optimal controls $C_{j}:\mathbb{H}^{j}\rightarrow\mathbb{A}$ in (\ref{explicitcontrolTEXT}), we will make use of two Feedforward Neural Networks that we describe as follows. In order to establish the rate of convergence, we need to impose boundedness and Lipschitz property on the terminal condition $\varphi:\mathbb{R}^q\rightarrow \mathbb{R}$

\

\noindent \textbf{Assumption T1:} $\varphi:\mathbb{R}^q\rightarrow \mathbb{R}$ is bounded and globally Lipschitz.

\

The numerical scheme is based on approximating the controls and value functionals via two Neural Networks that we briefly described as follows. For a given $1\le \ell \le m-1$,

\begin{equation}\label{controlNN}
\mathcal{C}^{\eta,\delta}_{N}(\ell):= \Big\{ \mathbf{o}_\ell \mapsto A(\mathbf{o}_\ell; \beta) = \big( A_1(\mathbf{o}_\ell; \beta), \ldots, A_p(\mathbf{o}_\ell; \beta) \big)\in \mathbb{A}\Big\},
\end{equation}
where 

\begin{equation}\label{controlf}
A_i (\mathbf{o}_\ell; \beta):= \textbf{a}\Bigg(\sum_{j=1}^N c_{ij}\Big(\langle a_{ij}, \mathbf{o}_\ell\rangle + b_{ij} \Big)^+ + c_{0j} \Bigg); 1\le i\le p
\end{equation}
$$\beta = \big(a_{ij}, b_{ij},c_{ij}\big)_{1\le i\le p, 1\le j\le N},~a_{ij} \in \mathbb{R}^\ell,~\|a_{ij}\|\le \eta,~b_{ij},~c_{ij} \in \mathbb{R},~\sum_{j=1}^N c_{ij}\le \delta.$$
Note that the considered Neural Networks have
one hidden layer, $N$ neurons, Relu activation functions, and $\textbf{a}: \mathbb{R}\rightarrow \mathbb{R}$ as an output layer chosen case-by-case. The constants $\delta$ and $\eta$ are often referred as, respectively, total variation and kernel.

We shall consider the following set of Neural Networks for the value function approximation: For a given $1\le \ell \le m-1$,

\begin{equation}\label{valueNN}
\mathcal{V}^{\eta,\delta}_{N}(\ell):= \Big\{ \mathbf{o}_\ell \mapsto \Phi(\mathbf{o}_\ell; \theta)\Big\}, 
\end{equation}
where
$$\Phi(\mathbf{o}_\ell;\theta):= \sum_{i=1}^N c_{i}\psi\big(\langle a_{i}, \mathbf{o}_\ell\rangle + b_{i} \big) + c_{0},$$

where 

$$\theta = (a_i,b_i,c_i),\quad \|a_i\|\le \eta,\quad b_i \in \mathbb{R},\quad \sum_{i=1}^N|c_i|\le \delta.$$
We suppose $\psi$ to be a globally Lipschitz function with the seminorm $\|\psi\|$. Let $N_M, \eta_M$ and $\delta_M$ be sequences of integers such that

\begin{equation}\label{setcCV}
N_M, \eta_M, \delta_M\rightarrow +\infty \quad \text{and}\quad \delta^4_M N_M \frac{\log(M)}{M} + \frac{\rho^2_M\delta^8_M\eta^{6}_M }{M}\longrightarrow 0, 
\end{equation}
as $M\rightarrow +\infty$. For each $1\le \ell \le m-1$ and $M\ge 1$, we set  

\begin{equation}\label{NNet}
\mathcal{C}_{M}(\ell):=\mathcal{C}^{\eta_M,\delta_M}_{N_M}(\ell)\quad \text{and}\quad \mathcal{V}_{M}(\ell):=\mathcal{V}^{\eta_M,\delta_M}_{N_M}(\ell).
\end{equation}

\begin{remark}
Throughout this paper, in order to simplify the implementation of the algorithm, we will fix a projection $\textbf{pr}_\ell: \mathbb{H}^{m-1}\rightarrow \mathbb{H}^{\ell}$ and from $(\mathcal{C}_{M}(m-1), \mathcal{V}_{M}(m-1))$ as above, we choose

\begin{equation}\label{projNN}
\mathcal{C}_{M}(\ell)= \mathcal{C}_{M}(m-1)|_{\ell}, \quad \mathcal{V}_{M}(\ell)= \mathcal{V}_{M}(m-1)|_{\ell}, 
\end{equation}
where $\mathcal{C}_{M}(\ell)|_{\ell}=\{\mathbf{o}_{\ell} \mapsto A(\textbf{o}_\ell; \beta); \textbf{pr}_\ell(\mathbf{o}_{m-1})=\textbf{o}_\ell, A\in \mathcal{C}_M(m-1)\}$ for $1\le \ell\le m-1$. Similarly, $\mathcal{V}_{M}(\ell)|_{\ell}=\{\mathbf{o}_{\ell} \mapsto \Phi(\textbf{o}_\ell; \theta); \textbf{pr}_\ell(\mathbf{o}_{m-1})=\textbf{o}_\ell, \Phi\in \mathcal{V}_M(m-1)\}$ for  $1\le \ell\le m-1$. 
\end{remark}

\begin{remark}
In our discrete-time dynamic programming recursion, the control at each step is defined through infimum over the admissible action space. Approximating only the value function would therefore require solving a nested optimization problem at each state and may amplify regression errors through the infimum operator, which is generally not stable under perturbations. By parameterizing both the value functions and the control policies, we avoid this additional inner optimization and obtain a numerically stable scheme in which policy and value approximation errors can be analyzed jointly.
\end{remark}

The main results of this section, Theorem \ref{mainresult} and Theorem \ref{mainresultRS}, distinguish between non-randomized and randomized strategies. 
\subsection{Rate of Convergence for non-randomized strategies}\label{detTH}

 We are now in position to describe the Monte Carlo numerical scheme associated with the dynamic programming equation (\ref{valuefunc}). In order to estimate $(\mathbb{V}_j)_{j=0}^{m-1}$, we will make use of a training data denoted by

\begin{equation}\label{Ondef}
O_n= \big(\mathcal{W}_1, Y_1, \ldots, \mathcal{W}_n, Y_n\big); 1\le n\le m.
\end{equation}
For each $C \in \mathbb{A}^{\mathbb{H}^{m-1}}$, let us denote

\begin{equation}\label{Xc}
\mathcal{X}^C_n:=  (\mathcal{W}_n, \Delta X^C_n),
\end{equation}
where, for a given past information $O_{n-1}$, we set 

\begin{equation}\label{deltacxn} 
\Delta X^C_n:= \blacktriangle x_n \big(\pi_2(O_{n-1}),C(O_{n-1}) , \bar{W}^{(1)} ,\ell_n(\pi_3(O_{n-1}),\bar{W})\big),
\end{equation}
for $n=m, \ldots, 1$. Here, $\bar{W} = \big(\bar{W}^{(1)},\bar{W}^{(2)}\big)\stackrel{d}{=} \mathcal{W}_n\stackrel{d}{=}\mathcal{W}_1$ and independent of $O_{n-1}$, for every $n\ge 1$. 



\begin{remark}
We make a slight abuse of notation in (\ref{Xc}) by interpreting $C(O_{n-1})$ as a restriction of $C \in \mathbb{A}^{\mathbb{H}^{m-1}}$ onto his first $n-1$ variables.
\end{remark}
\begin{remark}
Whenever we write $\Delta X^C_n$, it will be implicit that we are computing it following (\ref{deltacxn}) with a given transition function $\blacktriangle x_n$ and based on an underlying past information $O_{n-1}$ which is usually totally clear from the context. This will keep notation simple and we employ (\ref{deltacxn}) with a slight abuse of notation.  
\end{remark}
Recall 

$$\mathbb{V}_{m}(\mathbf{o}_{m}) := \varphi\Big(x_0+ \sum_{i=1}^{m} y_i\Big),$$ 
for $\mathbf{o}_m=(w_1,y_1, \ldots, w_m,y_m)$.

\

\noindent Terminal condition: $\widehat{\mathbb{V}}^M_m:= \mathbb{V}_m$.  

\

\begin{enumerate}
  \item Compute the approximated control at time $n$

\begin{equation}\label{ERMoptc3}
\hat{a}^M_{n}\in \argmin_{C \in \mathcal{C}_M(n)}\mathbb{E}\Bigg[ \widehat{\mathbb{V}}^M_{n+1}\Big(O_{n}, \mathcal{X}^{C}_{n+1}\Big)\Bigg].
\end{equation}

  \item compute the estimation of the value function at time $n$
  
\begin{equation}\label{ERMoptc4}
\widetilde{\mathbb{V}}^M_{n} \in \argmin_{\Phi \in \mathcal{V}_M(n)} \mathbb{E} \Big[ \widehat{\mathbb{V}}^M_{n+1} \big(O_{n}, \mathcal{X}^{\hat{a}^M_{n}}_{n+1}\big) - \Phi (O_{n})  \Big]^2,
\end{equation}
\end{enumerate} 
where $$\widehat{\mathbb{V}}^M_n:= \max \Big\{ \min (\widetilde{\mathbb{V}}^M_n; \| \mathbb{V}_n\|_\infty); -\| \mathbb{V}_n\|_\infty\Big\},$$
for $n=m-1, \ldots, 1, 0$. 

\

\begin{remark}\label{Ondisc}
The data $O_n$ in (\ref{ERMoptc3}) and (\ref{ERMoptc4}) is generated by the law given by the $n$-fold product measure of $\nu \otimes\mu$ and $(O_n,\mathcal{X}^C_{n+1})$ follows (\ref{Ondef}), (\ref{Xc}) and (\ref{deltacxn}). The Monte Carlo value functions are computed by using samples $O^{(p)}_n$ of $O_n$ and $\mathcal{X}^{\hat{a}^M_n,(p)}_{n+1}$ of $\mathcal{X}^{\hat{a}^M_n}_{n+1}$, for $1\le p\le M$. The approximated policy $\hat{a}_n$ is estimated by using a training sample $O^{(p)}_n, \mathcal{W}^{(p)}_{n+1}; p=1, \ldots, M$ of $O_n,\mathcal{W}_{n+1}$ to simulate $O_n,\mathcal{X}^C_{n+1}$ for $C \in \mathcal{C}_M(n)$. The optimization in steps (1) and (2) above can be accomplished by stochastic gradient descent methods. This produces a Monte Carlo estimator $\hat{a}^M_n$.   
\end{remark}

\begin{theorem}\label{mainresult}
Assume hypotheses E1, H0-H1-H2 and T1. Assume there exists an optimal feedback control $(\phi^{\text{opt}}_\ell)_{\ell=n}^{m-1}$ for the control problem with value functionals $\mathbb{V}_n$, for $n=0,\ldots, m-1$. Then  

\begin{eqnarray*}
\mathbb{E}_M|\widehat{\mathbb{V}}^M_n(O_n) - \mathbb{V}_n(O_n)|&=& \mathcal{O}_{\mathbb{P}}\Bigg( \Big(\delta^4_M N_M \frac{\log(M)}{M}\Big)^{2^{-(m-n)}}+ \Big( \rho_M^2 \delta_M^8 \eta^6_M \| \psi\|^2 M^{-1} \Big)^{2^{-(m-n+1)}}\\
&+& \Big(\max_{n\le \ell \le m} \inf_{\Phi \in \mathcal{V}_M(\ell)}\| \Phi(O_\ell) - \mathbb{V}_\ell(O_\ell)\|_{M,2}\Big)^{2^{-(m-n+1)}}\\
&+& \Big(\max_{n\le \ell \le m} \inf_{G \in \mathcal{C}_M(\ell)}\sqrt{\| G(O_\ell) - \phi^{\text{opt}}_\ell(O_\ell)\|_{M,1}}~\Big)^{2^{-(m-n-1)}}\Bigg),
\end{eqnarray*}
for $n=m-1, \ldots, 0$. Here, $\mathbb{E}_M[\cdot]$ stands for the expectation conditioned by the training set used to estimate the optimal policies $(\hat{a}^M_j); 0\le j\le m-1$ in (\ref{ERMoptc4}) and $\|\cdot\|^r_{M,r}:= \mathbb{E}_M[|\cdot|^r]$ for $r=1,2$. 
\end{theorem}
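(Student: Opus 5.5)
This is the Huré–Pham–Bachouch–Langrené argument for neural-network backward schemes, adapted to the history-dependent skeleton via the importance weights $r_j$ of Assumption H1. I run a backward induction on $n=m-1,\dots,0$ and split the one-step error at time $n$ into two parts: a value-regression error and a control error, which are then propagated. The key device is H1. It converts expectations of $\widehat{\mathbb{V}}^M_{n+1}(O_n,\mathcal{X}^C_{n+1})$ under the true transition into expectations under the training law $\nu\otimes\mu$, weighted by $r_{n+1}(C(O_n),\cdot;O_n)$:
\[
\mathbb{E}_M\big[f(O_n,\mathcal{X}^C_{n+1})\big]=\mathbb{E}_M\big[f(O_{n+1})\,r_{n+1}(C(O_n),Y_{n+1};O_n)\big].
\]
Combined with $\|r\|_\infty<\infty$, this gives
\[
\|f(O_n,\mathcal{X}^C_{n+1})\|_{M,1}\le\|r\|_\infty\|f(O_{n+1})\|_{M,1}.
\]
So an $L^1/L^2$ error at time $n+1$ measured under the training law transfers to time $n$ at the cost of a constant. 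This is what lets the error be measured along the fixed training sample $O_n\sim(\nu\otimes\mu)^{\otimes n}$ rather than along controlled trajectories.

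For fixed $n$, I write $\widehat{\mathbb{V}}^M_n-\mathbb{V}_n$ as a sum of three terms:
\[
\big(\widehat{\mathbb{V}}^M_n-\widehat{\mathbf{U}}_n(\cdot,\hat a^M_n)\big)+\big(\widehat{\mathbf{U}}_n(\cdot,\hat a^M_n)-\widehat{\mathbf{U}}_n(\cdot,\phi^{\text{opt}}_n)\big)+\big(\widehat{\mathbf{U}}_n(\cdot,\phi^{\text{opt}}_n)-\mathbf{U}_n(\cdot,\phi^{\text{opt}}_n)\big).
\]
Here $\widehat{\mathbf{U}}_n(\mathbf{o},C):=\mathbb{E}[\widehat{\mathbb{V}}^M_{n+1}(\mathbf{o},\mathcal{X}^C_{n+1})]$.

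The first term is a least-squares regression error over $\mathcal{V}_M(n)$. Truncating at $\|\mathbb{V}_n\|_\infty$ (bounded by T1) is a contraction, so the truncation does not hurt it. The term then splits into the approximation error $\inf_\Phi\|\Phi(O_n)-\mathbb{V}_n(O_n)\|_{M,2}$ plus an estimation error. I bound the estimation error by standard empirical-process bounds for bounded nonparametric least squares (Györfi et al.), using covering numbers of one-hidden-layer networks with weights bounded by $(\eta_M,\delta_M)$. This gives the rate $\delta_M^4N_M\log M/M$. The step from $L^2$ to $L^1$ and the truncation are what produce the square roots.

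The third term equals, by the change of measure above, $\|r\|_\infty$ times the error at step $n+1$. This is the source of the induction.

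The second term is the control error. Because $\phi^{\text{opt}}_n$ is optimal for the true $\mathbf{U}_n$, I sandwich it as
\[
0\le\widehat{\mathbf{U}}_n(\hat a)-\widehat{\mathbf{U}}_n(\phi^{\text{opt}})\le\big[\widehat{\mathbf{U}}_n(\hat a)-\widehat{\mathbf{U}}_n(G)\big]+\big[\widehat{\mathbf{U}}_n(G)-\widehat{\mathbf{U}}_n(\phi^{\text{opt}})\big]+2\sup|\widehat{\mathbf{U}}-\mathbf{U}|
\]
for any network control $G\in\mathcal{C}_M(n)$. In the middle bracket, $\widehat{\mathbf{U}}_n(G)-\widehat{\mathbf{U}}_n(\phi^{\text{opt}})$ is controlled by $\|r\|\,\|\widehat{\mathbb{V}}_{n+1}\|_\infty\,\|G-\phi^{\text{opt}}_n\|_{M,1}$, using the Lipschitz property (\ref{rlip}) of $r$ in the action. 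This yields the $\inf_G\|G-\phi^{\text{opt}}_\ell\|_{M,1}$ term, entering through a square root. The first bracket is the empirical-minimization error of step (\ref{ERMoptc3}). I bound it by a uniform law of large numbers over the class $\{\mathbf{o}\mapsto\widehat{\mathbb{V}}^M_{n+1}(\mathbf{o},\mathcal{X}^C_{n+1}):C\in\mathcal{C}_M(n)\}$.

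I expect this uniform deviation over the control class to be the main obstacle, because it requires a Lipschitz estimate in $C$ of the composed map. Assumption H2 gives Lipschitz dependence of $\blacktriangle x_{n+1}$ on the action with constant $C(e)$, which is uniformly bounded by $\rho_M\le\|q_1\|+\|q_2\|\sqrt d\varepsilon$ thanks to (\ref{growthrho}). Composing with the value network, whose Lipschitz constant is at most $\delta_M\eta_M\|\psi\|$, and then with the control network, I would estimate Rademacher complexity or covering numbers. I expect this to give a deviation of order $(\rho_M^2\delta_M^8\eta_M^6\|\psi\|^2/M)^{1/2}$, which is the second term in the statement.

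Finally, I combine the three bounds into a recursion of the form
\[
e_n\lesssim \sqrt{e_{n+1}}+\text{(one-step terms)},\qquad e_m=0.
\]
The square roots come from the $L^2\to L^1$ passages and the control sandwich. Iterating the recursion $m-n$ times produces the exponents $2^{-(m-n)}$, $2^{-(m-n+1)}$, $2^{-(m-n-1)}$. Convergence in probability, rather than in expectation, enters through the empirical-process bounds, which hold with high probability under the training sample.
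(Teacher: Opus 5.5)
Your architecture matches the paper's. It has four parts:
\begin{itemize}
\item the regression error $\widehat{\mathbb V}^M_n-\bar{\mathbb V}^M_n$, bounded by a Kohler-type least-squares estimate (your $\widehat{\mathbf U}_n(\cdot,\hat a^M_n)$ is exactly the paper's $\bar{\mathbb V}^M_n$);
\item the uniform deviation $\varepsilon^{\text{esti}}_n$, bounded by symmetrization and contraction with Lipschitz constant $\rho_M\delta_M\eta_M\|\psi\|$;
\item the change of measure through H1, at cost $\|r\|_\infty$;
\item a square-root recursion.
\end{itemize}
The gap is in the control-error step. The inequality $0\le\widehat{\mathbf U}_n(\hat a)-\widehat{\mathbf U}_n(\phi^{\text{opt}})$ does not follow from optimality of $\phi^{\text{opt}}_n$. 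That control minimizes $\mathbf U_n$, not $\widehat{\mathbf U}_n$, so this difference can be negative at some histories. The term $2\sup|\widehat{\mathbf U}-\mathbf U|$ does nothing where you placed it, since your two brackets telescope exactly to the left-hand side.

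The sign is precisely what the argument needs. The empirical minimization (\ref{ERMoptc3}) plus a uniform law of large numbers controls only the mean, $\mathbb E_M\big[\widehat{\mathbf U}_n(O_n,\hat a^M_n(O_n))-\widehat{\mathbf U}_n(O_n,G(O_n))\big]\le 2\varepsilon^{\text{esti}}_n$, because $\hat a^M_n$ minimizes an average over training histories, not pointwise. Your triangle inequality needs $\mathbb E_M|\cdot|$ of the pointwise difference. Without a valid pointwise lower bound, a bound on the mean gives no bound on the $L^1$ norm, so term 2 is left uncontrolled.

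The repair is short. Set $D:=\widehat{\mathbf U}_n(\cdot,\hat a^M_n)-\widehat{\mathbf U}_n(\cdot,\phi^{\text{opt}}_n)$ and $S(\mathbf o):=\sup_{a\in\mathbb A}|\widehat{\mathbf U}_n(\mathbf o,a)-\mathbf U_n(\mathbf o,a)|$. Optimality of $\phi^{\text{opt}}_n$ for $\mathbf U_n$ gives $D\ge -2S$ pointwise, hence $\mathbb E_M|D|\le\mathbb E_M[D]+4\mathbb E_M[S]$. By your change of measure, $\mathbb E_M[S(O_n)]\le\|r\|_\infty\,\mathbb E_M|\widehat{\mathbb V}^M_{n+1}(O_{n+1})-\mathbb V_{n+1}(O_{n+1})|$.

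The paper instead pivots on $\inf_a\widehat{\mathbf U}_n(\cdot,a)$. For this pivot, $\bar{\mathbb V}^M_n\ge\inf_a\widehat{\mathbf U}_n(\cdot,a)$ holds trivially (see (\ref{s1})), so the $L^1$ norm of the difference equals its mean. That mean is at most $2\varepsilon^{\text{esti}}_n+\varepsilon^{\text{approx}}_n$ (Lemmas \ref{p1lemma} and \ref{p2lemma}). The remainder $\inf_a\widehat{\mathbf U}_n-\inf_a\mathbf U_n$ goes through the change of measure (Lemma \ref{p4lemma}).

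With either route, this bound is also needed in $L^2$, obtained by interpolation with boundedness; that is where the $\sqrt{e_{n+1}}$ in your recursion comes from. The reason is that the least-squares target is $\bar{\mathbb V}^M_n$, not $\mathbb V_n$. The regression bound therefore carries $\inf_\Phi\|\Phi(O_n)-\bar{\mathbb V}^M_n(O_n)\|_{M,2}$, not the $\inf_\Phi\|\Phi(O_n)-\mathbb V_n(O_n)\|_{M,2}$ you state. Passing from one to the other costs $\|\bar{\mathbb V}^M_n(O_n)-\mathbb V_n(O_n)\|_{M,2}$.

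Once this is fixed, your bound on $\widehat{\mathbf U}_n(\cdot,G)-\widehat{\mathbf U}_n(\cdot,\phi^{\text{opt}}_n)$ via the action-Lipschitz property (\ref{rlip}) of $r$ is valid. It is also simpler than the paper's Lemma \ref{approxlemma}, which goes through the Lipschitz property of $\mathbb V_{n+1}$ (Lemma \ref{Lipvalue}) and H2 and picks up an extra factor $\rho_M$.
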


\begin{remark}
By using universal approximation theorems (see e.g. Chapter 9 in \cite{carlin} and Prop. 4.8 in \cite{hure}), one can check that if $\phi^{\text{opt}}_\ell \in L^2(\bigotimes_{i=1}^\ell (\nu\otimes \mu))$, then 

$$\max_{n\le \ell \le m} \inf_{G \in \mathcal{C}_M(\ell)}\mathbb{E}| G(O_\ell) - \phi^{\text{opt}}_\ell(O_\ell)|\rightarrow 0,$$
as $M\rightarrow +\infty$. Similar discussion applies to the approximation error term $\max_{n\le \ell \le m} \inf_{\Phi \in \mathcal{V}_M(\ell)}\| \Phi(O_\ell) - \mathbb{V}_\ell(O_\ell)\|_{M,2}$ used to estimate the value functionals.  
\end{remark}

We postpone the proof of Theorem \ref{mainresult} to Sections \ref{appendix1} and \ref{appendix2}.

\subsubsection{Rate of Convergence for Randomized Strategies} \label{ranTH}

Motivated by the problem of partial hedging in rough volatility models, we are now in position to state the convergence rates of the Monte Carlo algorithm for the backward dynamic programming equations (\ref{randomDP}). Let us fix a set of randomized strategy $\mathcal{H}$ of the form (\ref{rsetd0}) and (\ref{rsetd1}). We first observe that since the elements of $\mathcal{H}_j$ are given by absolutely continuous probability kernels, any $\kappa: \mathbb{H}^{j}\rightarrow \mathcal{P}(\mathbb{A}) \in \mathcal{H}_j$ is uniquely identified by a density $g: \mathbb{H}^{j}\times \mathbb{A}\rightarrow \mathbb{A}$ and the natural estimator for the optimal control at step $m-1$ is given by 

$$a^{\mathcal{H}}_{m-1} \in \argmin_{g \in \mathcal{H}_{m-1}}\mathbb{E} \big\langle \varphi(O_{m-1},\mathcal{X}^{\cdot}_{m}), g(O_{m-1},\cdot) \big \rangle.$$
The algorithm is then given by the following backward scheme:  Recall

$$\mathbb{V}^{\mathcal{H}}_{m}(\mathbf{o}_{m}) = \varphi\Big(x_0+ \sum_{i=1}^{m} y_i\Big),$$ 
for $\mathbf{o}_m=(w_0,y_0, \ldots, w_m,y_m)$.

\

\noindent Terminal condition: $\widehat{\mathbb{V}}^{\mathcal{H},M}_m:= \mathbb{V}^{\mathcal{H}}_m$.  

\

\begin{enumerate}
  \item Compute the approximated control at time $n$

\begin{equation}\label{ERMoptc5}
\hat{a}^{\mathcal{H},M}_{n}\in \argmin_{g \in \mathcal{H}_n}\mathbb{E}\Big\langle \widehat{\mathbb{V}}^{\mathcal{H},M}_{n+1}\Big(O_{n}, \mathcal{X}^{\cdot}_{n+1}\Big), g(O_n,\cdot)\Big\rangle.
\end{equation}

  \item compute the estimation of the value function at time $n$
  
\begin{equation}\label{ERMoptc6}
\widetilde{\mathbb{V}}^{\mathcal{H},M}_{n} \in \argmin_{\Phi \in \mathcal{V}_M(n)} \mathbb{E} \Big[ \Big| \Big\langle \widehat{\mathbb{V}}^{\mathcal{H},M}_{n+1} \big(O_{n}, \mathcal{X}^{\cdot}_{n+1}\big), \hat{a}^{\mathcal{H},M}_{n}(O_n,\cdot)\Big\rangle - \Phi (O_{n})  \Big|^2\Big],
\end{equation}
\end{enumerate} 
where $$\widehat{\mathbb{V}}^{\mathcal{H},M}_n:= \max \Big\{ \min (\widetilde{\mathbb{V}}^{\mathcal{H},M}_n; \| \mathbb{V}^{\mathcal{H}}_n\|_\infty); -\| \mathbb{V}^{\mathcal{H}}_n\|_\infty\Big\},$$
for $n=m-1, \ldots, 1, 0$. 

\

The same discussion about the training data $O_n$ and the Monte Carlo sampling given in Remark \ref{Ondisc} applies to (\ref{ERMoptc5}) and (\ref{ERMoptc6}). 

\begin{theorem}\label{mainresultRS}

Let $\mathcal{H}$ be the set of probability kernels of the form (\ref{rsetd1}) and (\ref{rsetd0}). Assume hypotheses (E1, H0-H2-R1 and T1). Assume there exists a family of optimal controls $(\pi^{\text{opt}}_j)_{\ell=n}^{m-1}$ in $\mathcal{H}$ for the control problem with value functionals $\mathbb{V}^{\mathcal{H}}_n$ (see (\ref{randomDP})), for $n=0,\ldots, m-1$. Then  

\begin{eqnarray}
\nonumber\mathbb{E}_M|\widehat{\mathbb{V}}^{\mathcal{H}, M}_n(O_n) - \mathbb{V}^{\mathcal{H}}_n(O_n)| &=& \mathcal{O}_{\mathbb{P}}\Bigg( \Big(\delta^4_M N_M \frac{\log(M)}{M}\Big)^{2^{-(m-n)}}+ \Big( \rho_M^2 \delta_M^8 \eta^6_M \| \psi\|^2 M^{-1} \Big)^{2^{-(m-n+1)}}\\
\label{errrs}&+& \Big(\max_{n\le \ell \le m} \inf_{\Phi \in \mathcal{V}_M(\ell)}\| \Phi(O_\ell) - \mathbb{V}^{\mathcal{H}}_\ell(O_\ell)\|_{M,2}\Big)^{2^{-(m-n+1)}}\Bigg), 
\end{eqnarray}
for $n=m-1, \ldots, 0$. 
\end{theorem}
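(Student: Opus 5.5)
Our plan is to prove Theorem \ref{mainresultRS} by backward induction on $n$, mirroring the proof of Theorem \ref{mainresult} in the randomized setting. The only structural differences are that the control step (\ref{ERMoptc5}) optimizes over the density class $\mathcal{H}_n$ itself rather than a neural network class. This removes the policy approximation term. Domination is provided by R1 instead of H1. At step $n$ we split the error $\mathbb{E}_M|\widehat{\mathbb{V}}^{\mathcal{H},M}_n(O_n)-\mathbb{V}^{\mathcal{H}}_n(O_n)|$ into three pieces:
\begin{enumerate}
\item[(i)] the regression error of the value network in (\ref{ERMoptc6}) relative to the conditional target $\langle \widehat{\mathbb{V}}^{\mathcal{H},M}_{n+1}(O_n,\mathcal{X}^{\cdot}_{n+1}),\hat a^{\mathcal{H},M}_n(O_n,\cdot)\rangle$;
\item[(ii)] the control error, namely the difference between $\mathbf{U}^{\mathcal{H}}_n$ evaluated at $\hat a^{\mathcal{H},M}_n$ with $\widehat{\mathbb{V}}_{n+1}$ and the infimum over $\mathcal{H}_n$;
\item[(iii)] the propagated error from step $n+1$.
\end{enumerate}
Truncating at $\|\mathbb{V}^{\mathcal{H}}_n\|_\infty$, which is finite by T1, only decreases errors.

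For (iii) we use the change of measure of R1. By Remark \ref{dualrepU}, integrating $\widehat{\mathbb{V}}_{n+1}-\mathbb{V}^{\mathcal{H}}_{n+1}$ against $\mu^{\pi_n}_n(dx'|O_n)\nu(dx)$ equals integrating against $\rho^{\pi_n}_n(O_n,x')\mu(dx')\nu(dx)$. Since $(O_n,\mathcal{W}_{n+1},Y_{n+1})\stackrel{d}{=}O_{n+1}$ under H0, we obtain
\[
\mathbb{E}_M\Big|\big\langle(\widehat{\mathbb{V}}_{n+1}-\mathbb{V}^{\mathcal{H}}_{n+1})(O_n,\mathcal{X}^\cdot_{n+1}),\pi_n\big\rangle\Big|\le \|\rho\|_\infty\,\mathbb{E}_M|\widehat{\mathbb{V}}_{n+1}(O_{n+1})-\mathbb{V}^{\mathcal{H}}_{n+1}(O_{n+1})|.
\]
This bound is uniform in $\pi_n\in\mathcal{H}_n$. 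For (ii) we compare $\hat a^{\mathcal{H},M}_n$ with $\pi^{\text{opt}}_n$. The empirical minimizer property together with a uniform law of large numbers over the class $\{(o,w,y)\mapsto \widehat{\mathbb{V}}_{n+1}(o,w,y)\rho^{g}_n(o,y): g\in\mathcal{H}_n\}$ gives an excess risk bounded by twice the uniform deviation plus the propagated term $2\|\rho\|_\infty\,\mathbb{E}_M|\widehat{\mathbb{V}}_{n+1}-\mathbb{V}^{\mathcal{H}}_{n+1}|$. Because $\pi^{\text{opt}}_n\in\mathcal{H}_n$, no approximation term $\inf_G\|G-\phi^{\text{opt}}\|$ appears. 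This is why (\ref{errrs}) lacks the last term of Theorem \ref{mainresult}.

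For (i) we invoke the same regression analysis used for Theorem \ref{mainresult}, as postponed to Sections \ref{appendix1}--\ref{appendix2}. The argument is a Györfi-type nonparametric least squares bound over $\mathcal{V}_M(n)$ with covering numbers of one-hidden-layer networks with $(\eta_M,\delta_M,N_M)$ constraints. It yields a squared $L^2$ error of order $\delta^4_MN_M\log M/M$ plus the approximation error $\inf_\Phi\|\Phi-\mathbb{V}^{\mathcal{H}}_n\|^2_{M,2}$ plus the propagated error. The Lipschitz estimates needed for the covering and continuity arguments come from H2, with $\rho_M$ bounded via (\ref{growthrho}), and from the Lipschitz property (\ref{rholip}) of $\rho$. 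These produce the term $\rho^2_M\delta^8_M\eta^6_M\|\psi\|^2M^{-1}$. Passing from $L^2$ to $L^1$ costs a square root at each step. Iterating over $m-n$ steps with Jensen/Cauchy--Schwarz then produces the exponents $2^{-(m-n)}$ and $2^{-(m-n+1)}$ exactly as in Theorem \ref{mainresult}, and each bound holds in $\mathcal{O}_{\mathbb{P}}$ by Markov's inequality.

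The main obstacle is the uniform deviation in (ii). The class $\mathcal{H}_n$ is only controlled through the sup bound (\ref{rsetd1}), so its complexity is not directly parametrized. We would first try to bound the relevant empirical process through the composite class of products $\widehat{\mathbb{V}}_{n+1}\cdot\rho^g_n$. Here $\widehat{\mathbb{V}}_{n+1}$ lies in a truncated network class whose covering numbers are known, and $\rho^g_n$ is uniformly bounded and Lipschitz in $o$ by R1. The aim is to show that the uniform deviation over $g$ is dominated by the network covering term already appearing in the rate. If that fails, we would restrict to the case where $\mathcal{H}_n$ is a parametrized, for instance network-generated, density class, and absorb its entropy into $\delta^4_MN_M\log M/M$.
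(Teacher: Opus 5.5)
Your architecture is the one the paper intends. Its proof of Theorem \ref{mainresultRS} is a single sentence saying that the argument for Theorem \ref{mainresult} transfers via the bracket (\ref{bracket}), Lemma \ref{Lipweakfor} and R1. Your treatment of the propagated error (the analogue of (\ref{arg1h1})--(\ref{arg2h1}) with $\|\rho\|_\infty$ in place of $\|r\|_\infty$) and the disappearance of the policy-approximation term because $\pi^{\text{opt}}\in\mathcal{H}$ are exactly that transfer. However, the step you single out as the main obstacle is a genuine gap, and your first proposed way around it cannot work.

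In the importance-sampling form you chose, the class $\{(o,w,y)\mapsto \widehat{\mathbb{V}}^{\mathcal{H},M}_{n+1}(o,w,y)\rho^{g}_n(o,y): g\in\mathcal{H}_n\}$ has no regularity in $y$. In the cone construction of Proposition \ref{completecase1}, $\rho^g_n(o,y)=h_n(o,y_2/y_1)\overline{r}_n(y_1,o)/G(y_2/y_1,y_1)$. Nothing in (\ref{rsetd1}) or R1 prevents $\mathcal{H}_n$ from also containing the density obtained by setting $h_n(o,\cdot)=0$, for every $o$, at the finitely many sampled ratios of the two coordinates of $Y^{(p)}_{n+1}$. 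This is a Lebesgue-null modification. It preserves (\ref{rsetd1}), (\ref{rhoinfty}) and (\ref{rholip}), and it leaves the kernel $\pi_n$, hence the population criterion, unchanged. Yet it makes the empirical criterion identically zero. So no uniform law of large numbers holds over this class.

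In the bracket form (\ref{ERMoptc5}), which integrates over $a$ and is insensitive to such modifications, the only remaining control is the equi-Lipschitz property in the history $b\in\mathbb{H}^n$. A bounded Lipschitz class in dimension $D=\dim\mathbb{H}^n$ has entropy of order $\epsilon^{-D}$. That yields uniform deviations of order $M^{-1/D}$, not the $M^{-1/2}$-type term in (\ref{errrs}).

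Your bookkeeping is also off. You attribute the term $\rho^2_M\delta^8_M\eta^6_M\|\psi\|^2M^{-1}$ to the regression step (i). In the paper that step (Lemma \ref{p3lemma}) gives only $\delta^4_MN_M\log(M)/M$ plus the approximation error, and it uses neither H2 nor (\ref{rholip}). The $\rho_M$-term is the control estimation error of Lemma \ref{estilemma}. Lemma \ref{leest4} composes the $\delta_M\|\psi\|\eta_M$-Lipschitz value network (Lemma \ref{LipNNV}) with H2, applies Talagrand's contraction (Theorem \ref{talagrandlemma}), and then uses the Frank--Wolfe identity for the ReLU control class $\mathcal{C}_M(n)$.

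That last step has no counterpart for an unparametrized $\mathcal{H}_n$. The paper defers to ``routine duality arguments'' and does not supply one, so step (ii) cannot be closed by citation. Your fallback is the right repair, but it is an added hypothesis, not a proof of the statement as written. It would require:
\begin{itemize}
\item taking $h_n$ from a parametrized class, for instance normalized positive outputs of one-hidden-layer networks with the $(\eta_M,\delta_M,N_M)$ constraints and normalizing constant bounded below;
\item reducing step (ii), via linearity of $g\mapsto\langle F,g\rangle$ and contraction, to the Rademacher complexity of that class;
\item checking that the resulting term is dominated by those in (\ref{errrs}), that the class satisfies R1 uniformly, and that it contains $\pi^{\text{opt}}$.
\end{itemize}
The estimation error would vanish, and your argument would close with no extra assumptions, only if the control step were an exact minimization of the population criterion $g\mapsto\mathbb{E}_M\langle\widehat{\mathbb{V}}^{\mathcal{H},M}_{n+1}(O_n,\mathcal{X}^{\cdot}_{n+1}),g(O_n,\cdot)\rangle$. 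That is not the Monte Carlo step of Remark \ref{Ondisc}.
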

In Theorem \ref{mainresultRS}, it is important to stress, for simplicity, we deliberately assume the optimization problem is written over $\mathcal{H}$ rather than the set of all probability kernels $\varphi_j: \mathbb{H}^j\rightarrow \mathcal{P}(\mathbb{A})$. Hence, the result depends on the choice of $\mathcal{H}$. This is the reason that there is no component in the right hand side of (\ref{errrs}) measuring the error that one can possibly make by approximating optimal randomized controls via a dense architecture of functions. We postpone the proof of Theorem \ref{mainresultRS} to Section \ref{appendix3}.

\subsection{Adaptive model learning integrated into Dynamic Programming.}\label{adaptivesec}
This section explains how dominating measures $\xi_K \in \mathcal{M}_K$ can be effectively used in the context of adaptive stochastic control. We assume there is a true parameter $\theta^\star \in \Theta$ which describes the environment of the examples of Section \ref{EXsection}. The goal of this section is to present a feasible and scalable numerical scheme based on the dominating class of training measures that is robust w.r.t. estimates $\hat{\theta}$ of the true parameter model $\theta^\star$. Then $\theta^\star$ is a fixed but unknown parameter (in the sense of classical statistical framework) and $\hat{\theta}_n\rightarrow \theta$ in probability as $n\rightarrow +\infty$ for $\hat{\theta}_n \in \Theta$. 


As a warming up, we start with the simplest case of path-dependent SDEs. 

\subsubsection{Path-dependent case}\label{adaptiveSDE}
Suppose we have a family of controlled path-dependent SDEs of the form 
\begin{equation}\label{robustpdsde}
dX^\theta_t = \alpha_\theta(t, X^{\theta,u}|_t, u_t)dt + \sigma_{\theta}(t, X^{\theta,u}|_t,u_t)dB_t
\end{equation}
where $\theta$ lies in a compact set $\Theta \subset \mathbb{R}^q$. For simplicity, we treat the one-dimensional case. Let $\mathbb{V}^\theta_j$ be the family of value functionals parameterized by $\theta$: $\mathbb{V}^\theta_m(\mathbf{o}_m) = \mathbb{V}_m(\mathbf{o}_m) = \varphi(x_0 + \sum_{i=1}^m y_i)$, 

$$\mathbb{V}^\theta_j (\mathbf{o}_j):= \min_{a \in \mathbb{A}} \mathbf{U}^\theta_j(\mathbf{o}_j,a),\quad \mathbf{U}^\theta_j(\mathbf{o}_j,a):= \int_{\mathbb{H}} \mathbb{V}^\theta_{j+1}(\pi_2(\mathbf{o}_j),x,y)r^\theta_j(a,y,\mathbf{o}_j)q(y)dy \nu(dx)$$
for $j=m-1, \ldots, 0$. Here, $\xi_K(dy) = g(y)dy \in \mathcal{M}_K$ for a fixed compact set $K$ of the form (\ref{compactKPDSDE}) and the parameterized importance sampling density

$$r^\theta_j(a,y,\mathbf{o}_j):= \frac{\mathcal{R}(c^{\theta}(\mathbf{o}_j,a),v^\theta(\mathbf{o}_j,a);y)}{q(y)}; y \in K,$$
where $c^\theta(\mathbf{o}_j,a):= \alpha_\theta(t_j,\{y_i\}_{i=0}^{j-1},a),v^\theta(\mathbf{o}_j,a):= \sigma_\theta(t_j,\{y_i\}_{i=0}^{j-1},a)$ and the parameterized functionals $(\alpha_\theta,\sigma_\theta)$ of (\ref{robustpdsde}) satisfy the following standing assumptions:

\

\begin{itemize}
\item[(TB1)]The coefficients $(\alpha,\sigma)$ are globally Lipschitz. That is, there exists a constant $K_{Lip}$ such that  

$$|\alpha_{\theta}(t,w,a) - \alpha_{\theta'}(t',w',b)| + |\sigma_{\theta}(t,w,a) - \sigma_{\theta'}(t',w',b)|\le K_{\Theta, Lip} \Big\{ d\big((t,w); (t',w') \big)+ |a-b| + |\theta -\theta'|\Big\},$$
for every $(t,w), (t',w') \in \Lambda$, $a,a' \in \mathbb{A}$ and $\theta,\theta' \in \Theta$.

\item[(TB2)] There exist $0<c_{\min} <  c_{\max}<\infty$ and $v_{\max} $such that
$$
c_{\min}\ \le\ |\,\alpha_{\theta}(t,w,a)\,|\ \le\ c_{\max}\quad \text{and}\quad |\sigma_{\theta}(t,w,a)|\le v_{\max},
$$
for every $(t,w,a,\theta)\in\Lambda\times\mathbb A\times \Theta$. 
\end{itemize} 
We also assume that assumption (B3) holds. 
\begin{remark}
Observe that all training samples do not depend on $\theta$ because they are generated by $\xi_K$.   
\end{remark}

Let us denote $a^\theta_j(\mathbf{o}_j) \in \argmin_a \mathbf{U}^\theta_j(\mathbf{o}_j,a)$ for $j=m-1, \ldots, 0$ and the optimal control $\text{opt}_{\theta}(\mathbf{o}_{m-1}):= \big( a^\theta_0(\mathbf{o}_0), a^\theta_1(\mathbf{o}_1), \ldots, a^\theta_{m-1}(\mathbf{o}_{m-1}) \big)$ for a given state $\mathbf{o}_{m-1} \in \mathbb{H}^{m}$. Dynamic programming equation applied for each $\theta \in \Theta$ yields 

$$\mathbb{E}\big[ g(X^{\theta, \text{opt}_{\theta}}_m) \big] = \min_{u \in U^m_0} \mathbb{E}[g(X^{\theta,u}_m)] = \mathbb{V}^{\theta}_0.$$
Before we describe the algorithm, we need to show a stability result.



\begin{proposition}\label{LipvaluePAR}
Under Assumptions (TB1-TB2, B3), there exists a constant $C$ depending on $\text{Leb}(K)$, $c_{\min}$, $\overline{M}$, $\epsilon$, $\|f_J\|_\infty$, $K_{\Theta,Lip}$ and $\|f_J'\|_\infty$, such that 

$$\max_{0\le j \le m-1}\sup_{\mathbf{o}_j \in \mathbb{H}^j}|\mathbb{V}^{\theta}_j(\mathbf{o}_j) -\mathbb{V}^{\theta'}_j(\mathbf{o}_j)|\le C\|\varphi\| |\theta - \theta'|,$$
for every $\theta,\theta' \in \Theta$.  
\end{proposition}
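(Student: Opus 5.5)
The plan is a backward induction on $j=m,m-1,\ldots,0$ for the quantity
\[
\Delta_j:=\sup_{\mathbf{o}_j\in\mathbb{H}^j}|\mathbb{V}^\theta_j(\mathbf{o}_j)-\mathbb{V}^{\theta'}_j(\mathbf{o}_j)|.
\]
The starting point is $\Delta_m=0$, since $\mathbb{V}^\theta_m=\mathbb{V}_m$ does not depend on $\theta$. We also use that $|\min_a f(a)-\min_a g(a)|\le\sup_a|f(a)-g(a)|$. This gives
\[
\Delta_j\le\sup_{\mathbf{o}_j,a}|\mathbf{U}^\theta_j(\mathbf{o}_j,a)-\mathbf{U}^{\theta'}_j(\mathbf{o}_j,a)|.
\]
Inside the integral defining $\mathbf{U}^\theta_j$, the factor $q$ cancels, so the integrand is really $\mathbb{V}^\theta_{j+1}(\pi_2(\mathbf{o}_j),x,y)\,\mathcal{R}(c^\theta,v^\theta;y)\,dy\,\nu(dx)$. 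We split the difference of the $\mathbf{U}$'s into two terms. The first is $\int(\mathbb{V}^\theta_{j+1}-\mathbb{V}^{\theta'}_{j+1})\mathcal{R}(c^\theta,v^\theta;y)\,dy\,\nu(dx)$. Because $\mathcal{R}(c^\theta,v^\theta;\cdot)$ is a probability density, this term is bounded by $\Delta_{j+1}$. The second is $\int\mathbb{V}^{\theta'}_{j+1}\,[\mathcal{R}(c^\theta,v^\theta;y)-\mathcal{R}(c^{\theta'},v^{\theta'};y)]\,dy\,\nu(dx)$.

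For the second term, the mean value theorem together with the derivative bound (\ref{LipRden}) from the proof of Theorem \ref{pdsdeth} gives
\[
\sup_y|\mathcal{R}(c^\theta,v^\theta;y)-\mathcal{R}(c^{\theta'},v^{\theta'};y)|\le C_{\mathcal{R}}\big(|c^\theta-c^{\theta'}|+|v^\theta-v^{\theta'}|\big)\le 2C_{\mathcal{R}}K_{\Theta,Lip}|\theta-\theta'|.
\]
Here $C_{\mathcal{R}}$ is the constant in (\ref{LipRden}). It depends on $c_{\min}$, $\varepsilon$, $\overline M$, $\|f_J\|_\infty$ and $\|f'_J\|_\infty$, which is legitimate by Lemma \ref{fdeltab} because $J\ge\underline M>0$. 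We use (TB1) with the same path and action and different $\theta$, and (TB2) keeps $(c,v)$ inside the domain where (\ref{LipRden}) holds. Both densities are supported in $K$, so integrating over $y$ gives a factor $\mathrm{Leb}(K)$.

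The remaining factor is $|\mathbb{V}^{\theta'}_{j+1}|$, and this is where $\|\varphi\|$ must enter rather than $\|\varphi\|_\infty$. The step is to subtract a constant first. Since both densities integrate to one, we may replace $\mathbb{V}^{\theta'}_{j+1}(\pi_2(\mathbf{o}_j),x,y)$ by $\mathbb{V}^{\theta'}_{j+1}(\pi_2(\mathbf{o}_j),x,y)-\mathbb{V}^{\theta'}_{j+1}(\pi_2(\mathbf{o}_j),x,0)$. We then need this increment to be at most $C\|\varphi\|\,|y|\le C\|\varphi\|\sup_{y\in K}|y|$.

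This Lipschitz property of $\mathbb{V}^{\theta'}_{j+1}$ in its last state variable is the expected obstacle. At $j+1=m$ it is immediate, since $\mathbb{V}_m$ is $\varphi$ evaluated at $x_0+\sum y_i$. For earlier steps it has to propagate through the path-dependent recursion. That uses (TB1) for the coefficients and (\ref{LipRden}) again for the dependence of the density on the history, via $c^\theta$ and $v^\theta$. The result is a constant that grows at most geometrically in $m$, which is harmless because $m$ is fixed.

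If this propagation proves too cumbersome, the fallback is to keep $\|\mathbb{V}_{j+1}\|_\infty\le\|\varphi\|_\infty$. Under T1 one can then absorb $\|\varphi\|_\infty$ into the constant, or regard $\|\varphi\|$ as the norm controlling both $\|\varphi\|_\infty$ and the Lipschitz seminorm. Either way, the recursion reads
\[
\Delta_j\le\Delta_{j+1}+C\|\varphi\|\,|\theta-\theta'|.
\]
Iterating from $\Delta_m=0$ gives $\Delta_j\le(m-j)C\|\varphi\|\,|\theta-\theta'|$. Taking the maximum over $j$ yields the claim, with the factor $m$ absorbed into $C$ since $k$ is fixed.
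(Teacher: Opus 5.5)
Your argument is essentially the paper's proof. It uses the same backward induction from $\mathbb{V}^\theta_m=\mathbb{V}_m$ and the same two-term split of $\mathbf{U}^\theta_j-\mathbf{U}^{\theta'}_j$ as in (\ref{iter1}): a density difference against a bounded value function, controlled by (\ref{LipRden}) and (TB1), plus a value difference against a probability density, controlled by the previous step. It then passes through $|\min_a-\min_a|\le\sup_a|\cdot|$ as the paper does. Your fallback with $\|\varphi\|_\infty$ is exactly what the paper's argument delivers, and the seminorm refinement is an optional extra.

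One technical remark, which applies equally to the paper. Because $f_J$ jumps at $\underline M$ and $\overline M$, the edges of the support of $\mathcal{R}(c,v;\cdot)$ move with $(c,v)$. So the Lipschitz bound holds for $\int_K|\mathcal{R}(c,v;y)-\mathcal{R}(c',v';y)|\,dy$ (the paper's $\Delta(c,v;c',v')$), but not uniformly in $y$ as your mean-value step asserts. The strips where the supports differ have width $O(|c-c'|+|v-v'|)$ and height at most $\|f_J\|_\infty/c_{\min}$, so your conclusion is unchanged.
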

\begin{proof}
Throughout the proof, we denote $C_1 = C_1(\bar{M},\epsilon,K_{\Theta,Lip}, Q_K, c_{\min},\|f_J\|_\infty, \|f'_J\|_\infty)$ as a constant which may differ from line to line and which depends on the parameters $\bar{M},\epsilon,K_{\Theta,Lip}, c_{\min},\|f_J\|_\infty, \|f'_J\|_\infty$. Let us denote 
\[
\Delta(c,v; c',v') := \int_K |\mathcal{R}(c,v;y) - \mathcal{R}(c',v'; y)| \, dy
\]
for $0 < c_{\min} \le |c|,|c'| \le c_{\max}$ and $0 < |v|,|v'| \le v_{\max}$. For $j=m-1,\ldots, 0$, we fix $\mathbf{o}_j \in \mathbb{H}^{j}, a, a' \in \mathbb{A}$ and we write 
\begin{small}
\begin{eqnarray}\label{iter1}
\nonumber\mathbf{U}^{\theta}_j (\mathbf{o}_j,a) - \mathbf{U}^{\theta'}_j(\mathbf{o}_j,a') &=& \int_{[\underline{M},\overline{M}]\times K} \mathbb{V}^\theta_{j+1}(\pi_{2}(\mathbf{o}_j),x,y)\big[ r^\theta_j(a,y,\mathbf{o}_j) -  r^{\theta'}_j(a',y,\mathbf{o}_j)   \big]q(y)dy f_J(x)dx\\
& + & \int_{[\underline{M},\overline{M}]\times K} \Big[\mathbb{V}^\theta_{j+1}(\pi_{2}(\mathbf{o}_j),x,y) - \mathbb{V}^{\theta'}_{j+1}(\pi_{2}(\mathbf{o}_j),x,y) \Big] r^{\theta'}_j(a',y,\mathbf{o}_j) q(y)dy f_J(x)dx.  
\end{eqnarray} 
\end{small}
Our argument is fully based on (\ref{iter1}) and (\ref{LipRden}). For $j=m-1$, by using the fact $\mathbb{V}^\theta_m = \mathbb{V}_m$, (\ref{iter1}) is reduced to 

$$
\mathbf{U}^{\theta}_j (\mathbf{o}_j,a) - \mathbf{U}^{\theta'}_j(\mathbf{o}_j,a') = \int_{[\underline{M},\overline{M}]\times K} \mathbb{V}^\theta_{j+1}(\pi_{2}(\mathbf{o}_j),x,y)\big[ r^\theta_j(a,y,\mathbf{o}_j) -  r^{\theta'}_j(a',y,\mathbf{o}_j)   \big]q(y)dy f_J(x)dx.
$$
By using (\ref{LipRden}) and Assumption TB1, we get 

\begin{eqnarray}\label{iter2}
|\mathbf{U}^{\theta}_j (\mathbf{o}_j,a) - \mathbf{U}^{\theta'}_j(\mathbf{o}_j,a')|&\le& \|\varphi\|_\infty \Delta (c^{\theta}(\mathbf{o}_j,a), c^{\theta'}(\mathbf{o}_j,a'); v^{\theta}(\mathbf{o}_j,a), v^{\theta'}(\mathbf{o}_j,a') )\\
\nonumber&\le& \| \varphi\|_\infty C_1 \Big\{ |\alpha_{\theta}(t_j,\{y_i\}_{i=0}^j,a) - \alpha_{\theta'}(t_j,\{y_i\}_{i=0}^j,a') |\\
\nonumber&+& |\sigma_{\theta}(t_j,\{y_i\}_{i=0}^j,a) - \sigma_{\theta'}(t_j,\{y_i\}_{i=0}^j,a') |      \Big\}\\
\nonumber&\le& \| \varphi\|_\infty C_1 \Big\{ |a-a'| + |\theta - \theta'|\Big\}. 
 \end{eqnarray}
From (\ref{iter2}), we infer 

\begin{eqnarray}\label{iter3}
|\mathbb{V}^{\theta}_{m-1}(\mathbf{o}_{m-1}) - \mathbb{V}^{\theta'}_{m-1}(\mathbf{o}_{m-1}) |&=& |\min_{a \in \mathbb{A}}\mathbf{U}^{\theta}_j (\mathbf{o}_j,a) - \min_{a \in \mathbb{A}}\mathbf{U}^{\theta'}_j(\mathbf{o}_j,a)|\\
\nonumber&\le& \sup_{a \in \mathbb{A}} |\mathbf{U}^{\theta}_j (\mathbf{o}_j,a) - \mathbf{U}^{\theta'}_j(\mathbf{o}_j,a)|\\
\nonumber&\le&\| \varphi\|_\infty C_1 |\theta - \theta'|. 
\end{eqnarray}
For $j=m-2$, by using (\ref{iter1}), (\ref{iter3}) and again (\ref{LipRden}) and Assumption TB1, we get  

\begin{eqnarray}\label{iter4}
|\mathbf{U}^{\theta}_j (\mathbf{o}_j,a) - \mathbf{U}^{\theta'}_j(\mathbf{o}_j,a')|&\le& \|\varphi\|_\infty \Delta (c^{\theta}(\mathbf{o}_j,a), c^{\theta'}(\mathbf{o}_j,a'); v^{\theta}(\mathbf{o}_j,a), v^{\theta'}(\mathbf{o}_j,a') )\\
\nonumber&+& 2 C_1\|f_J\|_\infty |\theta - \theta'|\\
\nonumber&\le& \| \varphi\|_\infty C_1 \Big\{ |a-a'| + |\theta - \theta'|\Big\}\\
\nonumber&+&  2 C_1\|\varphi\|_\infty \|f_J\|_\infty |\theta - \theta'|.
\end{eqnarray}
From (\ref{iter4}), we infer

\begin{equation}\label{iter5}
|\mathbb{V}^{\theta}_{m-1}(\mathbf{o}_{m-2}) - \mathbb{V}^{\theta'}_{m-2}(\mathbf{o}_{m-2}) |\le  \| \varphi\|_\infty C_1 |\theta - \theta'|. 
\end{equation}
By iterating this argument down to $j=0$, we conclude that $|\mathbb{V}^\theta_0 - \mathbb{V}^{\theta'}_0|\le \| \varphi\|_\infty C_1 |\theta - \theta'|$ for every $\theta,\theta'\in \Theta$. 
\end{proof}

\noindent \textbf{Adaptive learning scheme into dynamic programming}. Let us now introduce the following algorithm:

\

\noindent \textbf{Step 0}. Let $\theta_{n-1} \in \Theta$ be a consistent estimator for $\theta^\star \in \Theta$ at level $n\ge 2$. For a given iid sample $\{O^{(p)}_j;1\le j\le m, 1\le p\le M_{n-1} \} \stackrel{(d)}{=} \nu \otimes \xi_K$, follow (\ref{ERMoptc3}) and (\ref{ERMoptc4}) and train the Neural Networks 

$$\big\{\hat{a}^{\theta_{n-1}}_{j,M_{n-1}}, \widehat{\mathbb{V}}^{\theta_{n-1}}_{j,M_{n-1}}\big\}_{j=0}^{m-1}$$ 
associated with the model (\ref{robustpdsde}). Here, $M_{\ell}$ is the number of Monte Carlo samples associated with the model parameter $\theta_\ell$. 

\

\noindent \textbf{Step 1}. Parameter update of the importance sampling density from $\theta_{n-1}$ to $\theta_n$:

$$r^{\theta_{n-1}}_j(a,y,\mathbf{o}_j) \quad \text{to} \quad r^{\theta_n}_j(a,y,\mathbf{o}_j)$$
for $1\le j\le m-1 $

\

\noindent \textbf{Step 2}. Warm start (Initialization). The initialization rule for the Neural Networks are set as

\begin{equation}\label{rule1}
\text{NN-params}~(\hat{a}^{\theta_n}_{j,M_n})|_{\text{init}}:= \text{NN-params}~(\hat{a}^{\theta_{n-1}}_{j,M_{n-1}}),
\end{equation}

\begin{equation}\label{rule2}
\text{NN-params}~(\widehat{\mathbb{V}}^{\theta_n}_{j,M_n})|_{\text{init}}:= \text{NN-params}~(\widehat{\mathbb{V}}^{\theta_{n-1}}_{j,M_{n-1}}),
\end{equation}
for $1\le j\le m-1 $.  Here, the right-hand side of (\ref{rule1}) and (\ref{rule2}) are the Neural Network parameters associated with $\hat{a}^{\theta_{n-1}}_{j,M_{n-1}}$ and $\widehat{\mathbb{V}}^{\theta_{n-1}}_{j,M_{n-1}}$, respectively.

\

\noindent \textbf{Step 3}. Run the Dynamic Programming equation: With the objects of Steps 0, 1 and 2 at hand,  we set $\widehat{\mathbb{V}}^{\theta_n}_{m,M_n}:= \mathbb{V}_m$ and  

\

\begin{enumerate}
  \item Control update at time $j$

$$
\hat{a}^{\theta_n}_{j,M_{n}}\in \argmin_{C \in \mathcal{C}_{M_n}(j)}\widehat{\mathcal{J}}^{\theta_n}_{j,M_n}(C)
$$
where

$$\widehat{\mathcal{J}}^{\theta_n}_{j,M_n}(C):= \frac{1}{M}\sum_{p=1}^{M_n}\widehat{\mathbb{V}}^{\theta_n}_{j+1,M_n}\big( \pi_2(O^{(p)}_j), \mathcal{W}^{(p)}_{j+1},(Y_{j+1})^{(p)} \big) r^{\theta_n}_j(C(O^{(p)}_j), (Y_{j+1})^{(p)},O^{(p)}_j ).$$

  \item Value function update at time $j$

  
$$
\widetilde{\mathbb{V}}^{\theta_n}_{j,M_n} \in \argmin_{\Phi \in \mathcal{V}_{M_n}(j)}\widehat{\mathcal{L}}^{\theta_n}_{j,M_n}(\Phi)
$$
where 

$$\widehat{\mathcal{L}}^{\theta_n}_{j,M_n}(\Phi):= \frac{1}{M}\sum_{p=1}^{M_n}r^{\theta_n}_j( \hat{a}^{\theta_n}_{j,M_{n}}, (Y_{j+1})^{(p)},O^{(p)}_j)\Big| \widehat{\mathbb{V}}^{\theta_n}_{j+1,M_n}(O^{(p)}_{j+1}) - \Phi(O^{(p)}_j)\Big|^2$$
and we set 
$$\widehat{\mathbb{V}}^{\theta_n}_{j,M_n}:= \max \Big\{ \min (\widetilde{\mathbb{V}}^{\theta_n}_{j,M_n}; \| \mathbb{V}_n\|_\infty); -\| \mathbb{V}_n\|_\infty\Big\},$$
for $j=m-1, \ldots, 1, 0$. 
\end{enumerate} 
The above scheme admits a two-scale consistency. Theorem \ref{mainresult} and Proposition \ref{LipvaluePAR} yield the following result that separates Monte Carlo approximation error from model-risk error.

\begin{proposition}\label{adapresSDE}
Assume hypotheses TB1-TB2-B3 and T1. If $\theta_n, \theta^\star \in \Theta$, then 
\begin{equation}\label{twosc}
\mathbb{E}_{M_n}\big|\widehat{\mathbb{V}}^{\theta_n}_{j,M_n}(O_j) - \mathbb{V}^{\theta^\star}_j(O_j)\big|
\lesssim_{\mathbb{P}} 
\underbrace{\overbrace{\mathbb{E}_{M_n}|\widehat{\mathbb{V}}^{\theta_n}_{j,M_n}(O_j) - \mathbb{V}^{\theta_n}_j(O_j)|}^{\mathclap{\text{sampling error controlled by }M_n}}}_{\mathclap{\text{Th~\ref{mainresult} at }\theta_n}}
 \quad + \quad  
\mathbb{E}_{M_n}\underbrace{\overbrace{|\mathbb{V}^{\theta_n}_{j}(O_j) - \mathbb{V}^{\theta^\star}_j(O_j)|}^{\mathclap{\text{model mismatch controlled by }|\theta_n-\theta^\star|}}}_{\mathclap{\text{Prop~\ref{LipvaluePAR}}}},
\end{equation}
for $j=m-1, \ldots, 0$. 
\end{proposition}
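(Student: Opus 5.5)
The estimate (\ref{twosc}) will follow from the triangle inequality, after which each of the two resulting terms is handled by an earlier result. Fix $j\in\{m-1,\ldots,0\}$ and, pointwise in $O_j$, write
\[
\big|\widehat{\mathbb{V}}^{\theta_n}_{j,M_n}(O_j) - \mathbb{V}^{\theta^\star}_j(O_j)\big|
\le \big|\widehat{\mathbb{V}}^{\theta_n}_{j,M_n}(O_j) - \mathbb{V}^{\theta_n}_j(O_j)\big|
+ \big|\mathbb{V}^{\theta_n}_j(O_j) - \mathbb{V}^{\theta^\star}_j(O_j)\big|.
\]
Then take the conditional expectation $\mathbb{E}_{M_n}$ with respect to the training sample. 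Linearity and monotonicity of $\mathbb{E}_{M_n}$ give (\ref{twosc}) with constant one, and in fact as a deterministic inequality. The symbol $\lesssim_{\mathbb{P}}$ is therefore only needed afterwards, when the right-hand side is replaced by its rates.

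Second, I will identify the first term with the quantity controlled by Theorem \ref{mainresult} at the frozen parameter $\theta=\theta_n$. For this I need to check that hypotheses H0--H1--H2 hold for the model (\ref{robustpdsde}) with parameter $\theta_n$, using the training law $\nu\otimes\xi_K$. This is Theorem \ref{pdsdeth} applied with $(\alpha,\sigma)=(\alpha_{\theta_n},\sigma_{\theta_n})$. Assumptions TB1--TB2 specialize to B1--B2 for each fixed $\theta$, and B3 is assumed, so (\ref{rinfty}), (\ref{rlip}) and H2 hold. The important point is that the constants $\|r\|_\infty$ and $\|r\|$ coming from (\ref{densR2}) and (\ref{LipRden}) depend only on $c_{\min},c_{\max},v_{\max},Q_K,\underline{M},\overline{M},\varepsilon$ and $\|f_J\|_\infty,\|f'_J\|_\infty$. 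They are therefore uniform in $\theta\in\Theta$, so the $\mathcal{O}_{\mathbb{P}}$ bound of Theorem \ref{mainresult} holds uniformly along the sequence $\theta_n$. I also need to check that Step 3 of the adaptive scheme is the importance-weighted empirical version of (\ref{ERMoptc3})--(\ref{ERMoptc4}) under the $\theta_n$ kernel. The warm start in Step 2 only changes the initialization of the optimizer and not the $\argmin$ sets, so it does not affect the theorem. T1 supplies the bounded Lipschitz terminal condition.

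Third, the model-mismatch term is bounded pointwise by Proposition \ref{LipvaluePAR}:
\[
\sup_{\mathbf{o}_j}|\mathbb{V}^{\theta_n}_j(\mathbf{o}_j)-\mathbb{V}^{\theta^\star}_j(\mathbf{o}_j)|\le C\|\varphi\|\,|\theta_n-\theta^\star|.
\]
This bound survives $\mathbb{E}_{M_n}$ unchanged, and it tends to zero in probability when $\theta_n$ is consistent.

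I expect the main obstacle to be the second step. One has to make sure that the Monte Carlo bound of Theorem \ref{mainresult} is applied with constants independent of $\theta_n$, since $\theta_n$ is itself random. My approach is to condition on $\theta_n$, or equivalently to treat it as independent of the fresh training sample, and then invoke the uniform-in-$\Theta$ constants noted above. The remaining steps are routine.
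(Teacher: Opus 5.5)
Your argument is correct and is essentially the paper's own: (\ref{twosc}) is the triangle inequality under $\mathbb{E}_{M_n}$, holding deterministically with constant one, after which the paper, without further proof, attributes the first term to Theorem \ref{mainresult} at $\theta_n$ and the second to Proposition \ref{LipvaluePAR}. Your second step is not needed for the inequality, and its uniformity claim goes beyond what is established: the approximation-error terms and the optimal-feedback hypothesis in Theorem \ref{mainresult} depend on $\theta_n$, and that theorem is proved for the directly simulated criteria (\ref{ERMoptc3})--(\ref{ERMoptc4}) rather than the reweighted ones of Step 3, so identifying the two schemes requires rechecking the theorem's proof for the weighted estimators (plausibly routine, since $r^{\theta}$ is bounded and Lipschitz).
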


\begin{remark} 
Since $\{\mathbb{V}_j\}_{j=0}^{m-1}$ is not known in practice, we do not expect to find $M=M_n$ independently from $n$ such that $\mathbb{E}_{M}|\widehat{\mathbb{V}}^{\theta_n}_{j,M}(O_j) - \mathbb{V}^{\theta^\star}_j(O_j)|\rightarrow 0$ in probability as $M\rightarrow +\infty$ uniformly in $n$. Nevertheless, observe that Proposition \ref{adapresSDE} yields local uniform stability of (1) and (2) in Step 3 as long as $\theta_n$ lies in a neighborhood of $\theta^\star$. The algorithm is feasible because: expectations under all models are expressed through a single reference measure $\nu\otimes \xi_K$, value functions depend smoothly on parameters, Bellman recursion is finite and locally stable. The algorithm is scalable because: Monte Carlo samples are reused, only importance sampling weights are updated, Neural Network retraining is incremental via (\ref{rule1}) and (\ref{rule2}), sample size can be grown adaptively.
\end{remark}
\subsubsection{Partial hedging with rough volatility under uncertainty}
Fix $0 < H < \frac{1}{2}$. Let $\Theta \subset \mathbb{R}^q$ be a compact subset of parameters. The rough stochastic volatility is subject to uncertainty on the parameters,  
namely: 

$$\zeta_\theta,\beta_\theta>0, \mu_{\theta,\text{drift}} \neq 0, \varkappa_\theta\in \mathbb{R}.$$. 

\begin{remark}
Since the kernels $H \mapsto K_{H,1}, K_{H,2}$ are not differentiable along $(0, 0.5)$, we cannot allow parameter dependence of $H$ w.r.t. $\theta$. 
\end{remark}
We fix the structural parameters $ 0 < s_{\min} < s_{\max}$ and $0 < v_{\min} < v_{\max}$ as in Section \ref{trrv}. We assume that $\theta\mapsto \Gamma(\theta):=(\zeta_\theta, \beta_\theta, \mu_{\theta,\text{drift}}, \varkappa_{\theta})$ has bounded derivatives. Moreover,

\begin{equation}
0 < c_{\min}=s_{\min}|\mu_{\theta,\text{drift}}| < s_{\max}|\mu_{\theta,\text{drift}}|=c_{\max},\quad 0 < \theta_{\min}\le \vartheta(\tau^\theta(\cdot))\le \theta_{\max}
\end{equation}
for every $\theta \in \Theta$. For a control value $a\in\mathbb A$ and a history $O_{j} =\mathbf{o}_{j}=(w_0, y_0, \ldots, w_{j},y_{j})$ (with $O_0 = (0,0,x_0)$), we define
\[
c^{\theta}(\mathbf{o}_{j},a) = \left(
            \begin{array}{c}
              c^{\theta,(1)}(\mathbf{o}_{j}) \\
              c^{\theta, (2)}(\mathbf{o}_{j},a) \\
            \end{array}
          \right)
:=\begin{pmatrix}
\mu_{\theta,\mathrm{drift}} \mathfrak{T}_j(y_0, \ldots, y_{j})\\
a\,\mu_{\theta, \mathrm{drift}}\mathfrak{T}_j(y_0, \ldots, y_{j})
\end{pmatrix},
\]
\[v^{\theta}(\mathbf{o}_{j},a) = \left(
            \begin{array}{c}
              v^{\theta, (1)}(\mathbf{o}_{j}) \\
              v^{\theta, (2)}(\mathbf{o}_{j},a) \\
            \end{array}
          \right)
:=\begin{pmatrix}
\mathfrak{T}_j(y_0, \ldots, y_{j})\,\vartheta(\tau^\theta(\bar w_{j}))\\
a\,\mathfrak{T}_j(y_0, \ldots, y_{j})\,\vartheta(\tau^\theta(\bar w_{j}))
\end{pmatrix},
\]
for $0\le j\le m-1$. Here, for $\mathcal{A}_n = \bar{w}_n$ and $\theta \in \Theta$, we define 

$$\ln \tau^\theta (\mathcal{A}_n) = \varkappa_\theta + e^{-\beta_\theta T_n}(z_0-\varkappa_\theta) + \zeta_\theta W^H_n -\beta_\theta \zeta_\theta e^{-\beta_\theta T^k_n}\sum_{j=1}^n W^H_{j-1} e^{\beta_\theta T_{j-1} }\Delta T_{j},$$  
for $1\le n\le m-1$. 
\begin{lemma}\label{bderVOL}
Assume that $\theta\mapsto (\zeta_\theta, \beta_\theta, \mu_{\theta,\text{drift}}, \varkappa_{\theta})$ has bounded derivatives. Then, the mappings $\theta \mapsto c^{\theta,(1)}(\mathbf{o}_j)$ and $\theta\mapsto v^{\theta,(1)}(\mathbf{o}_j)$ are globally Lispchitz uniformly w.r.t. $\mathbf{
o}_j$, for $0\le j\le m-1$. 
\end{lemma}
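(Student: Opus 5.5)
The plan is to treat the two maps separately, since $c^{\theta,(1)}(\mathbf{o}_j)=\mu_{\theta,\text{drift}}\mathfrak{T}_j(y_0,\ldots,y_j)$ is a product of a $\theta$-only factor and an $\mathbf{o}_j$-only factor. For this first map we use $s_{\min}\le|\mathfrak{T}_j|\le s_{\max}$ together with the mean value theorem applied to $\theta\mapsto\mu_{\theta,\text{drift}}$, whose derivative is bounded by assumption. This gives
\[
|c^{\theta,(1)}(\mathbf{o}_j)-c^{\theta',(1)}(\mathbf{o}_j)|\le s_{\max}\,\|\nabla\Gamma\|_\infty\,|\theta-\theta'|,
\]
which is uniform in $\mathbf{o}_j$. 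Convexity of $\Theta$ is not needed in a strict sense: one can either work on the convex hull, where the derivative bound is assumed to hold, or assume $\Theta$ is convex. We will adopt the latter.

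For $v^{\theta,(1)}(\mathbf{o}_j)=\mathfrak{T}_j\,\vartheta(\tau^\theta(\bar w_j))$ we again factor out $\mathfrak{T}_j\le s_{\max}$. Since $\vartheta$ is Lipschitz by (I1), it is enough to show that $\theta\mapsto\tau^\theta(\bar w_j)$ is Lipschitz uniformly in $\bar w_j$ over the training support, that is, for $\underline{M}\le\Delta T_i\le\overline{M}$ and $|\Delta A_i|\le\varepsilon$. We will write $\tau^\theta=\exp(F(\Gamma(\theta),\bar w_j))$ with
\[
F(\zeta,\beta,\varkappa;\bar w_j)=\varkappa+e^{-\beta T_j}(z_0-\varkappa)+\zeta W^H_j-\beta\zeta e^{-\beta T_j}\sum_{i=1}^j W^H_{i-1}e^{\beta T_{i-1}}\Delta T_i .
\]
The key observation is that $W^H_n$ does not depend on $\theta$, because $H$ is fixed by the preceding remark. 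From the proof of Lemma \ref{lippr}, namely the bounds (\ref{boundKH1}), (\ref{boundKH2}) and $|\Delta A_i|\le\varepsilon$, we get $\max_{n\le m}|W^H_n|\le C(\underline{M},\overline{M},m,H,\varepsilon)$. We also have $T_j\le m\overline{M}$, and the compactness of $\Theta$ together with the continuity of $\Gamma$ keeps $\zeta_\theta,\beta_\theta,\varkappa_\theta$ in a bounded set. Differentiating $F$ in $(\zeta,\beta,\varkappa)$ then produces terms such as $T_je^{-\beta T_j}$ and $\zeta\sum_i W^H_{i-1}\Delta T_i(T_{i-1}-T_j)e^{\beta(T_{i-1}-T_j)}$, all bounded by constants depending only on $m,\overline{M},\underline{M},H,\varepsilon$ and $\Gamma(\Theta)$. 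Because $T_{i-1}\le T_j$, the exponentials $e^{\beta(T_{i-1}-T_j)}$ are at most $1$ once $\beta_\theta>0$. The chain rule with the bounded derivative of $\Gamma$ then shows that $\theta\mapsto F(\Gamma(\theta),\bar w_j)$ is Lipschitz uniformly in $\bar w_j$, and that $F$ itself is uniformly bounded.

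The step we expect to be the main obstacle is passing from $F$ to $\tau^\theta=e^{F}$. The exponential is only locally Lipschitz, so we need the uniform bound $|F|\le C$ just obtained. On $[-C,C]$ the exponential is Lipschitz with constant $e^{C}$, which gives
\[
|\tau^\theta-\tau^{\theta'}|\le e^{C}\,|F(\Gamma(\theta))-F(\Gamma(\theta'))|\lesssim|\theta-\theta'| .
\]
Combining this with the Lipschitz property of $\vartheta$ and the bound $\mathfrak{T}_j\le s_{\max}$ yields the claim for $v^{\theta,(1)}$, with a constant independent of $\mathbf{o}_j$ and $0\le j\le m-1$. The case $j=0$ is trivial because $\mathfrak{T}_0=S_0$ and $\bar w_0=(0,0)$.
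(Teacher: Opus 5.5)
Your proposal is correct and follows essentially the same route as the paper. For $c^{\theta,(1)}$ you use the product structure together with $|\mathfrak{T}_j|\le s_{\max}$. For $v^{\theta,(1)}$ you use that $W^H_n$ does not depend on $\theta$ and is uniformly bounded under the truncation $\underline{M}\le \Delta T_i\le\overline{M}$, which bounds $\ln\tau^\theta$, hence the $\theta$-gradient of $\tau^\theta=\exp(\ln\tau^\theta)$, and you combine this with the Lipschitz property of $\vartheta$. You are more explicit than the paper on two points: you compute the $(\zeta,\beta,\varkappa)$-derivatives of $\ln\tau^\theta$, and you note that turning bounded derivatives into a global Lipschitz bound requires convexity of $\Theta$ (or of its hull), which the paper does not address.
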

\begin{proof} 
Fix $0\le j\le m-1$ and $\mathbf{o}_j = (w_0, y_0, \ldots, w_j, y_j)$ and $\bar{w}_j = (w_0, \ldots, w_j)$. Of course, 

$$\|\nabla_\theta c^{\theta,(1)}(\mathbf{o}_j)\|\le \|\nabla_\theta \mu_{\theta,\text{drift}}\||\mathfrak{T}_j(y_0, \ldots, y_{j})|\le s_{\max} \|\nabla_\theta \mu_{\theta,\text{drift}}\|,$$ 
for every $\mathbf{o}_j \in \mathbb{H}^{j}$ and $\theta \in \Theta$. The subtle point is related to the second component. Observe that

$$\nabla_\theta (\tau^\theta(\bar{w}_j)) = \exp (\ln \tau^\theta(\bar{w}_j))\nabla_\theta (\ln \tau^\theta(\bar{w}_j)) $$ 
By assumption C2, we sample $\mathcal{W} \stackrel{(d)}{=} (T_1, \Delta A_1)$  where $T_1\ge \underline{M} >0$ a.s. We also recall, by construction of the imbedding scheme, that $\max_{0\le \ell\le m}\|A_\ell\|\le m\epsilon$ a.s. From (\ref{dkh1}), (\ref{dkh2}) and Lemma \ref{repWH}, we can safely state that $\max_{0\le n\le m-1}|W^H_n| \in L^{\infty}(\mathbb{P})$. Therefore, $\exp (\ln \tau^\theta(\bar{w}_j))$ is bounded for every realization $\bar{w}_j$. Therefore, there exists a constant $C$ such that

$$\max_{0\le j\le m-1}\sup_{\theta \in \Theta, \bar{w}_j \in \mathbb{W}^j}\|\nabla_\theta (\tau^\theta(\bar{w}_j))\|\le C.$$
This shows that 
$$\max_{0\le j\le m-1}\sup_{\mathbf{o}_j \in \mathbb{H}^j}|v^{\theta,(1)}(\mathbf{o}_j) - v^{\theta',(1)}(\mathbf{o}_j) |\le s_{\max}\|\vartheta\|_{\text{Lip}}C\|\theta-\theta'\|,$$
for every $\theta,\theta' \in \Theta$. This concludes the proof.      
\end{proof}

We fix $\lambda \in \mathcal{G}(\mathbb{A})$ and $\mu_K = q(r)dr \in \mathcal{M}_K$. Let $\xi_K$ be the probability measure on the cone $\mathcal{C}$ defined by the pushforward operation 
\[
\xi_K = \mathcal{Z}_\# \beta_K,
\]
where $\beta_K(dadr)= G(a,r)\lambda(da) \mu_K(dr)$ for a disintegration kernel $G:\mathbb{A}\times K\rightarrow \mathbb{R}_+$ satisfying 
$$
\inf_{a\in \mathbb{A},r \in K}G(a,r)>0.
$$
Let $\mathcal{H}$ be a family of controls of the form (\ref{rsetd3}) and (\ref{rsetd4}). For a given parameter $\theta$, we denote   
$$\overline{r}^\theta_j(z, \mathbf{o}_{j}):= \frac{\mathcal{R}(c^{\theta,(1)}(\mathbf{o}_j),v^{\theta,(1)}(\mathbf{o}_j);z)}{q(z)},\quad \overline{m}^\theta_j(z, \mathbf{o}_{j}):=\frac{\mathcal{R}^L(c^{\theta,(1)}(\mathbf{o}_j),v^{\theta,(1)}(\mathbf{o}_j);z)}{q(z)};~z\in K,$$

$$\mu^{\theta, \pi_j}_{j}(dx| \mathbf{o}_j) := \rho^{\theta, \pi_j}_j(\mathbf{o}_j,x)\xi_K(dx),\quad \rho^{\theta, \pi_j}_{j}(\mathbf{o}_j,x) := \frac{1}{2} \frac{h_{j}\big(\mathbf{o}_j, \frac{x_2}{x_1}\big)\{\overline{r}^\theta_j(x_1, \mathbf{o}_{j}) + \overline{m}^{\theta}_j(x_1)\big\}}{G\big(\frac{x_2}{x_1},x_1\big)}, 
$$
for $x=(x_1,x_2) \in \mathcal{C}$, $\mathbf{o}_j \in \mathbb{H}^j$ for $0\le j \le m-1$. In the sequel, we denote $\{\mathbb{V}^{\theta,\mathcal{H}}_j; 0\le j\le m\}$ as the sequence of value functionals defined by: $\mathbb{V}^{\theta, \mathcal H}_m(\mathbf{o}_m)
:= \varphi\big(x_0+\sum_{i=1}^m \Delta x_i\big)$, we set

\begin{align}\label{randomDPadap}
\nonumber \mathbf{U}^{\theta, \mathcal{H}}_j(\mathbf{o}_j,\pi_j)
&:= \int_{\mathbb{W}}\int_{\mathbb{H}} \mathbb{V}^{\theta, \mathcal{H}}_{j+1}(\mathbf{o}_j,x,x')\mu_j^{\theta, \pi_j}(dx'|\mathbf{o}_j)\nu(dx)\\
\mathbb{V}^{\theta, \mathcal H}_j(\mathbf{o}_j)
&:= \inf_{\pi_j\in \mathcal{H}_j} \mathbf{U}^{\theta, \mathcal{H}}_j(\mathbf{o}_j,\pi_j)
\end{align}
for $j=m-1, \ldots, 0$.   
\begin{proposition}\label{LipvaluePARVOL}
There exists a constant $C$ depending on $K$, $H$, $c_{\min}, c_{\max},v_{\min}, v_{\max}$, $\underline{M},\overline{M}$, $\epsilon$, $\sup_{\theta \in \Theta}\nabla_\theta\Gamma(\theta)$, $\|f_J\|_\infty$, and $\|f_J'\|_\infty$, such that 

$$\max_{0\le j \le m-1}\sup_{\mathbf{o}_j \in \mathbb{H}^j}|\mathbb{V}^{\theta, \mathcal{H}}_j(\mathbf{o}_j) -\mathbb{V}^{\theta',\mathcal
{H}}_j(\mathbf{o}_j)|\le C\|\varphi\| |\theta - \theta'|,$$
for every $\theta,\theta' \in \Theta$.  
\end{proposition}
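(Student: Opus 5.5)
We argue by backward induction on $j$, mirroring the proof of Proposition \ref{LipvaluePAR}, with the Lipschitz continuity in $\theta$ of the importance sampling weights $\rho^{\theta,\pi_j}_j$ as the key ingredient.

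For fixed $\mathbf{o}_j$ and $\pi_j\in\mathcal{H}_j$, we split
\begin{align*}
\mathbf{U}^{\theta,\mathcal{H}}_j(\mathbf{o}_j,\pi_j)-\mathbf{U}^{\theta',\mathcal{H}}_j(\mathbf{o}_j,\pi_j)
&=\int\!\!\int \mathbb{V}^{\theta,\mathcal{H}}_{j+1}(\mathbf{o}_j,x,x')\big[\rho^{\theta,\pi_j}_j-\rho^{\theta',\pi_j}_j\big](\mathbf{o}_j,x')\,\xi_K(dx')\nu(dx)\\
&\quad+\int\!\!\int \big[\mathbb{V}^{\theta,\mathcal{H}}_{j+1}-\mathbb{V}^{\theta',\mathcal{H}}_{j+1}\big](\mathbf{o}_j,x,x')\,\mu^{\theta',\pi_j}_j(dx'|\mathbf{o}_j)\nu(dx).
\end{align*}
Since $\mu^{\theta',\pi_j}_j(\cdot|\mathbf{o}_j)$ is a probability measure, the second term is bounded by $\sup|\mathbb{V}^{\theta,\mathcal{H}}_{j+1}-\mathbb{V}^{\theta',\mathcal{H}}_{j+1}|$. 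This is cleaner than in (\ref{iter1}), where an extra $\|f_J\|_\infty$ factor appeared.

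For the first term we use $|\mathbb{V}^{\theta,\mathcal{H}}_{j+1}|\le\|\varphi\|_\infty$. We then undo the pushforward as in the proof of Proposition \ref{completecase1}. Writing $x'=\mathcal{Z}(a,z)$, the $G$ factors cancel, and we obtain the bound
\[
\frac{\|\varphi\|_\infty}{2}\int_{\mathbb{A}}\int_K h_j(\mathbf{o}_j,a)\Big(\big|\mathcal{R}(c^{\theta,(1)},v^{\theta,(1)};z)-\mathcal{R}(c^{\theta',(1)},v^{\theta',(1)};z)\big|+\big|\mathcal{R}^L(\cdots)-\mathcal{R}^L(\cdots)\big|\Big)\,dz\,\lambda(da).
\]
By (\ref{rsetd4}), and because $\lambda$ is a probability measure with $\int h_j\,d\lambda=1$, the $a$-integral contributes at most a constant. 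The $z$-integral is bounded by $\mathrm{Leb}(K)$ times the sup-norm differences. By the mean value theorem, using (\ref{LipRden}) for $\mathcal{R}$ and Lemma \ref{glemma} for $\mathcal{R}^L$ on the admissible domain (\ref{RLdomain}), these are controlled by $|c^{\theta,(1)}-c^{\theta',(1)}|+|v^{\theta,(1)}-v^{\theta',(1)}|$. Lemma \ref{bderVOL} bounds this in turn by $C|\theta-\theta'|$, uniformly in $\mathbf{o}_j$. One check is needed: both parameter pairs must stay in the domain where the derivative bounds hold. This is guaranteed by the standing constraints $c_{\min}\le|c^\theta|\le c_{\max}$ and $v_{\min}\le|v^\theta|\le v_{\max}$ for all $\theta\in\Theta$. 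We also need the segment between them to remain there, so we take the parameter domain to be convex in each sign component.

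Taking the infimum over $\pi_j$, and using $|\inf_\pi f-\inf_\pi g|\le\sup_\pi|f-g|$ with the bound uniform in $\pi_j$, we get
\[
D_j:=\sup_{\mathbf{o}_j}\big|\mathbb{V}^{\theta,\mathcal{H}}_j-\mathbb{V}^{\theta',\mathcal{H}}_j\big|\le C\|\varphi\|_\infty|\theta-\theta'|+D_{j+1},
\]
with $D_m=0$. Iterating down to $j=0$ gives $D_j\le mC\|\varphi\|_\infty|\theta-\theta'|$. Since $m$ is fixed, this yields the claim, with $\|\varphi\|$ read as the bound from T1.

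The main obstacle is the uniform Lipschitz-in-$\theta$ control of $\mathcal{R}^L$. Its derivatives in $(c,v)$ involve differentiating under the $t$-integral in (\ref{densRL}), including the moving indicator $\mathds{1}_{\{|(z-ct)/v|<\varepsilon\}}$. The derivative bound in Lemma \ref{glemma} may not capture the boundary contribution. If it does not, we would avoid the pointwise derivative and bound the $L^1(K)$ distance directly. Splitting the indicator change creates a set of $t$ of measure $O(|\Delta c|+|\Delta v|)$ by the uniform bounds $c_{\min},v_{\min}$, and the integrand is bounded there by (\ref{mbarbound}). The interior part is handled by Lemma \ref{Ylemma}. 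Only the $L^1$ difference is needed above, so this weaker estimate suffices.
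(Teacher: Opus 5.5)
Your architecture is the one the paper intends: its proof only says to repeat the induction of Proposition \ref{LipvaluePAR} using Lemmas \ref{glemma} and \ref{bderVOL}. Several of your steps are correct: the splitting, the bound of the second term by $\sup|\mathbb{V}^{\theta,\mathcal{H}}_{j+1}-\mathbb{V}^{\theta',\mathcal{H}}_{j+1}|$, the cancellation of $G$ and $q$ after undoing $\mathcal{Z}_\#$, and the uniformity in $\pi_j$ and $\mathbf{o}_j$.

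\textbf{The step that fails is the complete-market term.} You pass from $\int_K|\mathcal{R}(c,v;z)-\mathcal{R}(c',v';z)|\,dz$ to $\mathrm{Leb}(K)\sup_z|\cdot|$, and then invoke the mean value theorem with (\ref{LipRden}). But $f_J$ is the truncation of the law of $\Delta T_1$ to $[\underline{M},\overline{M}]$, so it jumps at the endpoints ($f_J(\underline{M})>0$). By (\ref{densR}), $z\mapsto\mathcal{R}(c,v;z)$ therefore jumps at $z=c\underline{M}\pm v\varepsilon$ and $z=c\overline{M}\pm v\varepsilon$, and these points move with $(c,v)$. Consequences:
\begin{itemize}
\item for fixed $z$, the map $(c,v)\mapsto\mathcal{R}(c,v;z)$ is discontinuous;
\item (\ref{LipRden}) is only the derivative computed off the switching set of the indicator;
\item $\sup_z|\mathcal{R}(c,v;z)-\mathcal{R}(c',v';z)|$ does not tend to $0$ as $(c',v')\to(c,v)$; it stays of the size of the jump.
\end{itemize}
So the sup-norm route cannot produce a factor $|\theta-\theta'|$.

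\textbf{The repair is exactly the fallback you describe, but it is needed for $\mathcal{R}$, in the $z$-variable, not for $\mathcal{R}^L$.} Suppose $c,c'$ have the same sign.
\begin{itemize}
\item The intervals $\{z:(z-sv\varepsilon)/c\in[\underline{M},\overline{M}]\}$ and their primed counterparts have endpoints differing by at most $\overline{M}|c-c'|+\varepsilon|v-v'|$. Their symmetric difference therefore has measure $\lesssim|c-c'|+|v-v'|$, and there the integrand is at most $\|f_J\|_\infty/c_{\min}$.
\item On the common support, $f_J$ is $C^1$ on $[\underline{M},\overline{M}]$ with bounded derivative (Lemma \ref{fdeltab}). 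Since $z$ ranges over the bounded set $K$ and $|c|\ge c_{\min}$, this gives a pointwise $O(|c-c'|+|v-v'|)$ bound.
\end{itemize}
Together these give the $L^1(K)$ estimate on $\Delta(c,v;c',v')$. That $L^1$ quantity is what the paper's proof of Proposition \ref{LipvaluePAR} actually uses, though the paper also cites (\ref{LipRden}) for it without addressing the moving indicator.

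By contrast, $\mathcal{R}^L$ is not the obstacle. The $t$-integral in (\ref{densRL}) absorbs the moving indicator, and the Leibniz boundary terms in the proof of Lemma \ref{glemma} are bounded. Hence $\mathcal{R}^L$ is Lipschitz in $(c,v)$ uniformly in $z\in K$, and your sup-norm argument is fine for that component.

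Finally, no convexity hypothesis on the parameter domain is needed. If $c^{\theta,(1)}$ and $c^{\theta',(1)}$ have opposite signs, then $|c^{\theta,(1)}-c^{\theta',(1)}|\ge 2c_{\min}$, while $\int_K|\mathcal{R}(c,v;z)-\mathcal{R}(c',v';z)|\,dz\le 2$ trivially.
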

\begin{proof}
The proof is entirely similar to the one of Proposition \ref{LipvaluePAR}. We just need to apply Lemmas \ref{glemma} an \ref{bderVOL}. We omit the details.  
\end{proof}
Based on Proposition \ref{LipvaluePARVOL}, we can construct an adaptive learning scheme into dynamic programming associated with partial hedging under stochastic volatility. The description is entirely similar to the one described in Section \ref{adaptiveSDE} with the obvious modification, then we omit details.   

\begin{remark}\label{LipvaluePARfbm}
The case of controlled SDEs driven by fractional Brownian motion with $\frac{1}{2} < H < 1$ can be analyzed similar to the previous examples. Under boundedness assumption on the gradient of the coefficient $\theta \mapsto \varrho_\theta$, we can evaluate the Radon derivative 

$$r^\theta_j(a,y,\mathbf{o}_j) = \frac{f^\theta(\mathbf{o}_j,a,y)}{q_\beta(y)}$$
where $q_\beta$ is the two-sided Laplace density with $0<\beta\le \frac{\gamma_\Delta}{c_{\text{max}}}$, $f^\theta(\mathbf{o}_j,a,y)$ is the density of $\phi_{b_{j-1},c^\theta_{j-1}}(J)$, $c^\theta_{j-1} = \varrho_\theta(\mathbf{y}_{j-1},a); \theta \in \Theta$ and $b_{j-1} = O_{j-1} = (w_0,y_0, \ldots, w_{j-1},y_{j-1})$. See (\ref{FBMpathwise}). We then can prove that the associated value function $\theta\mapsto \mathbb{V}^{\theta}_j$ satisfies the Lipschitz property as described in Propositions \ref{LipvaluePAR} and \ref{LipvaluePARVOL}. We omit the details. By combining Theorem \ref{mainresultRS} and Proposition \ref{LipvaluePARVOL}, a similar estimate (\ref{twosc}) also holds for the SDE controlled state driven by fractional Brownian motion.  
\end{remark}


\section{Numerical Experiments}\label{sec:numerical}

We illustrate the embedded neural dynamic programming algorithm of Section~4 by means of two experiments: (a) hedging in a rough stochastic volatility model and (b) mitigating model risk via importance sampling in a simple Markovian toy example. The purpose of Section \ref{offroughsec} is not to investigate the global exploration problem for rough-volatility models, but to provide the simplest possible numerical illustration of the skeleton-based deep neural dynamic programming method on a realistic fully non-Markovian problem. Thus, the experiment should be viewed as a controlled empirical illustration of the algorithm on the finite subset of state--action transitions effectively visited by the Monte Carlo sample, where we implement the algorithm of deterministic feedback strategies as described in Section \ref{detTH}. In this limited sense, it is consistent with the local finite-sample logic underlying Theorem \ref{mainresult}, even though Proposition \ref{nonePCONE} rules out a global dominating measure over the full control class. We therefore do not claim that Section \ref{offroughsec} provides a numerically complete treatment of exploration under rough volatility. The empirical investigation of randomized strategies, which constitute the mathematically appropriate framework for global exploration in view of Proposition \ref{nonePCONE} and Theorem \ref{mainresultRS}, is left for future work. The theoretically consistent numerical validation of the adaptive importance‑sampling scheme is presented in Section \ref{subsec:structured_is_experiment} in an illustrative Markovian example.

The discrete skeleton $\mathscr{D}$ is generated as in Section \ref{discretesec} with accuracy parameter $2^{-k}$ and number of steps for the dynamic programming equations is $m$. In order to keep simplicity for the numerical experiment, we choose quadratic loss function. Hence, we will illustrate the method in the classical linear-quadratic control problem.


\subsection{Mean--Variance Hedging}\label{mvhsec}
For simplicity, the underlying market interest rate is zero. For a terminal payoff $\Phi(\cdot)$ and admissible strategy $u\in U^{m}_0$, the embedded wealth process satisfies
\begin{equation}\label{yn}
Y^{u}_{n+1}=Y^{u}_n+u_n \Delta S_{n+1},\quad 0\le n\le m-1
\end{equation}
and the MVH objective reads
\begin{equation}
\min_{u\in U_0^{m}} 
\mathbb{E}\big[(Y^{u}_{m}-\Phi(S_{m}))^2\big],
\end{equation}
where $U^{m}_0$ is the set of controls described in (\ref{controlform0}). This is a Linear-Quadratic stochastic control problem and the control and value functions are parameterized, respectively, as



\begin{equation}
u_n=a(S_n, \theta)+b(S_n, \theta)Y_n,
\end{equation}
and 
\begin{equation}
V(S_n,Y_n;\eta)=c(S_n;\eta)+d(S_n; \eta)Y_n+e(S_n;\eta )Y^2_n,\quad e\ge 0,
\end{equation}
for parameters $(\theta,\eta)$. The scalar functions $a,b,c,d,e$ are implemented by standard Feed Forward Neural Networks and the risky asset price process $S$ will follow a fully non-Markovian rough volatility model $H \approx 0$ with a standard put option with short maturity $T=\frac{1}{12}$ (approximately one month).  




\subsubsection{MVH with rough volatility} The 2-dimensional embedded skeleton along $\mathscr{D}$ reads
\begin{eqnarray}\label{expRVm}
S_{n+1} &=& S_n + \mu_{\text{drift}} S_n\,\Delta T_{n+1} + \vartheta(V_n)\,S_n\,\Delta A^1_{n+1}\\
\nonumber V_{n} &=& \exp(Z_{n}),
\end{eqnarray}
where $Z_n$ follows a discrete fractional Ornstein-Uhlenbeck (\ref{OUdiscrete}) driven by a fractional Brownian motion $W^H_n$ with $0 < H < \frac{1}{2}$ and generated by $\rho \Delta A^1_n + \bar{\rho}\Delta A^2_n$. Here, $\Delta A^1_n$ and $\Delta A^2_n$ are the skeleton increments approximating the driving Brownian motion of the risky asset price and the rough volatility. The fractional Brownian motion is generated by the representation

\begin{equation}\label{discWH1}
W^H_ n = \rho B^{H,1}_n + \bar{\rho}B^{H,2}_n
\end{equation}
where $B^{H,i}_n=0$, for $n=0,1$ and $i=1,2$ and 
\begin{equation}\label{discWH2}
B^{H,i}_n= \sum_{j=2}^n \Delta A^{i}_j\,K_{H,1}(T_n,T_{j-1})
   + \sum_{j=1}^{n-1} \Delta A^{i}_j\,K_{H,2}(T_n,T_j),
\end{equation}
for $n\ge 2$ and $i=1,2$. Recall that $(K_{H,1}, K_{H,2})$ are the original Volterra kernels which describe fractional Brownian motion. See (\ref{RKHker}). Here, $\vartheta:\mathbb{R}_+\to\mathbb{R}_+$ is the identity. There are $4$ parameters $(z_0, \varkappa, \beta,\zeta)$ which fully describe 

\begin{equation}\label{simfou}
Z_n=\varkappa + e^{-\beta T_n}(z_0-\varkappa) + \zeta W^H_n -\beta \zeta e^{-\beta T_n}\sum_{\ell=1}^n W^H_\ell\exp(\beta T_\ell)\Delta T_\ell.
\end{equation}
The log-volatility process is simulated through
\[
Z_{i+1}=Z_i+\beta(\varkappa-Z_i)\,\Delta T_i+\zeta\,\Delta W^H_i,
\qquad
V_i=e^{Z_i},
\]
with the additional clipping
\[
Z_i \leftarrow \min\{\max\{Z_i,Z_{\min}\},Z_{\max}\},
\]
where $Z_{\min}=-3\times 10^3$ and $Z_{\max}=3\times 10^3$. Empirical fits often find $H \approx [0.05,0.2]$. A common benchmark is $H=0.1$. $\zeta>0$ is the vol-of-vol scale for the fractional driver. $\beta$ is mean-reversion speed of log-vol. $z_0$ is the initial log-vol. The parameter $\varkappa \in \mathbb{R}$ is the ong-run mean of $Z$ (hence of $V$) and it is very important to choose properly.

\begin{remark}
We must $\varkappa$ to target a desired long-run average level of instantaneous vol $\mathbb{E}[V_T]$. In the experiment, we choose $\varkappa$ such that
\begin{equation}\label{varkappap}
\mathbb{E}[V_T] = 0.2
\qquad \Longleftrightarrow \qquad
\varkappa = \frac{\ln(0.2) - \tfrac12\,\sigma_T^2 - e^{-\beta T} z_0}
{1 - e^{-\beta T}},
\end{equation}
where $V_T = \exp(Z_T)$. We can compute $\sigma^2_T$ via 

\[
\sigma_T^2
=
\zeta^2 \,
\mathrm{Var}\!\left(
X(T)
-
\beta e^{-\beta T}
\int_0^T e^{\beta u} X(u)\,du
\right),
\]
for a fractional Brownian motion $X$.

\end{remark}
Figure \ref{sampleFOU} illustrates a path of the simulated fractional Ornstein-Uhlenbeck described in (\ref{simfou}) with parameters described in Table \ref{tab:rough OU}. By Monte Carlo, we can compute $Y_0 = \mathbb{E}_Q[(K-S_{m})^+]$ in (\ref{yn}), where $Q$ is a martingale measure one has to choose. In order to check the accuracy of Neural Networks in estimating value functions and optimal controls in a complete market, we deliberately set the correlation $\rho=1$. Then, in our experiment, we are in a complete market setting where $Y_0$ is the unique risk neutral price of the put option.

We consider a short - maturity $T=\frac{1}{12}$ and at-the-money $S_0 = K=100$ case. The short-maturity regime $T=\frac{1}{12}$ and $\rho=1$ impose

\begin{equation}\label{mnumberRV}
m = \Big \lceil \frac{\epsilon^{-2}}{\chi_d}\frac{1}{12}\Big\rceil
\end{equation}
number of periods in the dynamic programming equation, where $\chi_d=1$. The parameters are summarized below.

\begin{table}[h]
\centering
\caption{Rough volatility model}
\begin{tabular}{ccccccccccc}
\hline
\text{Maturity} & $K$ & $\mu_{\text{drift}}$ & $\beta$ & $H$ & $S_0$ &  $\varkappa$ & $\zeta$ & $z_0$ & \text{long-run level of vol} & $\rho$\\
\hline
$\frac{1}{12}$ & 100 & 0.08 & 3 & 0.1& 100& (\ref{varkappap}) & 1.5 & $\log$ 0.2 & 0.2 & 1\\
\hline
\end{tabular}
\label{tab:rough OU}
\end{table} 



\begin{figure}[!t] 
\centering
\vspace*{-7cm}
\includegraphics[width=\linewidth, keepaspectratio, trim=0 11cm 0 0,clip]{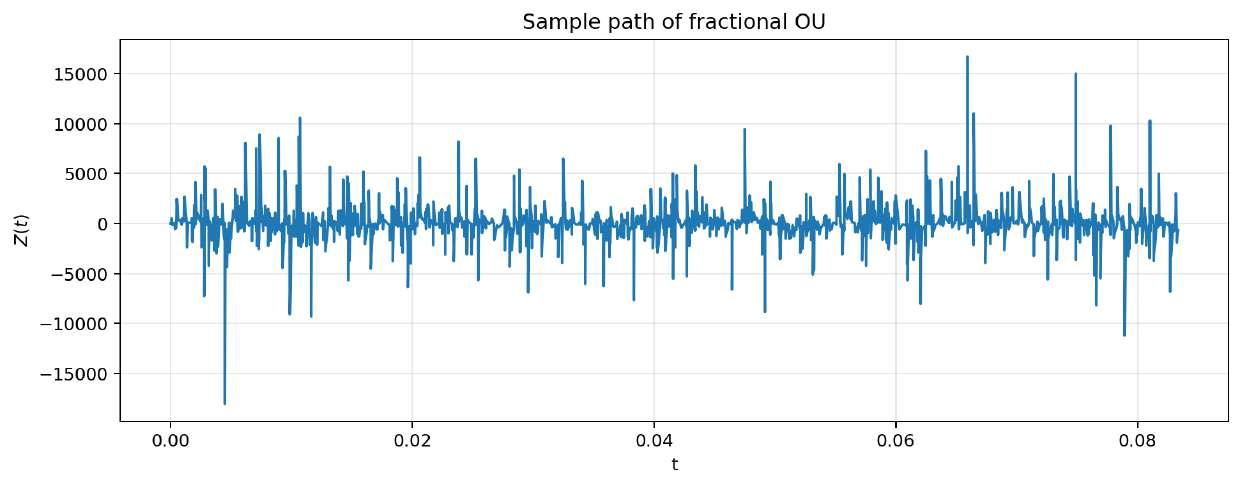}
\caption{Sample path of fractional Ornstein-Uhlenbeck on the interval $[0,\frac{1}{12}]$ with parameters of Table 2, kernels (\ref{RKHker}) and discretization level $\epsilon=2^{-7}$}
\label{sampleFOU}
\end{figure}

%

\subsubsection{Off-model exploration for rough volatility}

The training measure $\mu$ used in the experiments are given by 

\begin{equation}\label{realexpst}
\mu=\mathcal{Z}_{\#}\beta, \quad \beta= U[-r_{\text{train}},r_{\text{train}}]\times U[-dS_{\max}, +dS_{\max}],
\end{equation}
where $U[-r_{\text{train}},r_{\text{train}}]$ denotes the uniform distribution over a compact set $[-r_{\text{train}},+r_{\text{train}}]$ for $r_{\text{train}} \in (0,1]$. Then, we make the exploration training scheme outside the model. If you choose $dS_{\max}$ arbitrarily, then it might be too large implying unrealistic transitions. If it is too small, then we would face poor exploration. The code first generates a small number of rough volatility paths
$$\Delta S_{i,n}= \mu_{\text{drift}} S_{i,n-1}\Delta T_1 + S_{i,n-1} \vartheta(V_{i,n-1})\Delta A_1$$
$$V_{i,n-1} = \exp(Z_{i,n-1})$$
using the same discretization as the real experiment and compute $\max_{i,n}|\Delta S_{i,n}|$. To avoid truncating the support too aggressively, the code multiplies by a safety factor $dS_{\max}=1.10\times \max_{i,n}|\Delta S_{i,n}|$. This gives the largest increment observed in the pre-simulation.

\subsubsection{The training method for Rough volatility}

Skeleton increments are generated by the Burq--Jones hitting-time scheme \cite{Burq_Jones2008}. Asset paths follow the embedded Euler dynamics as described in Section \ref{EXsection}. Fractional Brownian motion is generated via $\mathscr{D}$ and the mappings $(K_{H,1}, H_{H,2})$ as described in (\ref{discWH1}) and (\ref{discWH2}). See also Lemma \ref{repWH}. 

Optimization is performed by stochastic gradient descent of AdamW-type with a time-adaptive learning-rate schedule designed to mitigate Monte Carlo noise in the backward recursion:
\[
\mathrm{lr}_n = 10^{-4}\Big(1-\frac{n}{m}\Big)+10^{-6};~n=m-1, \ldots,0,
\]
where the number of steps $m$ follows (\ref{mnumberRV}). Gradient clipping is applied for stability. Table \ref{NNp} presents the configuration of the Neural Networks used in solving the dynamic programming equations for the rough volatility model (Rough SV model). 

\begin{small}
\begin{table}[h]\label{NNp}
\centering
\caption{Neural network and training parameters for the rough SV model}
\begin{tabular}{cccccccc}
\hline
Experiment & Control iters & Value iters & Learning rate & Width & RB & Dropout \\
\hline
Rough & 200 & 150 & $[5\cdot10^{-5},10^{-4}]$ adaptive & 256 & 6 & 0.2 \\
\hline
\end{tabular}
\label{NNp}
\end{table}

\end{small}
The quantities reported in Table~\ref{NNp} have the following interpretations.

\begin{itemize}

\item \emph{Control iters} is the number of stochastic gradient steps used to optimize the control network at each stage $n$, corresponding to the numerical minimization of the conditional variance functional defining the optimal strategy.

\item \emph{Value iters} is the number of stochastic gradient steps used to fit the value network at each stage $n$, approximating the continuation value appearing in the dynamic programming recursion.

\item \emph{Learning rate} refers to the step size of the stochastic gradient descent (AdamW variant). A stage-dependent adaptive schedule is employed, decreasing along the backward recursion to mitigate the accumulation of Monte Carlo noise.

\item \emph{Width} denotes the number of neurons in each hidden layer of the fully connected feed-forward networks and controls the expressive capacity of the approximation space.

\item \emph{RB (Residual blocks)} refers to the number of skip-connected blocks used to construct the deep feed-forward architectures, each block consisting of two affine layers with nonlinear activation and normalization.

\item \emph{Dropout} is the probability of random neuron deactivation during training and acts as a regularization mechanism preventing overfitting to Monte Carlo sampling noise.
\end{itemize}

\subsection{Off-policy training in the rough-volatility model}\label{offroughsec}

We now discuss the numerical behavior of the off-policy training procedure in the rough-volatility setting. The purpose of the following experiments is twofold. First, we investigate how the quality of the learned mean-variance hedging strategy evolves as the embedded discretization is refined. Second, we analyze the sensitivity of the method w.r.t. the exploratory training radius \(r_{\mathrm{train}}\) that defines the off-policy control distribution.

In all the experiments reported below, the final P\&L corresponds to the hedging error of an at-the-money European put evaluated under the rough-volatility model, while the training data are generated off-policy through the exploratory law described previously. More precisely, the off-model training wealth states are built from exploratory controls sampled from a uniform law on $[-r_{\mathrm{train}},r_{\mathrm{train}}]$, together with exploratory increments \(\Delta S^{\mathrm{train}}\) sampled from a bounded interval calibrated from the underlying numerical scheme. The resulting tables therefore provide direct information on how the choice of discretization level and the width of the exploratory distribution affect the quality of the learned hedge.

We first fix the exploratory parameter at $r_{\mathrm{train}} = 0.5$ and examine the behavior of the terminal P\&L as the discretization level \(k\) increases. The corresponding statistics are reported in Table \ref{tab:offpolicy_pnl_quantiles_k}.

\begin{small}
\begin{table}[htbp]
\centering
\caption{Diagnostic statistics for the P\&L of an ATM European put. Off-policy rough-volatility experiment with \(r_{\mathrm{train}}=0.5\).}
\begin{tabular}{c c c c c c}
\hline
$k$ & $m$ & Mean P\&L & $\mathrm{Var}(\text{P\&L})$ & $q_{5\%}$ & $q_{1\%}$ \\
3 & 6 & 0.036333 & 12.305737 & -4.622572 & -5.780918 \\
4 & 22 & 0.071766 & 6.220595 & -4.087614 & -5.981536 \\
5 & 86 & -0.010717 & 2.463002 & -2.620666 & -3.595436 \\
6 & 342 & 0.019073 & 0.537016 & -1.107734 & -1.628304 \\
\hline
\end{tabular}
\label{tab:offpolicy_pnl_quantiles_k}
\end{table}
\end{small}

The first striking feature of Table \ref{tab:offpolicy_pnl_quantiles_k} is the strong and systematic monotone reduction in the variance of the terminal P\&L as \(k\) increases. The monotone decrease is the most important quantitative signal in the table. It shows that the off-policy training procedure becomes substantially more stable as the random skeleton is refined. In particular, the monotone drop from \(6.220595\) at \(k=4\) to \(0.537016\) at \(k=6\) is especially significant and indicates that the method enters a much more accurate numerical regime once the discretization is sufficiently fine.

The lower-tail diagnostics exhibit the same behavior. The empirical \(5\%\) and \(1\%\) quantiles improve sharply with \(k\). Refining the discretization not only reduces the global dispersion of the P\&L, but also makes the extreme downside scenarios considerably less severe. This is financially important, since the quantiles capture the large-loss events of the hedging strategy. In particular, the improvement from \(k=3\) to \(k=6\) is substantial, with the \(1\%\) quantile moving from approximately \(-5.78\) to approximately \(-1.62\). The mean P\&L remains very close to zero throughout the table confirming that we are in a complete market setting with $\rho=1$. In the present rough-volatility setting, variance reduction and left-tail improvement are the dominant consequences of refinement.



\begin{figure}[htbp]
\vspace{-4cm}
\centering
\captionsetup{skip=0pt}
\includegraphics[width=0.85\textwidth,trim={0 8cm 0 0},clip]{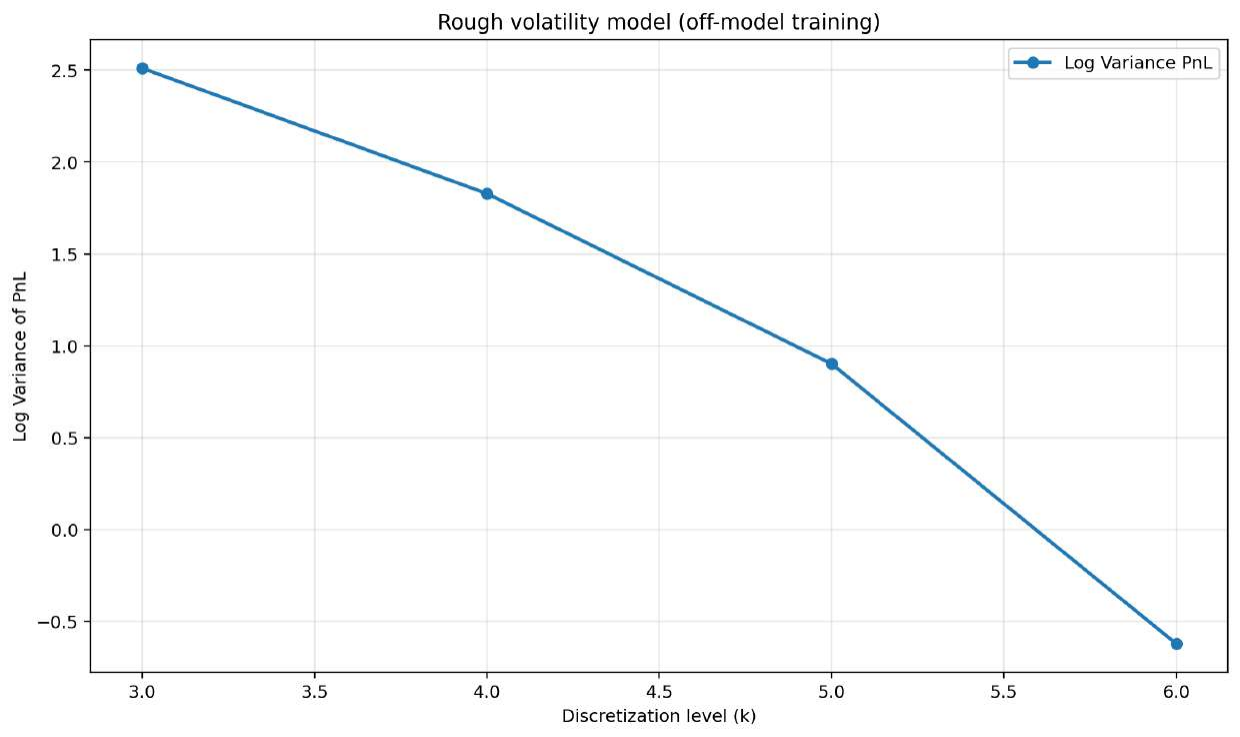}
\caption{Empirical $\mathrm{Var}(\mathrm{P\&L})$ as a function of the discretization level (Rough SV model with $\mathbb{E}[V_T]=0.2$, $r_{\mathrm{train}}=0.5$).}
\label{sweeping}
\end{figure}

Figure \ref{sweeping} investigates the empirical variance of the $\text{P\&L}_k$ as the discretization level $\epsilon_k\downarrow 0$. 
The approximately linear decrease of $\log(\mathrm{Var}(\mathrm{P\&L}))$ indicates an exponential decay of the hedging error,
$$
\mathrm{Var}(\mathrm{P\&L}_k)\approx C2^{-\alpha k},
$$
for $k\ge 3$ confirming convergence of the numerical scheme. Table \ref{tab:offpolicy_rtrain_rough} presents the effect of the exploration radius \(r_{\mathrm{train}}\) in the off-policy training law, for fixed $k=6$. The results show that the dependence on \(r_{\mathrm{train}}\) is clearly non-monotone. Among the four tested values, the choice $r_{\mathrm{train}} = 1.0$ produces the best overall performance. The case \(r_{\mathrm{train}}=1.0\) also gives the most favorable empirical quantiles: $q_{5\%}=-0.540655$ and $q_{1\%}=-0.611763$, whereas all the other tested values lead to more negative tail outcomes. In particular, the deterioration is substantial when \(r_{\mathrm{train}}\) is 0.5 and 0.75. For example, when \(r_{\mathrm{train}}=0.5\), the \(1\%\) quantile is \(-1.628304\), and when \(r_{\mathrm{train}}=0.75\), it remains at \(-1.655443\). Hence, the value \(r_{\mathrm{train}}=1.0\) is again the most favorable choice from the viewpoint of protection against large losses. 

From a practical viewpoint, Table \ref{tab:offpolicy_rtrain_rough} shows that the exploration radius is not a secondary numerical detail, but a genuine tuning parameter of the methodology. A similar pattern is observed for several other seeds, where \(r_{\mathrm{train}}=1.00\) often yields either the smallest variance or one of the best overall trade-offs between variance and lower-tail risk. This suggests that, in the present rough-volatility setting, a wider exploration range may help the backward dynamic programming algorithm learn a more robust control across the relevant region of the state space. Intuitively, when \(r_{\mathrm{train}}\) is too small, the off-policy wealth states used during training may fail to sufficiently explore the part of the state space visited by the optimal policy, which can lead to a poorer approximation of the continuation values. On the other hand, the results also show that this effect is not completely uniform across seeds. We therefore interpret \(r_{\mathrm{train}}=1.00\) not as a universally optimal choice, but as the most consistently favorable exploration level among those tested.

\begin{small}
\begin{table}[htbp]
\centering
\caption{Diagnostic statistics for the P\&L of an ATM European put. Off-policy rough-volatility experiment with varying \(r_{\mathrm{train}}\).}
\begin{tabular}{c c c c c}
\hline
$r_{\mathrm{train}}$ & Mean P\&L & $\mathrm{Var}(\text{P\&L})$ & $q_{5\%}$ & $q_{1\%}$ \\
\hline
0.25 & 0.036721 & 0.296224 & -0.681063 & -0.897237 \\
0.50 & 0.019073 & 0.537016 & -1.107734 & -1.628304 \\
0.75 & 0.020382 & 0.524207 & -1.116670 & -1.655443 \\
1.00 & 0.023102 & 0.255494 & -0.540655 & -0.611763 \\
\hline
\end{tabular}
\label{tab:offpolicy_rtrain_rough}
\end{table}
\end{small}

Figure~\ref{histogram} indicates that the terminal P\&L is tightly centered around zero and exhibits an approximately bell-shaped distribution. The empirical mean is small compared with the standard deviation, so the distribution does not reveal a substantial systematic bias. At the same time, the right tail is slightly more pronounced than the left one, suggesting a mild positive skewness.



\begin{figure}[htbp] 
\vspace*{-4cm}
\centering
    \includegraphics[width=\linewidth, height=11cm, trim=0 9.1cm 0 0,clip]{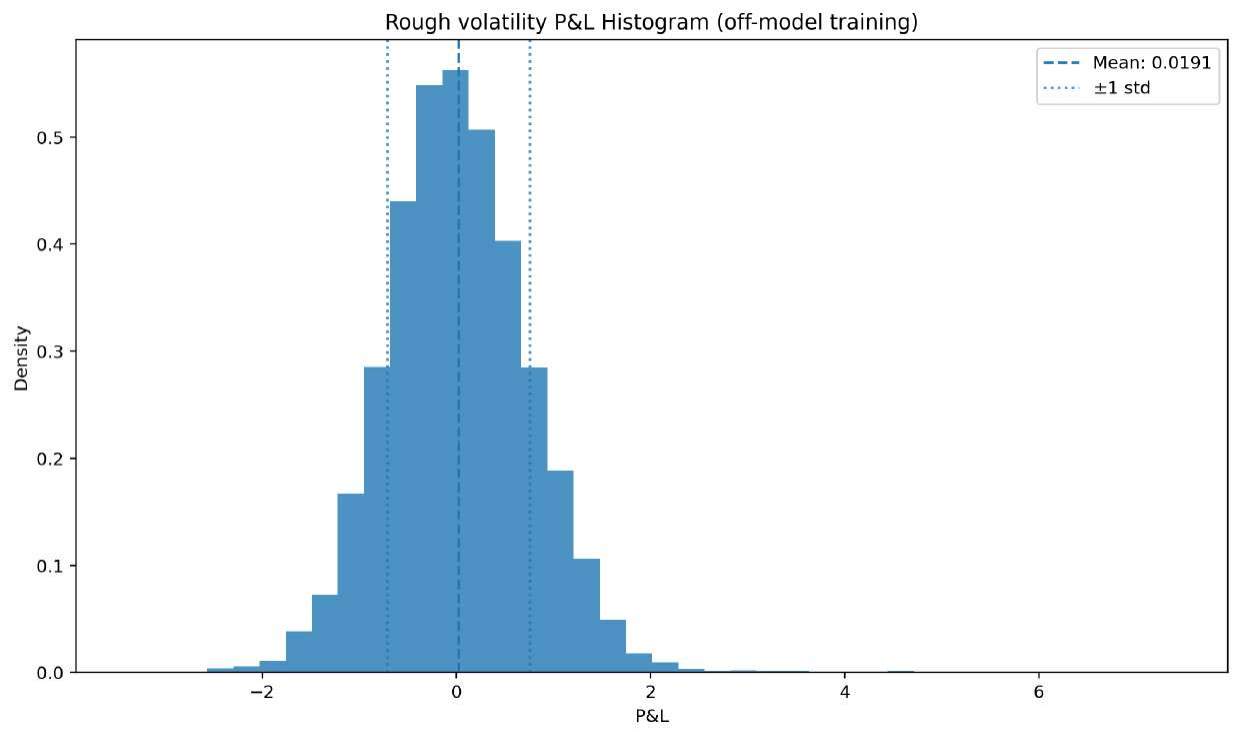}
    \caption{Histogram of P\&L. Rough volatility. ATM Put with $S_0=100=K$. Number of Monte Carlo samples = 7000. $r_{\text{train}} = 0.5$}
    \label{histogram}
\end{figure}

\subsection{A structured random-skeleton importance-sampling experiment under model risk}
\label{subsec:structured_is_experiment}

The goal of this experiment is to illustrate, in a simple Markovian setting, the model-risk adaptation mechanism developed in Section \ref{adaptivesec}. More precisely, we aim to test the adaptive importance-sampling implementation in a setting where the Bellman structure is transparent and the effect of parameter misspecification can be studied explicitly. The experiment is designed to remain faithful to the philosophy of the paper: one fixes a single proposal law, one generates a single Monte Carlo training dataset once and for all, and a change in the model parameter is handled by reweighting the Bellman regressions rather than regenerating the training sample.

At the continuous-time level, this toy problem corresponds to the linear controlled diffusion
\begin{equation}\label{toysde}
dX^u(t)=(u(t)-\theta)\,dt+\sigma\,dB(t),
\qquad 0\le t\le T,
\qquad T=1,
\end{equation}
where $\sigma =0.5, T=1, \theta \in \Theta \in \{1.20,1.35,1.50,1.65,1.80\}, \mathbb{A}=[-1,1]$ and we aim to solve
\[
\inf_{u\in U_0^T}\;\mathbb E\big[(X^u(T))^2\big].
\]

The discretization scheme associated with (\ref{toysde}) is described as follows:  The dynamics of $\Delta X^a_n$ conditioned on $\Xi_{n-1}$ is 

$$\Delta X^a_n|\Xi_{n-1}\stackrel{(d)}{=}(a-\theta)J + \sigma \mathbf{B}$$
whose density is 
\[
\mathcal R^\theta(a;y)
=
\frac{1}{2|a-\theta|}
\sum_{s\in\{+1,-1\}}
f_J\!\left(\frac{y-s\sigma\varepsilon}{a-\theta}\right)
\mathbf 1_{\left\{\frac{y-s\sigma\varepsilon}{a-\theta}\in[\underline M,\overline M]\right\}},
\]
where \(f_J\) denotes the truncated density of \(J\), $\mathbf{B}\stackrel{(d)}{=}\text{Bernoulli}$ taking values  $\pm 2^{-4}$, $\underline{M}=10^{-4}$ and $\overline{M}=5$. For a fixed proposal density $q$ used to generate the training sample for the increments, the corresponding importance-sampling weight is
\[
r^\theta(a,y)=\frac{\mathcal R^\theta(a;y)}{q(y)},
\]
for $\theta \in \Theta, a \in [-1,1]$ and $y \in K$ for a compact set $K$.  

A central numerical issue is the choice of the support $K$ of the proposal law. A broad worst-case support leads to an excessively diffuse proposal and poor effective sample size, while a local support better matches the region actually explored by the training procedure. In the numerical experiment, we therefore proceed as follows. First, before any Bellman training is performed, we generate an exploratory cloud of one-step increments under the reference model \(\theta_{\mathrm{ref}}\). More precisely, we sample
\[
A^{\mathrm{explore}}\stackrel{(d)}{=} \mathrm{Unif}[-1,1],
\qquad
J\stackrel{(d)}{=} f_J,
\qquad
\mathbf B\in\{\pm 2^{-4}\},
\]
independently, and define
\[
Y^{\mathrm{explore}}
=
\bigl(A^{\mathrm{explore}}-\theta_{\mathrm{ref}}\bigr)J+\sigma\mathbf B.
\]
From this exploratory sample we compute empirical lower and upper quantiles, and we then enlarge them by a small safety margin. This produces a compact interval
\[
K=[K_{\min},K_{\max}]
\]
which is local to the region effectively visited by the exploratory dynamics under the reference model. In the implementation used for the final experiment, the lower and upper quantiles were chosen as \(0.5\%\) and \(99.5\%\), respectively, and a relative safety margin equal to \(5\%\) of the empirical quantile width was added on both sides. Thus \(K\) is not a worst-case support over the full action-parameter space, but rather a fixed local support determined by pre-sampling.

Once \(K\) is fixed, the proposal density is taken to be the uniform law on \(K\):
\[
q(y)=\frac{1}{|K|}\mathbf 1_K(y).
\]
This proposal is generated only once and is kept unchanged throughout the whole experiment. All adaptive information is then carried by the Radon--Nikodym factors \(r^\theta(a,y)\).

After the proposal support \(K\) has been fixed, the Monte Carlo training dataset is generated once and for all. First, a cloud of training states \((X_j^{(p)})\) is produced recursively under exploratory actions sampled uniformly from \([-1,1]\) and under a reference parameter $\theta_{\mathrm{ref}}=1.50 \in \Theta$. More precisely,
\[
X_{j+1}^{(p)}
=
X_j^{(p)}
+
\bigl(A_j^{(p)}-\theta_{\mathrm{ref}}\bigr)J_{j+1}^{(p)}
+
\sigma \mathbf B_{j+1}^{(p)},
\qquad
A_j^{(p)}\stackrel{(d)}{=} \mathrm{Unif}[-1,1].
\]
Second, independently of these states, a fixed collection of proposal increments
\[
Y_j^{(p)}\stackrel{(d)}{=} q=\mathrm{Unif}(K)
\]
is sampled for every time step \(j\) and every Monte Carlo index \(p\). The pair
\[
\bigl(X_j^{(p)},Y_j^{(p)}\bigr),
\qquad
1\le p\le M,\quad 0\le j\le m-1,
\]
is then reused for all parameter values \(\theta_{\mathrm{true}} \in \Theta_{\mathrm{true}}\) considered in the experiment, where 

$$\Theta_{\mathrm{true}}:= \{1.20,1.35, 1.65,1.80\}.$$

At the reference parameter \(\theta_{\mathrm{ref}}=1.50\), we solve the backward dynamic programming recursion once on this fixed training dataset. This yields a family of approximate reference controls and value functions
\[
\bigl(\widehat a_j^{\,\mathrm{ref}},\widehat{\mathbb V}_j^{\,\mathrm{ref}}\bigr)_{0\le j\le m-1}.
\]
Next, we define what we call Frozen and adaptive IS modes. The Frozen mode is the non-adaptive benchmark. When the true parameter is changed from \(\theta_{\mathrm{ref}}\) to some \(\theta_{\mathrm{true}}\in\Theta_{\mathrm{true}}\), no Bellman update is performed. One simply keeps using the already-trained reference policy. Thus
\[
X_n^{\mathrm{Frozen}}
:=
X_{n-1}^{\mathrm{Frozen}}
+
\Bigl(
\widehat a_{n-1}^{\,\mathrm{ref}}(X_{n-1}^{\mathrm{Frozen}})
-\theta_{\mathrm{true}}
\Bigr)J
+
\sigma \mathbf B.
\]
Frozen therefore measures the cost of ignoring the model update and serves as the baseline against which the adaptive IS policy is compared.

The adaptive IS mode implements the importance-sampling update on the \emph{same fixed training dataset}. No new training cloud is generated when \(\theta\) is changed. Instead, only the Bellman regressions are reweighted by
\[
r^{\theta_{\mathrm{true}}}(a,y)
=
\frac{\mathcal R^{\theta_{\mathrm{true}}}(a;y)}{q(y)}, \theta_{\mathrm{true}} \in \Theta_{\mathrm{true}}. 
\]
Thus, at the control step, the empirical Bellman criterion uses the weighted objective
\begin{equation}\label{isq1}
\frac1M\sum_{p=1}^M
r^{\theta_{\mathrm{true}}}\!\big(a_j(X_j^{(p)}),Y_j^{(p)}\big)\,
\widehat{\mathbb V}^{\,\theta_{\mathrm{true}}}_{j+1}\bigl(X_j^{(p)}+Y_j^{(p)}\bigr),
\end{equation}
while the value step uses the weighted regression
\begin{equation}\label{isq2}
\frac1M\sum_{p=1}^M
r^{\theta_{\mathrm{true}}}\!\big(a_j(X_j^{(p)}),Y_j^{(p)}\big)\,
\Bigl(
\Phi_j(X_j^{(p)})
-
\widehat{\mathbb V}^{\,\theta_{\mathrm{true}}}_{j+1}\bigl(X_j^{(p)}+Y_j^{(p)}\bigr)
\Bigr)^2,
\end{equation}
for $\theta_{\mathrm{true}} \in \Theta_{\mathrm{true}}$. Hence the adaptive content of the experiment lies entirely in the change of the Radon--Nikodym factors. The training sample itself is left untouched.


After the backward training step has produced the Frozen and adaptive IS policies, the two policies are compared out of sample on a common bank of forward random-skeleton trajectories. For each \(\theta_{\mathrm{true}}\in\Theta_{\mathrm{true}}\), we generate a collection of independent forward noises
\[
\bigl(J_1^{(p)},\dots,J_m^{(p)},\mathbf B_1^{(p)},\dots,\mathbf B_m^{(p)}\bigr)_{1\le p\le N_{\mathrm{eval}}},
\]
with \(N_{\mathrm{eval}}=8000\), and we use \emph{the same} bank of trajectories for Frozen and IS. Thus, for each mode \(\pi\in\{\mathrm{Frozen},\mathrm{IS}\}\),
\[
X_n^{\pi,(p)}
=
X_{n-1}^{\pi,(p)}
+
\Bigl(
a_{n-1}^{\pi}(X_{n-1}^{\pi,(p)})
-\theta_{\mathrm{true}}
\Bigr)J_n^{(p)}
+
\sigma \mathbf B_n^{(p)},
\qquad
X_0^{\pi,(p)}=x_0.
\]
Here, $a_{n-1}^{\mathrm{ref}}:= \hat{a}^{\text{ref}}_{n-1}$ is computed under $\theta_{\mathrm{ref}}$ and $a_{n-1}^{\mathrm{true}}:= \hat{a}^{\text{IS}}_{n-1} $ is computed via (\ref{isq1}) and (\ref{isq2}), for $\theta_{\mathrm{true}} \in \Theta_{\mathrm{true}}$.  

The reported performance statistic is
\[
\mathrm{L}(\pi,\theta_{\mathrm{true}})
=
\frac1{N_{\mathrm{eval}}}
\sum_{p=1}^{N_{\mathrm{eval}}}
\bigl(X_m^{\pi,(p)}\bigr)^2,
\]
together with the corresponding Monte Carlo standard error.


The experiment shows how a parameter update can be incorporated through importance-sampling. Table \ref{tab:frozen_vs_is_localK} shows that $\text{IS L} = \mathrm{L}(\text{IS},\theta_{\mathrm{true}})$  is consistently smaller than $\text{Frozen L}= \mathrm{L}(\text{Frozen},\theta_{\mathrm{true}})$ across distinct $\theta_{\mathrm{true}}\neq \theta_{\mathrm{ref}}$ which confirms the efficiency of the importance sampling scheme in mitigating model risk.  

\begin{table}[ht!]
\centering
\begin{tabular}{ccccc}
\toprule
$\theta_{\mathrm{true}}$ & Frozen L & IS L & Frozen s.e. & IS s.e. \\
\midrule
1.20 & 0.347363 & 0.270243 & 0.005343 & 0.004279 \\
1.35 & 0.459942 & 0.390208 & 0.006429 & 0.005683 \\
1.65 & 0.834852 & 0.693630 & 0.009634 & 0.008512 \\
1.80 & 1.071468 & 0.958361 & 0.011176 & 0.010458 \\
\bottomrule
\end{tabular}
\caption{Out-of-sample comparison between the Frozen policy and the adaptive IS policy under local pre-sampled proposal support.}
\label{tab:frozen_vs_is_localK}
\end{table}

\section{Proof of Theorem \ref{mainresult}}\label{appendix1}
In this section we present the complete proof of Theorem \ref{mainresult}. For this purpose, we need to introduce the following objects. For a given $C \in \mathbb{A}^{\mathbb{H}^{n-1}}$, we set 

$$O^C_n:= \big(O_{n-1}, \mathcal{W}_n, \Delta X^C_n\big); 1\le n\le m.$$

For a given $C \in \mathcal{C}_M(n-1)$, the Monte Carlo samples of these objects are naturally defined by

$$O^{(p),C}_n:= \big(O^{(p)}_{n-1}, \mathcal{W}^{(p)}_n, \Delta X^{(p),C}_n\big),$$
where 

$$O^{(p)}_{n-1}:= \big(\mathcal{W}^{(p)}_1, Y^{(p)}_1, \ldots, \mathcal{W}^{(p)}_{n-1}, Y^{(p)}_{n-1}  \big),$$

$$\Delta X^{(p),C}_n := \blacktriangle x_n \big(\pi_2(O^{(p)}_{n-1}),C(O^{(p)}_{n-1}) , \Delta T^{(p)}_n,\ell_n(\pi_3(O^{(p)}_{n-1}),\mathcal{W}^{(p)}_n)\big) $$
for $n=m,\ldots, 1$ and $1\le p\le M$. For $C \in \mathcal{C}_M(n)$, we set

\begin{equation}\label{Zpc}
\hat{Z}^{(p),C}_{n+1}:= \hat{\mathbb{V}}^M_{n+1} \big( O^{(p),C}_{n+1}\big)
\end{equation}
and 
\begin{equation}\label{Zc}
\hat{Z}^{C}_{n+1}:= \hat{\mathbb{V}}^M_{n+1} \big( O^{C}_{n+1}\big),
\end{equation}
for $n=m-1, \ldots, 0$. Starting with $\bar{ \mathbb{V}}^M_m(\mathbf{o}_m):= g(x_0 + \sum_{j=1}^m y_j)$, we define recursively

\begin{equation}\label{vbar}
\bar{\mathbb{V}}^M_n(\mathbf{o}_n):= \mathbb{E}\big[ \widehat{\mathbb{V}}^M_{n+1}(\mathbf{o}_n, \mathcal{X}^{\hat{a}^M_n}_{n+1}(\mathbf{o}_n  )) \big],
\end{equation}
where, guided by (\ref{fXc}), (\ref{Xc}) and (\ref{deltacxn}), we set 

$$\mathcal{X}^{\hat{a}^M_{n}}_{n+1}(\mathbf{o}_n):= \Big(\mathcal{W}_{n+1}, \blacktriangle x_{n+1} \big(\pi_2(\mathbf{o}_{n}), \hat{a}^M_{n}(\mathbf{o}_n), \Delta T_{n+1},\ell_{n+1}(\pi_3(\mathbf{o}_{n}),\mathcal{W}_{n+1})\big) \Big),$$
for $\mathbf{o}_n= (w_1,y_1, \ldots, w_n,y_n) \in \mathbb{H}^n$, $\hat{a}^M_n \in \mathcal{C}_M(n)$, with $n=m-1, \ldots, 0$.

 Recall that $\nu$ and $\mu$ are the law of $\mathcal{W}_1$ and $\Delta X_1$, respectively. For a given training probability measure $\mu \in \mathcal{P}(\mathbb{R}^n)$, let us define 

$$\lambda_j:= \text{j-fold product measure of}~\nu \otimes \mu,$$
for $1\le j\le m-1$. 

\begin{remark}
We observe that
$$\bar{\mathbb{V}}^M_n(\cdot) = \mathbb{E}\big[ \widehat{\mathbb{V}}^M_{n+1}(\cdot, \mathcal{X}^{\hat{a}^M_n}_{n+1}(\cdot  ))| O_n=\cdot \big],$$
is the quantity estimated by $\widehat{\mathbb{V}}^M_n (\cdot)$ via the Feedforward Neural Networks $\mathcal{V}_M(n)$ for $n=m-1, \ldots, 0$ in (\ref{ERMoptc4}).  
\end{remark}

For a given sequence $\{C_{\ell}; n\le \ell\le m-1 \}\subset \mathbb{A}^{\mathbb{H}^{m-1}}\times \mathbb{A}^{\mathbb{H}^{m-2}}\times \ldots \times \mathbb{A}^{\mathbb{H}^{n}}$, we want to define $J^{\{C_\ell\}_{\ell=n}^{m-1}}_n \in \mathcal{V}_M(n)=\mathcal{V}_M(m-1)|_n$ in such way that (see page 12 in \cite{hure})

\begin{equation}\label{apcoin}
\mathbb{E}_M[J^{\{\hat{a}^M_\ell\}_{\ell=n}^{m-1}}_n(O_n)]= \mathbb{E}_M[\widehat{\mathbb{V}}^M_n(O_n)]
\end{equation}
for $n=m-1, \ldots, 0$. For this purpose, we set $J_m := \mathbb{V}_m$ and we define recursively

$$J^{(C_\ell)_{\ell=n}^{m-1}}_{n}(\cdot):= \text{empirical}~L^2(\lambda_n)~\text{estimator of}~\mathbb{E}[J^{C_{n+1}}_{n+1}(O_n,\mathcal{X}^{C_n}_{n+1})| O_n = \cdot] $$
for $n=m-1, \ldots, 0$ via $\mathcal{V}_M(n)$. By construction, (\ref{apcoin}) holds true.

Now, by the very definition,  

\begin{equation}\label{s1}
\bar{\mathbb{V}}^M_n(O_n) \ge \inf_{a \in \mathbb{A}}\mathbb{E}_{M}[\widehat{\mathbb{V}}^M_{n+1}(O^a_{n+1})]
\end{equation}
so that 

$$
\mathbb{E}_M\Bigg[\bar{\mathbb{V}}^M_n(O_n) - \inf_{a \in \mathbb{A}}\mathbb{E}_{M}[\widehat{\mathbb{V}}^M_{n+1}(O^a_{n+1})]\Bigg]\ge 0,
$$
for $n=m-1,\ldots, 0$. Let us denote the \textbf{estimation error} 

\begin{equation}\label{errorEST}
\varepsilon^{\text{esti}}_n:=\sup_{C \in \mathcal{C}_M(n)} \Bigg| \frac{1}{M} \sum_{p=1}^M \hat{Z}^{(p),C}_{n+1}  - \mathbb{E}_{M}[\widehat{\mathbb{V}}^M_{n+1}(O^C_{n+1})] \Bigg|,
\end{equation}
for $n=m-1,\ldots, 0$.
\begin{lemma}\label{p1lemma}
\begin{eqnarray}
\label{s2}\mathbb{E}_M[\bar{\mathbb{V}}^M_n(O_n)]&\le& \mathbb{E}_M[\widehat{\mathbb{V}}^M_{n+1}(O^{\hat{a}^M_n}_{n+1})]\\
\label{s3}&\le& \inf_{C \in \mathcal{C}_M (n)} \mathbb{E}_M[\widehat{\mathbb{V}}^M_{n+1}(O^{C}_{n+1})] + 2 \varepsilon^{\text{esti}}_n
\end{eqnarray}
for $n=m-1, \ldots, 0$. 
\end{lemma}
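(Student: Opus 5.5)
Both inequalities will follow from the definitions together with the empirical-risk-minimization characterization of $\hat{a}^M_n$; this is the standard argument of \cite{hure} adapted to our skeleton.

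\textbf{Step 1: inequality (\ref{s2}).} By (\ref{vbar}), $\bar{\mathbb{V}}^M_n(\mathbf{o}_n)$ is the conditional expectation of $\widehat{\mathbb{V}}^M_{n+1}(\mathbf{o}_n,\mathcal{X}^{\hat a^M_n}_{n+1}(\mathbf{o}_n))$ over the fresh noise $\mathcal{W}_{n+1}$. The network $\hat a^M_n$ and the truncated value $\widehat{\mathbb{V}}^M_{n+1}$ are measurable with respect to the training set, and $O_n$ is an independent copy. Hence, integrating in $\mathbf{o}_n$ against $\lambda_n$ under $\mathbb{E}_M$ and using Fubini (all functions are bounded by $\|\mathbb{V}_{n+1}\|_\infty$ thanks to the truncation and T1), I will obtain
$$\mathbb{E}_M[\bar{\mathbb{V}}^M_n(O_n)]=\mathbb{E}_M[\widehat{\mathbb{V}}^M_{n+1}(O^{\hat a^M_n}_{n+1})].$$
This gives (\ref{s2}), in fact with equality.

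\textbf{Step 2: inequality (\ref{s3}).} Fix an arbitrary $C\in\mathcal{C}_M(n)$. Since $\hat a^M_n$ minimizes the empirical criterion in (\ref{ERMoptc3}) (interpreted, per Remark \ref{Ondisc}, as the Monte Carlo average over the $M$ samples),
$$\frac1M\sum_{p=1}^M \hat Z^{(p),\hat a^M_n}_{n+1}\le \frac1M\sum_{p=1}^M \hat Z^{(p),C}_{n+1}.$$
I will add and subtract the empirical means to write
$$\mathbb{E}_M[\widehat{\mathbb{V}}^M_{n+1}(O^{\hat a}_{n+1})]=\Big(\mathbb{E}_M[\widehat{\mathbb{V}}^M_{n+1}(O^{\hat a}_{n+1})]-\tfrac1M\sum_p\hat Z^{(p),\hat a}_{n+1}\Big)+\tfrac1M\sum_p\hat Z^{(p),\hat a}_{n+1}.$$
The bracketed term is bounded by $\varepsilon^{\text{esti}}_n$, since $\hat a^M_n\in\mathcal{C}_M(n)$ and (\ref{errorEST}) is a supremum over that class. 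Applying the ERM inequality to the second term and then bounding $\frac1M\sum_p\hat Z^{(p),C}_{n+1}\le \mathbb{E}_M[\widehat{\mathbb{V}}^M_{n+1}(O^C_{n+1})]+\varepsilon^{\text{esti}}_n$, I get
$$\mathbb{E}_M[\widehat{\mathbb{V}}^M_{n+1}(O^{\hat a}_{n+1})]\le \mathbb{E}_M[\widehat{\mathbb{V}}^M_{n+1}(O^{C}_{n+1})]+2\varepsilon^{\text{esti}}_n.$$
Taking the infimum over $C$ then yields (\ref{s3}).

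\textbf{Main obstacle.} The only delicate point is a measurability/independence issue. Conditioning on the training set must freeze $\hat a^M_n$ and $\widehat{\mathbb{V}}^M_{n+1}$, while $(O_n,\mathcal{W}_{n+1})$ in $\mathbb{E}_M$ must remain an independent copy with law $\lambda_n\otimes\nu$. Under this reading Step 1 is exact, and the supremum in (\ref{errorEST}) legitimately covers the data-dependent choice $\hat a^M_n$. I would make this explicit, and assume measurability of the supremum (or replace it with an outer expectation) so that $\varepsilon^{\text{esti}}_n$ is a random variable.
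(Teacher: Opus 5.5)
Your proposal is correct and follows essentially the same route as the paper: (\ref{s2}) comes from the definition (\ref{vbar}), and indeed holds with equality. For (\ref{s3}), you use the same add-and-subtract argument, combining the empirical-minimizer property of $\hat a^M_n$ with two applications of the uniform deviation bound $\varepsilon^{\text{esti}}_n$; the paper's auxiliary functions $J^{(\cdot)}_{n+1}$ play no real role in that step, so your streamlined version loses nothing.
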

\begin{proof}
Inequality (\ref{s2}) follows from (\ref{vbar}) and the fact that $O^{\hat{a}^M_n}_{n+1} = (O_n, \mathcal{W}_{n+1}, \Delta X^{\hat{a}^M_n}_{n+1}\big)$. The analysis of (\ref{s3}) is more involved. Here, we make use of the function $J^{(\hat{a}^M_\ell)^{m-1}_{\ell=n+1}}_{n+1}$ as follows:

$$\mathbb{E}_M\big[\widehat{\mathbb{V}}^M_{n+1}(O^{\hat{a}^M_n}_{n+1})\big]=\mathbb{E}_M\big[J^{(\hat{a}^M_\ell)^{m-1}_{\ell=n+1}}_{n+1}(O^{\hat{a}^M_n}_{n+1})\big]+ \hat{J}^{(\hat{a}^M_\ell)^{m-1}_{\ell=n}}_{n,M} - \hat{J}^{(\hat{a}^M_\ell)^{m-1}_{\ell=n}}_{n,M},  $$
where

$$\hat{J}^{(\hat{a}^M_\ell)^{m-1}_{\ell=n}}_{n,M}:= \frac{1}{M} \sum_{p=1}^M \hat{Z}^{(p),\hat{a}^M_n}_{n+1}.
$$
It is also important to define 

$$\hat{J}^{C, (\hat{a}^M_\ell)^{m-1}_{\ell=n+1}}_{n,M}:= \frac{1}{M} \sum_{p=1}^M \hat{Z}^{(p),C}_{n+1},$$
for $C \in \mathcal{C}_M(n)$, where we recall $\hat{Z}^{(p),C}_{n+1}=\widehat{\mathbb{V}}^M_{n+1}\big( O^{(p),C}_{n+1}\big)$. Of course, 

$$\hat{J}^{(\hat{a}^M_\ell)^{m-1}_{\ell=n}}_{n,M} = \hat{J}^{C, (\hat{a}^M_\ell)^{m-1}_{\ell=n+1}}_{n,M}$$ 
for $C = \hat{a}^M_n$. Then, 

$$\mathbb{E}_M[\widehat{\mathbb{V}}^M_{n+1}(O^{\hat{a}^M_n}_{n+1})]\le \varepsilon^{\text{esti}}_n + \hat{J}^{(\hat{a}^M_\ell)^{m-1}_{\ell=n}}_{n,M},$$
for $n=m-1, \ldots, 0$. We again sum and subtract

\begin{equation}\label{s4}
\hat{J}^{C, (\hat{a}^M_\ell)^{m-1}_{\ell=n+1}}_{n,M} = \hat{J}^{C, (\hat{a}^M_\ell)^{m-1}_{\ell=n+1}}_{n,M} - \mathbb{E}_M[\widehat{\mathbb{V}}^M_{n+1}(O^C_{n+1})] + \mathbb{E}_M[\widehat{\mathbb{V}}^M_{n+1}(O^C_{n+1})],
\end{equation}
for $C \in \mathcal{V}_M(n)$ and $n=m-1, \ldots, 0$. For fixed $n=m-1, \ldots, 0$, we use again $\varepsilon^{\text{esti}}_n$ and (\ref{s4}) produces

\begin{equation}\label{s5}
\hat{J}^{C, (\hat{a}^M_\ell)^{m-1}_{\ell=n+1}}_{n,M} \le \varepsilon^{\text{esti}}_n + \mathbb{E}_M[\widehat{\mathbb{V}}^M_{n+1}(O^C_{n+1})],
\end{equation}
for every $C \in \mathcal{V}_M(n)$. Observe that

\begin{equation}\label{s6}
\hat{a}^M_n\in \argmin_{C \in \mathcal{C}_M(m)}\hat{J}^{C, (\hat{a}^M_\ell)^{m-1}_{\ell=n+1}}_{n,M},
\end{equation}
so that (\ref{s5}) and (\ref{s6}) yield

$$\hat{J}^{(\hat{a}^M_\ell)^{m-1}_{\ell=n}}_{n,M}\le  \varepsilon^{\text{esti}}_n + \mathbb{E}_M[\widehat{\mathbb{V}}^M_{n+1}(O^C_{n+1})],$$
for every $C \in \mathcal{C}_M(n)$. Then, for fixed $n=m-1, \ldots, 0$, we have  

$$\mathbb{E}_M[\widehat{\mathbb{V}}^M_{n+1}(O^{\hat{a}^M_n}_{n+1})]\le 2\varepsilon^{\text{esti}}_n + \mathbb{E}_M[\widehat{\mathbb{V}}^M_{n+1}(O^C_{n+1})],$$
for every $C \in \mathcal{C}_M(n)$. This concludes the proof. 
\end{proof}

Next, we introduce the \textbf{approximation error} 

\begin{equation}\label{errorAPP}
\varepsilon^{\text{approx}}_n:= \inf_{C \in \mathcal{C}_M(n)} \mathbb{E}_M[\widehat{\mathbb{V}}^M_{n+1} (O^C_{n+1})] - \inf_{L \in \mathbb{A}^{\mathbb{H}^n}}\mathbb{E}_M[\widehat{\mathbb{V}}^M_{n+1} (O^L_{n+1})],
\end{equation}
for $n=m-1, \ldots, 0$. Summing up (\ref{s1}) and Lemma \ref{p1lemma}, we get 

\begin{eqnarray} 
\nonumber 0&\le& \mathbb{E}_M\Big|\bar{\mathbb{V}}^M_n(O_n) - \inf_{a \in \mathbb{A}}\mathbb{E}_{M}[\widehat{\mathbb{V}}^M_{n+1}(O^a_{n+1})]\Big|\\
\nonumber &\le& 2 \varepsilon^{\text{esti}}_n + \inf_{C \in \mathcal{C}_M(n)} \mathbb{E}_M[\widehat{\mathbb{V}}^M_{n+1}(O^C_{n+1})] -  \inf_{a \in \mathbb{A}} \nonumber\mathbb{E}_M[\widehat{\mathbb{V}}^M_{n+1}(O^a_{n+1})]\\
\label{int1}&\le& 2 \varepsilon^{\text{esti}}_n + \varepsilon^{\text{approx}}_n,
\end{eqnarray}
for every $n=m-1, \ldots, 0$. By the dynamic programming principle, 
\begin{equation}\label{int2}
\sup_{M\ge 1,0\le n\le m } \{\| \bar{\mathbb{V}}^M_n\|_\infty + \| \widehat{\mathbb{V}}^M_{n}\|_\infty\}\le 2 \|g\|_\infty.
\end{equation}
Then, Lemma \ref{p2lemma} below is a straightforward application of Lemma \ref{p1lemma}, (\ref{int1}) and (\ref{int2}).  In the sequel, we denote $\| \cdot\|^2_{M,2}:= \mathbb{E}_M[|\cdot|^2 ]$.

\begin{lemma}\label{p2lemma}
$$\Big\| \bar{\mathbb{V}}^M_n(O_n) - \inf_{a \in \mathbb{A}} \mathbb{E}_{M}\big[\widehat{\mathbb{V}}^M_{n+1}(O^a_{n+1})\big]\Big\|^2_{M,2}\le 2 \|g\|_\infty \{ 2\varepsilon^{\text{esti}}_n + \varepsilon^{\text{approx}}_n  \},$$
for every $n=m-1, \ldots, 0$ and $M >0$. 
\end{lemma}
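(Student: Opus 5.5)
The plan is to combine a uniform sup-norm bound with the $L^1$ estimate (\ref{int1}) by means of the elementary inequality $\mathbb{E}_M|Z|^2\le \|Z\|_\infty\,\mathbb{E}_M|Z|$. Set
\[
Z_n:=\bar{\mathbb{V}}^M_n(O_n) - \inf_{a \in \mathbb{A}} \mathbb{E}_{M}\big[\widehat{\mathbb{V}}^M_{n+1}(O^a_{n+1})\big].
\]

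First, we note that $Z_n\ge 0$ by (\ref{s1}). Hence $|Z_n|=Z_n$, and $\mathbb{E}_M|Z_n|$ coincides with the quantity bounded in (\ref{int1}):
\[
\mathbb{E}_M|Z_n|\le 2\varepsilon^{\text{esti}}_n+\varepsilon^{\text{approx}}_n.
\]

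Second, we bound $Z_n$ in sup norm. Both $\bar{\mathbb{V}}^M_n$ and $\inf_a\mathbb{E}_M[\widehat{\mathbb{V}}^M_{n+1}(O^a_{n+1})]$ are averages of the truncated functional $\widehat{\mathbb{V}}^M_{n+1}$. The truncation in (\ref{ERMoptc4}) forces $|\widehat{\mathbb{V}}^M_{n+1}|\le\|\mathbb{V}_{n+1}\|_\infty\le\|g\|_\infty$, where the last inequality comes from the dynamic programming principle, since every $\mathbb{V}_j$ is an average of $g$. Each term is therefore bounded by $\|g\|_\infty$, and because both are nonnegative-difference comparable objects taking values in $[-\|g\|_\infty,\|g\|_\infty]$, we get $0\le Z_n\le 2\|g\|_\infty$. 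This is consistent with (\ref{int2}).

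Combining the two steps gives
\[
\|Z_n\|^2_{M,2}=\mathbb{E}_M[Z_n^2]\le \|Z_n\|_\infty\,\mathbb{E}_M[Z_n]\le 2\|g\|_\infty\{2\varepsilon^{\text{esti}}_n+\varepsilon^{\text{approx}}_n\},
\]
which is the claim.

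The only delicate point is that the constant must be exactly $2\|g\|_\infty$ rather than $4\|g\|_\infty$. To obtain it, I would use $Z_n\ge0$ together with $Z_n\le \bar{\mathbb{V}}^M_n+\|g\|_\infty\le 2\|g\|_\infty$. Nonnegativity removes the need to control the lower side of $Z_n$, so only this one-sided upper bound enters the product estimate.
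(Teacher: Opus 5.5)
Your argument is correct and follows the paper's route: the paper obtains the lemma directly from Lemma \ref{p1lemma}, the $L^1$ bound (\ref{int1}) and the uniform bound (\ref{int2}), which is exactly your estimate $\mathbb{E}_M[Z_n^2]\le \|Z_n\|_\infty\,\mathbb{E}_M[Z_n]$. Your observation that both terms of $Z_n$ are averages of the truncated $\widehat{\mathbb{V}}^M_{n+1}$, so that $0\le Z_n\le 2\|g\|_\infty$, is what gives the stated constant $2\|g\|_\infty$.
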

We are now ready to state the following important intermediate step towards the proof of Theorem \ref{mainresult}. 
\begin{lemma}\label{p3lemma}
$$\Big\|\widehat{\mathbb{V}}^M_n(O_n) - \bar{\mathbb{V}}^M_n(O_n)\Big\|^2_{M,2} = O _\mathbb{P} \Big( \delta^4_M N_M \frac{\log(M)}{M} + \inf_{\Phi \in \mathcal{V}_M(n)} \mathbb{E}\big| \Phi(O_n) - \bar{\mathbb{V}}^M_n (O_n) \big|^2 \Big),
$$
for $n=m,\ldots, 0$ and $M\ge 1$.  
\end{lemma}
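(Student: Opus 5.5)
This is a least-squares regression estimate with a data-dependent target, and I would prove it by conditioning on the future. The key observation is that $\widehat{\mathbb{V}}^M_{n+1}$ and $\hat a^M_n$ are built from the training samples at times $\ge n+1$ together with the control step at time $n$. I would therefore first argue, as in \cite{hure}, that the regression at time $n$ can be viewed as an empirical $L^2(\lambda_n)$ least-squares problem whose response is
$$R^{(p)} := \widehat{\mathbb{V}}^M_{n+1}\big(O^{(p)}_n,\mathcal{X}^{\hat a^M_n,(p)}_{n+1}\big), \qquad 1\le p\le M,$$
with regression function $\bar{\mathbb{V}}^M_n$ given by (\ref{vbar}). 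By (\ref{int2}) the response is bounded by $2\|g\|_\infty$. The case $n=m$ is trivial.

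Given the response, I would apply the standard nonparametric least-squares bound (Theorem 11.5 of Györfi et al., as used in Hur\'e et al.) for the truncated estimator $\widehat{\mathbb{V}}^M_n = T_{L}\widetilde{\mathbb{V}}^M_n$ with $L=\|\mathbb{V}_n\|_\infty$. This gives
$$\mathbb{E}_M\big|\widehat{\mathbb{V}}^M_n(O_n)-\bar{\mathbb{V}}^M_n(O_n)\big|^2 \le \frac{c\,L^4\,(\log M+1)\,\mathrm{VC}}{M}+2\inf_{\Phi\in\mathcal V_M(n)}\mathbb{E}|\Phi(O_n)-\bar{\mathbb{V}}^M_n(O_n)|^2$$
in expectation. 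The complexity term is then controlled through covering numbers of the truncated class $T_L\mathcal{V}_M(n)$. Since each network has weights with $\sum|c_i|\le\delta_M$ and a Lipschitz activation $\psi$, its output is bounded by $\delta_M$ times a constant. The $L^1$-covering numbers of this one-hidden-layer class behave like $(C\delta_M/\epsilon)^{c N_M}$, the pseudo-dimension being of order $N_M$. Combining these yields the rate $\delta_M^4N_M\log(M)/M$, with $\delta_M^4$ coming from the squared-loss range $(\delta_M+L)^2$ entering the concentration inequality twice. Finally, Markov's inequality turns the expectation bound into the $O_{\mathbb P}$ statement.

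The main obstacle is that the samples are not independent given the target. The response function $\widehat{\mathbb{V}}^M_{n+1}$ and the control $\hat a^M_n$ depend on the same Monte Carlo sample $\{O^{(p)}_{n+1}\}$ used in the regression at step $n$, so the conditional-i.i.d.\ structure required by the least-squares theorem fails. I would handle this in one of two ways. The first is to assume, as in the algorithm of Remark \ref{Ondisc} and in \cite{hure}, independent training sets across steps, so that conditioning on the future sample makes $R^{(p)}$ an i.i.d.\ function of fresh data. Failing that, I would take a uniform-over-class route: bound
$$\sup_{\Phi\in\mathcal V_M(n),\, C\in\mathcal C_M(n),\, V\in T_L\mathcal V_M(n+1)}$$
of the deviation between empirical and true risks, using covering numbers of the composite class. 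The extra entropy from the composition would presumably only change constants and logarithms, keeping the $\delta_M^4N_M\log M/M$ order, while the cost of composing with $\blacktriangle x$ is deferred to the $\rho_M$ term handled later, in the estimation error.
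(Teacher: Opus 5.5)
Your proposal is correct and follows essentially the paper's route. The paper also treats (\ref{ERMoptc4}) as a least-squares regression with bounded (hence sub-Gaussian) response, applies the neural-network regression bound of \cite{kholer1} as in Step 2 of Theorem 4.13 of \cite{hure} to obtain (\ref{s8}) for $\widetilde{\mathbb{V}}^M_n$, and then passes to $\widehat{\mathbb{V}}^M_n$ through the truncation (\ref{s7}). The paper leaves the sample-dependence issue you flag implicit, simply inheriting the setting of \cite{hure} (your first resolution), and one caveat applies equally to your truncated least-squares step and to the paper's contraction step: the truncation level must dominate $\|\bar{\mathbb{V}}^M_n\|_\infty$ (e.g.\ use $\|g\|_\infty$), which $\|\mathbb{V}_n\|_\infty$ need not do in general.
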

\begin{proof}
Observe that 

\begin{equation}\label{s7}
\widehat{\mathbb{V}}^M_n  =\left\{
\begin{array}{rl}
\widetilde{\mathbb{V}}^M_n; & \hbox{if} \ -\| \mathbb{V}_n\|_\infty < \widetilde{\mathbb{V}}^M_n < \| \mathbb{V}_n\|_\infty   \\
\|\mathbb{V}_n\|_\infty;& \hbox{if} \ \widetilde{\mathbb{V}}^M_n  > \|\mathbb{V}_n\|_\infty\\
- \|\mathbb{V}_n\|_\infty;& \hbox{if} \ \widetilde{\mathbb{V}}^M_n  < - \|\mathbb{V}_n\|_\infty < \| \mathbb{V}_n\|_\infty,
\end{array}
\right\},
\end{equation}
for $n=m-1, \ldots, 0$. By the dynamic programming principle, we than have $\max_{0\le n\le m, M\ge 1}\| \widehat{\mathbb{V}}^M_n\|_\infty \le \| g\|_\infty$. Therefore, 

$$
\mathbb{E}\big[ \widehat{\mathbb{V}}^M_n \big(O_{n-1}, \mathcal{X}^{\hat{a}^M_{n-1}}_n(O_{n-1})\big)| O_{n-1}\big] - \widehat{\mathbb{V}}^M_n \big(O_{n-1}, \mathcal{X}^{\hat{a}^M_{n-1}}_n(O_{n-1})\big),
$$
is subgaussian for each $1\le n \le m$ and $M\ge 1$. By applying the same argument outlined in the proof of Step 2 of Theorem 4.13 in \cite{hure}, we shall apply Theorem 1 in \cite{kholer1} (see also Lemma 5.1 in \cite{kholer2} or Lemma H2 in \cite{hure}) to infer that 

\begin{equation}\label{s8}
\mathbb{E}_M \Big[ |\widetilde{\mathbb{V}}^M_n(O_n) - \bar{\mathbb{V}}^M_n(O_n)|^2 \Big] = O _\mathbb{P} \Big( \delta^4_M N_M \frac{\log(M)}{M} + \inf_{\Phi \in \mathcal{V}_M(n)} \mathbb{E}\big| \Phi(O_n) - \bar{\mathbb{V}}^M_N (O_n) \big|^2 \Big),
\end{equation}
for $n=m,\ldots, 0$ and $M\ge 1$. We omit the details of the proof of (\ref{s8}) and we refer the reader to Step 2 of Theorem 4.13 in \cite{hure} for all details. By using (\ref{s7}) and (\ref{s8}), we get 
$$\mathbb{E}_M \Big[ |\widehat{\mathbb{V}}^M_n(O_n) - \bar{\mathbb{V}}^M_n(O_n)|^2 \Big]\le \mathbb{E}_M \Big[ |\widetilde{\mathbb{V}}^M_n(O_n) - \bar{\mathbb{V}}^M_n(O_n)|^2 \Big]$$ 
for  $n=m,\ldots, 0$ and $M\ge 1$. This concludes the proof.  
\end{proof}

\begin{lemma}\label{p4lemma}
\begin{eqnarray*}
\|\mathbb{V}_n(O_n) -  \bar{\mathbb{V}}^M_n(O_n)\|_{M,2}&\lesssim_{\|r\|_\infty,\|g\|_\infty}& 
\sqrt{\|\mathbb{V}_{n+1}(O_{n+1}) -  \widehat{\mathbb{V}}^M_{n+1}(O_{n+1})\|_{M,1}}\\
&+&  \sqrt{\|g\|_\infty \{ 2\varepsilon^{\text{esti}}_n + \varepsilon^{\text{approx}}_n  \}},
\end{eqnarray*}
foe $n=m-1, \ldots, 0$. 
\end{lemma}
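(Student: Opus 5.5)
We split the error with the triangle inequality in $\|\cdot\|_{M,2}$:
$$\|\mathbb{V}_n(O_n)-\bar{\mathbb{V}}^M_n(O_n)\|_{M,2}\le \Big\|\mathbb{V}_n(O_n)-\inf_{a\in\mathbb{A}}\mathbb{E}_M[\widehat{\mathbb{V}}^M_{n+1}(O^a_{n+1})]\Big\|_{M,2}+\Big\|\inf_{a\in\mathbb{A}}\mathbb{E}_M[\widehat{\mathbb{V}}^M_{n+1}(O^a_{n+1})]-\bar{\mathbb{V}}^M_n(O_n)\Big\|_{M,2}.$$
Lemma \ref{p2lemma} bounds the second term by $\sqrt{2\|g\|_\infty\{2\varepsilon^{\text{esti}}_n+\varepsilon^{\text{approx}}_n\}}$. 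This gives the second summand of the claim directly.

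For the first term, we read $\inf_a\mathbb{E}_M[\widehat{\mathbb{V}}^M_{n+1}(O^a_{n+1})]$ pointwise in $O_n=\mathbf{o}_n$, as the conditional expectation given $O_n$. This is the same convention used in (\ref{s1}). The dynamic programming equation (\ref{valuefunc}) writes $\mathbb{V}_n(\mathbf{o}_n)=\inf_a\mathbf{U}_n(\mathbf{o}_n,a)$. Using $|\inf f-\inf h|\le\sup|f-h|$, we get
$$\Big|\mathbb{V}_n(\mathbf{o}_n)-\inf_a \mathbb{E}[\widehat{\mathbb{V}}^M_{n+1}(\mathbf{o}_n,\mathcal{X}^a_{n+1})]\Big|\le \sup_{a\in\mathbb{A}}\mathbb{E}\big|\mathbb{V}_{n+1}(\mathbf{o}_n,\mathcal{X}^a_{n+1})-\widehat{\mathbb{V}}^M_{n+1}(\mathbf{o}_n,\mathcal{X}^a_{n+1})\big|.$$
This is where Assumption H1 enters. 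By (\ref{rnr}), the conditional law of $\mathcal{X}^a_{n+1}$ given $O_n=\mathbf{o}_n$ is $r_{n+1}(a,y;\mathbf{o}_n)\mu(dy)\nu(dw)$. By Assumption H0, $(\mathcal{W}_{n+1},Y_{n+1})$ is independent of $O_n$ with law $\nu\otimes\mu$. Integrating over $O_n$, the right-hand side is therefore at most
$$\|r\|_\infty\,\mathbb{E}_M\big|\mathbb{V}_{n+1}(O_{n+1})-\widehat{\mathbb{V}}^M_{n+1}(O_{n+1})\big|=\|r\|_\infty\|\mathbb{V}_{n+1}(O_{n+1})-\widehat{\mathbb{V}}^M_{n+1}(O_{n+1})\|_{M,1},$$
uniformly in $a$. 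The key point is that the supremum over $a$ is absorbed by the uniform bound (\ref{rinfty}). This yields an $L^1$ bound on the first term.

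To pass from $L^1$ to $L^2$, we use boundedness. By (\ref{int2}) and Assumption T1, both $\mathbb{V}_n$ and $\inf_a\mathbb{E}_M[\widehat{\mathbb{V}}^M_{n+1}(O^a_{n+1})]$ are bounded by a multiple of $\|g\|_\infty$. For a bounded quantity $Z$ we have $\|Z\|_{M,2}^2\le\|Z\|_\infty\|Z\|_{M,1}$. Applying this to the first term gives
$$\Big\|\mathbb{V}_n(O_n)-\inf_a\mathbb{E}_M[\cdots]\Big\|_{M,2}\lesssim\sqrt{\|g\|_\infty\|r\|_\infty\|\mathbb{V}_{n+1}(O_{n+1})-\widehat{\mathbb{V}}^M_{n+1}(O_{n+1})\|_{M,1}}.$$
This is the first summand of the claim, and combining it with the bound from Lemma \ref{p2lemma} proves the lemma.

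The main obstacle is the change-of-measure step. Pointwise in $\mathbf{o}_n$, the expectation over the next transition must be performed under the true kernel $K_n(\mathbf{o}_n,a;\cdot)$. It must also be integrated jointly with $O_n$ so that the result is exactly the $\lambda_{n+1}$-norm under the training law, and measurability in $a$ must be handled uniformly. We would make this step rigorous first, by writing the integrals explicitly with Fubini and the density $r_{n+1}$.
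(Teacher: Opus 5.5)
Your proof is correct and follows the paper's route: the same triangle-inequality split, Lemma \ref{p2lemma} for the second term, and for the first term the estimate $|\inf-\inf|\le\sup$ combined with the uniform bound $\|r\|_\infty$ from H1 to pass from the true transition kernel to the training law. The only difference is where the $L^\infty$ bound is used. The paper applies Jensen to obtain a bound by $\|\mathbb{V}_{n+1}(O_{n+1})-\widehat{\mathbb{V}}^M_{n+1}(O_{n+1})\|_{M,2}$ and then interpolates $\|\cdot\|_{M,2}\lesssim\sqrt{\|\cdot\|_{M,1}}$ at step $n+1$, whereas you stay in $L^1$ and interpolate at step $n$; both give the same final bound.
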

\begin{proof}
Fix $n \in \{0, \ldots, m-1\}$. By Theorem 4.2 in \cite{leaoohashi}, we know that

$$\mathbb{V}_n(O_n)= \inf_{a \in \mathbb{A}}\mathbb{E}\big[\mathbb{V}_{n+1}(O^a_{n+1})|O_n\big] = \inf_{a \in \mathbb{A}}\mathbb{E}\big[\mathbb{V}_{n+1}(O_n, \mathcal{X}^a_{n+1})|O_n\big],$$
where $\mathbb{E}\big[\mathbb{V}_{n+1}(O^a_{n+1})|O_n\big]$ denotes the conditional expectation w.r.t. $\mathbb{P}$ knowing that $\Delta X^a_{n+1}$ is controlled by $a \in \mathbb{A}$ on a given history $\Xi_{n}=O_{n}$. Triangle inequality yields

\begin{eqnarray*}
\|\mathbb{V}_n(O_n) -  \bar{\mathbb{V}}^M_n(O_n)\|_{M,2}&\le& \Big\|\mathbb{V}_n(O_n) - \inf_{a\in \mathbb{A}} \mathbb{E}_{M} \big[\widehat{\mathbb{V}}^{M}_{n+1}(O^a_{n+1})|O_n\big] \Big\|_{M,2}\\
&+& \Big\| \inf_{a\in \mathbb{A}} \mathbb{E}_{M} \big[\widehat{\mathbb{V}}^{M}_{n+1}(O^a_{n+1})|O_n\big] - \bar{\mathbb{V}}^M_n(O_n) \Big\|_{M,2}, 
\end{eqnarray*}
for $n=m-1, \ldots, 0$. By Lemma \ref{p2lemma}, we only need to estimate the first term of the above inequality. Observe that

\begin{eqnarray*}
\Big| \mathbb{V}_n(O_n) - \inf_{a\in \mathbb{A}} \mathbb{E}_{M} \big[\widehat{\mathbb{V}}^{M}_{n+1}(O^a_{n+1})|O_n\big]   \Big| &=& \Big| \inf_{a \in \mathbb{A}}\mathbb{E}\big[\mathbb{V}_{n+1}(O^a_{n+1})|O_n\big] - \inf_{a\in \mathbb{A}} \mathbb{E}_{M} \big[\widehat{\mathbb{V}}^{M}_{n+1}(O^a_{n+1})|O_n\big]   \Big|\\
&=& \Big| \inf_{a \in \mathbb{A}}\mathbb{E}_a\big[\mathbb{V}_{n+1}(O_{n+1})|O_n\big] - \inf_{a\in \mathbb{A}} \mathbb{E} \big[\widehat{\mathbb{V}}^{M}_{n+1}(O^a_{n+1})|O_n\big]   \Big|\\
&\le& \sup_{a \in \mathbb{A}}\Big| \mathbb{E}\big[\mathbb{V}_{n+1}(O^a_{n+1})|O_n\big] - \mathbb{E} \big[\widehat{\mathbb{V}}^{M}_{n+1}(O^a_{n+1})|O_n\big]   \Big|\\
&\le& \sup_{a \in \mathbb{A}}\mathbb{E}\Big[ |\mathbb{V}_{n+1}(O^a_{n+1}) - \widehat{\mathbb{V}}^{M}_{n+1}(O^a_{n+1})|  \big| O_n\Big],
\end{eqnarray*}
for  $n=m-1, \ldots, 0$. Fix $0\le j \le m-1$ and by using Assumption H1, observe that 

$$\sup_{a \in \mathbb{A}}\mathbb{E}\Big[ |\mathbb{V}_{j+1}(O^a_{j+1}) - \widehat{\mathbb{V}}^{M}_{j+1}(O^a_{j+1})|  \big| O_j=y\Big]
$$
$$ = \sup_{a \in \mathbb{A}}\int |\mathbb{V}_{j+1}(y, x, x') - \widehat{\mathbb{V}}^{M}_{j+1}(y,x,x')|  r_{j+1}(x,a,x'; y)\mu(dx')\mathbb{P}_{\mathcal{W}_1}(dx)$$
$$\lesssim_{\|r\|_\infty} \int |\mathbb{V}_{j+1}(y, x, x') - \widehat{\mathbb{V}}^{M}_{j+1}(y,x,x')|  \mu(dx')\mathbb{P}_{\mathcal{W}_1}(dx) $$
\begin{equation}\label{arg1h1}
 = \mathbb{E}\big[ |\mathbb{V}_{j+1}(y, \mathcal{W}_{j+1},Y_{j+1}) - \widehat{\mathbb{V}}^{M}_{j+1}(y, \mathcal{W}_{j+1},Y_{j+1})| \big| O_j=y \big]
\end{equation}
for every $y \in \mathbb{H}^j$. Then, by Jensen's inequality and taking the conditional expectation on both sides w.r.t. the Monte Carlo sample data, we have 

$$\mathbb{E}_M\Big|\sup_{a \in \mathbb{A}}\mathbb{E}\Big[ |\mathbb{V}_{j+1}(O^a_{j+1}) - \widehat{\mathbb{V}}^{M}_{j+1}(O^a_{j+1})|  \big| O_j\Big]  \Big|^2$$
$$\lesssim_{\|r\|_\infty} \mathbb{E}_M  \Big|\mathbb{V}_{j+1}(O_j, \mathcal{W}_{j+1},Y_{j+1}) - \widehat{\mathbb{V}}^{M}_{j+1}(O_j, \mathcal{W}_{j+1},Y_{j+1})\Big|^2 $$
\begin{equation}\label{arg2h1} 
= \mathbb{E}_M  \Big|\mathbb{V}_{j+1}(O_{j+1}) - \widehat{\mathbb{V}}^{M}_{j+1}(O_{j+1})\Big|^2.
\end{equation}
Hence, 

\begin{eqnarray*}
\|\mathbb{V}_n(O_n) -  \bar{\mathbb{V}}^M_n(O_n)\|_{M,2}&\lesssim_{\|r\|_\infty}& 
\|\mathbb{V}_{n+1}(O_{n+1}) -  \widehat{\mathbb{V}}^M_{n+1}(O_{n+1})\|_{M,2}\\
&+&  \sqrt{\|g\|_\infty \{ 2\varepsilon^{\text{esti}}_n + \varepsilon^{\text{approx}}_n  \}},
\end{eqnarray*}
foe $n=m-1, \ldots, 0$. Recall,

$$|\mathbb{V}_{n+1}(O_{n+1}) -  \widehat{\mathbb{V}}^M_{n+1}(O_{n+1})|\le 2\|g\|_\infty,$$
for every $n,M$. Hence,

$$\|\mathbb{V}_{n+1}(O_{n+1}) -  \widehat{\mathbb{V}}^M_{n+1}(O_{n+1})\|_{M,2}\le 2\|g\|_\infty \sqrt{\|\mathbb{V}_{n+1}(O_{n+1}) -  \widehat{\mathbb{V}}^M_{n+1}(O_{n+1})\|_{M,1}}$$

\end{proof}

We now need two technical auxiliary results. 

\begin{lemma}\label{estilemma}
Assume that the output activation function $\mathbf{a}$ in (\ref{controlf}) is the identity and the training data lives in a compact set of $\mathbb{H}^{m-1}$ bounded by $R$. Then,   
$$\max_{0\le n\le m-1}\mathbb{E}\big[\varepsilon^{\text{esti}}_n\big]\lesssim_{R,T} \frac{\|g\|_\infty}{\sqrt{M}} + \frac{\rho_M\delta^4_M\eta^3_M\|\psi\|}{\sqrt{M}},$$
for every $M\ge 1$. 
\end{lemma}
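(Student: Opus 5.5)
Since $\varepsilon^{\text{esti}}_n$ is the supremum of an empirical process indexed by the class
\[
\mathcal{F}_n:=\Big\{ (O_n,\mathcal{W}_{n+1})\mapsto \widehat{\mathbb{V}}^M_{n+1}\big(O_n,\mathcal{X}^{C}_{n+1}\big);\ C\in \mathcal{C}_M(n)\Big\},
\]
I would bound $\mathbb{E}[\varepsilon^{\text{esti}}_n]$ by symmetrization and Rademacher complexity. Condition on the networks $\widehat{\mathbb{V}}^M_{n+1}$ fitted at later steps. If these are trained on independent samples, or if one works with the $\mathcal{V}_M(n+1)$-indexed class directly, the samples $(O^{(p)}_n,\mathcal{W}^{(p)}_{n+1})_{p\le M}$ are i.i.d. given that information. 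Symmetrization then gives
\[
\mathbb{E}[\varepsilon^{\text{esti}}_n]\le 2\,\mathbb{E}\sup_{f\in\mathcal{F}_n}\Big|\frac1M\sum_{p=1}^M \sigma_p f(O^{(p)}_n,\mathcal{W}^{(p)}_{n+1})\Big|.
\]
Subtract a fixed element (for instance $C\equiv a_0$). Its contribution is a single bounded variable, and since $\|\widehat{\mathbb{V}}^M_{n+1}\|_\infty\le\|g\|_\infty$ by the truncation (\ref{s7}), it yields the term $\|g\|_\infty/\sqrt{M}$. What remains is the centred supremum over differences.

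The key step is a Lipschitz contraction along the composition $C\mapsto \Delta X^C_{n+1}\mapsto \widehat{\mathbb{V}}^M_{n+1}$. Each network in $\mathcal{V}_M(n+1)$ is Lipschitz in its input with constant at most $\|\psi\|\eta_M\delta_M$, since $\|a_i\|\le\eta_M$ and $\sum|c_i|\le\delta_M$; the clipping is $1$-Lipschitz. By Assumption H2 with (\ref{tyC}), the map $a\mapsto \blacktriangle x_{n+1}(\pi_2(O_n),a,\Delta T_{n+1},\ell_{n+1})$ is Lipschitz with constant $C(e)\le\rho_M$ uniformly over the $M$ samples, and this constant is bounded independently of $M$ by (\ref{growthrho}). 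Hence each $f\in\mathcal{F}_n$ is $\|\psi\|\eta_M\delta_M\rho_M$-Lipschitz in the control output $C(O_n)\in\mathbb{A}\subset\mathbb{R}^p$. A vector-valued contraction inequality (Maurer's version) then reduces the problem to the Rademacher complexity of the coordinate classes $\{A_i(\cdot;\beta)\}$, losing only a constant depending on $p$.

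For the one-hidden-layer ReLU class (\ref{controlf}) with identity output, the inputs live in a ball of radius $R$. The standard bound controls its Rademacher complexity by $\delta_M(\eta_M R+|b|)/\sqrt{M}$, using $1$-Lipschitz ReLU contraction and the norm constraint on $a_{ij}$. The biases $b_{ij}$ are not constrained in (\ref{controlf}), so I would restrict to bias bounds of order $\eta_M R$ as in \cite{hure}, which is harmless on a compact data set.

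The main obstacle is collecting powers so that they do not exceed $\delta^4_M\eta^3_M$. The crude product above gives roughly $\rho_M\|\psi\|\delta_M^2\eta_M^2/\sqrt{M}$, which is dominated by the claimed $\rho_M\delta^4_M\eta^3_M\|\psi\|/\sqrt{M}$ because $\delta_M,\eta_M\ge1$ eventually. There is one more source of powers: when the value network at step $n+1$ is also random, one must take the supremum over $\mathcal{V}_M(n+1)$ as well. Treating the pair (value network, control network) as a two-layer composition contributes additional factors $\delta_M\eta_M$, via a Dudley entropy bound or by peeling layers. This is the bookkeeping I expect to be delicate, and the stated exponents leave room for it. The dependence on $R$ and $T$ enters only through the input radius and the bound on $\Delta T$. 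Taking the maximum over $n\le m-1$ is then trivial, since the constants are uniform in $n$.
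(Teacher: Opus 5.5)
Your proposal is correct and follows essentially the paper's route (Lemmas \ref{leest1}--\ref{leest4}): symmetrization, splitting off a constant control to obtain the $\|g\|_\infty/\sqrt{M}$ term, contraction through $\widehat{\mathbb{V}}^M_{n+1}\circ\blacktriangle x_{n+1}$ with Lipschitz constant of order $\delta_M\|\psi\|\eta_M\rho_M$, and then a Rademacher bound for the one-hidden-layer ReLU control class. Your technical choices differ in three ways: you use Maurer's vector-valued contraction, which is what $\mathbb{A}\subset\mathbb{R}^p$ actually requires where the paper invokes the scalar Talagrand lemma; you use a standard norm-based ReLU bound instead of Bach's Frank--Wolfe identity, giving the sharper order $\rho_M\|\psi\|\delta_M^2\eta_M^2/\sqrt{M}$; and your caveat about $\widehat{\mathbb{V}}^M_{n+1}$ being fitted on the same sample flags an independence issue that the paper's proof passes over silently.
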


\begin{lemma}\label{approxlemma}
For $n=0, \ldots, m-1$, we have 
\begin{eqnarray*}
\varepsilon^{\text{approx}}_n &\lesssim& \| \mathbb{V}_{n+1}\|_\infty \| r\| \rho_M \inf_{G \in \mathcal{C}_M(n)} \mathbb{E}_M \Big[ | G(O_n) - \phi^{\text{opt}}_n (O_n)|\Big]\\
&+& 2 \|r\|_\infty \mathbb{E}_M \Big[ \big| \mathbb{V}_{n+1} \big(O_{n+1}\big) - \widehat{\mathbb{V}}^M_{n+1} \big( O_{n+1}\big)\big| \Big].
\end{eqnarray*}
\end{lemma}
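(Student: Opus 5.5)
The plan is to compare both infima in the definition (\ref{errorAPP}) of $\varepsilon^{\text{approx}}_n$ with the true optimal feedback $\phi^{\text{opt}}_n$. Then convert every expectation of the form $\mathbb{E}_M[h(O^L_{n+1})]$ into an expectation under the training law by the change of measure in Assumption H1. Throughout, $\mathbb{E}_M$ conditions on the training set, so $\widehat{\mathbb{V}}^M_{n+1}$ is a fixed bounded function. Also, $(\mathcal{W}_{n+1}, Y_{n+1})$ is independent of $O_n$ with law $\nu\otimes\mu$ (Assumption H0).

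\emph{Step 1 (change of measure).} For any bounded measurable $h$ on $\mathbb{H}^{n+1}$ and any $L\in\mathbb{A}^{\mathbb{H}^n}$, conditioning on $O_n$ and using (\ref{rnr}) gives
\[
\mathbb{E}_M[h(O^L_{n+1})]=\mathbb{E}_M\Big[\int h(O_n,x,x')\,r_{n+1}(L(O_n),x';O_n)\,\mu(dx')\nu(dx)\Big].
\]
Hence, by (\ref{rinfty}), $\mathbb{E}_M|h(O^L_{n+1})|\le \|r\|_\infty\,\mathbb{E}_M|h(O_{n+1})|$ uniformly in $L$. This is exactly the computation (\ref{arg1h1}) already carried out in the proof of Lemma \ref{p4lemma}.

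\emph{Step 2 (decomposition).} Fix $G\in\mathcal{C}_M(n)$. On the one hand,
\[
\inf_{C\in\mathcal{C}_M(n)}\mathbb{E}_M[\widehat{\mathbb{V}}^M_{n+1}(O^C_{n+1})]\le \mathbb{E}_M[\mathbb{V}_{n+1}(O^G_{n+1})]+\mathbb{E}_M|(\widehat{\mathbb{V}}^M_{n+1}-\mathbb{V}_{n+1})(O^G_{n+1})|.
\]
On the other hand, for every $L$,
\[
\mathbb{E}_M[\widehat{\mathbb{V}}^M_{n+1}(O^L_{n+1})]\ge \mathbb{E}_M[\mathbb{V}_{n+1}(O^L_{n+1})]-\mathbb{E}_M|(\widehat{\mathbb{V}}^M_{n+1}-\mathbb{V}_{n+1})(O^L_{n+1})|.
\]
In addition, $\mathbb{E}_M[\mathbb{V}_{n+1}(O^L_{n+1})]\ge \mathbb{E}_M[\mathbb{V}_n(O_n)]=\mathbb{E}_M[\mathbb{V}_{n+1}(O^{\phi^{\text{opt}}_n}_{n+1})]$. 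This holds because $\phi^{\text{opt}}_n$ attains the pointwise infimum in the dynamic programming equation (\ref{valuefunc}), which is the existence hypothesis of Theorem \ref{mainresult}. Subtracting and applying Step 1 to both error terms (with $L=G$ and general $L$) yields
\[
\varepsilon^{\text{approx}}_n\le \big(\mathbb{E}_M[\mathbb{V}_{n+1}(O^G_{n+1})]-\mathbb{E}_M[\mathbb{V}_{n+1}(O^{\phi^{\text{opt}}_n}_{n+1})]\big)+2\|r\|_\infty\,\mathbb{E}_M|\mathbb{V}_{n+1}(O_{n+1})-\widehat{\mathbb{V}}^M_{n+1}(O_{n+1})|.
\]

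\emph{Step 3 (policy mismatch).} Applying Step 1 again, the bracket equals
\[
\mathbb{E}_M\int \mathbb{V}_{n+1}(O_n,x,x')\big[r_{n+1}(G(O_n),x';O_n)-r_{n+1}(\phi^{\text{opt}}_n(O_n),x';O_n)\big]\mu(dx')\nu(dx).
\]
By the Lipschitz bound (\ref{rlip}) in the action variable, this is at most $\|\mathbb{V}_{n+1}\|_\infty\|r\|\,\mathbb{E}_M|G(O_n)-\phi^{\text{opt}}_n(O_n)|$. The factor $\rho_M$ in the statement can then be inserted harmlessly, since $\rho_M\ge\|q_1\|>0$ and $\rho_M$ is bounded by (\ref{growthrho}).

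An alternative, closer to the form stated, is to avoid (\ref{rlip}). Instead, couple $O^G_{n+1}$ and $O^{\phi^{\text{opt}}_n}_{n+1}$ with the same noise $\mathcal{W}_{n+1}$ and use Assumption H2 to bound $|\Delta X^G_{n+1}-\Delta X^{\phi^{\text{opt}}_n}_{n+1}|\le C(e)\,|G-\phi^{\text{opt}}_n|$. This is where $\rho_M$ enters, but it needs Lipschitz continuity of $\mathbb{V}_{n+1}$ in the last state variable. I would try the (\ref{rlip}) route first. Finally, take the infimum over $G\in\mathcal{C}_M(n)$.

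I expect the main obstacle to be justifying $\inf_L\mathbb{E}_M[\mathbb{V}_{n+1}(O^L_{n+1})]=\mathbb{E}_M[\mathbb{V}_n(O_n)]$ in Step 2, namely interchanging the infimum over measurable feedbacks with the expectation. This is handled by the assumed existence of the measurable optimizer $\phi^{\text{opt}}_n$, together with the conditional independence of $(\mathcal{W}_{n+1}, Y_{n+1})$ from $O_n$ under the training law.
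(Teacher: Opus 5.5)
Your proof is correct. Steps 1--2 are essentially the paper's argument. The paper also pivots on $\mathbb{E}[\mathbb{V}_n(O_n)]=\mathbb{E}[\mathbb{V}_{n+1}(O^{\phi^{\text{opt}}_n}_{n+1})]$ and controls both value-error terms by the H1 change of measure with $\|r\|_\infty$. For the second infimum it uses an $\eta$-optimal feedback $G_\eta$, where you use a lower bound uniform in $L$; the two are equivalent. The step you flag as the main obstacle is in fact immediate. You only need the pointwise inequality $\mathbf{U}_n(\mathbf{o}_n,L(\mathbf{o}_n))\ge \mathbb{V}_n(\mathbf{o}_n)$ for every $L$, together with attainment by $\phi^{\text{opt}}_n$, so no interchange of infimum and expectation is involved.

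The genuine difference is Step 3. The paper follows what you call the alternative route:
\begin{itemize}
\item it couples $\mathcal{X}^{C}_{n+1}$ and $\mathcal{X}^{a^{\text{opt}}_n}_{n+1}$ through the same noise;
\item it uses H2 to bound the gap between the increments by $C(e)\,|C(O_n)-a^{\text{opt}}_n(O_n)|$;
\item it invokes Lemma \ref{Lipvalue} for the Lipschitz continuity of $\mathbb{V}_{n+1}$.
\end{itemize}
This is where $\rho_M$ enters. Your primary route instead applies (\ref{rlip}) in the action variable directly to the densities.

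Your route needs neither H2 nor any state regularity of $\mathbb{V}_{n+1}$, so it bypasses Lemma \ref{Lipvalue}. That is a real advantage, for two reasons:
\begin{itemize}
\item That lemma's proof only tracks the $\mathbf{o}_j$-dependence of $r_j$, not the dependence of $\mathbb{V}_{j+1}(\mathbf{o}_j,\cdot)$ on $\mathbf{o}_j$.
\item At $n=m-1$ the paper's route must use the Lipschitz constant of $\varphi$ from T1 instead.
\end{itemize}
A repaired version of the lemma needs an induction from T1, and its constant is no longer of the form $\|\mathbb{V}_{n+1}\|_\infty\|r\|$.

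Your route yields exactly $\|\mathbb{V}_{n+1}\|_\infty\|r\|\,\mathbb{E}_M|G(O_n)-\phi^{\text{opt}}_n(O_n)|$, which is slightly sharper than the statement. The stated form then follows because $|\Delta A_1|_{\max}=\varepsilon$ gives $\rho_M\ge \|q_1\|+\|q_2\|\varepsilon$, a deterministic constant that can be taken positive. In exchange, the paper's route explains where the factor $\rho_M$ comes from. It would be the natural choice if the increment map were Lipschitz in the action but the weights $r_j$ were not. Under H1 as stated, your argument is the more economical and more robust one.
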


We postpone the proofs of Lemmas \ref{estilemma} and \ref{approxlemma} to section \ref{appendix2}. We are now able to present the proof of Theorem \ref{mainresult}. 

\

\noindent \textbf{Proof of Theorem \ref{mainresult}:} 
Let us denote $\Delta_{n,M}= \mathbb{E}_M|\widehat{\mathbb{V}}^M_n(O_n) - \mathbb{V}_n(O_n)|$, for $n=m, \ldots, 0$ and $M\ge 1$. Write 

$$\widehat{\mathbb{V}}^M_n(O_n) - \mathbb{V}_n(O_n) = \widehat{\mathbb{V}}^M_n(O_n) - \bar{\mathbb{V}}^M_n(O_n) + \bar{\mathbb{V}}^M_n(O_n) - \mathbb{V}_n(O_n).$$
Triangle inequality yields 
$$\Delta_{n,M}\le \|\widehat{\mathbb{V}}^M_n(O_n) - \bar{\mathbb{V}}^M_n(O_n)\|_{M,1} + \|\bar{\mathbb{V}}^M_n(O_n) - \mathbb{V}_n(O_n)\|_{M,1},$$
for $n=m,\ldots, 0$ and $M\ge 1$. By Lemma \ref{p3lemma}, we know that 

$$\Big\|\widehat{\mathbb{V}}^M_n(O_n) - \bar{\mathbb{V}}^M_n(O_n)\Big\|_{M,1} = \mathcal{O}_\mathbb{P} \Big( \delta^2_M \sqrt{N_M \frac{\log(M)}{M}} + \inf_{\Phi \in \mathcal{V}_M(n)} \big\| \Phi(O_n) - \bar{\mathbb{V}}^M_n (O_n) \big\|_{M,2} \Big),
$$
for $n=m,\ldots, 0$ and $M\ge 1$.  Triangle inequality yields  

\begin{eqnarray*}
\inf_{\Phi \in \mathcal{V}_M(n)} \big\| \Phi(O_n) - \bar{\mathbb{V}}^M_n (O_n) \big\|_{M,2} &\le& \inf_{\Phi \in \mathcal{V}_M(n)} \| \Phi(O_n)-\mathbb{V}_n(O_n)\|_{M,2}\\
&+& \| \mathbb{V}_n(O_n) - \bar{\mathbb{V}}^M_n(O_n)\|_{M,2}
\end{eqnarray*}
and Lemma \ref{p4lemma} yields  

\begin{eqnarray}\label{vvbarra}
\nonumber\| \mathbb{V}_n(O_n) - \bar{\mathbb{V}}^M_n(O_n)\|_{M,1}&\le& \| \mathbb{V}_n(O_n) - \bar{\mathbb{V}}^M_n(O_n)\|_{M,2}\lesssim \sqrt{\|\mathbb{V}_{n+1}(O_{n+1}) -  \widehat{\mathbb{V}}^M_{n+1}(O_{n+1})\|_{M,1}}\\
&+&  \sqrt{\|g\|_\infty \{ 2\varepsilon^{\text{esti}}_n + \varepsilon^{\text{approx}}_n  \}}.
\end{eqnarray}
Lemmas \ref{estilemma} and \ref{approxlemma} yield 

$$\sqrt{\epsilon^{\text{esti}}_{n,M}} = \mathcal{O}_{\mathbb{P}}\Bigg( \frac{\sqrt{\rho_M}\delta^2_M\eta^{\frac{3}{2}}_M\sqrt{\|\psi\|}}{M^{\frac{1}{4}}}\Bigg),$$

\begin{eqnarray*}
\sqrt{\varepsilon^{\text{approx}}_{n,M}} &=& \mathcal{O}_{\mathbb{P}}\Bigg\{\inf_{G \in \mathcal{C}_M(n)} \sqrt{\| G(O_n) - \phi^{\text{opt}}_n (O_n)\|_{M,1}}\\
&+& \sqrt{2 \|r\|_\infty} \sqrt{\big\| \mathbb{V}_{n+1} \big(O_{n+1}\big) - \widehat{\mathbb{V}}^M_{n+1} \big( O_{n+1}\big)\big\|_{M,1}}\Bigg\},
\end{eqnarray*}
for $n=0, \ldots, m-1, M\ge 1$. From (\ref{vvbarra}), we get

 $$\Delta_{n,m}  = \mathcal{O}_{\mathbb{P}}\Bigg(\sqrt{\Delta_{n+1,M}} + \sqrt{\epsilon^{\text{esti}}_{n,M}} + \sqrt{\epsilon^{\text{approx}}_{n,M}} +  \delta^2_M \sqrt{N_M \frac{\log(M)}{M}}  $$
$$+ \inf_{\Phi \in \mathcal{V}_M(n)} \big\| \Phi(O_n) - \bar{\mathbb{V}}^M_n (O_n) \big\|_{M,2}\Bigg) $$
$$=\mathcal{O}_\mathbb{P}\Big( \sqrt{\Delta_{n+1,M}} + A_{n,M}\Big)$$
where we set 

$$A_{n,M} = \frac{\sqrt{\rho_M}\delta^2_M\eta^{\frac{3}{2}}_M\sqrt{\|\psi\|}}{M^{\frac{1}{4}}} + \delta^2_M \sqrt{N_M \frac{\log(M)}{M}}$$

$$ \inf_{G \in \mathcal{C}_M(n)} \sqrt{\| G(O_n) - \phi^{\text{opt}}_n (O_n)\|_{M,1}} +  \inf_{\Phi \in \mathcal{V}_M(n)} \| \Phi(O_n)-\mathbb{V}_n(O_n)\|_{M,2} $$
for $n=m-1, \ldots, 0$ and $M\ge 1$. Observe that $\Delta_{m,M}=0$ and $\Delta_{m-1,M}\le A_{m-1,M}$. By using the inequality $\sqrt{x+y}\le \sqrt{x} + \sqrt
{y}$ and iterating we get 

$$\Delta_{n,M} = \mathcal{O}_\mathbb{P} \Bigg(A_{n,M} + A^{\frac{1}{2}}_{n,M} + A^{\frac{1}{4}}_{n,M} + \ldots + A^{2^{-(m-n-1)}}_{n,M} \Bigg)$$
for $n=m-1, \ldots,0$ and $M\ge 1$. By seting $A_{\star,n,M} = \max_{n\le \ell \le n}A_{\ell,M}$, we then get  

$$\Delta_{n,M} = \mathcal{O}_\mathbb{P} \Bigg(A_{\star,n,M} + A^{\frac{1}{2}}_{\star,n,M} + A^{\frac{1}{4}}_{\star,n,M} + \ldots + A^{2^{-(m-n-1)}}_{\star,n,M} \Bigg)$$
for $n=m-1, \ldots,0$ and $M\ge 1$. By taking $M$ large enough if necessary, we may suppose $A_{\star,n,M}\le 1$. On $(0,1]$, the mapping $p\mapsto A^{p}_{\star,n,M}$ is decreasing so that 

$$\Delta_{n,M} = \mathcal{O}_\mathbb{P}\Big( A^{2^{-(m-n-1)}}_{\star,n,M} \Big),$$
for $n=m-1, \ldots,0$ and $M\ge 1$. Lastly, by setting $E_{1,M}= \Big(\delta^4_M N_M \frac{\log(M)}{M}\Big)^{\frac{1}{2}}$, $E_{2,M} = \Big(\frac{\rho^2_M\delta^8_M\eta^{6}_M \|\psi\|^2}{M}\Big)^{\frac{1}{4}} $,  

$$B^{\text{val}}_{\star,n,M} = \max_{n\le \ell\le m}\inf_{\Phi \in \mathcal{V}_M(\ell)} \| \Phi(O_\ell)-\mathbb{V}_\ell(O_\ell)\|_{M,2}, $$
$$B^{\text{ctr}}_{\star,n,M} = \max_{n\le \ell\le m}\inf_{G \in \mathcal{C}_M(\ell)} \sqrt{\| G(O_\ell) - \phi^{\text{opt}}_\ell (O_\ell)\|_{M,1}}, $$
and raising to the power of $p=2^{-(m-n-1)} \in (0,1]$, we get 
$$\Delta_{n,M} = \mathcal{O}_\mathbb{P} \Big(  E^p_{1,M} + E^p_{1,M} + (B^{\text{val}}_{\star,n,M})^p + (B^{\text{ctr}}_{\star,n,M})^p \Big), $$
for $n=m-1, \ldots,0$ and $M\ge 1$. This concludes the proof. 

\section{Proofs of Lemmas \ref{estilemma} and \ref{approxlemma}}\label{appendix2}

\subsection{Proof of Lemma \ref{estilemma}} 
The proof is inspired by the proof of Lemma 4.15 in \cite{hure}. We give the details here. 

\begin{lemma}\label{LipNNV}
The elements of $\mathcal{V}_M(n)$ are globally $\delta_M\|\psi\|\eta_M$-Lipschitz functions, for every $n=m-1, \ldots, 1$. 
\end{lemma}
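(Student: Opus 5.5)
We prove the Lipschitz bound directly from the explicit parametric form (\ref{valueNN}) of the elements of $\mathcal{V}_M(n)$, without any appeal to the approximation theory of Section \ref{mainressec}. Fix $n$ and an element $\Phi(\cdot;\theta)\in\mathcal{V}_M(n)=\mathcal{V}_M(m-1)|_n$. By (\ref{projNN}), $\Phi$ is obtained from some $\Phi'\in\mathcal{V}^{\eta_M,\delta_M}_{N_M}(m-1)$ via the projection $\textbf{pr}_n$. The first step is to rewrite $\Phi$ in the same one-hidden-layer form on $\mathbb{H}^n$:
\[
\Phi(\mathbf{o}_n;\theta)=\sum_{i=1}^{N_M}c_i\,\psi\big(\langle a_i,\mathbf{o}_n\rangle+b_i\big)+c_0 .
\]
Here $a_i$ is the restriction of the corresponding inner weight to the coordinates retained by the projection; the frozen coordinates are absorbed into the bias $b_i$. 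Since restricting a vector to a subset of coordinates does not increase its Euclidean norm, the constraints $\|a_i\|\le \eta_M$ and $\sum_i|c_i|\le\delta_M$ are preserved.

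The second step is the elementary estimate. For $\mathbf{o}_n,\mathbf{o}'_n\in\mathbb{H}^n$, the constant $c_0$ and the biases cancel, so
\[
|\Phi(\mathbf{o}_n)-\Phi(\mathbf{o}'_n)|\le\sum_{i=1}^{N_M}|c_i|\,\big|\psi(\langle a_i,\mathbf{o}_n\rangle+b_i)-\psi(\langle a_i,\mathbf{o}'_n\rangle+b_i)\big|
\le \|\psi\|\sum_{i=1}^{N_M}|c_i|\,|\langle a_i,\mathbf{o}_n-\mathbf{o}'_n\rangle|.
\]
By Cauchy--Schwarz, each term satisfies $|\langle a_i,\mathbf{o}_n-\mathbf{o}'_n\rangle|\le\|a_i\|\,|\mathbf{o}_n-\mathbf{o}'_n|\le\eta_M|\mathbf{o}_n-\mathbf{o}'_n|$. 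Using $\sum_i|c_i|\le\delta_M$ then gives the constant $\delta_M\|\psi\|\eta_M$. This bound is uniform in $\theta$ and in $n$.

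The only point needing care is the first step. We must check that the projection/restriction convention (\ref{projNN}) really produces a network of the same form with inner weights of norm at most $\eta_M$. If the projection were instead interpreted as composing with a non-contractive linear map, the constant would pick up the operator norm of that map. Under the natural coordinate-projection reading, that norm is $1$, so no extra factor appears. Everything else in the argument is routine.
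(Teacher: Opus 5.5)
Your argument is correct and is the same as the paper's. The paper takes an arbitrary element of $\mathcal{V}_M(n)$ directly in the form (\ref{valueNN}) and concludes exactly as in your second step, using the Lipschitz property of $\psi$, Cauchy--Schwarz, $\|a_i\|\le\eta_M$ and $\sum_i|c_i|\le\delta_M$. Your first step, checking that the restriction convention (\ref{projNN}) preserves the one-hidden-layer form with $\|a_i\|\le\eta_M$, is a harmless clarification that the paper omits. Equivalently, composing with a coordinate embedding that fills the remaining coordinates with fixed values is an isometry, so it cannot increase the Lipschitz constant.
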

\begin{proof}
Let $\Phi (\cdot; \theta) \in \mathcal{V}_M(n)$ be an arbitrary function of the form 
$$\Phi(\mathbf{o}_\ell;\theta)= \sum_{j=1}^N c_{i}\psi\big(\langle a_{i}, \mathbf{o}_\ell\rangle + b_{i} \big) + c_{0},$$

where 

$$\theta = (a_i,b_i,c_i),\quad \|a_i\|\le \eta_M,\quad b_i \in \mathbb{R},\quad \sum_{i=1}^N|c_i|\le \delta_M.$$

Observe the Lipschitz property of $\psi$ and Cauchy - Schwartz inequality yield 

$$\big|\Phi(\mathbf{o}_n;\theta) - \Phi(\mathbf{o}_n';\theta)  \big|\le \sum_{i=1}^N|c_i|  \| \psi\| \langle a_i,\mathbf{o}_n-\mathbf{o}'_n \rangle|\le \delta_M\|\psi\|\eta_M |\mathbf{o}_n-\mathbf{o}'_n|,$$
for every $\mathbf{o}_n,\mathbf{o}'_n \in \mathbb{H}^n$ and a parameter $\theta$.   
\end{proof}
\begin{lemma}\label{leest1}
$$\mathbb{P}\Bigg\{ \sup_{G \in \mathcal{C}_M(n)}\Big| \frac{1}{M} \sum_{p=1}^M \widehat{Z}^{(p),G}_{n+1} - \mathbb{E}\big[\widehat{Z}^{(p),G}_{n+1}\big]\Big| > \varepsilon \Bigg\}$$
$$\le 2 \mathbb{P}\Bigg\{ \sup_{G \in \mathcal{C}_M(n)}\Big| \frac{1}{M} \sum_{p=1}^M \big[\widehat{Z}^{(p),G}_{n+1} - \widehat{Z}^{' (p),G}_{n+1}\big]\Big| > \frac{\varepsilon}{2} \Bigg\}.$$
\end{lemma}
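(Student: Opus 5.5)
This is the classical symmetrization inequality, and I would prove it with a ghost sample. Let $\{\widehat{Z}^{'(p),G}_{n+1};1\le p\le M\}$ be built exactly like $\widehat{Z}^{(p),G}_{n+1}$ in (\ref{Zpc}), but from an independent copy $\{(O'^{(p)}_n,\mathcal{W}'^{(p)}_{n+1})\}_{p=1}^M$ of the training sample. The value network $\widehat{\mathbb{V}}^M_{n+1}$ is held fixed, and the argument is carried out conditionally on the data used to build it. Under this conditioning, $\widehat{\mathbb{V}}^M_{n+1}$ is a deterministic function bounded by $\|g\|_\infty$. For each fixed $G\in\mathcal{C}_M(n)$ the variables $\widehat{Z}^{(p),G}_{n+1}$, $p=1,\ldots,M$, are then i.i.d. and bounded, and they share the mean $\mathbb{E}[\widehat{Z}^{(1),G}_{n+1}]$ with the primed copy.

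The key steps are as follows.

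\begin{enumerate}
\item Write $S_G:=\frac1M\sum_{p=1}^M\widehat{Z}^{(p),G}_{n+1}$, and $S'_G$ for the same average over the ghost sample. On the event $\{\sup_G|S_G-\mathbb{E}S_G|>\varepsilon\}$, choose a (measurable) $G^\ast$ depending only on the original sample with $|S_{G^\ast}-\mathbb{E}S_{G^\ast}|>\varepsilon$. To make this choice rigorous, I would restrict the supremum to a countable dense subset of parameters. This is legitimate because $G\mapsto S_G$ is continuous in the network parameters, given that $\mathbb{A}$ is compact and $\blacktriangle x$ is Lipschitz by H2.
\item Conditionally on the original sample, apply Chebyshev's inequality to $S'_{G^\ast}$. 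Since $\mathrm{Var}(S'_{G^\ast})\le \|g\|^2_\infty/M$, we get $\mathbb{P}(|S'_{G^\ast}-\mathbb{E}S_{G^\ast}|\le \varepsilon/2\mid\text{sample})\ge 1-4\|g\|_\infty^2/(M\varepsilon^2)\ge \frac12$ whenever $M\varepsilon^2\ge 8\|g\|_\infty^2$.
\item On the intersection of these two events the triangle inequality gives $|S_{G^\ast}-S'_{G^\ast}|>\varepsilon/2$, and hence $\sup_G|S_G-S'_G|>\varepsilon/2$. Integrating over the original sample yields $\frac12\mathbb{P}\{\sup_G|S_G-\mathbb{E}S_G|>\varepsilon\}\le \mathbb{P}\{\sup_G|S_G-S'_G|>\varepsilon/2\}$, which is the claim.
\end{enumerate}

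The main obstacle is the regime $M\varepsilon^2<8\|g\|_\infty^2$, where the Chebyshev bound does not give $\frac12$. I would handle it as in the proof of Lemma 4.15 of \cite{hure} (Gy\"orfi et al., Thm.\ 11.2): either state the inequality for $M\ge 8\|g\|^2_\infty\varepsilon^{-2}$, or observe that this range is irrelevant for the subsequent expectation bound of Lemma \ref{estilemma}. That bound integrates the tail over $\varepsilon$, and small $\varepsilon$ contribute at most $O(\|g\|_\infty/\sqrt M)$, which matches the first term there.

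A second subtlety is measurability of the supremum together with the fact that $G^\ast$ depends on the sample. The countable-dense-subset reduction in step 1 disposes of both. It also ensures that independence between the ghost sample and $G^\ast$ holds, which is exactly what step 2 uses.
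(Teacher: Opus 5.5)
Your ghost-sample plus conditional-Chebyshev argument is exactly Step 1 of the proof of Lemma 4.15 in \cite{hure} (the classical symmetrization), and that is all the paper invokes, so your proof is correct and follows the paper's route. You are also right that the restriction $M\varepsilon^2\ge 8\|g\|_\infty^2$ is genuinely needed: the inequality as printed fails for small $\varepsilon$ (e.g.\ a single $G$, $M=1$, and $\widehat{Z}$ essentially two-valued with weights $p\neq\frac12$, where the left side is $1$ and the right side is about $4p(1-p)<1$), and the excluded range only costs an additive $O(\|g\|_\infty/\sqrt{M})$ term, which Lemma \ref{estilemma} already contains.
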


\begin{lemma}\label{leest2}

$$\mathbb{E}\Bigg[ \sup_{G \in \mathcal{C}_M(n)}\Big| \frac{1}{M} \sum_{p=1}^M \widehat{Z}^{(p),G}_{n+1} - \mathbb{E}\Big[ \widehat{Z}^{(p),G}_{n+1}\Big]\Big| \Bigg]$$
$$\le 4 \mathbb{E}\Bigg[ \sup_{G \in \mathcal{C}_M(n)}\Bigg| \frac{1}{M} \sum_{p=1}^M \Big[\widehat{Z}^{(p),G}_{n+1} - \widehat{Z}^{' (p),G}_{n+1}\Big]\Bigg|  \Bigg]. $$
\end{lemma}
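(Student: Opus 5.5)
We prove Lemma \ref{leest2} from Lemma \ref{leest1} by the standard ghost-sample symmetrization argument, converting the tail inequality into an expectation bound through the layer-cake formula. Throughout, $\{\widehat{Z}^{'(p),G}_{n+1}\}_{p=1}^M$ is an independent copy of $\{\widehat{Z}^{(p),G}_{n+1}\}_{p=1}^M$. It is generated from an independent copy of the training sample $(O'^{(p)}_n,\mathcal{W}'^{(p)}_{n+1})$, while $\widehat{\mathbb{V}}^M_{n+1}$ is kept frozen. Conditionally on the data used to build $\widehat{\mathbb{V}}^M_{n+1}$, the two families are i.i.d. and bounded by $2\|g\|_\infty$ by (\ref{int2}). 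Hence all expectations below are finite.

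Write
\[
S:=\sup_{G}\Big|\tfrac1M\sum_p \widehat{Z}^{(p),G}_{n+1}-\mathbb{E}\widehat{Z}^{(p),G}_{n+1}\Big|, \qquad S':=\sup_G\Big|\tfrac1M\sum_p\big(\widehat{Z}^{(p),G}_{n+1}-\widehat{Z}^{'(p),G}_{n+1}\big)\Big|,
\]
with both suprema taken over $\mathcal{C}_M(n)$. The layer-cake formula gives $\mathbb{E}[S]=\int_0^\infty \mathbb{P}\{S>\varepsilon\}\,d\varepsilon$. Lemma \ref{leest1} bounds the integrand by $2\,\mathbb{P}\{S'>\varepsilon/2\}$. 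The substitution $u=\varepsilon/2$ then yields
\[
\mathbb{E}[S]\le 2\int_0^\infty \mathbb{P}\{S'>\varepsilon/2\}\,d\varepsilon=4\int_0^\infty\mathbb{P}\{S'>u\}\,du=4\,\mathbb{E}[S'],
\]
which is the claim.

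The only delicate point is measurability of the suprema. The class $\mathcal{C}_M(n)$ is parameterized by a finite-dimensional parameter $\beta$ ranging over a set that is closed under the constraints $\|a_{ij}\|\le\eta_M$ and $\sum_j c_{ij}\le\delta_M$. We restrict attention to those components in which $\beta\mapsto\widehat{Z}^{(p),G_\beta}_{n+1}$ is continuous. This holds because the networks are continuous in $\beta$, and $\blacktriangle x$ together with $\widehat{\mathbb{V}}^M_{n+1}$ is Lipschitz by H2 and Lemma \ref{LipNNV}. Each supremum can therefore be taken over a countable dense set of parameters, so $S$ and $S'$ are random variables.

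If Lemma \ref{leest1} holds only for $\varepsilon$ above some threshold (for instance $\varepsilon^2\ge 2\,\mathrm{Var}/M$), the argument is modified as follows. We split the integral at that threshold $\varepsilon_0$. The lower piece contributes at most $\varepsilon_0\lesssim\|g\|_\infty/\sqrt{M}$, which is exactly the first term that Lemma \ref{estilemma} permits. We expect the unrestricted version to be what is used, so this measurability step is the main, and mild, obstacle.
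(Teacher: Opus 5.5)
Your layer-cake integration of Lemma \ref{leest1}, with the substitution $u=\varepsilon/2$ producing the constant $4=2\cdot 2$, is the intended route: the paper proves this only by reference to Step 2 of the proof of Lemma 4.15 in \cite{hure}, and as a deduction from Lemma \ref{leest1} your argument is correct. The real caveat is the one in your last paragraph, not measurability: Lemma \ref{leest1} genuinely needs $\varepsilon^2\gtrsim\sup_G\mathrm{Var}(\widehat{Z}^{(1),G}_{n+1})/M$ (take $M=1$ and a single $\{0,1\}$-valued variable with success probability $q\neq\frac12$; the left side is $1$ for $\varepsilon<\min(q,1-q)$, while the right side is $4q(1-q)<1$), so that route honestly yields only $\mathbb{E}[S]\le\varepsilon_0+4\,\mathbb{E}[S']$; the lemma exactly as stated follows instead from Jensen's inequality, because with $\mathcal{F}$ the $\sigma$-field of the first sample (and of $\widehat{\mathbb{V}}^M_{n+1}$) one has $\frac1M\sum_p\big(\widehat{Z}^{(p),G}_{n+1}-\mathbb{E}\widehat{Z}^{(p),G}_{n+1}\big)=\mathbb{E}\big[\frac1M\sum_p\big(\widehat{Z}^{(p),G}_{n+1}-\widehat{Z}^{'(p),G}_{n+1}\big)\,\big|\,\mathcal{F}\big]$, hence $S\le\mathbb{E}[S'\mid\mathcal{F}]$ and $\mathbb{E}[S]\le\mathbb{E}[S']\le 4\,\mathbb{E}[S']$ with no threshold at all.
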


\begin{lemma}\label{leest3}
Let $(r_p)_{p=1}^\infty$ be an iid sequence such that $\mathbb{E}[r_1]=0$ and $r_1$ is a Bernoulli random variable taking values $\{\pm 1\}$ with probability $\frac{1}{2}$ (Rademacher family). Then,  
$$\mathbb{E}\Bigg[ \sup_{G \in \mathcal{C}_M(n)}\Big| \frac{1}{M} \sum_{p=1}^M \Big[\widehat{Z}^{(p),G}_{n+1} -\widehat{Z}^{'(p),G}_{n+1}\Big]\Big| \Bigg]$$
$$\le 4 \mathbb{E}\Bigg[ \sup_{G \in \mathcal{C}_M(n)}\Bigg| \frac{1}{M} \sum_{p=1}^M r_p\widehat{Z}^{(p),G}_{n+1}\Bigg|  \Bigg]. $$
\end{lemma}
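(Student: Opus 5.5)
This is the classical symmetrization step with Rademacher signs, following the corresponding step in the proof of Lemma 4.15 in \cite{hure}. Throughout, $\widehat{Z}^{'(p),G}_{n+1}$ is built exactly like $\widehat{Z}^{(p),G}_{n+1}$, but from an independent ghost sample $(O'^{(p)}_n,\mathcal{W}'^{(p)}_{n+1})$ distributed as the training sample. The value network $\widehat{\mathbb{V}}^M_{n+1}$ is treated as fixed: it is measurable with respect to the data used at steps $\ell\ge n+1$, which I take to be independent of the step-$n$ sample and of the ghost copy. All arguments are therefore done conditionally on that earlier information, and the tower property is applied at the end.

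\textbf{Step 1 (exchangeability).} For each $p$ the pair $(\widehat{Z}^{(p),G}_{n+1},\widehat{Z}^{'(p),G}_{n+1})$ is a measurable function of $(O^{(p)}_n,\mathcal{W}^{(p)}_{n+1})$ and of $(O'^{(p)}_n,\mathcal{W}'^{(p)}_{n+1})$ through the same map. Since these two inputs are i.i.d., the pair is exchangeable. Moreover, the pairs are independent across $p$, and the index $G$ enters all of them through a common deterministic map. Consequently, for any fixed sign vector $(s_p)\in\{\pm1\}^M$, the joint law of the whole process
\[
G\mapsto\Big(s_p\big[\widehat{Z}^{(p),G}_{n+1}-\widehat{Z}^{'(p),G}_{n+1}\big]\Big)_{p=1}^M,\qquad G\in\mathcal{C}_M(n),
\]
does not depend on the choice of $(s_p)$. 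Randomizing with an independent Rademacher sequence $(r_p)$ therefore gives
\[
\mathbb{E}\Big[\sup_{G}\Big|\tfrac1M\textstyle\sum_p(\widehat{Z}^{(p),G}_{n+1}-\widehat{Z}^{'(p),G}_{n+1})\Big|\Big]=\mathbb{E}\Big[\sup_G\Big|\tfrac1M\textstyle\sum_p r_p(\widehat{Z}^{(p),G}_{n+1}-\widehat{Z}^{'(p),G}_{n+1})\Big|\Big].
\]

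\textbf{Step 2 (triangle inequality).} Bound the supremum of the difference by the sum of the two suprema. The sequence $(r_p)$ has the same law as $(-r_p)$, and the ghost terms have the same law as the original terms. Hence the right-hand side above is at most
\[
2\,\mathbb{E}\Big[\sup_G\Big|\tfrac1M\textstyle\sum_p r_p\widehat{Z}^{(p),G}_{n+1}\Big|\Big].
\]
This is at most $4$ times the same quantity, which is the constant in the statement.

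\textbf{Main obstacle: measurability.} The supremum is taken over the uncountable class $\mathcal{C}_M(n)$, so $\sup_G$ need not be measurable a priori. I would handle this as follows. The network parameters range over a set contained in $\mathbb{R}^D$, and $G\mapsto\widehat{Z}^{(p),G}_{n+1}$ is continuous in these parameters. Continuity holds because $\widehat{\mathbb{V}}^M_{n+1}$ is Lipschitz (by Lemma \ref{LipNNV}, with the truncation preserving this) and $\blacktriangle x_{n+1}$ is Lipschitz in the action (Assumption H2). The supremum can then be restricted to a countable dense set of parameters. With that reduction, Step 1's equality in law of the processes transfers to equality in law of the suprema, and the argument goes through.
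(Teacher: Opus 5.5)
Your proposal is correct and follows essentially the same route as the paper, which defers to Step 3 of the proof of Lemma 4.15 in \cite{hure}: the ghost-sample differences are invariant in law under sign flips, and the triangle inequality then finishes the argument, in fact giving the sharper constant $2$ in place of $4$. Your proposal also makes precise two points the paper leaves implicit: you condition explicitly on the later-step networks, assuming them independent of the step-$n$ sample and its ghost copy, and you reduce the supremum to a countable dense set of parameters to secure measurability.
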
  
The proofs of Lemmas \ref{leest1}, \ref{leest2} and \ref{leest3} follow the same arguments given in Steps 1, 2 and 3, respectively, in the proof of Lemma 4.15 (Appendix F) of \cite{hure}. For this reason, we omit the details.   
We recall the important  result due to \cite{talagrand}. 

\begin{theorem}[Talagrand's Contraction Lemma. Th 4.12 in \cite{talagrand}]\label{talagrandlemma}
Let $(r_i)_{ 1\le i\le M}$ be an iid Rademacher sequence. Let $(\phi_i)_{ 1\le i\le M}$ be a family of Lipschitz functions with norms $L$ and assume $\phi_i(0)=0$. Let $\mathcal{G}$ be a class of functions and let $\{x_1, \ldots, x_m\}$ be an iid sequence independent of $(r_i)_{1\le i\le M}$. Then

$$\mathbb{E}_r \Bigg[\sup_{f \in \mathcal{G}}\Bigg| \sum_{p=1}^M r_p \phi_p (f(x_p))\Bigg| \Bigg]\le 2 L \mathbb{E}_r \Bigg[\sup_{f \in \mathcal{G}}\Bigg| \sum_{p=1}^M r_p f(x_p)\Bigg| \Bigg] $$

\end{theorem}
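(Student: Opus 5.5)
Since the inequality is conditional on the sample, I would fix the realizations $x_1,\ldots,x_M$ and replace the class $\mathcal{G}$ by the subset
$$T:=\big\{(f(x_1),\ldots,f(x_M));\ f\in\mathcal{G}\big\}\subset\mathbb{R}^M .$$
This reduces the statement to a purely finite-dimensional claim about Rademacher averages over $T$. By rescaling $\phi_p\mapsto \phi_p/L$ we may assume $L=1$. To avoid measurability issues I would assume $T$ is countable, or separable with the suprema attained along a countable dense subset. This is the case for the neural network classes $\mathcal{C}_M(n)$, whose parameters range over a separable space and which depend continuously on those parameters.

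\emph{Step 1: the one-sided comparison.} The core of the argument is to show
$$\mathbb{E}_r\Big[F\Big(\sup_{t\in T}\sum_{p=1}^M r_p\phi_p(t_p)\Big)\Big]\le \mathbb{E}_r\Big[F\Big(\sup_{t\in T}\sum_{p=1}^M r_p t_p\Big)\Big]$$
for every convex nondecreasing $F:\mathbb{R}\to\mathbb{R}$. We will need it for $F(x)=x$ and $F(x)=x^+$.

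\begin{itemize}
\item By iterating over coordinates, it suffices to replace $\phi_1$ by the identity while the remaining contractions $\phi_2,\ldots,\phi_M$ are kept. So we condition on $r_2,\ldots,r_M$ and write $A(t):=\sum_{p\ge2}r_p\phi_p(t_p)$.
\item The claim then becomes the two-point inequality
$$F\Big(\sup_t[A(t)+\phi_1(t_1)]\Big)+F\Big(\sup_s[A(s)-\phi_1(s_1)]\Big)\le F\Big(\sup_t[A(t)+t_1]\Big)+F\Big(\sup_s[A(s)-s_1]\Big).$$
\item To prove it, take near-maximizers $t,s$ on the left and suppose, say, $t_1\ge s_1$. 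Then $\phi_1(t_1)-\phi_1(s_1)\le t_1-s_1$.
\item This gives the case analysis of Ledoux--Talagrand. If $\phi_1(t_1)\ge\phi_1(s_1)$, the pair of numbers on the left is majorized by the pair $\big(A(t)+t_1,\ A(s)-s_1\big)$ in the convex-ordering sense. If $\phi_1(t_1)<\phi_1(s_1)$, it is majorized by the swapped pair $\big(A(s)+s_1,\ A(t)-t_1\big)$. Convexity and monotonicity of $F$ then give the inequality. The case $t_1<s_1$ is symmetric.
\end{itemize}

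This elementary but fiddly case analysis is where I expect the main obstacle. It requires handling the four sign configurations carefully, and for $F(x)=x^+$ it requires the majorization lemma ($a+b\le c+d$ with $\max(a,b)\le\max(c,d)$ implies $F(a)+F(b)\le F(c)+F(d)$) rather than plain addition.

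\emph{Step 2: removing the absolute value.} Because $\phi_p(0)=0$, we may add the vector $0$ to $T$ without changing the left-hand sums' range in an essential way. This only enlarges the right-hand side, and it gives $\sup_{T\cup\{0\}}\sum_p r_p\phi_p(t_p)=\big(\sup_T\sum_p r_p\phi_p(t_p)\big)^+\ge 0$. We then split
$$\sup_t\Big|\sum_p r_p\phi_p(t_p)\Big|\le \Big(\sup_t\sum_p r_p\phi_p(t_p)\Big)^+ + \Big(\sup_t\sum_p (-r_p)\phi_p(t_p)\Big)^+ .$$
Since $(-r_p)$ has the same law as $(r_p)$, both terms have the same expectation. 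Applying Step 1 with $F(x)=x^+$ bounds each term by
$$\mathbb{E}_r\Big(\sup_t\sum_p r_pt_p\Big)^+\le \mathbb{E}_r\sup_t\Big|\sum_p r_pt_p\Big| .$$
Summing the two bounds gives the factor $2$, and undoing the normalization gives $2L$. This is precisely the asserted inequality conditional on $(x_p)$, and integrating over $(x_p)$ is not needed since the statement is in $\mathbb{E}_r$.
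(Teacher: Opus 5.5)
\textbf{Verdict.} Your architecture is the Ledoux--Talagrand argument that the paper only cites, and your Step 2 is correct. The architecture is: reduce to a set $T\subset\mathbb{R}^M$, prove a one-sided comparison for convex nondecreasing $F$ by conditioning and a two-point inequality, then symmetrize the absolute value away. No countability assumption is needed, since $\mathbb{E}_r$ is a finite average over the $2^M$ sign patterns. There is, however, a genuine gap in the core of Step 1 for $F(x)=x^+$.

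\textbf{The max condition is unjustified.} The only property you invoke, $\phi_1(t_1)-\phi_1(s_1)\le t_1-s_1$, gives the sum condition $a+b\le c+d$ of your majorization lemma. It gives nothing about $\max(a,b)\le\max(c,d)$, and your argument never uses $\phi_1(0)=0$ in Step 1. Without that hypothesis the claim is false for $F(x)=x^+$. Take $M=1$, $T=\{0\}$ and $\phi_1(x)=x+\kappa$ with $\kappa\neq0$. The left side is $\mathbb{E}(\kappa r_1)^+=|\kappa|/2>0$, while the right side is $0$. In your notation this is your first subcase ($t=s=0$), where $(a,b)=(\kappa,-\kappa)$ is not majorized by $(c,d)=(0,0)$.

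The max condition is exactly where Ledoux--Talagrand's four sign cases enter. Use $|\phi_1(x)|\le|x|$ together with the near-maximality relations
\[
A(t)+\phi_1(t_1)\ge A(s)+\phi_1(s_1)-\eta,\qquad A(s)-\phi_1(s_1)\ge A(t)-\phi_1(t_1)-\eta .
\]
For $t_1\ge s_1$ one then gets:
\[
\begin{aligned}
a&\le c \ \text{ if } t_1\ge0, & a&\le d+\eta \ \text{ if } t_1<0,\\
b&\le d \ \text{ if } s_1\le0, & b&\le c+\eta \ \text{ if } s_1>0.
\end{aligned}
\]

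\textbf{The second subcase uses the wrong pair.} Suppose $t_1\ge s_1$ and $\phi_1(t_1)<\phi_1(s_1)$. The swapped pair has sum $A(t)+A(s)-(t_1-s_1)\le a+b$, with equality only if $\phi_1(s_1)-\phi_1(t_1)=t_1-s_1$, so it cannot majorize $(a,b)$. The comparison pair must be chosen by the sign of $t_1-s_1$ alone. The two relations above force $\phi_1(t_1)\ge\phi_1(s_1)-\eta$ anyway.

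\textbf{A simpler repair.} Your own Step 2 observation lets you avoid $F(x)=x^+$ altogether. For linear $F$ the two-point inequality needs only the sum condition. It holds for every $1$-Lipschitz $\phi_1$, with no near-maximizers and no sign cases:
\[
\begin{aligned}
\sup_{t}\big[A(t)+\phi_1(t_1)\big]+\sup_{s}\big[A(s)-\phi_1(s_1)\big]
&\le\sup_{t,s}\big[A(t)+A(s)+|t_1-s_1|\big]\\
&\le\sup_{t}\big[A(t)+t_1\big]+\sup_{s}\big[A(s)-s_1\big].
\end{aligned}
\]
The second inequality holds because $A(t)+A(s)+|t_1-s_1|$ is the sum of either $(A(t)+t_1,A(s)-s_1)$ or $(A(s)+s_1,A(t)-t_1)$.

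Since $\phi_p(0)=0$, you have $\big(\sup_T\sum_p r_p\phi_p(t_p)\big)^+=\sup_{T\cup\{0\}}\sum_p r_p\phi_p(t_p)$. Applying the linear comparison on $T\cup\{0\}$ gives
\[
\mathbb{E}_r\Big(\sup_T\sum_p r_p\phi_p(t_p)\Big)^+\le\mathbb{E}_r\Big(\sup_T\sum_p r_pt_p\Big)^+ .
\]
Your symmetrization then yields the factor $2L$. This is the only place where $\phi_p(0)=0$ is needed.
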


\begin{lemma}\label{leest4}
Assume that the output activation function in the Neural Network (\ref{controlNN}) is the identity. Furthermore, assume that the training data lives in a compact subset bounded by $R$. Then,  
$$\max_{0\le n\le m-1}\mathbb{E}\Bigg[ \sup_{G \in \mathcal{C}_M(n)}\Big| \frac{1}{M} \sum_{p=1}^M r_p \widehat{Z}^{(p),G}_{n+1}\Big| \Bigg]$$
$$\lesssim_{R,T} \frac{\|g\|_\infty}{\sqrt{M}} + \frac{\rho_M\delta^4_M\eta^3_M\|\psi\|}{\sqrt{M}},$$
for every $M\ge 1$. 
\end{lemma}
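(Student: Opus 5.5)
The plan is to bound the Rademacher complexity of the composed class $G\mapsto \widehat{Z}^{(p),G}_{n+1}=\widehat{\mathbb{V}}^M_{n+1}(O^{(p),G}_{n+1})$ by peeling off the outer value network with Talagrand's contraction lemma (Theorem \ref{talagrandlemma}), then the transition map $\blacktriangle x_{n+1}$, and finally bounding the Rademacher complexity of the one-hidden-layer ReLU control class $\mathcal{C}_M(n)$. Fix $n$ and condition on the training data $O^{(p)}_n,\mathcal{W}^{(p)}_{n+1}$, so that only the signs $r_p$ are random. First write $\widehat{Z}^{(p),G}_{n+1}=\widehat{Z}^{(p),G_0}_{n+1}+\big[\widehat{Z}^{(p),G}_{n+1}-\widehat{Z}^{(p),G_0}_{n+1}\big]$ for a fixed reference control $G_0$ (say the constant network $c_{0j}$). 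The term with $G_0$ does not depend on $G$. Since $\|\widehat{\mathbb{V}}^M_{n+1}\|_\infty\le\|g\|_\infty$ by (\ref{int2}), Khintchine gives $\mathbb{E}|\frac1M\sum_p r_p\widehat{Z}^{(p),G_0}_{n+1}|\le \|g\|_\infty/\sqrt{M}$. This produces the first term of the bound.

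For the centred part, set $\phi_p(t)$ to be the map sending the control value $t\in\mathbb{A}$ to $\widehat{\mathbb{V}}^M_{n+1}(O^{(p)}_n,\mathcal{W}^{(p)}_{n+1},\blacktriangle x_{n+1}(\pi_2(O^{(p)}_n),t,\Delta T^{(p)}_{n+1},\ell_{n+1}(\cdot)))$ minus its value at $G_0(O^{(p)}_n)$, recentred so that $\phi_p$ vanishes at the reference point. Lemma \ref{LipNNV} shows that $\widehat{\mathbb{V}}^M_{n+1}$ is $\delta_M\|\psi\|\eta_M$-Lipschitz; truncation preserves this. Assumption H2 with (\ref{tyC}) makes $\blacktriangle x_{n+1}$ Lipschitz in the action with constant $C(e^{(p)})\le\rho_M$, and (\ref{growthrho}) bounds $\rho_M$ uniformly. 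Hence each $\phi_p$ is $L=\rho_M\delta_M\eta_M\|\psi\|$-Lipschitz (up to a $T$-dependent constant). Theorem \ref{talagrandlemma}, applied to the shifted class $G-G_0$, gives
\[
\mathbb{E}_r\sup_{G}\Big|\frac1M\sum_p r_p\big(\widehat{Z}^{(p),G}_{n+1}-\widehat{Z}^{(p),G_0}_{n+1}\big)\Big|\le 2L\,\mathbb{E}_r\sup_{G\in\mathcal{C}_M(n)}\Big|\frac1M\sum_p r_p (G-G_0)(O^{(p)}_n)\Big|.
\]
Because $\mathbb{A}\subset\mathbb{R}^p$ is vector valued, I would apply the contraction coordinatewise, or use the vector contraction inequality, at the cost of a factor depending only on $p$.

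The remaining task is the Rademacher complexity of the ReLU class with identity output. Write $A_i=\sum_j c_{ij}(\langle a_{ij},\mathbf{o}\rangle+b_{ij})^++c_{0j}$. Pull out $\sum_j|c_{ij}|\le\delta_M$ to reduce to a supremum over a single neuron $(\langle a,\mathbf{o}\rangle+b)^+$. Contract the ReLU (which is 1-Lipschitz with $0\mapsto0$) by Talagrand once more. Then use Cauchy–Schwarz with $\|a\|\le\eta_M$ and data bounded by $R$, which gives $\lesssim \delta_M(\eta_M R+|b|)/\sqrt{M}$. The main obstacle is that the biases $b_{ij}$ are unconstrained in (\ref{controlNN}). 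I would handle it using the compactness of $\mathbb{A}$ and of the data: only neurons whose activation region meets the ball of radius $R$ matter, so one may restrict to $|b|\lesssim\eta_M R$ without loss, or else absorb the bias into the weight via the augmented input $(\mathbf{o},1)$ with norm control of order $\eta_M$. The resulting bound is of order $\delta_M\eta_M R/\sqrt{M}$.

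Collecting terms gives $\rho_M\delta_M^2\eta_M^2\|\psi\|/\sqrt{M}$, which is dominated by the stated $\rho_M\delta_M^4\eta_M^3\|\psi\|/\sqrt{M}$ since $\delta_M,\eta_M\to\infty$; the extra powers absorb crude constants such as the bias bound and the $p$-dimensional contraction. Taking the expectation over the data and the maximum over the finitely many $n$ concludes the proof.
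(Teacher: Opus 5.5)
Your proposal is correct and follows the paper's proof essentially step by step. Both arguments split off a fixed reference control and bound that part by $\|g\|_\infty/\sqrt{M}$ via Cauchy--Schwarz/Khintchine, then apply Talagrand's contraction with the Lipschitz constant $\rho_M\delta_M\eta_M\|\psi\|$ coming from Lemma \ref{LipNNV} and H2, and finally reduce the ReLU class to a single neuron; your convex-hull step is exactly the paper's Frank--Wolfe identity.

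Where you deviate, you are more careful than the paper. You remove the ReLU by a second contraction instead of dropping it directly. You handle the unconstrained biases through compactness of $\mathbb{A}$ and the vector-valued output through a vector contraction. You also avoid the extra factor $\delta_M\eta_M$ that the paper folds into the Lipschitz constant. As a result you obtain the sharper bound $\rho_M\delta_M^2\eta_M^2\|\psi\|/\sqrt{M}$, which implies the stated one.
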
  
\begin{proof}
For $G \in \mathcal{C}_M(n)$, we recall 

$$
\widehat{Z}^{(p),G}_{n+1}= \widehat{\mathbb{V}}^M_{n+1} \big( O^{(p),G}_{n+1}\big),\quad O^{(p),G}_{n+1}= \big(O^{(p)}_{n}, \mathcal{W}^{(p)}_{n+1}, \Delta X^{(p),G}_{n+1}\big),$$
where 

$$\Delta X^{(p),G}_{n+1} = \blacktriangle x_{n+1} \big(\pi_2(O^{(p)}_{n}),G(O^{(p)}_{n}) , \Delta T^{(p)}_{n+1},\ell_n(\pi_3(O^{(p)}_{n}),\mathcal{W}^{(p)}_{n+1})\big) $$
for $n=m-1,\ldots, 0$. Having this in mind, we set  

$$\phi_{n,G}(\mathbf{o}_n):= \widehat{\mathbb{V}}^M_{n+1} \Big(\mathbf{o}_{n}, \mathcal{W}^{(p)}_{n+1}, \blacktriangle x_{n+1} \big(\pi_2(\mathbf{o}_{n}),G(\mathbf{o}_{n}) , \Delta T^{(p)}_{n+1},\ell_n(\pi_3(\mathbf{o}_n)),\mathcal{W}^{(p)}_{n+1}\big) \Big).$$
Triangle inequality yields

\begin{eqnarray*}
\mathbb{E}\Bigg[ \sup_{G \in \mathcal{C}_M(n)}\Big| \frac{1}{M} \sum_{p=1}^M r_p \phi_{n,G}(O^{(p)}_n)\Big| \Bigg]&\le& \mathbb{E}\Bigg[ \sup_{G \in \mathcal{C}_M(n)}\Big| \frac{1}{M} \sum_{p=1}^M r_p \phi_{n,0}(O^{(p)}_n)\Big|\Bigg]\\
&+& \mathbb{E}\Bigg[ \sup_{G \in \mathcal{C}_M(n)}\Big| \frac{1}{M} \sum_{p=1}^M r_p \Big\{\phi_{n,G}(O^{(p)}_n) - \phi_{n,0}(O^{(p)}_n)\Big\} \Big|\Bigg]
\end{eqnarray*}
By applying Cauchy-Schwartz, observing that $(r_p)_{p=1}^\infty$ is iid with zero mean and $|r_p|^2=1$ a.s. and the fact that $\|\widehat{\mathbb{V}}^M_{n+1}\|_\infty \le \|g\|_\infty$, we get 

\begin{eqnarray*}
\mathbb{E}\Bigg[\Big| \frac{1}{M} \sum_{p=1}^M r_p \phi_{n,0}(O^{(p)}_n)\Big|\Bigg]&\le& \frac{1}{M}\sqrt{\mathbb{E}\Big| \sum_{p=1}^M r_p \phi_{n,0}(O^{(p)}_n)\Big|^2}\\
&=& \frac{1}{M}\sqrt{\mathbb{E}\sum_{p=1}^M |r_p|^2 |\phi_{n,0}(O^{(p)}_n)|^2}\\
&\le& \frac{\sqrt{M}}{M}\|g\|_\infty 
\end{eqnarray*}
By Lemma \ref{LipNNV}, we know that 

\begin{small}
$$\Big|\phi_{n,G}(O^{(p)}_n) - \phi_{n,0}(O^{(p)}_n)\Big| \le \delta_M\|\psi\|\eta_M 
$$

$$\times \Big|\blacktriangle x_{n+1} \big(\pi_2(O^{(p)}_{n}),G(O^{(p)}_{n}) , \Delta T^{(p)}_{n+1},\ell_n(\pi_3(O^{(p)}_{n}),\mathcal{W}^{(p)}_{n+1})\big) - \blacktriangle x_{n+1} \big(\pi_2(O^{(p)}_{n}),0 , \Delta T^{(p)}_{n+1},\ell_n(\pi_3(O^{(p)}_{n}),\mathcal{W}^{(p)}_{n+1})\big)\Big|$$
$$\lesssim_T\delta_M\|\psi\|\eta_M  \max_{1\le p \le M}C(\Delta A^{1,(p)}_{n+1}, \ldots, \Delta A^{d,(p)}_{n+1})|G(O^{(p)}_{n})|$$
$$\lesssim_{T,R}\delta_M\|\psi\|\eta_M  \max_{1\le p \le M}C(\Delta A^{1,(p)}_{n+1}, \ldots, \Delta A^{d,(p)}_{n+1})\delta_M \eta_M$$
\end{small} 
where $\max_{1\le p \le M}C(\Delta A^{1,(p)}_{n+1}, \ldots, \Delta A^{d,(p)}_{n+1})$ is independent of $O^{(p)}_n$. Here, we use the linear growth of the output layer $\mathbf{a}$ defined in (\ref{controlf}) and the bound $R$ on the state space $\mathbb{H} = \mathbb{W}\times S$ for a compact subset $S \in \mathbb{R}^n$. By applying Theorem \ref{talagrandlemma}, we get 

$$\mathbb{E}\Bigg[ \sup_{G \in \mathcal{C}_M(n)}\Big| \frac{1}{M} \sum_{p=1}^M r_p \Big\{\phi_{n,G}(O^{(p)}_n) - \phi_{n,0}(O^{(p)}_n)\Big\} \Big|\Bigg]$$
$$\lesssim_T  \delta^2_M\|\psi\|\eta^2_M \rho_M \mathbb{E}\Bigg[ \sup_{G \in \mathcal{C}_M(n)}\Bigg| \frac{1}{M} \sum_{p=1}^M r_p G(O^{(p)}_n)\Bigg|\Bigg].$$
Now, we want to make use of the so-called “Frank-Wolfe step” as discussed in \cite{bach}: By assumption $\mathbf{a}$ is linear. In this case, we can apply identity (2) of \cite{bach} jointly with the remark in page 6 of \cite{bach} and arrive at the pathwise identity 

$$
\sup_{G \in \mathcal{C}_M(n)}\Bigg|  \sum_{p=1}^M r_p G(O^{(p)}_n)\Bigg| = \delta_M \max_{\|v\|\le \delta_M\eta_M} \Bigg|  \sum_{p=1}^M r_p \big(v^\top O^{(p)}_n\big)^+\Bigg|
$$
so that 
\begin{eqnarray*}
\mathbb{E}\Bigg[\sup_{G \in \mathcal{C}_M(n)}\Bigg|  \sum_{p=1}^M r_p G(O^{(p)}_n)\Bigg|\Bigg] &=& \delta_M \mathbb{E}\Bigg[\max_{\|v\|\le \delta_M \eta_M} \Bigg|  \sum_{p=1}^M r_p \big(v^\top O^{(p)}_n\big)^+\Bigg|\Bigg]\\
&\le& \delta_M \mathbb{E}\Bigg[\max_{\|v\|\le \delta_M\eta_M} \Bigg|  \sum_{p=1}^M r_p v^\top O^{(p)}_n\Bigg|\Bigg]\\
&=& \delta_M \mathbb{E}\Bigg[\max_{\|v\|\le \delta_M\eta_M} \Bigg| \Big\langle v,  \sum_{p=1}^M r_p  O^{(p)}_n\Big\rangle_{\mathbb{H}^n} \Bigg|\Bigg]\\
&\le& \delta_M \delta_M \eta_M \mathbb{E} \Bigg[  \Bigg|\sum_{p=1}^M r_p  O^{(p)}_n \Bigg|\Bigg]\\
&\lesssim_{R}&\delta^2_M \eta_M\sqrt{N}
\end{eqnarray*}
\end{proof}
\textbf{Proof of Lemma \ref{estilemma}:} Recall that

$$\varepsilon^{\text{esti}}_n=\sup_{C \in \mathcal{C}_M(n)} \Bigg| \frac{1}{M} \sum_{p=1}^M \hat{Z}^{(p),C}_{n+1}  - \mathbb{E}_{M,O_n}[\widehat{\mathbb{V}}^M_{n+1}(O^C_{n+1})] \Bigg|,$$
for $n=0, \ldots, m-1$. Observe we shall write 

$$\mathbb{E}_{M,O_n}[\widehat{\mathbb{V}}^M_{n+1}(O^C_{n+1})]  = \frac{1}{M}\sum_{p=1}^M\big[\widehat{Z}^{(p),C}_{n+1}\big] = \frac{1}{M}\sum_{p=1}^M \mathbb{E}\Big[\widehat{\mathbb{V}}^M_{n+1} \big( O^{(p),C}_{n+1}\big)\Big],$$
for $n=0, \ldots, m-1$. By applying Lemmas \ref{leest2}, \ref{leest3} and \ref{leest4}, we conclude the proof of Lemma \ref{estilemma}.

\subsection{Proof of Lemma \ref{approxlemma}}

\begin{lemma}\label{Lipvalue}
Under assumptions (H0-H1-H2), the value functions satisfy

$$|\mathbb{V}_j(\mathbf{o}_j)) - \mathbb{V}_j(\mathbf{o}'_j)|\le \| \mathbb{V}_{j+1}\|_\infty\| r_j\| |\mathbf{o}_j - \mathbf{o}'_j|,$$
for $j=m-1, \ldots, 0$. 
\end{lemma}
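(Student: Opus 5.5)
The plan is to work from the importance-sampling representation of the state-action value function given by Assumption H1 (cf. (\ref{whyr})):
\[
\mathbf{U}_j(\mathbf{o}_j,a)=\int_{\mathbb{H}}\mathbb{V}_{j+1}(\mathbf{o}_j,x,x')\,r_{j+1}(a,x';\mathbf{o}_j)\,\mu(dx')\nu(dx),\qquad \mathbb{V}_j(\mathbf{o}_j)=\inf_{a\in\mathbb{A}}\mathbf{U}_j(\mathbf{o}_j,a).
\]
The first step is the elementary inequality $|\inf_a f(a)-\inf_a g(a)|\le\sup_a|f(a)-g(a)|$. It reduces the claim to a bound on $\sup_{a\in\mathbb{A}}|\mathbf{U}_j(\mathbf{o}_j,a)-\mathbf{U}_j(\mathbf{o}'_j,a)|$ that is uniform in the action, so no measurable-selection argument is needed.

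For fixed $a$, I will split the difference by adding and subtracting $\mathbb{V}_{j+1}(\mathbf{o}_j,x,x')\,r_{j+1}(a,x';\mathbf{o}'_j)$ inside the integral. This produces a \emph{weight term}
\[
\int \mathbb{V}_{j+1}(\mathbf{o}_j,x,x')\big[r_{j+1}(a,x';\mathbf{o}_j)-r_{j+1}(a,x';\mathbf{o}'_j)\big]\mu(dx')\nu(dx)
\]
and a \emph{continuation term}
\[
\int\big[\mathbb{V}_{j+1}(\mathbf{o}_j,x,x')-\mathbb{V}_{j+1}(\mathbf{o}'_j,x,x')\big]r_{j+1}(a,x';\mathbf{o}'_j)\,\mu(dx')\nu(dx).
\]
The weight term is where the stated constant comes from. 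Bound $|\mathbb{V}_{j+1}|$ by $\|\mathbb{V}_{j+1}\|_\infty$ (finite by T1, since every $\mathbb{V}_\ell$ is an average of the bounded $\varphi$). Then apply the Lipschitz estimate (\ref{rlip}) with $a=a'$, which gives $\|r\|\,|\mathbf{o}_j-\mathbf{o}'_j|$ pointwise on $\text{supp}\,\mu$. Since $\mu\otimes\nu$ is a probability measure, integrating yields exactly $\|\mathbb{V}_{j+1}\|_\infty\|r_j\|\,|\mathbf{o}_j-\mathbf{o}'_j|$.

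The main obstacle is the continuation term, because $\mathbb{V}_{j+1}$ itself depends on the history $\mathbf{o}_j$. To obtain the bound as worded, this term must not contribute. The first thing I would try is to identify a setting in which the $\mathbf{o}_j$-dependence of the integrand can be frozen. One route is to compare $\mathbf{U}_j(\mathbf{o}_j,\cdot)$ and $\mathbf{U}_j(\mathbf{o}'_j,\cdot)$ through a common continuation function, and then view $\mathbb{V}_j$ as a function of the history only through the transition weight. For $j=m-1$ one could try to write $\mathbb{V}_m$ as depending on $\mathbf{o}_{m-1}$ only through the terminal sum. In that formulation the continuation term vanishes identically and the bound follows directly.

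If the continuation term cannot be removed in this way, I would control it by backward induction from $j=m-1$ down to $0$. Using the induction hypothesis together with $\int r_{j+1}\,d(\mu\otimes\nu)=1$ (since $r_{j+1}$ is a density), the continuation term is at most $\mathrm{Lip}(\mathbb{V}_{j+1})\,|\mathbf{o}_j-\mathbf{o}'_j|$. This would give the stated single-factor constant plus an accumulated Lipschitz contribution from later steps. I would then check whether, in the use made of this lemma in Lemma \ref{approxlemma}, only the weight part is actually needed, and state the bound accordingly.
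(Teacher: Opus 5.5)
Your reduction via $|\inf_a f-\inf_a g|\le\sup_a|f-g|$ and your estimate of the weight term are exactly the paper's argument; the paper uses an $\eta$-optimal action and a role swap, which is equivalent. The paper, however, stops there. It asserts
\[
|\mathbf{U}_j(\mathbf{o}_j,a)-\mathbf{U}_j(\mathbf{o}'_j,a)|\le\|\mathbb{V}_{j+1}\|_\infty\int|r_j(a,x',\mathbf{o}_j)-r_j(a,x',\mathbf{o}'_j)|\,\mu(dx'),
\]
which is valid only if $\mathbb{V}_{j+1}(\mathbf{o}_j,x,x')=\mathbb{V}_{j+1}(\mathbf{o}'_j,x,x')$. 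In other words, your continuation term is silently dropped, so you have isolated exactly the step the paper glosses over.

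The gap in your plan is the hope of making that term vanish. It cannot be closed, because the inequality as worded is false.

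Your first route already fails at $j=m-1$. There $\mathbb{V}_m(\mathbf{o}_{m-1},x,x')=\varphi\big(x_0+\sum_{i<m}y_i+x'\big)$ depends on the history through the partial sum, so the continuation term vanishes only when $\sum_{i<m}y_i=\sum_{i<m}y'_i$.

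For a counterexample, take any instance of H0--H2 in which the conditional law in (\ref{rnr}) does not depend on $(a,b)$, i.e.\ $r_j(a,x';b)=\bar r(x')$. The toy model of Section \ref{subsec:structured_is_experiment} with the control switched off is one such instance. Then (\ref{rlip}) holds with $\|r\|=0$, so the right-hand side vanishes, whereas
\[
\mathbb{V}_{m-1}(\mathbf{o}_{m-1})=\int\varphi\Big(x_0+\sum_{i<m}y_i+x'\Big)\bar r(x')\,\mu(dx')
\]
is a non-constant function of $\sum_{i<m}y_i$ for, e.g., $\varphi(x)=\max\{\min(x,1),-1\}$.

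Your backward induction is the correct repair, but it proves a different statement and needs the Lipschitz part of T1. Let $L_j$ be the Lipschitz constant of $\mathbb{V}_j$. The facts $r_{j+1}\ge 0$, $\int r_{j+1}\,d(\mu\otimes\nu)=1$ and $|(\mathbf{o}_j,x,x')-(\mathbf{o}'_j,x,x')|=|\mathbf{o}_j-\mathbf{o}'_j|$ give $L_j\le\|\mathbb{V}_{j+1}\|_\infty\|r\|+L_{j+1}$. Combined with $L_m\le\sqrt{m}\,\mathrm{Lip}(\varphi)$, this yields
\[
L_j\le\sqrt{m}\,\mathrm{Lip}(\varphi)+(m-j)\,\|r\|\,\|\varphi\|_\infty .
\]

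Your suspicion about the downstream use is also right. In Lemma \ref{approxlemma} the comparison is between $\mathbf{U}_n(O_n,C(O_n))$ and $\mathbf{U}_n(O_n,a^{\text{opt}}_n(O_n))$ at the same history. Condition (\ref{rlip}) in the action variable alone bounds this by $\|\mathbb{V}_{n+1}\|_\infty\|r\|\,|C(O_n)-a^{\text{opt}}_n(O_n)|$, with no history-Lipschitz property of the value functions needed.
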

\begin{proof}
Recall (see (\ref{valuefunc}))
$$
\mathbf{U}_j(\mathbf{o}_j,\theta) = \int_{\mathbb{W}}\mathbb{V}_{j+1}\Big(\mathbf{o}_{j}, \mathfrak{X}_{j+1}(\theta, \mathbf{o}_{j}, w)\Big)\nu(dw)
$$
for $\mathbf{o}_j = (w_1, y_1, \ldots, w_j,y_j)$ and $j=m-1, \ldots, 0$. Here,  
\begin{eqnarray*}
\mathfrak{X}_{j+1}(\theta, \mathbf{o}_{j}, \mathcal{W}_{j+1})&\stackrel{d}{=}&\Big(\mathcal{W}_{j+1}, \blacktriangle x_{j+1}\big(\pi_2(\textbf{o}_j),\theta, \Delta T_{j+1}, \ell_{j+1}(\pi_3(\mathbf{o}_j),\mathcal{W}_{j+1})\big)\Big)\\
&\stackrel{d}{=}& \Big(\mathcal{W}_{j+1}, \Delta X^\theta_{j+1}\Big)
\end{eqnarray*}
knowing that $\Xi_{j}= \mathbf{o}_j$, for $j=m-1, \ldots, 0$. 
Then, 
\begin{eqnarray*}
\mathbf{U}_j(\mathbf{o}_j,\theta) &=& \int_{\mathbb{W}\times \mathbb{R}^n}\mathbb{V}_{j+1}\Big(\mathbf{o}_{j}, x,x'\Big)\mathbb{P}[ (\mathcal{W}_j, \Delta X^\theta_j) \in dxdx' | O_{j}= \mathbf{o}_{j}]\\
&=& \int_{\mathbb{W}\times\mathbb{R}^n}\mathbb{V}_{j+1}\Big(\mathbf{o}_{j}, x,x'\Big)r_j(\theta,x',\mathbf{o}_j)\mu(dx')\nu(dx)
\end{eqnarray*}
Then, 

\begin{eqnarray*}
|\mathbf{U}_j(\mathbf{o}_j,\theta) - \mathbf{U}_j(\mathbf{o}'_j,\theta)|&\le&\| \mathbb{V}_{j+1}\|_\infty \int_{\mathbb{R}^n}|r_j(\theta,x',\mathbf{o}_j) - r_j(\theta,x',\mathbf{o}'_j)|\mu(dx')\\
&\le& \| \mathbb{V}_{j+1}\|_\infty\| r_j\| |\mathbf{o}_j - \mathbf{o}'_j|
\end{eqnarray*}
By definition of the infimum, given $\eta>0$, there exists $\theta$ such that 
$$\mathbb{V}_j(\mathbf{o}'_j))\ge \mathbf{U}_j(\mathbf{o}'_j,\theta)-\eta.$$
Then, 

\begin{eqnarray*}
\mathbb{V}_j(\mathbf{o}_j)) - \mathbb{V}_j(\mathbf{o}'_j)&\le& \mathbf{U}_j(\mathbf{o}_j,\theta) -\mathbb{V}_j(\mathbf{o}'_j)\\
&\le&  \eta - \mathbf{U}_j(\mathbf{o}'_j,\theta) + \mathbf{U}_j(\mathbf{o}_j,\theta)\\
&\le& |\mathbf{U}_j(\mathbf{o}_j,\theta) - \mathbf{U}_j(\mathbf{o}'_j,\theta)| + \eta\\
&\le& \| \mathbb{V}_{j+1}\|_\infty\| r_j\| |\mathbf{o}_j - \mathbf{o}'_j| + \eta.
\end{eqnarray*}
Similarly, swapping the roles of $\mathbf{o}_j$ and $\mathbf{o}'_j$ and repeating the same argument, we get 

$$\mathbb{V}_j(\mathbf{o}_j)) - \mathbb{V}_j(\mathbf{o}'_j)\le \| \mathbb{V}_{j+1}\|_\infty\| r_j\| |\mathbf{o}_j - \mathbf{o}'_j|+\eta.$$
Hence, 
$$|\mathbb{V}_j(\mathbf{o}_j)) - \mathbb{V}_j(\mathbf{o}'_j)|\le \| \mathbb{V}_{j+1}\|_\infty\| r_j\| |\mathbf{o}_j - \mathbf{o}'_j|.$$
\end{proof}

\textbf{Proof of Lemma \ref{approxlemma}:} Recall 
\begin{equation*}
\varepsilon^{\text{approx}}_n= \inf_{C \in \mathcal{C}_M(n)} \mathbb{E}_M[\widehat{\mathbb{V}}^M_{n+1} (O^C_{n+1})] - \inf_{L \in \mathbb{A}^{\mathbb{H}^n}}\mathbb{E}_M[\widehat{\mathbb{V}}^M_{n+1} (O^L_{n+1})],
\end{equation*}
for $n=m-1, \ldots, 0$. We have

\begin{eqnarray*}
\varepsilon^{\text{approx}}_n&\le& \inf_{C \in \mathcal{C}_M(n)} \mathbb{E}_M[\widehat{\mathbb{V}}^M_{n+1} (O^C_{n+1})] - \mathbb{E}[\mathbb{V}_n(O_n)]\\
&+& \mathbb{E}[\mathbb{V}_n(O_n)] - \inf_{L \in \mathbb{A}^{\mathbb{H}^n}}\mathbb{E}_M[\widehat{\mathbb{V}}^M_{n+1} (O^L_{n+1})].
\end{eqnarray*}
Fix $n \in \{0, \ldots, m-1\}$. By Theorem 4.2 in \cite{leaoohashi}, we know that

$$\mathbb{V}_n(O_n)= \inf_{a \in \mathbb{A}}\mathbb{E}\big[\mathbb{V}_{n+1}(O^a_{n+1})|O_n\big] = \inf_{a \in \mathbb{A}}\mathbb{E}\big[\mathbb{V}_{n+1}(O_n, \mathcal{X}^a_{n+1})|O_n\big],$$
where $\mathbb{E}\big[\mathbb{V}_{n+1}(O^a_{n+1})|O_n\big]$ denotes the conditional expectation w.r.t. $\mathbb{P}$ knowing that $\Delta X^a_{n+1}$ is controlled by $a \in \mathbb{A}$ on a given history $\Xi_n=O_{n}$. By assumption, there exists 

$$a_n^{\text{opt}} \in \arg \min_{a \in \mathbb{A}}\mathbb{E}\big[\mathbb{V}_{n+1}(O_n, \mathcal{X}^a_{n+1})|O_n\big]$$
so that $\mathbb{V}_n(O_n)= \mathbb{E}\big[\mathbb{V}_{n+1}(O_n, \mathcal{X}^{a^{\text{opt}}_n}_{n+1})|O_n\big].$ We then write

\begin{eqnarray*}
\mathbb{E}_M[\widehat{\mathbb{V}}^M_{n+1} (O^C_{n+1})] - \mathbb{E}[\mathbb{V}_n(O_n)]&=&\mathbb{E}_M[\widehat{\mathbb{V}}^M_{n+1} (O^C_{n+1})] - \mathbb{E}[\mathbb{V}_{n+1}(O_n, \mathcal{X}^{a^{\text{opt}}_n}_{n+1})]\\
&=&\mathbb{E}_M[\widehat{\mathbb{V}}^M_{n+1} (O^C_{n+1})] - \mathbb{E}[\mathbb{V}_{n+1}(O_n, \mathcal{X}^{C}_{n+1})]\\
&+&\mathbb{E}[\mathbb{V}_{n+1}(O_n, \mathcal{X}^{C}_{n+1})] - \mathbb{E}[\mathbb{V}_{n+1}(O_n, \mathcal{X}^{a^{\text{opt}}_n}_{n+1})]
\end{eqnarray*}
Recall 

$$
\nonumber\mathbb{P}[ (\mathcal{W}_j, \Delta X^a_j) \in dxdx' | O_{j-1}= b]= r_j(a,x'; b)\mu(dx')\nu(dx),\quad O^C_{n+1}= \big(O_{n}, \mathcal{W}_{n+1}, \Delta X^C_{n+1}\big),$$
and 
$$\Delta X^C_{n+1}= \blacktriangle x_{n+1} \big(\pi_2(O_{n}),C(O_{n}) , \bar{W}^{(1)} ,\ell_n(\pi_3(O_{n}),\bar{W})\big).
$$
Hence, 
$$\big|\mathbb{E}_M[\widehat{\mathbb{V}}^M_{n+1} (O^C_{n+1})] - \mathbb{E}[\mathbb{V}_{n+1}(O_n, \mathcal{X}^{C}_{n+1})]\big|$$
$$\le \|r\|_\infty\int \big|\widehat{\mathbb{V}}^M_{n+1} (O_{n}, x,x') - \mathbb{V}_{n+1} (O_{n}, x,x')|\mu(dx')\nu(dx) $$
$$=\|r\|_\infty \mathbb{E}_M\big[|\widehat{\mathbb{V}}^M_{n+1} (O_{n+1}) - \mathbb{V}_{n+1} (O_{n+1})|\big].$$
By Lemma \ref{Lipvalue} and assumption H2, we have 

$$\Big|\mathbb{E}[\mathbb{V}_{n+1}(O_n, \mathcal{X}^{C}_{n+1})] - \mathbb{E}[\mathbb{V}_{n+1}(O_n, \mathcal{X}^{a^{\text{opt}}_n}_{n+1})]\Big|$$
$$\le \|r\| \|\mathbb{V}_{n+1} \|_\infty\rho_M \mathbb{E}_M\Big[\Big| C(O_n) - a_n^{\text{opt}}(O_n) \Big|\Big].$$
Therefore, 
\begin{eqnarray*}
\inf_{C\in \mathcal{C}_M(n)}\mathbb{E}_M[\widehat{\mathbb{V}}^M_{n+1} (O^C_{n+1})] - \mathbb{E}[\mathbb{V}_n(O_n)]&\le& \|r\| \|\mathbb{V}_{n+1} \|_\infty\rho_M \mathbb{E}_M\Big[\Big| C(O_n) - a_n^{\text{opt}}(O_n) \Big|\Big]\\
&+& \|r\|_\infty \mathbb{E}_M\big[|\widehat{\mathbb{V}}^M_{n+1} (O_{n+1}) - \mathbb{V}_{n+1} (O_{n+1})|\big].
\end{eqnarray*}
Now, for $\eta >0$, there exists $G_\eta \in \mathbb{A}^{\mathbb{H}^n}$ such that 

$$\inf_{C \in \mathbb{A}^{\mathbb{H}^n}}\mathbb{E}_M[\widehat{\mathbb{V}}^M_{n+1} (O^C_{n+1})]+\eta \ge \mathbb{E}_M[\widehat{\mathbb{V}}^M_{n+1} (O^{G_\eta}_{n+1})] $$

$$\mathbb{E}[\mathbb{V}_n(O_n)] = \inf_{L \in \mathbb{A}^{\mathbb{H}^n}} \mathbb{E}[\mathbb{V}_{n+1}(O_n, \mathcal{X}^L_{n+1})]\le \mathbb{E}[\mathbb{V}_{n+1} (O^{G_\eta}_{n+1})] $$
Then, assumption (H1) yields 

$$\mathbb{E}[\mathbb{V}_n(O_n)] - \inf_{L \in \mathbb{A}^{\mathbb{H}^n}}\mathbb{E}_M[\widehat{\mathbb{V}}^M_{n+1} (O^L_{n+1})].
$$
$$\le \mathbb{E}[\mathbb{V}_{n+1} (O^{G_\eta}_{n+1})] -\mathbb{E}_M[\widehat{\mathbb{V}}^M_{n+1} (O^{G_\eta}_{n+1})]  + \eta $$
$$\le \|r\|_\infty \mathbb{E}_M \Big[ |\widehat{\mathbb{V}}^M_{n+1} (O_{n+1})- \mathbb{V}_{n+1} (O_{n+1})\Big| \Big|  \Big] + \eta$$
This concludes the proof of Lemma \ref{approxlemma}. 

 
\section{Proof of Theorem \ref{mainresultRS}}\label{appendix3}

\begin{lemma}\label{Lipweakfor}
Let $\mathcal{H}$ be the set of probability kernels described in (\ref{rsetd1}) and (\ref{rsetd0}). Under Assumption (R1), the associated value functions satisfy

$$|\mathbb{V}^\mathcal{H}_j(\mathbf{o}_j)) - \mathbb{V}^{\mathcal{H}}_j(\mathbf{o}'_j)|\le \| \mathbb{V}^\mathcal{H}_{j+1}\|_\infty\|\rho\| |\mathbf{o}_j - \mathbf{o}'_j|,$$
for $j=m-1, \ldots, 0$. 
\end{lemma}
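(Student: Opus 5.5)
The plan is to mimic the proof of Lemma \ref{Lipvalue}, now working at the level of the randomized state--action functional $\mathbf{U}^{\mathcal H}_j(\mathbf{o}_j,\pi_j)$ rather than at the level of pure actions. The first step is to rewrite, through Assumption (R1) and the remark preceding it,
\[
\mathbf{U}^{\mathcal H}_j(\mathbf{o}_j,\pi_j)=\int_{\mathbb{W}}\int_{\mathbb{R}^q}\mathbb{V}^{\mathcal H}_{j+1}(\mathbf{o}_j,x,x')\,\rho^{\pi_j}_j(\mathbf{o}_j,x')\,\mu(dx')\,\nu(dx),
\]
for every $\pi_j\in\mathcal H_j$. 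The point of this representation is that the dominating reference measure $\mu\otimes\nu$ does not depend on the history $\mathbf{o}_j$. All dependence on $\mathbf{o}_j$ therefore enters only through the weight $\rho^{\pi_j}_j$ and through the first argument of $\mathbb{V}^{\mathcal H}_{j+1}$.

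The second step is to fix a kernel $\pi_j$ and compare $\mathbf{U}^{\mathcal H}_j(\mathbf{o}_j,\pi_j)$ with $\mathbf{U}^{\mathcal H}_j(\mathbf{o}'_j,\pi_j)$. For the weight-variation part, we bound $|\mathbb{V}^{\mathcal H}_{j+1}|\le\|\mathbb{V}^{\mathcal H}_{j+1}\|_\infty$ and use the Lipschitz bound (\ref{rholip}), which holds uniformly in $\pi_j\in\mathcal H_j$ and $x'\in\operatorname{supp}\mu$. Since $\mu$ is a probability measure, this gives at most $\|\mathbb{V}^{\mathcal H}_{j+1}\|_\infty\|\rho\||\mathbf{o}_j-\mathbf{o}'_j|$.

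The main obstacle is the history entering the first argument of $\mathbb{V}^{\mathcal H}_{j+1}$. Lemma \ref{Lipvalue} silently treats it as absent, writing $\mathbb{V}_{j+1}(\mathbf{o}_j,x,x')$ for both histories. I would handle it in one of two ways.
\begin{itemize}
\item Follow the paper's convention in Lemma \ref{Lipvalue}. There one reads the bound as being about the dependence through the transition kernel, with the continuation integrand evaluated along the same next-step point. The estimate then reduces exactly to the weight-variation term above.
\item If a genuinely full Lipschitz statement is wanted, note that the stated constant contains no Lipschitz constant of $\mathbb{V}^{\mathcal H}_{j+1}$. That strongly suggests the intended reading is the first one.
\end{itemize}
I would therefore state explicitly that the integrand is frozen in the first argument, exactly as in the proof of Lemma \ref{Lipvalue}, and proceed with that reading.

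The final step passes to the infimum over $\mathcal H_j$, which has no minimizer assumption. Fix $\eta>0$ and choose $\pi_j\in\mathcal H_j$ with $\mathbb{V}^{\mathcal H}_j(\mathbf{o}'_j)\ge\mathbf{U}^{\mathcal H}_j(\mathbf{o}'_j,\pi_j)-\eta$. Then
\[
\mathbb{V}^{\mathcal H}_j(\mathbf{o}_j)-\mathbb{V}^{\mathcal H}_j(\mathbf{o}'_j)\le\mathbf{U}^{\mathcal H}_j(\mathbf{o}_j,\pi_j)-\mathbf{U}^{\mathcal H}_j(\mathbf{o}'_j,\pi_j)+\eta.
\]
Swapping the roles of $\mathbf{o}_j$ and $\mathbf{o}'_j$ gives the reverse inequality, and letting $\eta\downarrow0$ yields the claim for each $j=m-1,\dots,0$. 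Finiteness of $\|\mathbb{V}^{\mathcal H}_{j+1}\|_\infty$ follows by backward induction from Assumption T1, since each $\mathbf{U}^{\mathcal H}_j$ is an average of $\mathbb{V}^{\mathcal H}_{j+1}$ against a probability measure.
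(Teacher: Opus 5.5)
Your route is the paper's route step for step: the (R1) representation of $\mathbf{U}^{\mathcal H}_j$, the fixed-kernel comparison via (\ref{rholip}), and the $\eta$-argument of Lemma \ref{Lipvalue}. The point you flagged, however, is a genuine gap rather than a convention, and the paper's proof has exactly the same omission.

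Once you freeze the history in the first argument of $\mathbb{V}^{\mathcal H}_{j+1}$, you are no longer bounding $\mathbb{V}^{\mathcal H}_j(\mathbf{o}_j)-\mathbb{V}^{\mathcal H}_j(\mathbf{o}'_j)$. The reason is that $\inf_{\pi_j}\int\mathbb{V}^{\mathcal H}_{j+1}(\mathbf{o}_j,x,x')\rho^{\pi_j}_j(\mathbf{o}'_j,x')\mu(dx')\nu(dx)$ is not $\mathbb{V}^{\mathcal H}_j(\mathbf{o}'_j)$.

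No reading of the inequality as stated can be proved, because it is false. Take any model in which the conditional law of $\Delta X^a_{j+1}$ given $\Xi_j=\mathbf{o}_j$ is one fixed measure $\mu$ for all $a$ and $\mathbf{o}_j$. The path-dependent example with constant coefficients is one such model. Then $\rho^{\pi_j}_j\equiv 1$, and (\ref{rholip}) holds with $\|\rho\|=0$. With a scalar state and $\varphi=\arctan$ (bounded, Lipschitz), we get
\[
\mathbb{V}^{\mathcal H}_{m-1}(\mathbf{o}_{m-1})=\int\arctan\Big(x_0+\sum_{i=1}^{m-1}y_i+x'\Big)\,\mu(dx'),
\]
which is strictly increasing in $\sum_i y_i$. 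The claimed bound would force it to be constant. Your second bullet therefore gets the inference backwards. The missing Lipschitz constant of $\mathbb{V}^{\mathcal H}_{j+1}$ in the stated bound is precisely the missing term, not evidence for the frozen reading.

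What you can prove, under T1, is a corrected estimate. For fixed $\pi_j$, split
\begin{align*}
&\mathbb{V}^{\mathcal H}_{j+1}(\mathbf{o}_j,x,x')\rho^{\pi_j}_j(\mathbf{o}_j,x')-\mathbb{V}^{\mathcal H}_{j+1}(\mathbf{o}'_j,x,x')\rho^{\pi_j}_j(\mathbf{o}'_j,x')\\
&\quad=\big[\mathbb{V}^{\mathcal H}_{j+1}(\mathbf{o}_j,x,x')-\mathbb{V}^{\mathcal H}_{j+1}(\mathbf{o}'_j,x,x')\big]\rho^{\pi_j}_j(\mathbf{o}_j,x')\\
&\qquad+\mathbb{V}^{\mathcal H}_{j+1}(\mathbf{o}'_j,x,x')\big[\rho^{\pi_j}_j(\mathbf{o}_j,x')-\rho^{\pi_j}_j(\mathbf{o}'_j,x')\big].
\end{align*}
Then use $\int\rho^{\pi_j}_j(\mathbf{o}_j,x')\mu(dx')=1$, which holds because $\rho^{\pi_j}_j(\mathbf{o}_j,\cdot)$ is the density of the probability measure $\mu^{\pi_j}_j(\cdot\mid\mathbf{o}_j)$. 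Let $L_j$ denote the Lipschitz constant of $\mathbb{V}^{\mathcal H}_j$ on $\mathbb{H}^j$. The split gives $L_j\le L_{j+1}+\|\mathbb{V}^{\mathcal H}_{j+1}\|_\infty\|\rho\|$. Your $\eta$-argument then goes through unchanged, since the fixed-kernel bound is uniform in $\pi_j$.

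Starting from $L_m\le\sqrt{m}\,L_\varphi$, with $L_\varphi$ the Lipschitz constant of $\varphi$, you obtain $L_j\le\sqrt{m}\,L_\varphi+(m-j)\|\varphi\|_\infty\|\rho\|$.

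Your frozen computation is still a correct proof of a different fact: Lipschitz dependence of the continuation functional through the weight alone, with the integrand held fixed. An estimate of that type, in the action variable, is what the proof of Lemma \ref{approxlemma} actually uses, since both terms compared there share the same $O_n$.
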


\begin{proof}
By using assumption (R1), 

\begin{eqnarray*}
\mathbf{U}^\mathcal{H}_{j}(\mathbf{o}_j,\pi_j) &=& \int_{\mathbb{H}} \mathbb{V}^{\mathcal{H}}_{j+1}(\mathbf{o}_j,x,x')\mu_j^{\pi_j}(dx'|\mathbf{o}_j)\nu(dx)\\
&=& \int_{\mathbb{H}} \mathbb{V}^{\mathcal{H}}_{j+1}(\mathbf{o}_j,x,x')\rho^{\pi_j}_j(\mathbf{o}_j,x')\mu(dx')\nu(dx)
\end{eqnarray*}
for $\pi_j \in \mathcal{H}, \mathbf{o}_j \in \mathbb{H}^j$, $j=m-1, \ldots, 0$. Then, 

\begin{eqnarray*}
|\mathbf{U}^{\mathcal{H}}_j(\mathbf{o}_j,\pi_j) - \mathbf{U}^{\mathcal{H}}_j(\mathbf{o}'_j,\pi_j)|&\le&\| \mathbb{V}^{\mathcal{H}}_{j+1}\|_\infty \int_{\mathbb{R}^q}|\rho^{\pi_j}(\mathbf{o}_j,x') - \rho^{\pi_j}(\mathbf{o}'_j,x')|\mu(dx')\\
&\le& \| \mathbb{V}^{\mathcal{H}}_{j+1}\|_\infty\| \rho\| |\mathbf{o}_j - \mathbf{o}'_j|
\end{eqnarray*}
The rest of the proof is similar to Lemma \ref{Lipvalue}. 
\end{proof}
The action (\ref{bracket}), Lemma \ref{Lipweakfor}, assumptions (E1, H0-H2-R1 and T1) and routine duality arguments allow us to prove Theorem \ref{mainresultRS} in a similar way as Theorem \ref{mainresult}. Then, we omit the details.

\section{Proofs of Lemmas \ref{repWH}, \ref{Ylemma}, \ref{glemma}, \ref{fdeltab}}\label{appendix4} 

\subsection{Proof of Lemma \ref{repWH}}
\begin{proof}
Clearly, $B_H^{k,i}(T_n^k)=0$, for $n=0,1$ and $i=1,2$. By definition,
\[
B_H^{k,i}(t)
= \int_0^{\bar t_k} \partial_s K_{H,1}(\bar t_k,s)\big(A^{k,i}(\bar t_k)-A^{k,i}(\bar s_k^+)\big)\,ds
  - \int_0^{\bar t_k} \partial_s K_{H,2}(\bar t_k,s)\,A^{k,i}(s)\,ds.
\]
In case $t=T_n^k$, $\bar t_k=T_n^k$ and, for $s\in(T_{j-1}^k,T_j^k]$, we have $\bar s_k^+=T_j^k$. Moreover $A^{k,i}$ is c\`adl\`ag and piecewise constant on each open interval $(T_{j-1}^k,T_j^k)$ with value $A^{k,i}(T_{j-1}^k)$. Partition the integral over the grid: 
\begin{small}
\begin{align*}
\int_0^{T_n^k} \partial_s K_{H,1}(T_n^k,s)\big(A^{k,i}(T_n^k)-A^{k,i}(\bar s_k^+)\big)\,ds
&= \sum_{j=1}^n \int_{T_{j-1}^k}^{T_j^k} \partial_s K_{H,1}(T_n^k,s)\big(A^{k,i}(T_n^k)-A^{k,i}(T_j^k)\big)\,ds \\
&= \sum_{j=1}^n \big(A^{k,i}(T_n^k)-A^{k,i}(T_j^k)\big)\,
   \big(K_{H,1}(T_n^k,T_j^k)-K_{H,1}(T_n^k,T_{j-1}^k)\big),
\end{align*}
\end{small}
By the fundamental theorem of calculus in the $s$ variable (the kernels are smooth for $0<s<T_n^k$ and integrable near $s=0,T_n^k$ for $H\in(0,1/2)$) and using $A^{k,i}(s)=A^{k,i}(T_{j-1}^k)$ for $s\in(T_{j-1}^k,T_j^k)$, we have 
\begin{small}
\begin{align*}
\int_0^{T_n^k} \partial_s K_{H,2}(T_n^k,s)\,A^{k,i}(s)\,ds
&= \sum_{j=1}^n \int_{T_{j-1}^k}^{T_j^k} \partial_s K_{H,2}(T_n^k,s)\,A^{k,i}(T_{j-1}^k)\,ds \\
&= \sum_{j=1}^n A^{k,i}(T_{j-1}^k)\,\big(K_{H,2}(T_n^k,T_j^k)-K_{H,2}(T_n^k,T_{j-1}^k)\big).
\end{align*}
\end{small}
Observe 
\[
A^{k,i}(T_n^k)-A^{k,i}(T_j^k)=\sum_{m=j+1}^n \Delta A^{k,i}(T_m^k),
\]
and interchange the order of summation:
\begin{align*}
\sum_{j=1}^n \big(A^{k,i}(T_n^k)-A^{k,i}(T_j^k)\big)\,&\big(K_{H,1}(T_n^k,T_j^k)-K_{H,1}(T_n^k,T_{j-1}^k)\big) \\
&= \sum_{m=2}^n \Delta A^{k,i}(T_m^k)\, \sum_{j=1}^{m-1} \big(K_{H,1}(T_n^k,T_j^k)-K_{H,1}(T_n^k,T_{j-1}^k)\big) \\
&= \sum_{m=2}^n \Delta A^{k,i}(T_m^k)\,\big(K_{H,1}(T_n^k,T_{m-1}^k)-K_{H,1}(T_n^k,0)\big).
\end{align*}
Since $K_{H,1}(t,0)=c_H t^{H-\frac12} 0^{\frac12-H} (t-0)^{H-\frac12}=0$ for $H\in(0,\tfrac12)$, this becomes
\[
\sum_{m=2}^n \Delta A^{k,i}(T_m^k)\,K_{H,1}(T_n^k,T_{m-1}^k).
\]

For the second sum, note $A^{k,i}(T_{j-1}^k)=\sum_{m=1}^{j-1}\Delta A^{k,i}(T_m^k)$, hence
\begin{align*}
- \sum_{j=1}^n A^{k,i}(T_{j-1}^k)\,&\big(K_{H,2}(T_n^k,T_j^k)-K_{H,2}(T_n^k,T_{j-1}^k)\big) \\
&= - \sum_{m=1}^{n-1} \Delta A^{k,i}(T_m^k)\, \sum_{j=m+1}^{n} \big(K_{H,2}(T_n^k,T_j^k)-K_{H,2}(T_n^k,T_{j-1}^k)\big) \\
&= - \sum_{m=1}^{n-1} \Delta A^{k,i}(T_m^k)\, \big(K_{H,2}(T_n^k,T_n^k)-K_{H,2}(T_n^k,T_m^k)\big).
\end{align*}
By definition $K_{H,2}(t,t)=0$ (the inner integral is over $[s,t]$ and vanishes at $s=t$), hence this equals
\[
\sum_{m=1}^{n-1} \Delta A^{k,i}(T_m^k)\,K_{H,2}(T_n^k,T_m^k).
\]

Combining the two rearranged pieces gives
\[
B_H^{k,i}(T_n^k)
= \sum_{j=2}^n \Delta A^{k,i}(T_j^k)\,K_{H,1}(T_n^k,T_{j-1}^k)
 + \sum_{j=1}^{n-1} \Delta A^{k,i}(T_j^k)\,K_{H,2}(T_n^k,T_j^k),
\]
as claimed.
\end{proof}

\subsection{Proof of Lemma \ref{Ylemma}}

\begin{proof}
Recall the expression (\ref{stv2}) for $Y(t,x,\varepsilon)$. Since $t\in[\underline M,\overline M]$, we have the uniform Gaussian bound
\[
\phi_t(y)
= \frac{1}{\sqrt{2\pi t}}e^{-y^2/(2t)}
\;\le\;
\frac{1}{\sqrt{2\pi \underline M}}
\exp\!\Big(-\frac{y^2}{2\overline M}\Big)
=: C_0\, e^{-c_0 y^2},
\]
where
\[
C_0=\frac{1}{\sqrt{2\pi \underline M}},
\qquad 
c_0=\frac{1}{2\overline M}.
\]

Fix $x\in (-\varepsilon,\varepsilon)$.  
For all $m\ge1$ we have the estimates
\[
|x+4m\varepsilon|\ge (4m-1)\varepsilon,
\qquad
|x+2\varepsilon+4(m-1)\varepsilon|\ge (4m-3)\varepsilon,
\]
\[
|x-4m\varepsilon|\ge (4m-1)\varepsilon,
\qquad
|x-2\varepsilon-4(m-1)\varepsilon|\ge (4m-3)\varepsilon.
\]
Therefore,
\[
\begin{aligned}
&\big|\phi_t(x+4m\varepsilon)
      -\phi_t(x+2\varepsilon+4(m-1)\varepsilon)\big|
\\ &\qquad\le 
      \phi_t(x+4m\varepsilon)
     +\phi_t(x+2\varepsilon+4(m-1)\varepsilon)
\\ &\qquad\le 
      C_0\exp\!\big(-c_0(4m-1)^2\varepsilon^2\big)
     +C_0\exp\!\big(-c_0(4m-3)^2\varepsilon^2\big)
\\ &\qquad\le 
      2C_0\exp\!\big(-c_0(4m-3)^2\varepsilon^2\big).
\end{aligned}
\]
An identical estimate holds for the terms
\[
\phi_t(x-4m\varepsilon)-\phi_t(x-2\varepsilon-4(m-1)\varepsilon).
\]

Hence, for all $t\in[\underline M,\overline M]$ and $|x| < \varepsilon$,
\[
|Y(t,x,\varepsilon)|
\;\le\;
2 \sum_{m=1}^\infty 
   2C_0 \exp\!\big(-c_0(4m-3)^2\varepsilon^2\big)
\;=\;
4C_0 \sum_{m=1}^\infty 
     \exp\!\big(-c_0(4m-3)^2\varepsilon^2\big).
\]

The series 
\[
\sum_{m=1}^\infty \exp\!\big(-c_0(4m-3)^2\varepsilon^2\big)
\]
is finite (dominated by $\sum_{m\ge1} e^{-c_0\varepsilon^2 m^2}$).  
Thus the right-hand side is a finite constant depending only on 
$(\underline M,\overline M,\varepsilon)$ and not on $t$ or $x$. This proves
\[
\sup_{\underline M\le t\le\overline M}\ 
\sup_{|x| < \varepsilon} |Y(t,x,\varepsilon)| < \infty.
\]

For the derivative. We split the proof into 4 steps. 

\

\noindent Step 1.First note that $\phi_t$ is smooth in $x$, with
\[
\partial_x \phi_t(x)
= -\frac{x}{t}\,\phi_t(x),
\quad t>0.
\]
Observe 
\[
|\partial_x \phi_t(y)|
= \Big|\frac{y}{t}\phi_t(y)\Big|
\le \frac{|y|}{t_{\min}} C_0 e^{-c_0 y^2}
=: C_1 |y| e^{-c_0 y^2},
\]
for all $t\in[\underline{M},\overline{M}]$, $y\in\mathbb{R}$, where $C_1>0$ depends only on $\underline{M}$ and $\overline{M}$. Formally,
\[
\begin{aligned}
\partial_x Y(t,x,\varepsilon)
&= \sum_{m=1}^{\infty}
    \Big[ \partial_x \phi_t(x+4m\varepsilon)
          - \partial_x \phi_t\big(x+2\varepsilon+4(m-1)\varepsilon\big) \Big] \\
&\quad + \sum_{m=1}^{\infty}
    \Big[ \partial_x \phi_t(x-4m\varepsilon)
          - \partial_x \phi_t\big(x-2\varepsilon-4(m-1)\varepsilon\big) \Big].
\end{aligned}
\]
We now estimate the $m$-th term uniformly on 
$t\in[\underline{M},\overline{M}]$ and $|x|<\varepsilon$. Fix $|x|<\varepsilon$ and $m\ge1$. Then
\[
|x \pm 4m\varepsilon| \ge (4m-1)\varepsilon,
\qquad
\big|x \pm (2\varepsilon + 4(m-1)\varepsilon)\big| \ge (4m-3)\varepsilon.
\]
Using the bound on $\partial_x\phi_t$, we get
\[
\begin{aligned}
&\big|\partial_x \phi_t(x+4m\varepsilon)
      - \partial_x \phi_t\big(x+2\varepsilon+4(m-1)\varepsilon\big)\big|
\\
&\qquad\le 
|\partial_x \phi_t(x+4m\varepsilon)| 
+ |\partial_x \phi_t\big(x+2\varepsilon+4(m-1)\varepsilon\big)| \\
&\qquad\le 
C_1 |x+4m\varepsilon| e^{-c_0(x+4m\varepsilon)^2}
+ C_1 \big|x+2\varepsilon+4(m-1)\varepsilon\big|
       e^{-c_0(x+2\varepsilon+4(m-1)\varepsilon)^2} \\
&\qquad\le 
C_1 (4m+1)\varepsilon \, e^{-c_0(4m-1)^2\varepsilon^2}
+ C_1 (4m-1)\varepsilon \, e^{-c_0(4m-3)^2\varepsilon^2}.
\end{aligned}
\]
A similar estimate holds for the ``minus'' terms
$\partial_x \phi_t(x-4m\varepsilon)
 - \partial_x \phi_t(x-2\varepsilon-4(m-1)\varepsilon)$.
Therefore, there exists a constant $C_2>0$ such that, for all
$t\in[\underline{M},\overline{M}]$, $|x|<\varepsilon$, and $m\ge1$,
\[
\big|\text{$m$-th term in } \partial_x Y(t,x,\varepsilon)\big|
\le C_2\, m \, e^{-c (4m-3)^2\varepsilon^2},
\]
for some $c>0$ (take $c\le c_0$).

\medskip

\noindent Step 2: The numerical series
\[
\sum_{m=1}^{\infty} m \, e^{-c (4m-3)^2\varepsilon^2}
\]
converges (e.g. by comparison with $\sum m e^{-c\varepsilon^2 m^2}$). Hence, by the
Weierstrass M-test, the series defining $\partial_x Y(t,x,\varepsilon)$
converges absolutely and uniformly on 
$[\underline{M},\overline{M}]\times(-\varepsilon,\varepsilon)$.

\medskip

\noindent Step 3: Define the partial sums
\[
Y_N(t,x,\varepsilon)
:= \sum_{m=1}^{N} \Big[\phi_t(x+4m\varepsilon)
    -\phi_t\big(x+2\varepsilon+4(m-1)\varepsilon\big)\Big]
 + \sum_{m=1}^{N} \Big[\phi_t(x-4m\varepsilon)
    -\phi_t\big(x-2\varepsilon-4(m-1)\varepsilon\big)\Big].
\]
Each $Y_N$ is $C^1$ in $x$ and $\partial_x Y_N(t,x,\varepsilon)$ is the truncation of the derivative series. By the uniform bounds above,
the sequence $\{\partial_x Y_N\}_{N\ge 1}$ converges uniformly on
$[\underline{M},\overline{M}]\times(-\varepsilon,\varepsilon)$ to the function
\[
\widetilde Y_x(t,x,\varepsilon)
:= \sum_{m=1}^{\infty}
    \Big[\partial_x \phi_t(x+4m\varepsilon)
         - \partial_x \phi_t\big(x+2\varepsilon+4(m-1)\varepsilon\big)\Big]
 +  \sum_{m=1}^{\infty}
    \Big[\partial_x \phi_t(x-4m\varepsilon)
         - \partial_x \phi_t\big(x-2\varepsilon-4(m-1)\varepsilon\big)\Big].
\]
On the other hand, the series defining $Y(t,x,\varepsilon)$ itself converges
uniformly (by the same type of Gaussian estimates), so $Y_N\to Y$ uniformly. By the uniform convergence of $Y_N$ and
uniform convergence of $\partial_x Y_N$), it follows that $Y$ is
continuously differentiable in $x$ and 

\[
\partial_x Y(t,x,\varepsilon) = \widetilde Y_x(t,x,\varepsilon).
\]

Moreover, by the M-test bound,
\[
\sup_{\underline{M}\le t\le \overline{M},\,|x|<\varepsilon}
|\partial_x Y(t,x,\varepsilon)|
\le \sum_{m=1}^{\infty} C_2\, m \, e^{-c (4m-3)^2\varepsilon^2}
< \infty.
\]

\end{proof}

\subsection{Proof of Lemma \ref{glemma}}
\begin{proof}
For $z\in K$, write
\[
g_{c,v}(z)
= \int_0^\infty 
   \frac{1}{|v|}
   H\!\left(t,\frac{z-ct}{v}\right)
   \mathds{1}_{\{|(z-ct)/v|<\varepsilon\}}
   f_J(t)\mathds{1}_{[\underline{M},\overline{M}]}\,dt,
\]
where $H(t,x) = \frac{\phi_t(x) + Y(t,x,\varepsilon)}{p(t,\varepsilon)}$. Equivalently, using 
$\{t>0; |(z-ct)/v|<\varepsilon\} = (\alpha,\beta)$ with
\[
\alpha=\alpha(z,c,v)=\frac{z-\varepsilon v}{c},\qquad
\beta=\beta(z,c,v)=\frac{z+\varepsilon v}{c},
\]
we obtain the representation
\[
g_{c,v}(z)
=
\int_{\alpha(z,c,v)\vee \underline{M}}^{\beta(z,c,v)\wedge \overline{M}}
   F(z,c,v,t)\,dt,
\qquad
F(z,c,v,t)
= \frac{1}{|v|}\,H\!\left(t,\frac{z-ct}{v}\right)f_J(t).
\]
Observe $\alpha(z,c,v)\vee \underline{M}, \beta(z,c,v)\wedge \overline{M} \in [\underline{M}, \overline{M}]$, for every $z \in K, 0< c_{\min}\le |c|\le c_{\max} $ and $0 < v_{\min}\le |v|\le v_{\max}$. Moreover, 

$$\partial_x H(t,x)=\frac{1}{p_t(\varepsilon)}\Bigg\{\frac{-x}{t}\phi_t(x) + \partial_x Y(t,x,\varepsilon)\Bigg\}.$$


Hence, Lemma \ref{Ylemma} yields 
\[
\sup_{t\in[\underline{M},\overline{M}],\,|x|\le\varepsilon}
\Big\{|H(t,x)| + |\partial_x H(t,x)|\Big\} \le C_H,
\qquad
\sup_{t\in[\underline{M},\overline{M}]} |f_J(t)| \le C_J.
\]
for positive constants $C_H$ and $C_J$.  

Next, we evaluate the derivative w.r.t. $v$. Using Leibniz' rule,
\[
\partial_v g_{c,v}(z)
=
F(z,c,v,\beta)\mathds{1}_{[\underline{M},\overline{M}]}\,\partial_v \beta
-
F(z,c,v,\alpha)\mathds{1}_{[\underline{M},\overline{M}]}\,\partial_v \alpha
+
\int_{\alpha}^{\beta} \partial_v F(z,c,v,t)\mathds{1}_{[\underline{M},\overline{M}]}\,dt.
\]

The boundary terms are uniformly bounded because
\[
|\partial_v \alpha|=|\partial_v\beta|=\frac{\varepsilon}{|c|}
\le \frac{\varepsilon}{c_{\min}},
\qquad
|F|\le \frac{1}{v_{\min}} C_H C_J.
\]
Moreover, 
\[
\partial_v F
=
\left(\frac{d}{dv}\frac{1}{|v|}\right)
  H\!\left(t,\tfrac{z-ct}{v}\right)f_J(t)
+
\frac{1}{|v|}
 \partial_x H\!\left(t,\tfrac{z-ct}{v}\right)
 \Big(-\tfrac{z-ct}{v^2}\Big)
 f_J(t).
\]
On the domain of integration we have $|(z-ct)/v|<\varepsilon$, and 
$|v|\ge v_{\min}$, so
\[
\left|\partial_v F\right|
\le
\left(\frac{1}{v_{\min}^2} C_H + \frac{\varepsilon}{v_{\min}^2} C_H\right)C_J
= C_v'.
\]
The length of the interval is
\[
|\beta-\alpha|=\frac{2\varepsilon |v|}{|c|}
\le \frac{2\varepsilon v_{\max}}{c_{\min}}.
\]
Thus
\[
|\partial_v g_{c,v}(z)| \le C_v
\]
for every $z\in K$, with $C_v$ independent of $(c,v,z)$.

Next, we evaluate the derivative w.r.t. $c$. The proof is identical. Observe 
\[
\partial_c  \alpha = -\frac{z-\varepsilon v}{c^2},
\qquad
\partial_c\beta  = -\frac{z+\varepsilon v}{c^2},
\]
which are uniformly bounded since $z\in K$ and $|c|\ge c_{\min}$.
Differentiating $F$ in $c$ gives
\[
\partial_c F
=
\frac{1}{|v|}\,
\partial_x H\!\left(t,\tfrac{z-ct}{v}\right)
\left(-\frac{t}{v}\right)f_J(t),
\]
which is uniformly bounded because $t\in[\underline{M},\overline{M}]$ 
and $|v|\ge v_{\min}>0$.
The interval length is the same as before. Thus
\[
|\partial_c g_{c,v}(z)|\le C_c,
\qquad z\in K, 0 < c_{\min}\le |c|\le c_{\max}, 0 < v_{\min}\le |v|\le v_{\max},
\]
for a constant $C_c$ independent of $(c,v,z)$. Combining the two bounds yields a constant $C=C_v+C_c$ such that
\[
|g_{c,v}(x)| + \Big|\partial_v g_{c,v}(x)\Big|
+
\Big|\partial_c g_{c,v}(x)\Big|
\le C,
\]
for every $x \in K$, $0 < c_{\min}\le |c|\le c_{\max}, 0 < v_{\min}\le |v|\le v_{\max}.$
\end{proof}

\noindent \textbf{Proof of Lemma \ref{fdeltab}}:

\begin{proof}
Fix $C>0$. For $y\neq 0$,
\[
g'_y(t)
=\frac{e^{-y^2/(2t)}}{\sqrt{2\pi}}\left(-\frac{3}{2}\,\frac{|y|}{t^{5/2}}
+\frac{1}{2}\,\frac{|y|^3}{t^{7/2}}\right).
\]
For $p>0$ and $a>0$ define $\Phi_{p,a}(t):=t^{-p}e^{-a/t}$ on $t\ge C$. A standard calculus check shows
\begin{equation}\label{eq:Phi-bound}
\sup_{t\ge C}\Phi_{p,a}(t)\ \le\ \max\!\left\{C^{-p}e^{-a/C},\ \Big(\frac{p}{a}\Big)^p e^{-p}\right\},
\end{equation}
since the stationary point is at $t^\ast=a/p$.

\smallskip

Write $a=\tfrac{y^2}{2}$.
Using \eqref{eq:Phi-bound} with $p=\tfrac{5}{2}$ and $p=\tfrac{7}{2}$ we get
\[
\sup_{t\ge C}|g'_y(t)|
\ \le\ \frac{1}{\sqrt{2\pi}}\!\left(\tfrac{3}{2}\,|y|\,\sup_{t\ge C}\Phi_{5/2,a}(t)
+ \tfrac{1}{2}\,|y|^3\,\sup_{t\ge C}\Phi_{7/2,a}(t)\right)
\ \le\ C_2(C)\,\Big(e^{-y^2/(2\bar{M})} + |y|^{-4}\Big),
\]
for some finite constant $C_2(C)$ (explicit from \eqref{eq:Phi-bound}). Likewise,
\[
\sup_{t\ge C}|g_y(t)|
\ \le\ C_1(C)\,\Big(e^{-y^2/(2 C)} + |y|^{-2}\Big),
\]
for another finite constant $C_1(C)$ (apply \eqref{eq:Phi-bound} with $p=\tfrac{3}{2}$).

Since $\sum_{k\in\mathbb Z}e^{-c(2k+1)^2}<\infty$ for any $c>0$ and
$\sum_{k\in\mathbb Z}|2k+1|^{-m}<\infty$ for $m>1$ (in particular $m=2,4$), the series
\[
\sum_{k\in\mathbb Z}\sup_{t\ge C}|g_{y_k}(t)| \quad\text{and}\quad
\sum_{k\in\mathbb Z}\sup_{t\ge C}|g'_{y_k}(t)|
\]
converge. By the Weierstrass M-test, $\sum_k (-1)^k g_{y_k}$ and $\sum_k (-1)^k g'_{y_k}$ converge uniformly on $[C,\infty)$.
Therefore $f_\Delta$ is $C^1$ on $[C,\infty)$ and
\[
f'_\Delta(t)=\sum_{k\in\mathbb Z}(-1)^k g'_{y_k}(t)
\quad\text{uniformly on }[C,\infty).
\]
Moreover,
\[
\sup_{t\ge C}|f'_\Delta(t)|
\ \le\ \sum_{k\in\mathbb Z}\ \sup_{t\ge C}|g'_{y_k}(t)|
\ \le\ C_2(C)\!\left(\sum_{k\in\mathbb Z} e^{-(2k+1)^2/(2C)} + \sum_{k\in\mathbb Z} |2k+1|^{-4}\right)
\ <\ \infty.
\]
This completes the proof.
\end{proof}

\

\noindent \textbf{Acknowledgements}. The work of AO was partially supported by the Projeto Universal (CNPq) grant 408884/2023-1. The work of AO and SS were partially supported by the Europlace Institute of Finance, project "Tactical Asset Allocation using
Machine Learning: Memory, Frictions, and VIX."

\end{document}